\definecolor{darkblue}{RGB}{25,25,200}
\newcommand{\rev}[1]{{ #1}}
\definecolor{darkgreen}{rgb}{0.0,0,0.9}
\tiny\color{gray},
\newcommand\reallywidehat[1]{%
\savestack{\tmpbox}{\stretchto{%
  \scaleto{%
    \scalerel*[\widthof{\ensuremath{#1}}]{\kern.1pt\mathchar"0362\kern.1pt}%
    {\rule{0ex}{\textheight}}
  }{\textheight}%
}{2.4ex}}%
\stackon[-6.9pt]{#1}{\tmpbox}%
}
\DeclareSymbolFont{rsfs}{U}{rsfs}{m}{n}
\DeclareSymbolFontAlphabet{\mathscrsfs}{rsfs}
\numberwithin{equation}{section}
\newtheoremstyle{myexample} 
    {\topsep}                    
    {\topsep}                    
    {\rm }                   
    {}                           
    {\bf }                   
    {.}                          
    {.5em}                       
    {}  
\newtheoremstyle{myremark} 
    {\topsep}                    
    {\topsep}                    
    {\rm}                        
    {}                           
    {\bf}                        
    {.}                          
    {.5em}                       
    {}  
\newtheorem{claim}{Claim}[section]
\newtheorem{lemma}[claim]{Lemma}
\newtheorem{theorem}{Theorem}
\newtheorem{corollary}[claim]{Corollary}
\theoremstyle{myremark}
\newtheorem{remark}{Remark}[section]
\theoremstyle{myremark}
\theoremstyle{myexample}
\definecolor{darkgreen}{rgb}{0.0, 0.5, 0.0}
\newcommand{\fourth}[1]{{#1}^{+}}
\newcommand{\mff}[1]{{#1}^{\mbox{\tiny\rm mf1}}}
\newcommand{\mfs}[1]{{#1}^{\mbox{\tiny\rm mf2}}}
\newcommand{\mft}[1]{{#1}^{\mbox{\tiny\rm mf3}}}
\newcommand{\lzf}[1]{{#1}^{\mbox{\tiny\rm lz1}}}
\newcommand{\lzs}[1]{{#1}^{\mbox{\tiny\rm lz2}}}
\newcommand{\lzt}[1]{{#1}^{\mbox{\tiny\rm lz3}}}
\newcommand{\mf}[1]{{#1}^{\mbox{\tiny\rm mf}}}
\newcommand{\bea}{\begin{eqnarray}}
\newcommand{\eea}{\end{eqnarray}}
\newcommand{\<}{\langle}
\renewcommand{\>}{\rangle}
\newcommand{\E}{{\mathbb E}}
\def\fr{\frac}
\def\fr12{\frac{1}{2}}
\def\He{{\rm He}}
\def\GP{{\sf GP}}
\def\oa{{\overline{a}}}
\def\sLip{\mbox{{\rm\tiny Lip}}}
\def\sGD{\mbox{\tiny \rm GD}}
\def\str{\mbox{{\rm\footnotesize tr}}}
\def\sts{\mbox{\rm\footnotesize ts}}
\def\snl{\mbox{\rm\footnotesize nl}}
\def\slb{\mbox{\rm\tiny lb}}
\def\srel{\mbox{\rm\tiny rel}}
\def\ts{\hat{t}}
\def\ss{\hat{s}}
\def\we{w^e}
\def\bwe{{\boldsymbol w}^e}
\def\bfeta{{\boldsymbol \eta}}
\def\R{{\mathbb R}}
\def\ret{\hat{t}}
\def\alphabar{\overline{\alpha}}
\def\secnd{\mbox{\tiny\rm 2nd}}
\def\bFg{{\boldsymbol F}^{{\boldsymbol g}}}
\def\bF{{\boldsymbol F}}
\def\bG{{\boldsymbol G}}
\def\bR{{\boldsymbol R}}
\def\bU{{\boldsymbol U}}
\def\bX{{\boldsymbol X}}
\def\cRisk{\mathscrsfs{R}}
\def\Eth{\mathscrsfs{E}}
\def\hcRisk{\widehat{\mathscrsfs{R}}}
\def\sGF{\mbox{\tiny \rm GF}}
\def\snew{\mbox{\tiny \rm new}}
\def\Unif{{\sf Unif}}
\def\eps{{\varepsilon}}
\def\id{{\boldsymbol{I}}}
\def\S{{\mathbb S}}
\def\proj{{\boldsymbol P}}
\def\btheta{{\boldsymbol{\theta}}}
\def\bxi{{\boldsymbol{\xi}}}
\def\bfe{{\boldsymbol{e}}}
\def\beps{{\boldsymbol{\eps}}}
\def\bSigma{{\boldsymbol{\Sigma}}}
\def\bP{{\boldsymbol{P}}}
\def\bu{{\boldsymbol{u}}}
\def\ta{\tilde{a}}
\def\tba{\tilde{\boldsymbol a}}
\def\tbw{\tilde{\boldsymbol w}}
\def\tbW{\tilde{\boldsymbol W}}
\def\bC{{\boldsymbol{C}}}
\def\bQ{{\boldsymbol{Q}}}
\def\bg{{\boldsymbol{g}}}
\def\bzero{{\mathbf 0}}
\def\cF{{\mathcal F}}
\def\cG{{\mathcal G}}
\def\op{\mbox{\tiny\rm op}}
\def\sLip{\mbox{\tiny\rm Lip}}
\def\naturals{{\mathbb N}}
\def\reals{{\mathbb R}}
\def\normal{{\sf N}}
\def\sT{{\sf T}}
\def\oalpha{\overline{\alpha}}
\def\hba{\hat{\boldsymbol{a}}}
\def\hbW{\widehat{\boldsymbol{W}}}
\def\bDelta{\boldsymbol{\Delta}}
\def\bv{{\boldsymbol{v}}}
\def\bz{{\boldsymbol{z}}}
\def\bx{{\boldsymbol{x}}}
\def\ba{{\boldsymbol{a}}}
\def\bJ{\boldsymbol{J}}
\def\SymmDMFT{{\sf SymmDMFT}\,}
\def\de{{\rm d}}
\def\bX{\boldsymbol{X}}
\def\bW{\boldsymbol{W}}
\def\prob{{\mathbb P}}
\def\E{{\mathbb E}}
\def\<{\langle}
\def\>{\rangle}
\def\Ball{{\sf B}}
\def\by{{\boldsymbol{y}}}
\def\bw{{\boldsymbol{w}}}
\def\cW{{\mathcal W}}
\def\bphi{{\boldsymbol{\varphi}}}
\def\bu{{\boldsymbol{u}}}
\def\b0{{\boldsymbol{0}}}
\def\tba{{\boldsymbol{\tilde{a}}}}
\def\st{\hat{t}}
\def\ss{\hat{s}}
\def\bbT{{\mathbb T}}
\def\bfone{{\boldsymbol 1}}
\def\bff{{\boldsymbol f}}
\def\eul{\eta}
\def\obw{\overline{\boldsymbol w}}
\def\obtheta{\overline{\boldsymbol \theta}}
\DeclareMathOperator*{\plim}{p-lim}
\def\hphi{\widehat{\varphi}}
\def\br{{\boldsymbol r}}
\def\star{*}
\def\bfzero{\boldsymbol{0}}
\title{Dynamical Decoupling of Generalization and Overfitting\\ in Large Two-Layer Networks}
\author{Andrea Montanari\thanks{Department of Statistics and Department of Mathematics, Stanford University}  \and 
	 Pierfrancesco Urbani\thanks{Universit\'e Paris-Saclay, CNRS, CEA, Institut de Physique Th\'eorique, Gif-Sur-Yvette, France} 
    }
\begin{document}

\maketitle

\begin{abstract}
Understanding the inductive bias and generalization
properties of large overparametrized machine learning models requires to characterize the dynamics of the training algorithm. 
We study the learning dynamics of large two-layer neural networks via dynamical mean field theory, a well established technique of non-equilibrium statistical physics.
We show that, for large network width \rev{$m$,
and large number of samples per input dimension $n/d$}, the training dynamics exhibits a separation of timescales which implies:
$(i)$~The emergence of a slow time scale associated with the growth in Gaussian/Rademacher complexity of the network;
$(ii)$~Inductive bias towards small complexity if the initialization has small enough complexity;
$(iii)$~A dynamical decoupling between feature learning and overfitting regimes; $(iv)$~A non-monotone behavior of the test error, associated  `feature unlearning' regime at large times. 
\end{abstract}

\tableofcontents 

\section{Introduction}\label{sec:GeneralQ}
Machine learning  (ML) models are trained using stochastic
gradient descent (SGD), or one of its variants to minimize the error on
training data (empirical risk function). Classically, their good behavior on unseen test 
data is explained by the fact that model complexity is kept small by regularization techniques:
these models do not `overfit.'
Traditional ML theory decouples the analysis of the model from the optimization algorithm,
which is assumed to converge to an approximate
global minimizer \cite{shalev2014understanding}.

In contrast, in modern ML, the empirical risk
is highly non-convex, the number of parameters is comparable with the number of  training samples, and the model complexity is only weakly controlled. As a consequence, there can be many assignments of the model parameters (many global empirical risk minimizers)
that perfectly interpolate the data ---even when these are noisy. 
While all of these \emph{interpolators} are indistinguishable
on the training data, they behave very differently (and some of them very 
poorly) on test data.
It has been hypothesized that models trained by SGD  generalize 
well to test data because
the algorithm selects a near global minimizer with low complexity, although
a mechanistic understanding of this process is lacking.
For this reason, the generalization properties cannot be decoupled from the training dynamics.

Several striking consequences  of this lack of decoupling
are documented in the literature (and have long been familiar to practitioners): 
$(i)$~Test error after training is observed to depend strongly on the initial weights distribution  \cite{glorot2010understanding};
$(ii)$~Test error depends strongly on the optimization algorithm  (SGD, RMSProp, ADAM, to name a few),  even when 
these algorithms achieve the same train error \cite{wilson2017marginal};
$(iii)$~Careful choice of the hyperparameters in the optimization algorithm is crucial \cite{li2019towards,you2019does},
and the optimal
choice is often different from the one that minimizes train error;
$(iv)$~Models learned by training for a shorter time have smaller
complexity and can generalize better
\cite{morgan1989generalization,bishop1995regularization}.

These observations have motivated a broad effort to encapsulate the effect
of the dynamics as `implicit regularization' \cite{soudry2018implicit,arora2019implicit,chizat2020implicit,woodworth2020kernel}:
the  algorithm selects an empirical risk minimizer that also minimizes 
a specific notion of model complexity.
While this \emph{implicit regularization hypothesis} has been
fruitful,
it can only be validated if we can precisely understand the training dynamics. 

In this work we leverage tools from theoretical physics to directly analyze the training dynamics and derive quantitative predictions on the implicit bias of neural network training, in a simple setting.
This allows us to capture feature learning and lazy/overfitting regimes within the same unified picture. 
We discover a time-scale separation in the training dynamics, between an early stage in 
which the model learns the relevant features representation of the data, and 
a late stage of training that is characterized by overfitting, feature `unlearning,' and hence test error that increases with training. While the regularizing effect of early stopping has been an important object of study (for simpler models) in the past \cite{morgan1989generalization,bishop1995regularization,zhang2005boosting,yao2007early},
our work is the first to point out a time-scale separation between feature learning
(on a faster timescale) and overfitting (on a slower time scale),
 thus reconciling the feature learning and neural tangent theories of
learning.

 \begin{figure}[t]
        \centering
        \includegraphics[width=0.9\linewidth]{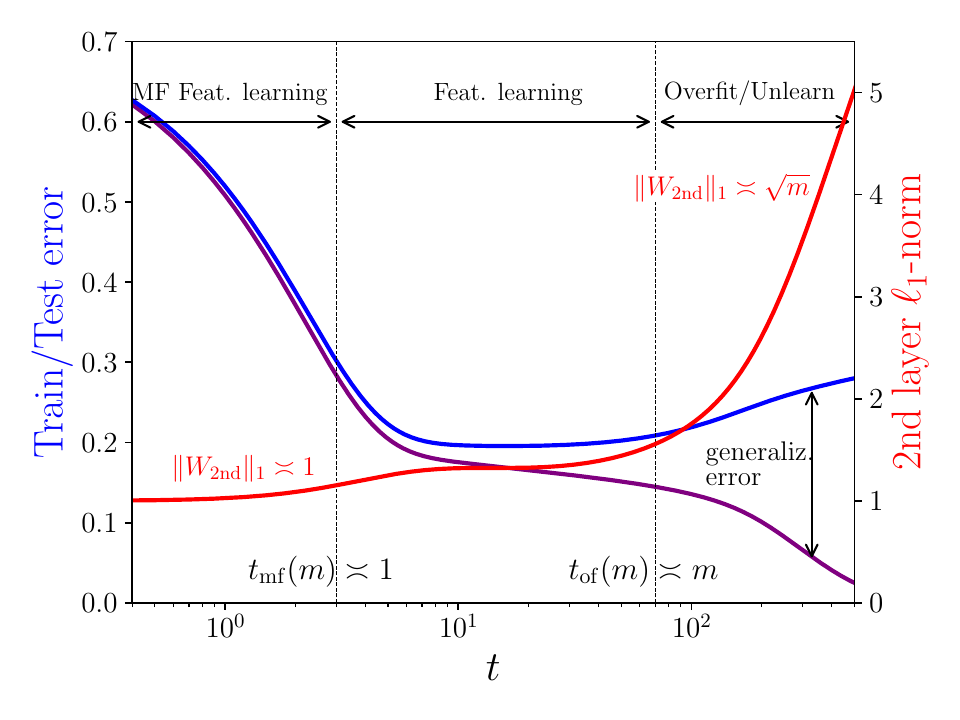}
        \caption{\textbf{Three dynamical regimes of learning in a two-layer neural networks, with $m$ hidden neurons.} Training data comprises $n$ points in $d$ dimensions distributed according to a single index model. We assume $n,m,d$ all large with $n/md=\alpha$ (here $\alpha=0.3$). Blue: test error. Purple: train error. Red: $\ell_1$ norm of second-layer weights (a proxy for model complexity).}\label{fig:Summary}
\end{figure}

We study  two-layer fully connected neural networks  $f(\,\cdot\,;\btheta) :\reals^d\to\reals$, i.e.
\begin{equation}
f(\bx;\btheta) = \frac{1}{m}\sum_{i=1}^m a_i\, \sigma(\<\bw_i,\bx\>)\, ,\label{eq:TwoLayerFirst}
\end{equation}
where $\btheta = (\ba,\bW)$, where $\bW = (\bw_1,\dots,\bw_m)\in \reals^{d\times m}$ and  
$\ba = (a_1,\dots,a_m)\in\reals^m$ are, respectively, first- and second-layer weights. For 
convenience, we  fix the normalization $\|\bw_i\|=1$,
\rev{and assume that $\sigma$ does not depend on $m$.}
We apply model \eqref{eq:TwoLayerFirst} to a supervised learning task. We are given 
i.i.d. data $(y_i,\bx_i)$, $i\le n$, with $y_i\in\reals$ a response variable and $\bx_i\in \reals^d$ a feature vector,
and try to learn a model $f(\,\cdot\,;\btheta)$ to predict the response $y_{\snew}$ corresponding to a new input $\bx_{\snew}$.
We use gradient flow (GF) to minimize the empirical risk under square loss, namely
\begin{align}
\dot{\btheta}(t) = -\frac{n}{d}\proj_{\btheta}\nabla \hcRisk_n(\btheta(t))\, , \;\;\;\; \;\; \hcRisk_n(\btheta) := \frac{1}{2n}\sum_{i=1}^n\big(y_i-f(\bx_i;\btheta)\big)^2\,.
\label{eq:GFlowFirst}
\end{align}
Here $\proj_{\btheta}$ is a projection matrix
that guarantees that $\bw_i(t) \in \S^{d-1}$ at all times.
The factor $n/d$ is introduced for convenience and simply amounts to a rescaling of time. 
We will typically initialize the training by setting 
$(\bw_i)_{i\le m}\sim_{iid} \Unif(\S^{d-1})$, and $a_i = a_0$ for all $i\le m$,
and study the dependence of the training dynamics on three key parameters:
\begin{align*}
\mbox{Network width:}\;\; m,  \;\; \;\; \mbox{Overparametrization ratio:} 
 \;\; \alpha:= \frac{n}{md}, \;\;\;\;  \mbox{Initialization scale:}\;\;   a_0\, .
\end{align*}
Alongside the train error, we will be interested in the test error at time $t$, i.e. 
 $\cRisk(\btheta(t)) :=\E\{(y_{\snew}-f(\bx_{\snew};\btheta(t)))^2\}/2$,
 and the generalization error $\cRisk(\btheta(t))-\hcRisk_n(\btheta(t))$.


Model \eqref{eq:TwoLayerFirst} is much simpler than state-of-the-art architectures 
\cite{vaswani2017attention}, but is rich enough to investigate several general questions, 
which we summarize below:

When the network is  sufficiently overparametrized ($\alpha$ small)
and $a_0$ is large, neural tangent kernel
(NTK) theory predicts that GF converges
to an interpolator \cite{jacot2018neural,du2019gradient,chizat2019lazy} .
\begin{itemize}
\item[{\sf Q1.}] For which region of $\alpha,a_0$  does convergence take place, beyond NTK theory? 
\item[{\sf Q2.}] Does the selected model provide good generalization or not  \cite{ghorbani2021linearized,mei2022generalization}?
\end{itemize}
%

In contrast, when $a_0$ is small,  gradient-based algorithms
can learn non-linear low-dimensional representation of the data
\cite{ba2022high,damian2022neural,abbe2022merged,barak2022hidden}. 
In these results, the difference between train and test error 
(generalization error) is negligible: the model does not overfit.
\begin{itemize}
\item[{\sf Q3.}]
Can we reconcile this feature-learning/no-overfitting behavior with the
lazy-training/overfitting regime described previously?
\end{itemize}

%
In the early phase of training, the generalization error vanishes.
However, training longer times can be beneficial, despite
leading to overfitting.
\begin{itemize}
\item[{\sf Q4.}]
When does the test error start increasing with training time? When should we stop training?
\end{itemize}
Finally, scaling with the network size is crucial:
\begin{itemize}
\item[{\sf Q5.}]
How does the generalization error depend on network size and number of iterations?
\item[{\sf Q6.}] Does overfitting start earlier for larger networks or later?
\end{itemize}

In Section \ref{sec:Overview}, we will present our analysis using theoretical physics
techniques. Section \ref{sec:LowerBound} presents rigorous results confirming the picture emerging from this analysis. Finally, in Section \ref{sec:Discussion} we discuss how our results address the above questions.

\section{Main results: Dynamical mean field theory}
\label{sec:Overview}

We study the dynamics of model \eqref{eq:TwoLayerFirst} under
the  simplest data distribution in which genuine non-linear learning is required to efficiently learn a good prediction rule, the so called
\emph{$k$-index model}.
Namely, we assume $\bx_i\sim\normal(0,\id_d)$ and $y_i$
that depends on a low-dimensional  projection $\bU^{\sT}\bx_i$:
\begin{align}
y_i = \varphi(\bU^{\sT}\bx_i) + \eps_i\, ,\;\;\;\; \eps_i\sim\normal(0,\tau^2) \, ,
\end{align}
where the noise $\eps_i$ is independent of $\bx_i$, $\bU\in \reals^{d\times k}$
is an orthogonal matrix ($\bU^{\sT}\bU=\id_k$) and $\varphi:\reals^k\to\reals$ is a nonlinear function, $\E\{\varphi(\bg)^2\}<\infty$ for $\bg$ standard Gaussian.

An important aspect of this data distribution is that (for large $d$) it presents the largest possible gap
between linear/kernel learning, which requires sample size to be superpolynomial in $d$ 
\cite{ghorbani2021linearized,yehudai2019power}, and nonlinear/neural network learning
which only requires $n=O(d)$  (generically, for constant $k$). 
When the dimension $d$ becomes large, discovering the latent features $\bU^{\sT}\bx$ is crucial for learning and requires nonlinear processing
of the labels $y_i$ \cite{ba2022high,damian2022neural,abbe2022merged,barak2022hidden}. 

Our main focus will be on the simplest case, namely $k=1$, with $\varphi$ a generic function (in particular $\E\{\varphi(G)G\}\neq 0$
for $G\sim\normal(0,1)$, \rev{which corresponds \emph{information exponent} equal to one according to the classification of \cite{arous2021online}.}). Some of our results apply to $k$-index models for general fixed $k$
(in particular, the rigorous results of Section \ref{sec:LowerBound}).
We defer to future work a more complete analysis of the DMFT for $k\ge 2$.

We discover a separation of time scales at 
large $m$ (or large $n/d$), for sufficiently small initialization $a_0$: 
feature learning takes place on a fast time scale, followed by
overfitting/reversal to kernel learning.
This scenario is summarized in Figure
\ref{fig:Summary}, which plots numerical evaluations of our theoretical predictions at $k=1$,
$\tau>0$ data distribution, in the limit $n,d,m\to\infty$
at overparametrization ratio $\alpha=0.3$.

More precisely, we observe three regimes (below $\bW_{\secnd}:=\ba/m$ is the
vector of second-layer weights in model \eqref{eq:TwoLayerFirst}):

\noindent$(i)$ \emph{Mean field feature learning.} $t = O(1)$.
The network learns the low-dimensional features $\bU^{\sT}\bx$;
the train error and test error decrease while their difference 
(generalization error) is negligible; the second layer weights remain small $\|\bW_{\secnd}\|_1=O(1)$.

\noindent$(ii)$  \emph{Extended feature learning.} $1\ll t\ll m$.
The train error decreases slowly; the generalization error
increases is small, i.e. $\cRisk(\btheta(t))-\hcRisk_n(\btheta(t))=o(1)$;
the test error can evolve non-monotonically, but remains approximately constant. 
Second-layer
weights become large $1\ll \|\bW_{\secnd}\|_1\ll \sqrt{m}$.

\noindent$(iii)$
\emph{Overfitting and feature unlearning.} $t\gtrsim m$.
Train error and test error diverge significantly, 
i.e. $\cRisk(\btheta(t))-\hcRisk_n(\btheta(t))$ becomes of order one.
At the end of this regime,
the train error converges to $0$, i.e. the neural network interpolates the noisy data.
The test error instead grows, and its limit value is the one of a (data independent) kernel method: in other words, the model unlearns the low-dimensional structure. Finally, the second weights grow to
$\|\bW_{\secnd}\|_1\asymp \sqrt{m}$, which indeed is the scale required for interpolation.

In this section we outline our results based on 
`dynamical mean field theory' (DMFT). The next section will present rigorous results that are
proven independently.

\subsection{Technique}

 Our DMFT analysis is based on the following two steps:

\noindent \emph{Step 1:} We leverage techniques from theoretical physics to derive 
an approximate asymptotic characterization of the gradient
flow dynamics
\eqref{eq:GFlowFirst} in the limit $n,d\to\infty$,
with $n/d\to\alphabar$. This characterization 
consists of a set of integral-differential equations for the following asymptotic quantities (here $\plim$ denotes limit in probability, and we use the superscripts 
$n$ to emphasize the dependence of the right-hand side on $n,d$)
\rev{
\begin{equation}
\begin{split}
&C_{ij}(t_1,t_2) := \plim_{n,d\to\infty}\<\bw^{n}_i(t_1),\bw^{n}_j(t_2)\>\, ,\\
&\bv_i(t) : = \plim_{n,d\to\infty} \bU^{\sT} \bw^{n}_i(t)\, ,\;\;\;\;
a_i(t) : = \plim_{n,d\to\infty}  a^{n}_i(t)\, .
\end{split}
\end{equation} }
A rigorous derivation of the DMFT  in a setting that includes 
two-layer networks is given in \cite{celentano2021high}. 

\rev{However, the asymptotically exact DMFT characterization of  \cite{celentano2021high}
is rather complex to integrate numerically or to
study analytically. In order to circumvent this problem, we use a DMFT that is is asymptotically exact for
a well-defined Gaussian version of the original model. Namely, we observe that the empirical risk of 
Eq.~\eqref{eq:GFlowFirst} takes the form 
\begin{align}
\hcRisk_n(\btheta) =\frac{1}{2n}\big\|\bF(\btheta)\big\|^2\, ,
\end{align}
where $\bF:(\S^{d-1})^m\times\reals^m\to\reals^n$ is s stochastic process with i.i.d. components
$F_i(\btheta)= y_i-f(\bx_i;\btheta)$. We replace these by Gaussian processes 
with matching mean and covariance, and study the DMFT for gradient flow with respect to the associated risk   $\hcRisk^g_n(\btheta)$.}

The Gaussian approximation comes with an error which we show analytically is
vanishing on time scales of order one (\rev{indeed on these time scales we correctly 
recover the mean field theory of \cite{mei2018mean,chizat2018global}}) and we demonstrate empirically 
to be small on larger time scales
(\rev{see for instance example Fig.~\ref{fig:SingleIndexMF}}.)
The curves in Fig.~\ref{fig:Summary} were obtained by solving numerically the DMFT equations, \rev{see Appendix \ref{Sec:DMFT_equations} for details}.

\noindent \emph{Step 2:} We study this DMFT, with special attention to the large network
limit $m\to\infty$, and large sample size $\alphabar\to\infty$, 
with $\alpha=\alphabar/m$ fixed, for a generic single index model ($k=1$).
We obtain a separation of time scales in the  dynamics,
corresponding to distinct learning regimes.

The analysis of the DMFT equations
in the double limit $m,t\to\infty$ is an example of
singular perturbation theory \cite{berglund2001perturbation,holmes2019perturbation}.
Making this type of analysis rigorous is notoriously challenging
and we proceed by a combination of numerical solutions and analytical derivations.

%
%

In the following, 
we will first consider the simplest possible setting, pure noise data,
and subsequently consider  the single-index model. The structure of the 
activation function and target nonlinearity
will be encoded in the functions 
\begin{equation*}
h(q):=\E\{\sigma(G_1)\sigma(G_q)\},\;\;\;\;
\hphi(q):=\E\{\varphi(G_1)\sigma(G_q)\}\, ,
\end{equation*}
where $G_1,G_q$ are standard jointly Gaussian
with  $\E\{G_1G_q\}=q$. The relation between $\sigma,\varphi$ and $h,\hphi$
is conveniently expressed in terms of the expansions in Hermite polynomials 
$\sigma(x) = \sum_{k\ge 0} s_k\He_k(x)$, $\varphi(x) = \sum_{k\ge 0} f_k\He_k(x)$,
which corresponds to the analytic expansion $h(q) = \sum_{k\ge 0} s_k^2 q^k$, $\hphi(q)= \sum_{k\ge 0} s_kf_kq^k$.

 As mentioned above, we assume throughout $n,d\to\infty$,
with $n/d\to\oalpha\in (0,\infty)$, with the limit $m,\oalpha\to\infty$ taken afterwards.
To further simplify our analysis, we assume a symmetric initialization whereby 
$a_i(0)=a_0$ is independent of $i\le m$ and $(\bw_i(0):i\le m)\sim_{iid}\Unif(\S^{d-1})$.
\rev{Throughout, we use `with high probability' for `with probability converging to one as $n,d\to\infty$.'}

In Section \ref{sec:LowerBound} we present rigorous results that do not require either of these simplifying assumptions.
%
%
\subsection{Training on pure noise}

 \begin{figure}[t]
    \centering
    \includegraphics[width=0.495\linewidth]{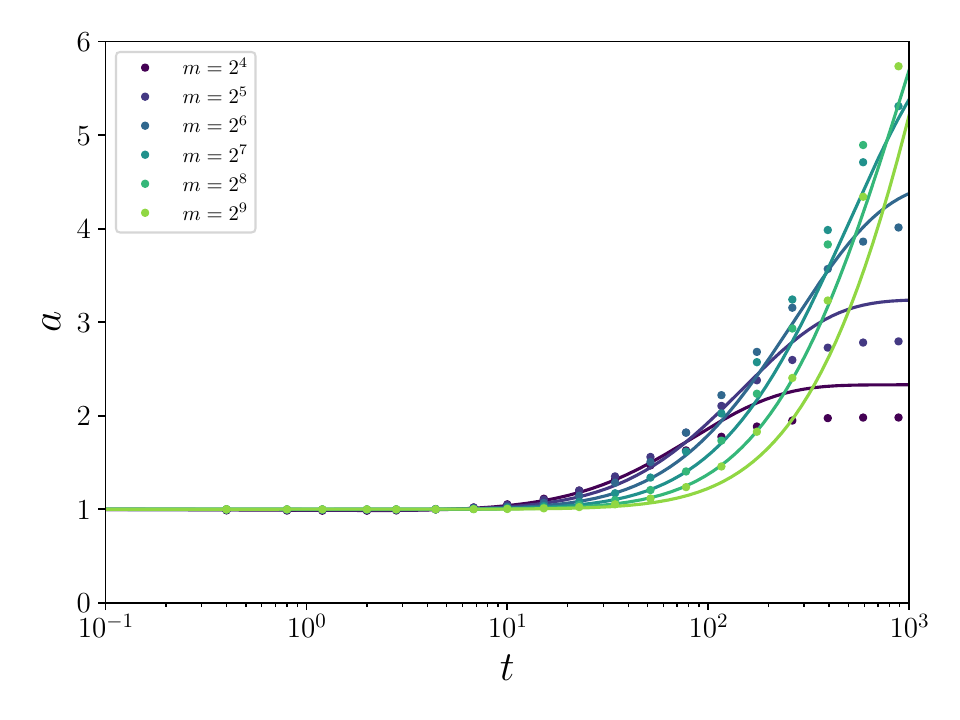}
     \includegraphics[width=0.495\linewidth]{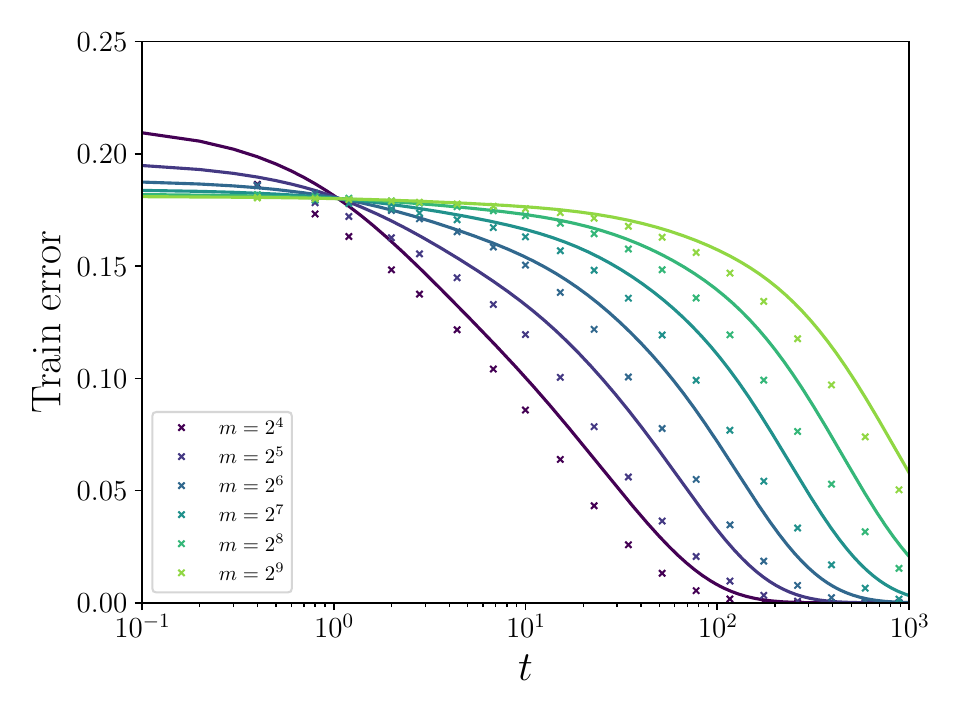}
        \caption{\textbf{Evolution of second-layer weights (left) and train error (right) when fitting pure noise data}. Here we use mean field initialization, $h(z) = (9/10)z + (1/6)z^3$, $\alpha=0.4$ and $\tau=0.6$. 
 Symbols: SGD results on actual 2-layer networks with $d=200$, $n=\alpha md$
 (averaged over 10 simulations). Continuous viridis lines: Numerical solution of the DMFT equations. \rev{Note that the second layer weights are given in terms of a scalar quantity as the result of the statistically symmetric initialization.}}\label{fig:NoiseTraining}
\end{figure}

We begin by the case in which the data is pure noise: $y_i=\eps_i \sim \normal(0,\tau^2)$. A by-now-classic experiment \cite{zhang2021understanding} showed that  deep learning 
models have sufficient capacity to achieve vanishing training error 
even when actual labels are replaced by random ones: they `interpolate pure noise.'

The ability of a model $\cF_{\Theta} =(f(\,\cdot\,;\btheta) :\btheta\in\Theta)$  to interpolate pure noise  is intimately connected to its Gaussian complexity 
$\cG(\cF_{\Theta};n):=\E\sup_{\btheta\in \Theta}\<\bg,f(\bX;\btheta)\>/n$ \cite{vershynin2018high}
(where $\bg\sim\normal(\bzero,\id_n)$ is independent of $f(\bX,;\btheta)= (f(\bx_i;\btheta):\, i\le n)$. 
Indeed, interpolation is impossible unless $\cG(\cF_{\Theta};n)\ge \tau$. Viceversa, $\cG(\cF_{\Theta};n)\ll \tau$ ensures good generalization.

 By a theorem of \cite{bartlett1996valid} for the 
network  \eqref{eq:TwoLayerFirst},  $\cG(\cF_{\Theta};n)\le L_{\sigma}\|\ba/m\|_1\sqrt{d/n}$ (with $L_{\sigma}$ depending uniquely on $\sigma$).
This means that, in order to interpolate noise, the average magnitude of
second layer weights must be $\|\ba/m\|_1\ge L^{-1}_{\sigma}\tau\sqrt{n/d}= 
(L^{-1}_{\sigma}\alpha^{1/2})\tau\sqrt{m}$.

However, complexity bounds do not have implications on the convergence of GF to an interpolator.

Figure \ref{fig:NoiseTraining} compares the DMFT predictions to simulations using 
SGD to train an actual two layer networks. In this figure we initialize $a(0)=1$, and let $a(t)$ evolve
with GF alongside the first layer weigths. 
We observe that the theory describes well the empirical results, 
despite the Gaussian approximation in our DMFT and the difference between SGD and GF.
We also observe that second-layer weights remain roughly constant until a large time $t_{\#}(m)$, which appears to increase
with $m$. Roughly at the same time, train error starts to decrease and converges to zero. 

In \rev{Section \ref{NMF_purenoise} of the appendix}, we will make precise the  above  picture of the evolution of $a(t)$.
Here, we consider a simplified setting in which $a(t)=\gamma \sqrt{m}$ with $\gamma$
independent of $m$, not evolving with training. Note that $\cG(\cF_{\Theta};n)\asymp \gamma/\sqrt{\alpha}$ and hence such a network can interpolate pure noise if $\gamma$
is larger than threshold depending on $\alpha$. 
Our DMFT predicts a \rev{sharp} phase transition. For $\alpha\in(0,1)$, GF converges to vanishing train 
error with high probability if 
$\gamma> \gamma_{\sGF}(\alpha,m)\tau$, and converges to a strictly positive training error
if $\gamma< \gamma_{\sGF}(\alpha,m)\tau$. 
The threshold $\gamma_{\sGF}(\alpha,m)$ 
converges to a limit
$\gamma_{\sGF}^*(\alpha)\in (0,1)$ as $m\to\infty$.

A rephrasing of the same phenomenon states that 
$\lim_{n,d\to\infty} \hcRisk_n^g(\btheta(t))={e}_{\str}(t;m,\gamma)$, and
\begin{equation}
\lim_{t\to\infty} \lim_{m\to\infty} {e}_{\str}(t;m,\gamma_0) = \begin{cases}
    e_*(\gamma)>0 & \mbox{ for $\gamma< \gamma_{\sGF}^* (\alpha)\tau$,}\\
    0 & \mbox{ for $\gamma\ge  \gamma_{\sGF}^* (\alpha)\tau$.}
\end{cases}
\end{equation}
Informally $\gamma_{\sGF}^*(\alpha)$ is the minimum complexity $\gamma$ for a very large network to interpolate noise via gradient flow.
The functions  $\gamma_{\sGF}^*(\alpha)$, $e_*(\gamma)$ will play an important role below. 
%

%
%

We will next consider training on data from a single-index model.
The initial scale of second-layer weights $\|\ba(0)/m\|_1$ plays a crucial role and we will separately analyze lazy and mean field initializations. 

\subsection{Training on data with latent structure: lazy initialization}

 \begin{figure}[t]
     \centering
     \includegraphics[width=0.495\linewidth]{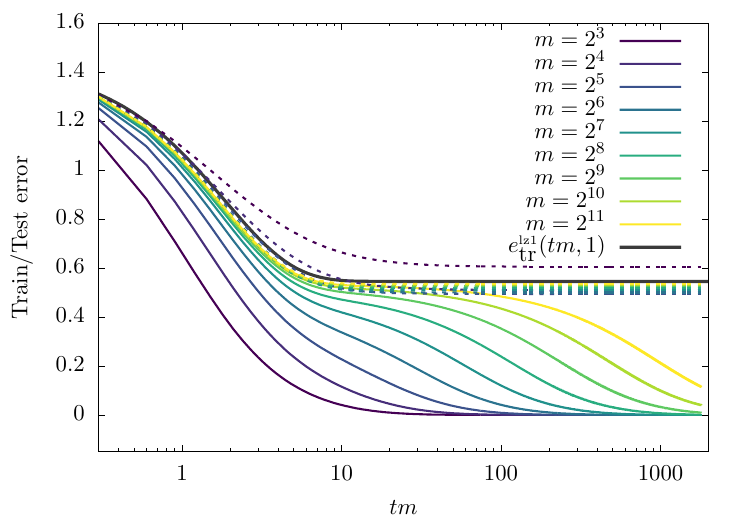}
        \includegraphics[width=0.495\linewidth]{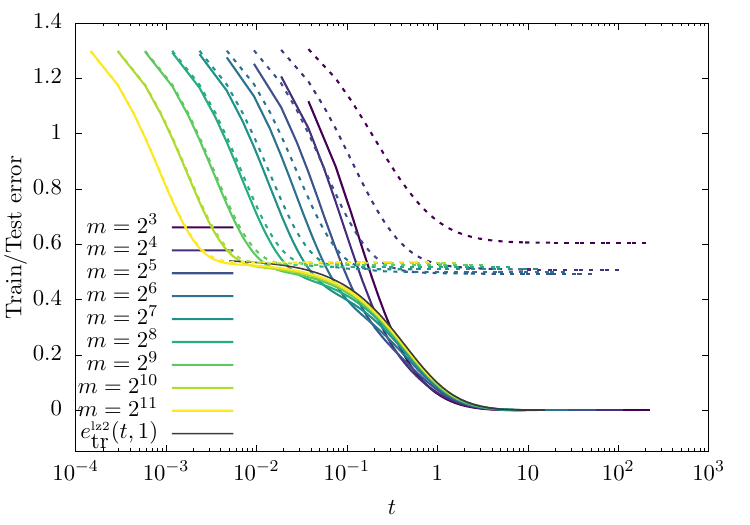}
     \caption{\textbf{Train/test error (right) when fitting
     data  from a single index model}. We set $h(z )=\hphi(z)=(9/10)z+z^2/2$, $\tau=0.3$ and $\alpha=0.3$. 
     Lines correspond to predictions from the DMFT (continuous: train error; dashed: test error).
      \rev{Black continuous line is the $m\to \infty$ value.} Right: Same data plotted versus $t$.}
     \label{fig:SingleIndexLazy}
 \end{figure}

%

We initialize $a(0) = \gamma_0\sqrt{m}$, and let $a(t)$ evolve according to GF alongside first-layer weights.
DMFT predicts the emergence of three dynamical regimes for large $m$ and large $\alphabar$ 
(with $n/d\to\alphabar$).
For an illustration, we refer to Fig.~\ref{fig:SingleIndexLazy}.

\noindent\emph{First  dynamical regime: $t=O(1/m)$.} Second layer weights 
do not change significantly $\gamma(t) = \gamma_0+o_m(1)$, 
while first layer-weights  move by $\|\bw_i(t)-\bw_i(0)\|=\Theta(1/\sqrt{m})$. 
Because the weights $a_i(t)$ are of order $\sqrt{m}$, 
even an $O(1/\sqrt{m})$ change in the $\bw_i$ leads to a significant decrease in test error and train error.

Train and test error are close to each other. Namely, the following limits are well defined
\rev{
\begin{align}
\lim_{n,d\to\infty}\hcRisk_n^g(\btheta(t))= e_{\str}(t;\varphi,\gamma_0,m,\alpha)\, ,
\;\;\;\;\; \lim_{n,d\to\infty} \cRisk^g(\btheta(t)) = e_{\sts}(t;\varphi,\gamma_0,m,\alpha)\, .
\end{align}}
with 
$\lim_{m\to\infty} e_{\str}(\ts/m;\varphi,\gamma_0,m,\alpha) = \lim_{m\to\infty} e_{\sts}(\ts/m;\varphi,\gamma_0,m,\alpha) =: \lzf{e}(\ts;\varphi,\gamma_0,\alpha)$.

For large scaled time $\ts$,  the error $\lzf{e}(\ts;\varphi,\gamma_0,\alpha)$
converges to the error of the best linear approximation
to $f_*$.
This dynamical regime follows the qualitative predictions of NTK theory, and is essentially linear in the weights $\bw_i$, \rev{but the time is too short for the model to overfit the data.}

\noindent\emph{Second dynamical regime: $t=\Theta(1)$.}
Second layer weights do not change significantly: $\gamma(t)=\gamma_0+o_m(1)$,
while first layer weights change significantly $\|\bw_i(t)-\bw_i(0)\|=\Theta(1)$.
However they change orthogonally to the latent subspace $\bU$ and hence the test error does not change: no actual learning takes place in this regime, \rev{but the model starts to overfit the data.}

More formally, train and test error have well defined limits
as the network width diverges:
\begin{align}
\lzs{e}_{\str}(t;\varphi,\gamma_0,\alpha)  
:=\lim_{m\to\infty} e_{\str}(t;\varphi,\gamma_0,m,\alpha)\, ,\;\;\;
\lzs{e}_{\sts}(t;\varphi,\gamma_0,\alpha)  
:=\lim_{m\to\infty} e_{\sts}(t;\varphi,\gamma_0,m,\alpha)\,.
\end{align}
However, the scaling function 
$\lzs{e}_{\sts}(t;\varphi,\gamma_0,\alpha)$  for the test error is 
constant in time and equal to the value achieved
at the end of the first dynamical regime.
Namely
\begin{align}
\lzs{e}_{\sts}(t;\varphi,\gamma_0,\alpha) =  \lim_{\ts\to\infty}
\lzf{e}(\ts;\varphi,\gamma_0,\alpha)=  \frac 12 \left(\tau^2+ \|\varphi\|^2- \frac{\|\nabla\hphi(\bzero)\|^2}{h'(0)}+\gamma_0^2(h(1)-h'(0))\right)\, .\label{eq:TestLazy}
 \end{align}
Since the $\bw_i$'s move orthogonally to the latent space, 
their dynamics is equivalent (for large $m$) to the one in the pure noise setting,
modulo a redefinition of  $h$. 
The right plot in Fig.~\ref{fig:SingleIndexLazy} illustrates this.

\noindent\emph{Third dynamical regime: $t=\Theta(m)$.} 
The qualitative properties of this regime depend whether or not $\gamma_0$ is larger
than an interpolation threshold  $\gamma^*_{\sGF}(\alpha,\varphi,\tau)$,
which generalizes the  threshold $\gamma^*_{\sGF}(\alpha)=\gamma^*_{\sGF}(\alpha,0,1)$
introduced in the pure noise case. Because 
the dynamics of weights $\bw_i$ in the subspace orthogonal to $\bU$ 
is equivalent to dynamics in pure noise, \rev{we expect  the interpolation
threshold $\gamma^*_{\sGF}(\alpha,\varphi,\tau)$ to be given in terms of pure noise 
threshold  $\gamma^*_{\sGF}(\alpha)$ as follows:}
\begin{align}
    \gamma^*_{\sGF}(\alpha,\varphi,\tau) =\Big(\tau^2+\|\varphi\|^2-\frac{\|\nabla \hat \varphi(\bfzero)\|^2}{h'(0)} \Big)^{1/2}
    \gamma^*_{\sGF}(\alpha) \, .\label{eq:InterpolationK-index-Main}
\end{align}
For $\gamma_0>\gamma^*_{\sGF}(\alpha,\varphi,\tau)$, interpolation is achieved during the second dynamical regime, no further evolution takes place.

For $\gamma_0<\gamma^*_{\sGF}(\alpha,\varphi,\tau)$, a non-trivial evolution
takes place for $t=\Theta(m)$. \rev{Introducing the rescaled time  $z\in (0,\infty)$,
we obtain,  as $m\to \infty$,}
\begin{align}
\gamma(mz) = \lzt{\gamma}(z)+o_m(1), \;\;\;
e_{\str}(mz) = \lzt{e}_{\str}(z)+o_m(1), \;\;\;
e_{\sts}(mz) = \lzt{e}_{\sts}(z)+o_m(1) \, .
\end{align}
Further, for large values of the rescaled time
$z\to\infty$, $\lzt{\gamma}(z)$ grows to $\overline\gamma^*_{\sGF}(\alpha,\varphi,\tau)\approx \gamma^*_{\sGF}(\alpha,\varphi,\tau)$, while  $\lzt{e}_{\str}(z)$ decreases to $0$.
In other words, interpolation is achieved on this third regime.

Further the test error $\lzt{e}_{\sts}(z)$ increases
from $\lzs{e}_{\sts}(t;\varphi,\gamma_0,\alpha)$
to  $\lzs{e}_{\sts}(t;\varphi,\gamma^*_{\sGF},\alpha)$, 
with $\gamma^*_{\sGF}=\gamma^*_{\sGF}(\alpha,\varphi,\tau)$
whereby $\lzs{e}_{\sts}(\cdots )$ is given by  Eq.~\eqref{eq:TestLazy}.

\subsection{Training on data with latent structure: mean field initialization}

\begin{figure}[t]
    \centering
    \includegraphics[width=0.495\linewidth]{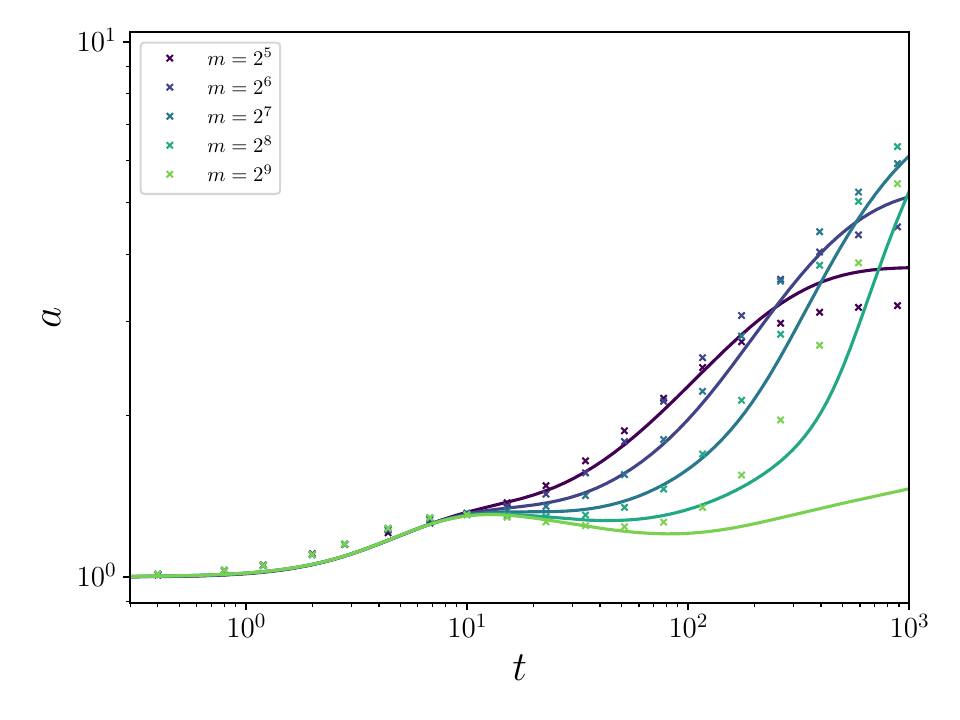}
     \includegraphics[width=0.495\linewidth]{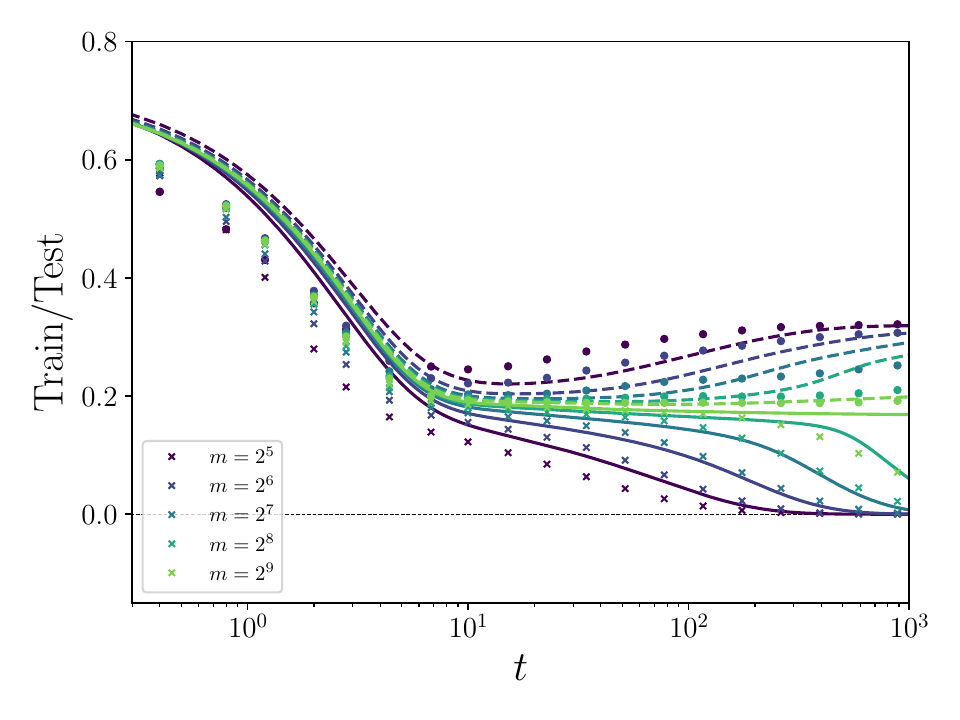}
        \caption{\textbf{Training dynamics under a single-index model.} We set $h(q)=\hphi(q)=(9/10)q+q^3/6$, $\tau=0.3$ and $\alpha=0.3$, under mean field initialization. Left: second-layer weights. Right:  train and test error.
        Symbols are empirical results for SGD with actual two-layer neural networks with $d=200$, $n=\alpha m d$ (averaged over $10$ simulations).
          Lines correspond to predictions from the DMFT (on the right, continuous: train error; dashed: test error).}\label{fig:SingleIndexMF}
\end{figure}
We initialize $a(0) = a_0$, independent of $m$ and let second layer weights evolve. 
\rev{Note that at initialization the network's Rademacher complexity is small, namely of 
order  $a_0\sqrt{d/n} = a_0/\sqrt{\alpha m}$.}
Our DMFT analyisis predicts two dynamical regimes for large $m$.
We will refer to them as `first' and `third regime' for consistency with other settings (\rev{see Sec.\ref{NMF_si} of the appendix}).
For an illustration, we refer to Figs.~\ref{fig:SingleIndexMF} and
\ref{fig:ThirdRegimeMFSignal}.

\noindent\emph{First dynamical regime: $t=O(1)$.} 
Both first and second layer weights change by order one:
$a(t) = a_0+\Theta(1)$ and $\|\bw_i(t)-\bw_i(0)\|=\Theta(1)$.
and as a consequence test and train error decrease significantly. In this regime, the two errors remain close to each other and 
their evolution is well captured by the mean field theory of \cite{mei2018mean,chizat2018global}, as specialized to the case of
spherically invariant distributions \cite{berthier2024learning,arnaboldi2023high}.

Namely, $\lim_{m\to\infty}a(t)=\mff{a}(t)$,   $\lim_{m\to\infty}\bv(t)=\mff{\bv}(t)$,  
and DMFT reduces to a system of $k+1$ ordinary differential equations for the 
$k+1$ scalar variables $(\mff{a}(t),\mff{\bv}(t))$
\begin{equation}\label{NMF_r_main}
\begin{split}
    \partial_t\mff{\bv}(t) &= \alpha \mff{a}(t) \bQ_{\mff{\bv}(t)}\Big(
    \nabla \hat \varphi(\mff{\bv}(t))-\mff{a}(t)h'(\|\mff{\bv}(t)\|^2)\mff{\bv}(t)\Big)\, ,\\
    \partial_t\mff{a}(t) &= \alpha\hat \varphi(\mff{\bv}(t)) - \alpha\mff{a}(t) h(\|\mff{\bv}(t)\|^2)\, ,
\end{split}
\end{equation}
where $\bQ_{\bv}:=\id_k-\bv\bv^{\sT}$. As mentioned above, train and test error coincide in the large width limit
\begin{align*}
    &\lim_{m\to \infty}e_{\str}(t)=\lim_{m\to \infty}e_{\sts}(t)=\mff{e}(t)\, .
\end{align*}
An explicit formula for $\mff{e}(t)$ is given in \rev{Appendix \ref{Sec:NMF_SI_first_regime}}.
In the case $k=1$ and $\hphi(z) = h(z)$, we have  that 
$\mff{a}=1$, $\mff{v} = 1$ is a fixed point of Eq.~\eqref{NMF_r_main},
and indeed the only fixed point with $\mff{v}>0$. If $h'(0)>0$,
then, we have $(\mff{a}(t),\mff{v}(t))\to (1,1)$ as $t\to\infty$,
and therefore test and train error converge to
 the Bayes error $\mff{e}(t)\to \tau^2/2$.
 This is  significantly smaller than the test error 
achieved with lazy initialization. The separation between 
lazy and mean-field initialization is 
expected because feature learning takes place in the mean field regime.

\noindent\emph{Third dynamical regime: $t=\Omega(m)$.} 
Computing the local stability of DMFT solutions around the mean field asymptotics
(see \rev{Appendix \ref{Sec:breakdownNMF}}) 
suggests that the latter  breaks down for $t = \Theta(m)$.
For $t\gtrsim m$, we observe that the second layer weights grow 
to achieve $a(t)\asymp \sqrt{m}$, the projection onto the latent space decreases
to  $\bv(t)\asymp 1/\sqrt{m}$, and train and test error diverge, 
eventually achieving $e_{\str}(t) \approx 0$ and test error significantly larger than the Bayes error achieved earlier.
We refer to this phenomenon as `feature unlearning.'

Denoting by $t_0(m;c)$ the time at which $a(t) = c\sqrt{m}$ (for $c$ a small constant),
we expect the existence of a window size $w(m)$ such that
\begin{align}
\lim_{m\to\infty} \frac{a\big(t_0(m;c)  + z \, w(m)\big)}{\sqrt{m}} &= \mft{\gamma}(z)\, ,\;\;\;\;\;\;
\lim_{m\to\infty} e_{\str/\sts}\big(t_0(m;c)  + z \, w(m)\big)  = 
\mft{e}_{\str/\sts}(z) \, , 
\end{align}
where  $\mft{\gamma}(z)$, $\mft{e}_{\str}(z)$, $\mft{e}_{\sts}(z)$ are scaling functions describing the dynamics on this timescale. We expect
$t_0(m;c) = t_*(c) m+o(m)$, and  $w(m)\lesssim t_0(m;c)$, but our numerical solutions are 
not sufficient to determine the precise scaling. 
On the other hand, it appears that at large times, the complexity converges close the interpolation threshold:
\begin{align}
\lim_{z\to\infty} \mft{\gamma}(z) = \overline\gamma^*_{\sGF}(\alpha,\varphi,\tau)
\approx \gamma^*_{\sGF}(\alpha,\varphi,\tau)\, .
\end{align}

Finally, the evolution of train and test error for $a(t)\asymp \sqrt{m}$
appears to match the behavior at fixed second-layer weights.
Namely, 
we define two functions
\begin{align}
\mf{\eps}_{\str/\sts}(\gamma) := \lim_{m\to\infty} e_{\str/\sts}(t_0(m;\gamma),m)\, .
\end{align}
We observe that the limit curves $(\gamma,\mf{\eps}_{\str}(\gamma))$, $(\gamma,\mf{\eps}_{\sts}(\gamma))$, match closely asymptotic train and test 
error obtained by fixing $a(t) = \gamma\sqrt{m}$, and not letting second-layer weight evolve. This confirms the hypothesis that $\gamma(t)$ is a slow variable,
while others converge as if $\gamma$ was fixed.

\begin{figure}
    \includegraphics[width=0.33\linewidth]{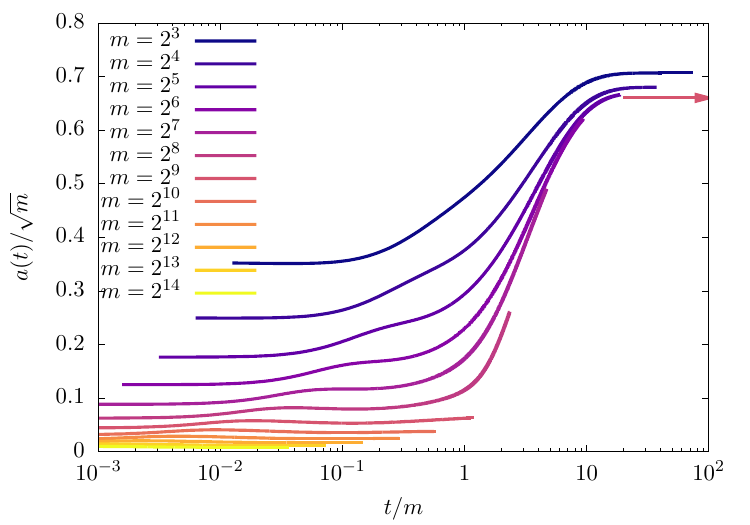}
    \includegraphics[width=0.33\linewidth]{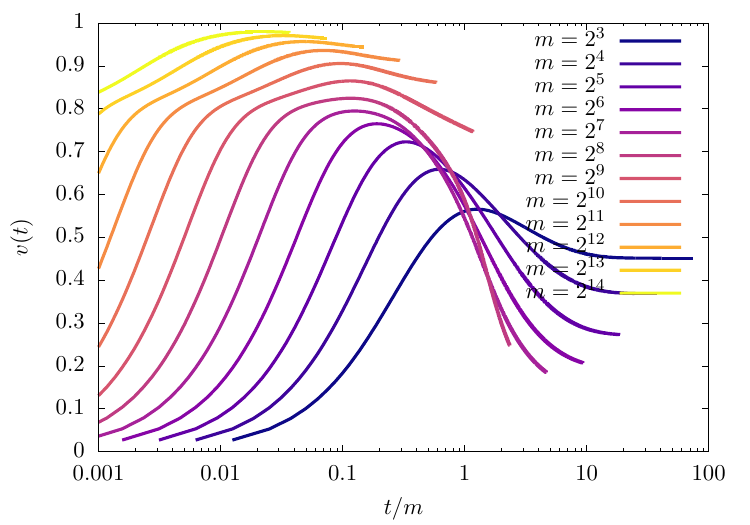}
     \includegraphics[width=0.33\linewidth]{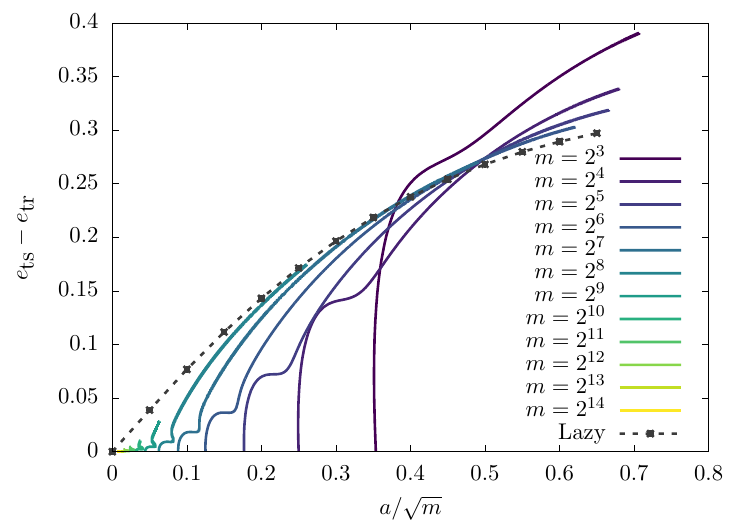}
    \caption{\textbf{Left: second layer weights on the scale $\sqrt m$ as a function of $t/m$.} Curves appear to collapse on a master curve. The red arrow denotes $\gamma_{GF}^*$ and the curves appear to converge to that limit. \textbf{Center: the projection of the first layer weights on the latent space} in the single index  model as a function of time on timescales of order $m$. \textbf{Right: difference between test and train error} as a function of the second layer weights on the scale $\sqrt m$. The finite $m$ curve are approaching a scaling curve which coincides with the one obtained by evaluating the same quantity but with a lazy initialization and fixed second layer weights.}
    \label{fig:ThirdRegimeMFSignal}
\end{figure}

%
%
\section{Lower bounding the overfitting timescale}
\label{sec:LowerBound}

In this section we rigorously establish two results that confirm  elements 
of the scenario outlined in the previous sections. We emphasize that the result 
presented here are non-asymptotic, i.e. hold at finite $n,m,d$ modulo unspecified absolute constants. Further,
we do not assume a symmetric initialization of the weights.
Throughout this section setting, it is more convenient to rescale time defining $\ret=t\alpha$. Hence. instead of the flow \eqref{eq:GFlowFirst}, we study
\begin{align}
\dot{\btheta}(\ret) = -m\proj_{\btheta}\nabla \hcRisk_n(\btheta(\ret))\,.
\label{eq:GFlowRescaled}
\end{align}
For $\alpha=\Theta(1)$ the parametrizations $t$ and $\ret$ are equivalent. 

The first result of this section implies that 
(under mean field initialization) overfitting cannot take
place on times of order one.
\begin{theorem}
    Under the GF dynamics \eqref{eq:GFlowFirst}, and the data distribution in the introduction (with $k$ arbitrary),
    further assume $\|\sigma\|_{\sLip}, \|\sigma\|_{\infty}\le L$,  
    $|\varphi(0)|,\|\varphi\|_{\sLip}\le L$,
    $\|\ba(0)\|_\infty\le a_0$, for some $a_0\ge 1$ and that the 
    $\bw_i(0)$, $i\le m$ 
    are independent of the data $\{(y_i,\bx_i):i\le n\}$.
    Finally assume $n\ge d\vee m$.
    Then, there exist universal constants  $C_0,C_1$,  and 
    the following holds for all $\ts\ge 0$,
    \begin{align}
    & \|\ba(\ret)\|_\infty\le a_0 + a_1\, \ret\,, \;\;\;\; a_1:=C_0 L (\tau +\sqrt{k}+  a_0L)\, ,\\
    &\big|\cRisk(\ba(\ret),\bW(\ret))-\hcRisk_n(\ba(\ret),\bW(\ret))\big|\le C_1(L^2 (a_0+a_1 \ret )^2+\tau^2)
    \cdot\sqrt{\frac{d}{n}}\, .\label{eq:GenErrorGF}
    \end{align}
\end{theorem}
Under mean field initialization, $a_0$ is a fixed constant 
and hence $a_1$ is also bounded, whence the generalization error  in Eq.~\eqref{eq:GenErrorGF} 
is small as long as $\ts= o((n/d)^{1/4})$ (equivalently, for $\alpha$ fixed, $\ts= o(m^{1/4})$).

By itself, this result implies a separation of timescales between learning and overfitting, thus confirming the picture developed within DMFT, but falls
short of characterizing the overfitting timescale. 

The second result implies that, up to time-scale of order one, the dynamics is 
closely tracked by the mean field equations \eqref{NMF_r_main}.
Since the $a_i(0)$ at initialization are not necessarily all equal, these are generalized as
\begin{align}
    \partial_{\ret}\mff{\bv}_i(\ret) &= \mff{a}_i(\ret) \bQ_{\mff{\bv}_i(\ret)}\Big(
    \nabla \hat \varphi(\mff{\bv}_i(\ret))-
    \frac{1}{m}\sum_{j=1}^m\mff{a}_j(\ret)h'(\<\mff{\bv}_i(\ret),\mff{\bv}_j(\ret)\>)
    \mff{\bv}_j(\ret)\Big)\, ,\nonumber\\
    \partial_{\ret}\mff{a}_i(\ret) &= \hat \varphi(\mff{\bv}_i(\ret)) - 
    \frac{1}{m}\sum_{j=1}^m
    \mff{a}_j(\ret) h(\<\mff{\bv}_i(\ret),\mff{\bv}_j(\ret)\>)\, .\label{eq:HeterogeneousNMF}
\end{align}
The mean field prediction for test error is the same as for training error and given by
\begin{align*}
e_{\sts}(\ret) = \frac{1}{2}\|\varphi\|^2_{L^2}-\frac{1}{m}\sum_{j=1}^m
\mff{a}_j(\ret) \hat \varphi(\mff{\bv}_j(\ret)) + 
    \frac{1}{2m^2}\sum_{j=1}^m \mff{a}_i(\ret) 
    \mff{a}_j(\ret)\, h(\<\mff{\bv}_i(\ret),\mff{\bv}_j(\ret)\>)
\end{align*}
\begin{theorem}
    Under the the GF dynamics \eqref{eq:GFlowFirst}, and the data distribution in the introduction (with $k$ arbitrary),
      further assume that $\|\varphi\|_{\infty}$, $\|\varphi'\|_{\infty}$, $\|\varphi'\|_{\sLip}\le L$,
      $\|\sigma\|_{\infty}$, $\|\sigma'\|_{\infty}$, $\|\sigma'\|_{\sLip}\le L$.
      Further assume                                                                             
            $|a_i(0)|\le L$ for all $i\le m$, $(\bw_i(0))_{i\le m}\sim_{iid}\Unif(\S^{d-1})$.
      Then for any $\delta>0$ there exist constants $c_0$ $c_1$, $C$ depending on $L,\tau,\delta,k$ such that, letting
      $T_{\slb} =c_0(\log m)^{1/3}\wedge (\log n/d)^{1/3}$, the following happens with probability at least $1-2\exp(-c_1 d)$,
      \begin{align}
          \sup_{\ret\le T_{\slb}}\frac{1}{m}\sum_{i=1}^m\Big(|a_i(\ret)-\mff{a}_i(\ret)|+\|\bv_i(\ret)-\mff{\bv}_i(\ret)\|\Big) &\le 
          C \left(\frac{1}{m}\vee\frac{1}{d}\vee
              \frac{d}{n}\right)^{1/2-\delta}
          \, ,\label{eq:MF1}\\
            \sup_{t\le T_{\slb}}\Big|\cRisk(\ba(\ret),\bW(\ret))-e_{\sts}(\ret)\Big| &\le  
              C  \left(\frac{1}{m}\vee\frac{1}{d}\vee
              \frac{d}{n}\right)^{1/2-\delta}\,.
              \label{eq:MF2}
      \end{align}
\end{theorem}

\begin{remark}
    While the analysis in the previous section requires $m\to\infty$ \emph{after}
    $n,d\to\infty$, neither Theorem 3.1 nor Theorem 3.2 make the assumption.
    In particular, Eq.~\eqref{eq:GenErrorGF} implies that the generalization error
    is small for $\ts=o((n/d)^{1/4})$ \emph{irrespective of $m$}.

    Similarly, Eqs. \eqref{eq:MF1}, \eqref{eq:MF2} imply that the mean field theory of \cite{mei2018mean,chizat2018global,rotskoff2022trainability} captures well the evolution of the system for times  $t = o((\log m)^{1/3}\wedge (\log n/d)^{1/3})$.
\end{remark}

\section{Discussion}
\label{sec:Discussion}

We conclude by highlighting a few qualitative conclusions  of 
our work, and how they address questions raised in Section~\ref{sec:GeneralQ}. 
In the following remarks, we consider  $\alpha=n/md$
as constant.

\noindent{\bf Interpolation mechanism.} 
In the current setting, the neural model complexity is proportional to $\|\ba(t)\|_1/\sqrt{m} = \gamma(t)+o_n(1)$. We observe two alternative
scenarios. If the complexity at initialization is large enough
$\gamma_0>\gamma^*_{\sGF}(\alpha)\tau$, then the gradient flow rapidly converges
to a near interpolator without significant change in $\gamma(t)$. If instead, $\gamma_0<\gamma^*_{\sGF}(\alpha)\tau$,
then $\gamma(t)$ grows to reach the interpolation threshold at which point the training error converges to $0$.

\noindent{\bf Adiabatic evolution of model complexity.}
In the latter case, the complexity $\gamma(t)$ evolves on a slower time scale than 
other degrees of freedom. The dynamics on shorter timescales is well approximated by 
the one at fixed $\gamma$ (given by the current value $\gamma(t)$). 
The generalization error becomes of order one only when $\gamma(t)$ is of order one.

\noindent{\bf Decoupling of learning and overfitting.}
When $\gamma_0=o_m(1)$, the fact that $\gamma(t)$ acts as a slow variable implies a large-$m$
decoupling between learning (which takes place on faster timescales, as long as $\gamma(t)=o_m(1)$), and overfitting (which takes place on slower timescales, when $\gamma(t)=\Omega_m(1)$). This has several implications for the questions outlined in the introduction.

\noindent{\sf Q3}:~Lazy initialization $a(0)\asymp \sqrt{m}$ leads to poor generalization because the feature-learning phase is skipped either partially or altogether.

\noindent{\sf Q2}:~Training until interpolation is generally
suboptimal. 

\noindent{\sf Q4}:~The optimal tradeoff is obtained at the end of the first phase.

\noindent{\sf Q5, Q6}: Further, at fixed overerparametrization $n/md=\alpha$, 
overfitting starts later for larger
models.

\noindent{\bf Overfitting and feature unlearning.} The above description points at
a non-monotonicity of the model quality, which improves on short time scales, and deteriorates at larger time scales. Reciprocally, early stopping
acts as a regularization. While this phenomenon is well understood for  linear models \cite{friedman2000additive,yao2007early}, our analysis provides 
an analogous (quantitative) scenario for training neural network 
models. In particular, it clarifies the underlying mechanism:
in the same dynamical regime in which network complexity grows ($\gamma(t)$ becomes
of order one), and training error becomes negligible, the low-dimensional latent features are `unlearned' ($\bv(t)$ becomes of order $1/\sqrt{m}$). \rev{We expect that these findings also allow to understand the beneficial effect of regularization on the second layer.}

\subsection*{Acknowledgments}
This work was supported by the NSF through award DMS-2031883, the Simons Foundation through
Award 814639 for the Collaboration on the Theoretical Foundations of Deep Learning, 
and the ONR grant N00014-18-1-2729.
This work was supported by the French government under the France 2030 program (PhOM - Graduate School of Physics) with reference ANR-11-IDEX-0003.


\bibliographystyle{amsalpha}
\newcommand{\etalchar}[1]{$^{#1}$}
\providecommand{\bysame}{\leavevmode\hbox to3em{\hrulefill}\thinspace}
\providecommand{\MR}{\relax\ifhmode\unskip\space\fi MR }
\providecommand{\MRhref}[2]{%
  \href{http://www.ams.org/mathscinet-getitem?mr=#1}{#2}
}
\providecommand{\href}[2]{#2}

\newpage
\appendix

\section{Setting}\label{sec.setting}

We recall for reference some basic definitions and notations.
We consider the 2-layer network defined by
\begin{align}
f(\bx;\ba,\bW) = \frac{1}{m}\sum_{i=1}^m a_i\,\sigma(\<\bw_i, \bx\>)\, .\label{def_network}
\end{align}
Throughout,  we assume an offset to be subtracted so that $\E\sigma(G)=0$, for $G\sim\normal(0,1)$.
The network input $\bx$ is a $d$-dimensional real vector and  the output is a scalar variable. The parameters of the network are the weights of the first layer collected in the matrix $\bW$ defined as
\begin{align}
\bW =\left(\begin{matrix}
\bw_1\\
\bw_2\\
\cdot\\
\cdot\\
\bw_m
\end{matrix}
\right)\in \reals^{m\times d}\, ,\;\;\;\;\; \bw_i\in\reals^d\, .
\end{align}
We will assume that $\|\bw_i\|^2=1$. The weights of the second layer are instead $(a_1,\dots,  a_m)$ and are real, possibly unbounded, variables.

We consider a dataset of $n$ points independent and identically distributed $(y_i, \bx_i)_{i\le n}$ where $\bx_i \sim \normal(0, \id_d)$, and
the labels $y_i$ are generated according to the following $k$-index models:
\begin{align}
y_i &= 
    \varphi(\bU^{\sT} \bx_i) + \eps_i\:.
%
\end{align}
Therefore, labels depend on the projection of the covariates on a fixed  subspace $\bU\in \reals^{d\times k}$,
with $\bU^{\sT}\bU = \id_k$ (there is no loss of generality in assuming $\bU$ orthogonal). 
Efficient learning requires to estimate this subspace.
Since we consider learning with square loss, we assume
\[
\|\varphi\|_2^2:= \E \big\{\varphi(\bU^{\sT}\bx_i)^2 \big\} =  \E\{\varphi(\bg)^2\}\, ,
\]
where $\bg\sim\normal(0,\id_k)$. We refer to the case $\varphi=0$ as the `pure noise case' or `pure noise data'.

We now discuss the covariance structure of the network given by Eq.~\eqref{def_network}. For two sets of weights $(\ba_1,\bW_1)$ and 
$(\ba_2,\bW_2)$ we have
\begin{align}
\E \big\{f(\bx;\ba_1, \bW_1) \, f(\bx; \ba_2,\bW_2)\big\} = \frac{1}{m^2}\sum_{i,j=1}^ma_{1,i}a_{2,j} h\left(\<\bw_{1,i}, \bw_{2,j}\>\right)\, .\label{eq:CovarianceF}
\end{align}
\rev{The average in the rhs of Eq.~\eqref{eq:CovarianceF} is over the data distribution while the function $h(q)$ is defined as}
\begin{align}
h(q) = \E\{\sigma(G_1) \sigma(G_2)\}
\end{align}
for $(G_1, G_2)$ centered jointly Gaussian with $\E\{G_i^2\}=1$,
$\E\{G_1G_2\}=q$.

Furthermore we have that:
\begin{align}
\E \{f(\bx ;\ba,\bW)\,\varphi(\bU^{\sT}\bx)\} = \frac 1m \sum_{i=1}^ma_i\hphi(\bU^{\sT}\bw_i)\,.\label{eq:CovFphi}
\end{align}
where  $\hphi$ is given by
\begin{align}
    \hphi(\bv) :=\E\Big\{ \sigma(\<\bv,\bG\>+\sqrt{1-\|\bv\|^2}G_0)\varphi(\bG)\Big\}\, ,
\end{align}
for $\bG\sim\normal(0,\id_k)$ independent of $G_0\sim\normal(0,1)$.

We consider Gaussian process
$f^g(\ba,\bW)$, $\varphi^g$ with the same covariance function defined above
and define the empirical risk under Gaussian approximation as
\begin{align}
    \hcRisk^g_n(\ba,\bW)&=\frac{1}{2n}\sum_{i=1}^n \big(f^g_{i}(\ba,\bW)-\varphi^g_{i}-\eps_i )^2
    \label{eq:GaussianTrainRiskDefApp}\\\   
&=    \frac 1{2n} \big\|\bff^g(\ba,\bW)-\bphi^g-\beps \big\|^2\, ,
\nonumber
\end{align}
where $\bff^g(\cdots) = (f^g_{i}(\cdots): i\le n)$, $\bphi^g=(\varphi^g_{i}:i\le n)$, $\beps = (\eps_i:i\le n)$ are vectors containing $n$ i.i.d. copies of the above processes.
We will also write $\by^g=\bphi^g+\beps$.

Given a model with estimated parameters $\hba, \hbW$, the test error is given by 
\begin{align}
    \cRisk(\hba,\hbW)&=\frac 1{2} \E \big\{\big(f^g(\hba,\hbW)-\varphi^g-\eps \big)^2\big\}\label{eq:GaussianTestRiskDefApp}\\
    &=\frac 1{2} \E \big\{\big(f(\bx,\hba,\hbW)-\varphi(\bU^{\sT}\bx)-\eps \big)^2\big\}\, ,\nonumber
\end{align}
where the expectation in the first line is over a triple $(f^g,\varphi^g,\eps)$ independent of the data,
and in the second line with respect to $\bx$. The two expectations coincide because they depend uniquely
on the second moments of these processes.

We are interested in studying the gradient flow dynamics in the random landscape $\hcRisk_n(\ba,\bW)$ 
\begin{equation}\label{dyn_def}
\begin{split}    
    \dot{\ba}(t) &= -\frac{n}{d}\nabla_{\ba} \hcRisk_n(\ba(t),\bW(t))\, ,\\
    \dot{\bw}_i(t) &= -\frac{n}{d}\nabla_{\bw_i} \hcRisk_n(\ba(t),\bW(t))  - \nu_i(t)\bw_i(t) \;\;\;\; \forall i=1,\ldots, m\,.
\end{split}
\end{equation}
The Lagrange multipliers $\nu_i$ are added to enforce the spherical constraint $\|\bw_i(t)\|^2=1$.
While we consider the case of normalized first-layer weights, our
approach can be generalized to unconstrained weights or to include weight decay (ridge regularization).
As explained in the main text, we will replace this by gradient flow in the Gaussian model 
$\hcRisk^g_n(\ba,\bW)$. We refer to Section \ref{sec:DMFT_Original} for a discussion of DMFT in the original non-Gaussian model.

In our analysis we will always consider the proportional asymptotics
\begin{align}
    n,d\to\infty,\;\;\; \frac{n}{d}\to \oalpha\in (0,\infty)\, .
\end{align}
We typically index sequences and limits by $n$, but it is understood that $d=d(n)\to\infty$
as well.
After $n,d\to\infty$ proportionally, we will consider the large network asymptotics 
$m\to \infty$ at fixed $\alpha=\overline \alpha/m$. 

 In the following we will drop the superscript $g$ and write, for instance
 $\hcRisk_n(\ba,\bW)$ instead of  $\hcRisk^g_n(\ba,\bW)$   whenever clear from the context. All of
 our analytical predictions (except for Section \ref{sec:LowerBound}) are obtained within the Gaussian model.

%
\section{Technique}
\label{sec:TechniqueHighLevel}

Notice that each fitting error $F_i(\btheta) = y_i-f(\bx_i;\btheta)$,
$i\in\{1,\dots,n\}$
is a random function of the model parameters $\btheta$.
The randomness is due to the randomness in $\bx_i$
and in the noise $\eps_i$. The empirical
risk in Eq.~\eqref{eq:GFlowFirst} can be rewritten as
\begin{align}
\hcRisk_n(\btheta) = \frac{1}{2n}\|\bF(\btheta)\|^2\, ,\;\;\;\;
\bF(\btheta) = \big(F_1(\btheta),\dots,F_n(\btheta)\big)\, .
\label{eq:RiskSecond}
\end{align}
Our key approximation consists in replacing the i.i.d.
random functions $(F_i)_{i\le n}$ by i.i.d. Gaussian processes
$(F^g_i)_{i\le n}$  with matching mean and covariance. While DMFT equations 
have been recently proven without recurring to this approximation (see \cite{celentano2021high}
and appendices), their structure is simpler in the Gaussian case, which allows us to
carry out the large-$m$ analysis.

Computing the covariance of $\bF(\,\cdot\,)$ is a straightforward exercise.
We assume for simplicity that an intercept is subtracted so that $\E[\sigma(G)] =  0$, 
$\E[\varphi(\bG)]=0$ and otherwise these functions are generic ($G$, $G_1$, $\bG$ and so on will denote
standard Gaussian vectors). We then have 
\begin{align}
&\E\big\{f(\bx;\btheta_1)f(\bx;\btheta_2)\big\}=\frac{1}{m^2}
\<\ba_1,h(\bW_1^{\sT}\bW_2)\ba_2\>\, ,\\
&\E\big\{f(\bx;\btheta)y\big\}=\frac{1}{m^2}
\<\ba,\hphi(\bW^{\sT}\bU)\>\, .
\end{align}
Recall that $\btheta = (\ba,\bW)$ where $\ba\in\reals^m$, $\bW = (\bw_1,\dots,\bw_m)\in\reals^{d\times m}$ 
are the first layer weights Finally, 
$h:\reals\to\reals$, $\hphi:\reals^k\to\reals$ encode the activations $\sigma$ and the target function $\varphi$, with $h$ applied entrywise to the matrix $\bW_1^{\sT}\bW_2$.

The covariance of $F_i(\btheta) = y_i-f_i(\bx;\btheta)$ is easily computed from the above, 
and this defines completely the corresponding
Gaussian process $(F^g_i)_{i\le n}$. We denote the 
associated risk function $\hcRisk^g_n(\btheta) := \|\bFg(\btheta)\|^2/2n$.

Let us emphasize that the cost function $\hcRisk^g_n(\btheta)$
remains highly non-trivial despite the fact that the functions $F_i$
are replaced by Gaussian processes.  Near-minima of high-dimensional Gaussian processes
have a very rich structure, which is a central theme in spin glass theory \cite{mezard1987spin,talagrand2010mean}.
Additional layers of complexity arise here for two reasons. 
First, $\hcRisk^g_n(\btheta)$ is  a sum of \emph{squares of Gaussians} and, second,
the underlying Gaussian process has a significantly more intricate covariance than in standard spin glasses 
(where typically depends only on the inner product $\<\btheta_1,\btheta_2\>$). 
Recent work explored the simpler case in which $F^g_i(\,\cdot\,)$ is a Gaussian process with covariance
$\E\{F^g_i(\btheta_1)F^g_i(\btheta_2)\} = \xi(\<\btheta_1,\btheta_2\>)$
depending uniquely on the inner product \cite{fyodorov2019spin,fyodorov2022optimization,urbani2023continuous, subag2023concentration, montanari2023solving, montanari2024smale, kent2024topology}.   Gradient descent dynamics on these models has been recently studied via DMFT in \cite{kamali2023dynamical,kamali2023stochastic}: 
our work builds on these advances. DMFT was leveraged before to address other questions in  high-dimensional statistics and ML \cite{mannelli2019passed,bordelon2022self}. We refer to 
\cite{ben2006cugliandolo,celentano2021high} for mathematical results on the DMFT approach.
 
While $\hcRisk^g_n(\btheta)$ has a non-trivial structure, methods from 
statistical physics can be brought to bear to derive an asymptotic characterization.
Namely, define the functions
\begin{align}
C_{ij}^n(t_1,t_2) = \<\bw_i(t_1),\bw_j(t_2)\>\, , 
\;\;\;\; \bv^n_i(t) : = \bU^{\sT} \bw_i(t)\, ,\;\;\; a^n_i(t)\, .
\end{align}
These functions are
random (because of the random initialization and the randomness in
$\bFg$) and depend on $n,d$. However, as
$n,d\to\infty$ with $n/d\to\alphabar$, they converge to non-random 
limits $(C_{ij}(t_1,t_2))_{i<j\le m}$, 
 $(\bv_{i}(t))_{i\le m}$,
$(a_{i}(t))_{i<j\le m}$
that are the unique solution of a set of coupled integro-differential equations, see the appendices. We refer to these as to the DMFT equations.

Our main focus is on the behavior of the solutions of these
equations for large $m$ and, 
at first sight, the complexity of the DMFT increases with $m$.
An important simplification arises when choosing a 
symmetric initial condition $a_i(0) = a_0$  for all $i\le m$,
and $(\bw_i(0))_{i\le m}\sim_{iid}\Unif(\S^{d-1})$.
Namely, the solution of the DMFT equations is
symmetric under permutations of the neurons:
$C_{ii}(t_1,t_2)= C_{d}(t_1,t_2)$ for $i\le m$ and 
$C_{ij}(t_1,t_2)= C_{o}(t_1,t_2)$ for $i\neq j\le m$,
while $\bv_i(t)=\bv(t)$, $a_i(t) = a(t)$ for $i\le m$.
We then have a reduction to a set of integro-differential
equations on $k+3$ functions, that depend parametrically on $m$.

We use two approaches to study these equations (see appendix):
\begin{itemize}
\item[$(a)$] Numerical integration for increasing values of $m$ under different initial conditions.
\item[$(b)$] Asymptotics as $m\to\infty$ (at fixed $\alpha = \oalpha/m$) via singular perturbation theory
 \cite{berglund2001perturbation,holmes2019perturbation}.
\end{itemize}
For $(b)$,  a specific dynamical regime is identified by 
a scaling of the time variable, which in our case will take the form $t = t_{\#}(m)\cdot \st$ for a certain fixed function $t_{\#}(m)$ and $\st=O(1)$ a scaled time. The asymptotics of DMFT 
quantities in that regime takes the form
\rev{
\begin{align}
\lim_{m\to\infty}\bv\Big(t_{\#}(m)\cdot \st;m,\alpha=\frac{\overline \alpha}{m}\Big) = \bv_*(\st;\alpha)\, .
\end{align}}
%

%
%

%
\section{Dynamical Mean Field Theory (DMFT) }
\label{Sec:DMFT_equations}

In this section we state the results of Dynamical Mean Field Theory (DMFT). 
We will outline a heuristic derivation in Section \ref{sec:Derivation}.
We first introduce the general DMFT equations in Section  \ref{sec:GeneralDMFTeqs}
and the corresponding predictions for certain observable of interest in Section 
 \ref{sec:General_TrainTest}. These are a set of $\Theta(m^2)$ integro-differential equations in as many unknown functions.

We then specialize these equations to the case of a symmetric initialization,
in which $\bw_i(0)\sim\Unif(\S^{d-1})$ and $a_i(0)= a_0$ for all $i\le m$,
see Section \ref{sec:SymmetricSolution}
In this case, the dynamics is characterized by a set of $k+3$ equations which are stated in Sections 
\ref{sec:SymmetricDMFTeqs} and  \ref{sec:Symmetric_TrainTest}. 

\subsection{General DMFT equations}
\label{sec:GeneralDMFTeqs}

Let $a_i^n(t)$, $\bw_i^n(t)$, $\nu_i^n(t)$ the the solution of Eq~\eqref{dyn_def}
when the dynamics is initialized at non-random $a_i^{n}(0)= a_{0,i}$, $i\le n$ and possibly random,
$\bw_i^n(0)$ such that $\<\bw^n_i(0),\bw^n_j(0)\>\to C_{ij}^0$ for $i,j\le n$,
 $\bU^{\sT}\bw^n_i(0)\to \bv_i^0$ for $i\le n$.
While random, the $\bw^n_i(0)$ are assumed here to be independent of the random 
processes $\bff^g$, $\bphi^g$, $\beps$.

For $t,s\ge 0$ consider the quantities 
\begin{align}
C_{ij}^n(t,s) :=\<\bw^n_i(t),\bw^n_j(s)\>\, ,\;\;\; \bv^n_i(t):= \bU^{\sT}\bw^n_i(t)\,.
\end{align}
Then DMFT predicts that these quantities have a well defined non-random limit as $n,d\to\infty$,
\begin{align}
C_{ij}(t,s) = \lim_{n,d\to\infty}C^n_{ij}(t,s)\, ,\;\;\;\;\;
 \bv_i(t)  = \lim_{n,d\to\infty}  \bv^n_i(t) \, ,\;\;\;\;\;
 a_i(t)  = \lim_{n,d\to\infty}  a^n_i(t) \, ,
\end{align}
where the limits are understood to hold in almost sure sense.
These limits are the unique solution of a set of integro-differential
equations in the unknowns $\{ C_{ij}(t,s), R_{ij}(t,s), \bv_i(t), a_i(t):\; i,j\le m\}$,
which we next state as three sets: $(1)$~Dynamical equations; $(2)$~Equations for auxiliary functions;
$(3)$~Boundary conditions. Before that, we mention some constraints that need to be
satisfied by the solution of these equations.

\paragraph{(0) Constraints.} The functions $C_{ij}(t,s)$, $R_{ij}(t,s)$ satisfy:
\begin{align}
C_{ii}(t,t) &= 1 \;\;\;\; \forall 0\le t\, ,\\
    C_{ij}(t,s) &= C_{ji}(s,t)\;\;\;\; \forall 0\le t,s\, ,\\
    R_{ij}(t,s) &= 0 \;\;\;\; \forall 0\le t<s\, .\label{eq:FirstCausal}
\end{align}
The first condition in particular implies the following useful relation:
\begin{equation}
    \begin{split}\label{DMFT_C_diag}
        \frac{\de C_{ij}(t,t)}{\de t}&=\lim_{t'\to t}\left[\frac{\partial C_{ij}(t,t')}{\partial t}+\frac{\partial C_{ji}(t,t')}{\partial t}\right]\:.
    \end{split}
\end{equation}
We refer to the property \eqref{eq:FirstCausal}
(and similar ones for $R$ functions appearing below) as `causality constraint.'

\paragraph{(1) Dynamical equations.} These equations determine the dynamics of 
$\{ C_{ij}(t,s), R_{ij}(t,s), \bv_i(t), a_i(t):\; i,j\le m\}$, and involve 
the auxiliary functions (memory kernels) $M^C_{ij}(t,s)$, $M^R_{ij}(t,s)$
and (Lagrange multipliers) $\nu_i(t)$ (the last equations assume implicitly $t_a>t_b)$:
\begin{align}\label{eq_ai}
        \frac{\de a_i(t)}{\de t} = &- \frac{\overline \alpha}{m}\int_0^t R_A(t,s)\left[\frac 1m \sum_{l=1}^m a_l(s)h\left(C_{li}(s,t)\right)- \hat\varphi(\bv_{i}(t))\right]\de s \\
        &\phantom{AAAA}-\frac{\overline\alpha}{m}\int_0^t C_A(t,s)\frac 1m\sum_{l=1}^ma_l(s)h'\big(C_{li}(s,t)\big)R_{i l}(t,s)\, \de s\,,\nonumber \\
   \label{eq_ri}
    \frac{\de \bv_{i}(t)}{\de t} =&-\nu_i(t) \bv_{i}(t) +\frac{\overline \alpha}{m}a_i(t)\nabla\hat\varphi(\bv_{i}(t))\int_0^t R_A(t,s)\,\de s-\frac{1}{m}\sum_{j=1}^m\int_0^t M^R_{ij}(t,s)\, \bv_{j}(s)\,\de s \, ,\\
    \frac{\partial C_{ij}(t_a,t_b)}{\partial t_a}= &-\nu_i(t_a)C_{ij}(t_a,t_b)+\frac{\overline \alpha}{m}a_{i}(t_a)
    \<\nabla \hat\varphi(\bv_{i}(t_a)),\bv_{j}(t_b)\>\int_0^{t_a}R_A(t_a,s)\, \de s \label{eq_Corr_i}  \\
    &\phantom{AAAA}-\frac{1}{m}\sum_{l=1}^m\int_0^{t_a} M^R_{il}(t_a,s)\, C_{lj}(s,t_b)\, \de s-\frac{1}{m}\sum_{l=1}^m\int_0^{t_b}M^C_{il}(t_a,s)\, R_{jl}(t_b,s)\de s\, ,\nonumber\\
  \label{eq_resp_i} 
    \frac{\partial R_{ij}(t_a,t_b)}{\partial t_a}=&-\nu_i(t_a)R_{ij}(t_a,t_b)+\delta_{ij}\delta(t_a-t_b)-\frac{1}{m}\sum_{l=1}^m\int_{t_b}^{t_a}M^R_{il}(t_a,s)\, R_{lj}(s,t_b)\,\de s\,.
\end{align}
We point out that the $\delta(t_a-t_b)$ in the 
last equation (together with Eq.~\eqref{eq:FirstCausal}) has to be interpreted as follows:
$R_{ij}(t,t')=0$ for $t<t'$ while, for $\eps> 0$,
$R_{ij}(t+\eps,t) =\delta_{ij}+o_{\eps}(1)$.  

Equations~\eqref{eq_Corr_i} and \eqref{eq_resp_i} can also be written in terms of an effective stochastic process in
$\reals^m$:  $\bw^e(t) = (w^e_i(t): i\le m)$. This is defined as the solution of 
the following set of ODEs (for $i\in\{1,\dots,m\}$):
\begin{align}\label{SCSP}
    \frac{\de \we_i(t)}{\de t} &= -\nu_i(t)w^e_i(t) +\alpha a_i(t)\<\nabla \hphi(\bv(t)),\bv(t')\>\int_0^t R_A(t,s)\, \de s \\
    & \phantom{AAAA}- \frac 1 m\sum_{l=1}^m \int_0^t M^R_{il}(t,s)w^e_l(s)\, \de s+\eta_i(t)+b_i(t) \, , \\ \ 
    w^e_i(0)&\sim \normal(0,1) \, ,
\end{align}
where $(\eta_i(t):\, i\le m)$ is a centered Gaussian process with covariance
\begin{align}
\E[\eta_i(t)\eta_j(t')]&=-\frac 1m M^C_{ij}(t,t')\, .
\end{align}
Define $\underline b(t)=( b_i(t): i\le m)$. The solution of Eqs.~\eqref{eq_Corr_i} and \eqref{eq_resp_i} can be written as
\begin{align}
    C_{ij}(t,t')&=\lim_{\underline b\to 0}\E\left[w_i(t)w_j(t')\right]\, ,\\
    R_{ij}(t,t')&=\lim_{\underline b\to 0}\frac{\delta\E[w_i(t)]}{\delta b_j(t')}\:.
\end{align}
In fact  the stochastic process of Eq.~\eqref{SCSP} is expected to describe the limit distribution 
of the second-layer weights $\bW(t)$. Namely, for $i\le d$,
define $\tbw_i(t) = \bW(t)\bfe_i\in\R^m$  be a vector containing the $i$-th coordinate of each neuron.
Then, for any fixed $i$ and any $T$, 
\begin{align}
    (\tbw_i(t): 0\le t\le T)\stackrel{d}{\Rightarrow}   (\bw^e(t): 0\le t\le T)\, .
\end{align}
Here $\stackrel{d}{\Rightarrow}$ denotes convergence in distribution as $n,d\to\infty$, in $C([0,T],\R^m)$. 

\paragraph{(2)~Equations for auxiliary functions.}
The memory kernels $M^R$ and $M^C$ are defined by
\begin{equation}
    \begin{split}\label{eq_kernels_i}
        M^R_{ij}(t,s)&=\frac{\overline \alpha}{m}\left[R_A(t,s)h'(C_{ij}(t,s))+C_A(t,s)h''(C_{ij}(t,s))R_{ij}(t,s)\right] a_i(t) a_j(s)\, ,\\
        M^C_{ij}(t,s)&=\frac{\overline \alpha}{m}C_A(t,s)h'(C_{ij}(t,s))a_i(t) a_j(s) \:.
    \end{split}
\end{equation}
where the functions  $R_A$ and $C_A$ 
satisfy the symmetry properties $C_A(t,s)= C_A(s,t)$
and $R_A(t,s) = 0$ for $t<s$, and
are the unique solution
\begin{equation}\label{Self_energy}
    \begin{split}
        &\int_{t'}^t \left[\delta(t-s)+\Sigma_R(t,s)\right]R_A(s,t')\, \de s=\delta(t-t')\, ,\\
        &\int_{0}^t \left[\delta(t-s)+\Sigma_R(t,s)\right]C_A(s,t')\, \de s+\int_0^{t'} \Sigma_C(t,s)R_A(t',s)\, \de s=0\, ,
    \end{split}
\end{equation}
where 
\begin{equation}
    \begin{split}
        \Sigma_C(t,s)&:=\tau^2+\|\varphi\|^2+\frac 1{m^2}\sum_{i,j=1}^ma_i(t)a_{j}(s)h\big(C_{ij}(t,s)\big)\\
        &\phantom{AAAA}-\frac 1{m} \sum_{l=1}^m a_{l}(t)\hat\varphi(\bv_{l}(t))-\frac 1{m} \sum_{l=1}^m a_{l}(s)\hat\varphi(\bv_{l}(s))\, ,\\
        \Sigma_R(t,s)&:=\frac 1{m^2}\sum_{i,j=1}^ma_i(t)a_{j}(s)h'\big(C_{ij}(t,s)\big)R_{ij}(t,s)\:.
    \end{split}
\end{equation}
The Lagrange multipliers $\nu_i(t)$ have to be fixed to enforce the constraint 
$C_{ii}(t,t)=1$  which follows from $\bw_\alpha\in \S^{d-1}$. The corresponding equations are
\begin{align}
    \nu_i(t_a)&=\frac{\overline \alpha}{km}
    a_{i}(t_a)\<\bv_i(t_a),\nabla\hat\varphi(\bv_i(t_a))\>\int_0^{t_a} R_A(t_a,s)\, \de s\\
    &-\frac{1}{m}\sum_{j=1}^m\int_0^{t_a} M^R_{ij}(t_a,s)\, C_{ij}(s,t_a)\, \de s
    -\frac{1}{m}\sum_{j=1}^m\int_0^{t_a}M^C_{ij}(t_a,s)\, R_{ji}(t_a,s)\, \de s\, .
    \nonumber
\end{align}

\paragraph{(3)~Boundary conditions.}
The dynamical equations \eqref{eq_ai} to \eqref{eq_resp_i} 
can be integrated from a set of initial conditions that reflect 
initial conditions of the GF dynamics: 
\begin{equation}
    \begin{split}
        \bv_i(0)&=\bv_i^0,\;\;\;\; a_i(0) = a_i^0 \hspace{1cm} \forall i\in \{1,\ldots,m\}\, ,\\
        C_{ij}(0,0)&=C^0_{ij} \hspace{1cm} \forall i,j\in\{1,\ldots,m\}\, ,\\
        R_{ij}(0,0)&=0 \hspace{1cm} \forall i,j\in\{1,\ldots,m\}\:.
    \end{split}
\end{equation}

\subsection{Expressions for train and  test error}
 \label{sec:General_TrainTest} 

The asymptotics of many quantities of interest can be expressed in terms of the solutions
of the DMFT equations stated in the last section. In particular,
the train error $\hcRisk_n(\bW(t),\ba(t))$ and test error $\cRisk(\bW(t),\ba(t))$ at
time $t$ have well defined limits under the proportional asymptotics:
\begin{align}
\lim_{n\to\infty} \hcRisk_n^g(\bW(t),\ba(t)) = e_{\str}(t)\, ,\;\;\;\;\;
\lim_{n\to\infty} \cRisk^g(\bW(t),\ba(t)) = e_{\sts}(t)\, .
\end{align}
The functions $e_{\str}(t)$ $e_{\sts}(t)$ 
are given by
\begin{align}
    e_{\str}(t)&=-\frac 12 C_A(t,t)\,,\label{DMFT_etrain}\\
e_{\sts}(t)&=\frac 12\Big\{\tau^2+\frac 1k \|\varphi\|^2+ \frac 1{m^2}\sum_{i,j=1}^m h\big(C_{ij}(t,t)\big)-\frac{2}{m}\sum_{i=1}^m \hat\varphi(\bv_{i}(t))\Big\}\
\end{align}

More generally, $C_A(t,s)$ gives the asymptotics of the correlation of residuals:
\begin{align}
&\lim_{n\to\infty}\frac{1}{n}\big\<\bDelta(t),\bDelta(s)\big\> = -C_A(t,s)\, ,\\
&\bDelta(t) : = \by^g- \bff^g(\ba(t),\bW(t))\, .
\end{align}
where we recall that $\by^g=\bphi^g+\beps$.

\subsection{Symmetric initialization and solutions}
\label{sec:SymmetricSolution}

As anticipated, we consider the uninformative initialization 
$\bw^n_i(0)\sim\Unif(\S^{d-1})$ and $a^n_i(0)= a_0$ for all $i\le m$.
This results in the following initialization for the DMFT equations of 
\begin{equation}
    \begin{split}
        \bv_i(0)&=\bv_i^0 = \bzero \hspace{1 cm} \forall i\in\{1,\ldots,m\}\, ,\\
        C_{i\neq j}(0,0)&=C_{i\neq j}^0 = 0 \hspace{1cm} \forall i\neq j, i,j\in \{1,\ldots,m\}\, ,\\
        C_{ii}(0,0)& =C_{ii}^0 = 1 \hspace{1cm} \forall i\in \{1,\ldots,m\}\, .
    \end{split}
\end{equation}

This initialization is invariant under permutations of the $m$
neurons. Since the DMFT equations of Section \ref{sec:GeneralDMFTeqs} are equivariant under such permutations,
their solution is also invariant under permutations.
This means that it takes the form:
\begin{align}
        C_{ij}(t,t') &= \begin{cases}
    C_{d}(t,t') & \mbox{if $i=j$,}\\
    C_{o}(t,t') & \mbox{if $i\neq j$,}\\
        \end{cases}\,\;\;\;\;
          R_{ij}(t,t') = \begin{cases}
    R_{d}(t,t') & \mbox{if $i=j$,}\\
    R_{o}(t,t') & \mbox{if $i\neq j$,}\\
        \end{cases}\\
        \nonumber\\
    \bv_{i}(t)&=\bv(t) \, ,\;\;\;\;\;\;
        \nu_i(t)=\nu(t)\, ,\;\;\;\;\;\;
        a_i(t)=a(t)\;\;\; \forall i\, .
\end{align}
As a consequence,  the memory kernels in Eq.~\eqref{eq_kernels_i} take the form
\begin{align}
    M^C_{ij}(t,t') &= \begin{cases}
    M^C_{d}(t,t') & \mbox{if $i=j$,}\\
    M^C_{o}(t,t') & \mbox{if $i\neq j$,}\\
        \end{cases}\,\;\;\;\;\;\;\;\;
           M^R_{ij}(t,t') = \begin{cases}
    M^R_{d}(t,t') & \mbox{if $i=j$,}\\
    M^R_{o}(t,t') & \mbox{if $i\neq j$.}\\
        \end{cases}\,.\label{eq:MemoryKernel}
\end{align}

We will refer to the reduced DMFT under symmetry as to the \SymmDMFT.

\subsection{DMFT equations for symmetric initialization (\SymmDMFT)}
\label{sec:SymmetricDMFTeqs}

\paragraph{(1) Dynamical equations.} 
Substituting the ansats of the previous section in the equations of Section \ref{sec:GeneralDMFTeqs},
we obtain the following equations for the functions $a(t)$, $\bv(t)$,
$C_d(t,s)$, $C_o(t,s)$, $R_d(t,s)$, $R_o(t,s)$: 
\allowdisplaybreaks
\begin{align}
        \frac{\de a}{\de t}(t) =& \frac{\overline \alpha}{m} \hat\varphi(\bv(t))\int_0^t R_A(t,s)\, \de s
        \label{eq:SymmDMFT_FIRST}\\
        &\;\;\;-\frac{\overline \alpha}m \int_0^tR_A(t,s)a(s)\left[\frac 1m h(C_d(t,s))+\frac{m-1}{m} h(C_o(t,s))\right]\, \de s\nonumber\\
        &\;\;\;-\frac{\overline\alpha}{m}\int_0^t  C_A(t,s)a(s)\left[\frac 1m h'(C_d(t,s))R_d(t,s)+\frac{m-1}{m}h'(C_o(t,s))R_o(t,s)\right]\, \de s\, ,\nonumber\\
         \frac{\de \bv}{\de t}(t)=&-\nu(t) \bv(t) +\frac{\overline\alpha}{m}\nabla\hat\varphi(\bv(t))a(t)\int_0^t R_A(t,s)\, \de s\\
        &\;\;\;-\frac 1m\int_0^t\left[M_R^{(d)}(t,s)+(m-1)M_R^{(o)}(t,s)\right]\bv(s)\, \de s\, ,\nonumber\\
        \partial_t C_d(t,t')=&-\nu(t)C_d(t,t')+\frac{\overline \alpha}{m}\<\nabla\hat\varphi'(\bv(t)),\bv(t')\>a(t)\int_0^t
        R_A(t,s) \, \de s\label{RS_point_A}\\
        &\;\;\;-\frac 1m\int_0^t\left[M_R^{(d)}(t,s)C_d(t',s)+(m-1)M_R^{(o)}(t,s)C_o(t',s)\right]\, \de s
        \nonumber\\
        &\;\;\;-\frac 1m\int_0^{t'}\left[M_C^{(d)}(t,s)R_d(t',s)+(m-1)M_C^{(o)}(t,s)R_o(t',s)\right]\,\de s\, ,
        \nonumber\\
        \partial_t C_o(t,t')=&-\nu(t)C_o(t,t')+\frac{\overline \alpha}{m}\<\nabla\hat\varphi(\bv(t)),\bv(t')\>a(t)\int_0^t R_A(t,s) \,\de s\\
        &\;\;\;-\frac 1m\int_0^t \left[M_R^{(d)}(t,s)C_o(t',s)+M_R^{(o)}(t,s)C_d(t',s)+(m-2)M_R^{(o)}(t,s)C_o(t',s)\right] \, \de s\nonumber\\
        &\;\;\;-\frac 1m\int_0^{t'} \left[M_C^{(d)}(t,s)R_o(t',s)+M_C^{(o)}(t,s)R_d(t',s)+(m-2)M_C^{(o)}(t,s)R_o(t',s)\right]\, \de s\, ,\nonumber\\
        \partial_t R_d(t,t')=&-\nu(t) R_d(t,t')+\delta(t-t')\\
        &\;\;\;-\frac 1m\int_{t'}^t\left[M_R^{(d)}(t,s)R_d(s,t')+(m-1)M_R^{(o)}(t,s)R_o(s,t')\right]\, \de s\, , \nonumber\\
        \partial_t R_o(t,t')&=-\nu(t)R_o(t,t')-\frac 1m\int_{t'}^t\left[M_R^{(d)}(t,s)R_o(s,t')+M_R^{(o)}(t,s)R_d(s,t')\right.  \label{eq:SymmDMFT_LAST}\\
        &\;\;\;\left.+(m-2)M_R^{(o)}(t,s)R_o(s,t')\right]\,\de s\, .\nonumber
\end{align}

\paragraph{(2)~Equations for auxiliary functions.}
The memory kernels $M_R^{(s)}(t,s)$, $M_R^{(o)}(t,s)$ and $M_C^{(s)}(t,s)$, $M_C^{(o)}(t,s)$ are given by:
\begin{align}
   M_R^{(d)}(t,s)&=\frac{\overline \alpha}m a(t)a(s)\left[R_A(t,s)h'(C_d(t,s))+C_A(t,s)h''(C_d(t,s))R_d(t,s)\right]\, ,\\
        M_R^{(o)}(t,s)&=\frac{\overline \alpha}{m}a(t)a(s)\left[R_A(t,s)h'(C_o(t,s))+C_A(t,s)h''(C_o(t,s))R_o(t,s)\right]\, ,\\
        M_C^{(d)}(t,s)&=\frac{\overline \alpha}{m}a(t)a(s)C_A(t,s)h'(C_d(t,s))\, ,\\
         M_C^{(o)}(t,s)&=\frac{\overline \alpha}{m}a(t)a(s)C_A(t,s)h'(C_o(t,s))\, .
   \end{align}

Further, $C_A(t,s)$, $R_A(t,s)$ are given by the same equations \eqref{Self_energy},
where $\Sigma_C$, $\Sigma_R$ are simplified as follows:
   \begin{equation}
    \begin{split}
        \Sigma_C(t,s)&=\tau^2+\|\varphi\|^2-a(t)\hat \varphi(\bv(t))-a(s)\hat \varphi(\bv(s))+\frac {a(t)a(s)}m h(C_d(t,s))\\
        &+\frac{m-1}{m}a(t)a(s)h(C_o(t,s))\\
        \Sigma_R(t,s)&=\frac {a(t)a(s)}m h'(C_d(t,s))R_d(t,s)+\frac{m-1}{m}a(t)a(s)h'(C_o(t,s))R_o(t,s)
    \end{split}
    \label{Sigmas_symm}
\end{equation}
Finally, the Lagrange multipliers are determined by
\begin{equation}
    \begin{split}
        \nu(t)=&\frac{\overline \alpha}{m}\<\nabla \hat\varphi(\bv(t)),\bv(t)\>a(t)\int_0^t R_A(t,s)\, \de s\\
        &\;\;\;\;-\frac 1m\int_0^t \left[M_R^{(s)}(t,s)C_d(t,s)+(m-1)M_R^{(o)}(t,s)C_o(t,s)\right]\,\de s\\
        &\;\;\;-\frac 1m\int_0^t\left[M_C^{(s)}(t,s)R_d(t,s)+(m-1)M_C^{(o)}(t,s)R_o(t,s)\right]\, \de s\, .
    \end{split}
\end{equation}

\paragraph{(3)~Boundary conditions.} As anticipated the \SymmDMFT  is initialized as
\begin{equation}
    \begin{split}
        \bv(0)= \bzero, \hspace{1cm} 
        C_{d}(0,0)=1 \hspace{1cm} 
        C_{o}(0,0) =0 \, .
    \end{split}
\end{equation}

\subsection{Expressions for train and  test error under symmetric initialization}
 \label{sec:Symmetric_TrainTest}

The general expression for train and test error given in Section \ref{sec:General_TrainTest} specialize to:
\begin{align}
     e_{\str}(t)&=-\frac 12 C_A(t,t)\, ,\label{eq:TrainGeneral}\\
        e_{\sts}(t)&=\frac 12 \left[\tau^2+\|\varphi\|^2-2a(t)\hat\varphi(\bv(t))+
        \frac{1}{m}a^2(t)h(1)+\frac{m-1}{m}a^2(t)h(C_o(t,t))\right]\, .\label{eq:TestGeneral}
\end{align}
%
%
\section{Numerical integration of the DMFT equations}

\subsection{Integration technique}
\label{sec:IntegrationTechnique}

We integrate the \SymmDMFT equations \eqref{eq:SymmDMFT_FIRST}
to \eqref{eq:SymmDMFT_LAST} using a standard Euler discretization.
Namely, we discretize time
on an equi-spaced grid $t\in \bbT:= \{0,\eul,2\eul,\dots\}$
and approximate derivatives by differences and integrals by sums on this grid. As an example, Eq.~\eqref{eq:SymmDMFT_FIRST}
is replaced by
\begin{align}
        \frac{a(t+\eul)-a(t)}{\eul} =& \frac{\overline \alpha}{m} \hat\varphi(\bv(t))\sum_{s\in \bbT, s\le t}
         R_A(t,s)\, \eul \\
        &-\frac{\overline \alpha}m 
        \sum_{s\in \bbT, s\le t} R_A(t,s)a(s)\left[\frac 1m h(C_d(t,s))+\frac{m-1}{m} h(C_o(t,s))\right]\, \eul \nonumber\\
        &-\frac{\overline\alpha}{m}
        \sum_{s\in \bbT, s\le t}
        C_A(t,s)a(s)\left[\frac 1m h'(C_d(t,s))R_d(t,s)+\frac{m-1}{m}h'(C_o(t,s))R_o(t,s)\right]\, \eul\, .\nonumber
\end{align}
\rev{The discretization of Eq.~\eqref{eq_resp_i} deserves an additional 
clarification because of the delta-function.
For $t_a\ge t_b$, $t_a,t_b\in \naturals\eta$,  we compute
\begin{align*}
\frac{R_{ij}(t_a+\eul,t_b)-R_{ij}(t_a,t_b)}{\eta}
=-\nu_i(t_a)R_{ij}(t_a,t_b)-\frac{1}{\eta}\delta_{ij}\bfone_{t_a=t_b}\\
-\frac{1}{m}\sum_{l=1}^m
\sum_{s\in [t_a,t_b]\cap \naturals\eta }M^R_{il}(t_a,s)\, R_{lj}(s,t_b)\,\eta\,,
\end{align*}
with boundary condition
\begin{align*}
R_{ij}(t_b,t_b) & = 0\, \;\;\;\; \forall i, j\le m\, .
\end{align*}}

Of course, the solution of this system of
difference equation does not coincide with the solution
of the original equations \eqref{eq:SymmDMFT_FIRST} 
to \eqref{eq:SymmDMFT_LAST}, and in this section we will write 
$a(t;\eul)$, $C_o(t,s;\eul)$ and so on to emphasize the distinction.

Equations \eqref{Sigmas_symm} can be directly interpreted as 
determining $\Sigma_{C}(t,s)$ and $\Sigma_{R}(t,s)$
on the grid $t,s\in \bbT$. Finally, 
we discretize Eq.~\eqref{Self_energy} as
\begin{equation}\label{Self_energy-discr}
    \begin{split}
        &\sum_{s\in \bbT} \left[\bfone_{t=s}+\Sigma_R(t,s)\eul\right]R_A(s,t')\, =\frac{1}{\eul}\bfone_{t=t'}\, ,\\
        &\sum_{s\in \bbT}\left[\bfone_{t=s}
        +\Sigma_R(t,s)\eul\right]C_A(s,t')\,+
        \sum_{s\in \bbT} \Sigma_C(t,s)R_A(t',s)\eul =0\, .
    \end{split}
\end{equation}
Note that we dropped the integration limits here, since 
they are enforced by the causality
constraints
implying $\Sigma_R(t,s)=0$, $R_A(t,s)=0$ for $t<s$.
Defining the matrices $\bSigma_R = (\Sigma_R(t,s):t,s\in\bbT)$,
and similarly for $\bSigma_C$, $\bC_A$, $\bR_A$, we 
can rewrite \eqref{Self_energy-discr} as
\begin{align}
 \left[\id+\eul\bSigma_R\right]
        \bR_A &=\frac{1}{\eul}\id\, ,\\
        \left[\id
        +\eul\bSigma_R\right]\bC_A\,+
        \eul \bSigma_C\bR_A &=\bzero\, .
\end{align}
We truncate these matrices (which are infinite) to
a maximum time $T$ 
(e.g., redefine $\bSigma_R = (\Sigma_R(t,s):t,s\in\bbT, s,t\le T)$) and solve these equations by matrix inversion:
\begin{align}
        \bR_A &=\frac{1}{\eul}
        \big(\id+\eul\bSigma_R\big)^{-1}\, ,\\
        \bC_A
         &= -\big(\id+\eul\bSigma_R\big)^{-1}\bSigma_C
         \big(\id+\eul\bSigma_R\big)^{-1}\, .
\end{align}

We denote by
$a(t;\eul)$, $\bv(t;\eul)$, $C_o(t,s;\eul)$,
 $C_d(t,s;\eul)$, $R_o(t,s;\eul)$,
 $R_d(t,s;\eul)$, the functions obtained via the Euler integration scheme.
 We will assume that this solution is interpolated continuously for $t,s\not\in \bbT$. For instance, for $i,j\in\naturals$
 $a,b\in [0,1)$, we let
 \begin{align}
 C_d((i+a)\eul,(j+b)\eul;\eul) =&  (1-a)(1-b)\, C_d(i\eul,j\eul;\eul) 
 + a(1-b) \, C_d((i+1)\eul,j\eul;\eul)\\
 &+(1-a)b \, C_d((i+1)\eul,j\eul;\eul)
 +ab\, C_d((i+1)\eul,(j+1)\eul;\eul)\, .\nonumber
 \end{align}

 Finally, while we described the discretization procedure for the 
 \SymmDMFT,  the discussion above applies verbatimly for the full
 DMFT of Section \ref{sec:GeneralDMFTeqs}. 

The DMFT equations and their symmetric specialization have a causal structure which means that they can be integrated by progressively by increasing $T$. Furthermore there is no self-consistency condition in the integration scheme at variance with the non-Gaussian settings, see for example \cite{mignacco2020dynamical}. This simplification allows to investigate the long time behavior of the dynamics in a numerical, rather efficient, way.

\subsection{Accuracy of the numerical integration scheme}

The discretization of DMFT is expected to converge to the actual
solution with errors of order $\eul$. Namely, we expect
 \begin{align}
  C_d(t,t';\eul) = C_d(t,t') +O(\eul) \, , \;\;\;\;\;\;
  C_o(t,t';\eul) = C_o(t,t') +O(\eul) \, ,
 \end{align}
and similarly for the other functions.
We refer to \cite{celentano2021high} for related examples in which
the convergence was proved rigorously, and to \cite{kamali2023dynamical} for an empirical study in a closely related model.

In order to test the accuracy of our approach, and the correctness of the DMFT equations,
we simulated the gradient descent (GD) dynamics 
for the Gaussian model.
Namely, we generate realizations of the process 
$\bff^g(\ba,\bW) = (f^g_{i}(\ba,\bW):\, i\le n)$ with the prescribed covariance \eqref{eq:CovarianceF}, and 
the vector $\bphi^g=(\varphi^g_{i}:\, i\le n)$
with same covariance as in Eq.~\eqref{eq:CovFphi}
(see Section \ref{sec:Construction}.)
We define $\hcRisk_n(\ba,\bW)$ via Eq.~\eqref{eq:GaussianTrainRiskDefApp}
and implement the following GD iteration
\begin{equation}
    \begin{split}\label{dyn_simu}
        \ba^n(t+\eul_{\sGD}) &= \ba^n(t)-\frac{\eul_{\sGD} n}{d} 
        \nabla_{\ba}\hcRisk_n(\ba^n(t),\bW^n(t))\, ,\\
\bw^n_i(t+\eul_{\sGD}) &= \proj_{\S^{d-1}}\left(\bw^n_i(t) -\frac {\eul_{\sGD} n}d
\nabla_{\bw_i} \hcRisk_n(\ba^n(t),\bW^n(t))\right)\, ,
    \end{split}
\end{equation}
where $\proj_{\S^{d-1}}$ is the projector to the unit sphere, i.e.
$\proj_{\S^{d-1}}(\bx) = \bx/\|\bx\|$ if $\bx\neq\bzero$ and
$\proj_{\S^{d-1}}(\bzero)=\bzero$.
Note that the trajectories of Eq.~\eqref{dyn_simu} depend on
the sample size $n$ (and  hence the dimension $d= d_n$)
and the stepsize $\eul_{\sGD}$. To emphasize this dependence, we also use the notation
$\ba^n(t;\eul_{\sGD})$ $\bW^n(t;\eul_{\sGD})$.

We expect the GD trajectories defined by Eq.~\eqref{dyn_simu}
approach the GF trajectories defined by Eq.~\eqref{dyn_def}
as $\eul_{\sGD}\to 0$ uniformly in $n,d$. Namely,
\begin{align}
\lim_{\eta_{\sGD}\to 0}\limsup_{n,d\to\infty}\|\bW^n(t;\eul_{\sGD})-\bW^n(t)\|_F&=0
\, ,\\
\lim_{\eta_{\sGD}\to 0}\limsup_{n,d\to\infty}\|\ba^n(t;\eul_{\sGD})-\ba^n(t)\|_2&=0
\, ,
\end{align}
where the limits are understood to hold in probability for any fixed $t$. Informally, for fixed small $\eta_{\sGD}$,
GD dynamics is a good approximation to GF dynamics, irrespective of the dimension.

We generate several realizations of the processes $\bff^g$, $\bphi^g$, and of the gradient descent trajectories \eqref{dyn_simu}. We average observables of interest over these realizations and compare these with the Euler discretization of the DMFT equations.
For instance, consider the correlation functions
$C_{ij}(t,s)$. Then we can compare:
\begin{itemize}
\item $C_{ij}^n(t,s;\eta_{\sGD}) = \E\<\bw^n_i(t;\eta_{\sGD}),\bw^n_j(s;\eta_{\sGD})\>$
where the expectation is taken with respect to the
GD process \eqref{dyn_simu}.
\item  $C_{ij}(t,s;\eta)$, the solution of the Euler discretization of the DMFT, described in the previous section.
\end{itemize}
Some results of this comparison are presented in the next subsection.
This comparison allows us to gauge
two types of systematic effects:
\begin{enumerate}
\item The effect of finite $n,d$. Indeed, the DMFT equations 
characterize the $n,d\to\infty$ limit
of the GD dynamics \eqref{dyn_simu}.
\item The non-zero stepsize $\eul$. 
Note that the effect of discretization
introduced in the DMFT equations are different from the ones in 
the gradient descent \eqref{dyn_simu}.
Therefore the disagreement between the two is a measure of the nonzero-$\eul$ effects.
\end{enumerate}
To clarify further the last point,
we emphasize that, despite the notation,
$C_{ij}(t,s;\eta)$ is not the $n,d\to\infty$
limit of $C_{ij}^n(t,s;\eta)$.

 We note in passing that it is possible to derive DMFT equations for GD,
 hence characterizing $\lim_{n\to\infty}C_{ij}^n(t,s;\eta_{\sGD})$.
 Similar characterizations were obtained for related 
 (simpler) models in
 \cite{mignacco2020dynamical, celentano2021high,mignacco2022effective, kamali2023dynamical, kamali2023stochastic}. We defer the analysis of
 GD with large stepsizes to future work.

\subsection{Testing the numerical accuracy}

\begin{figure}
\centering
\includegraphics[width=0.495\linewidth]{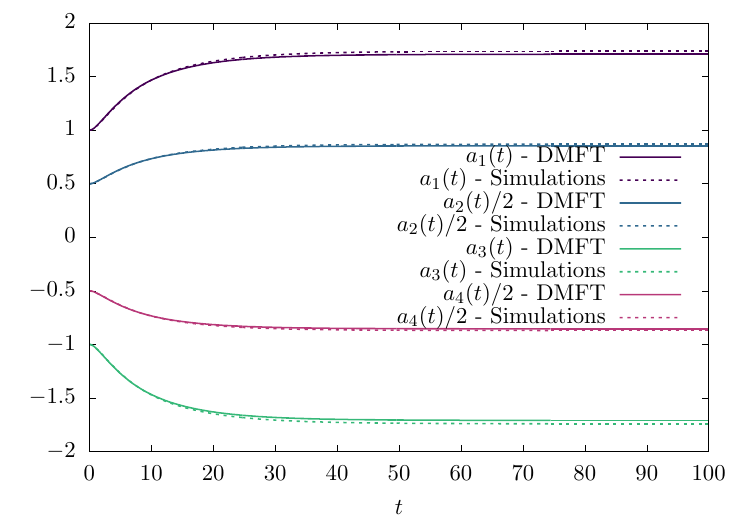}
\includegraphics[width=0.495\linewidth]{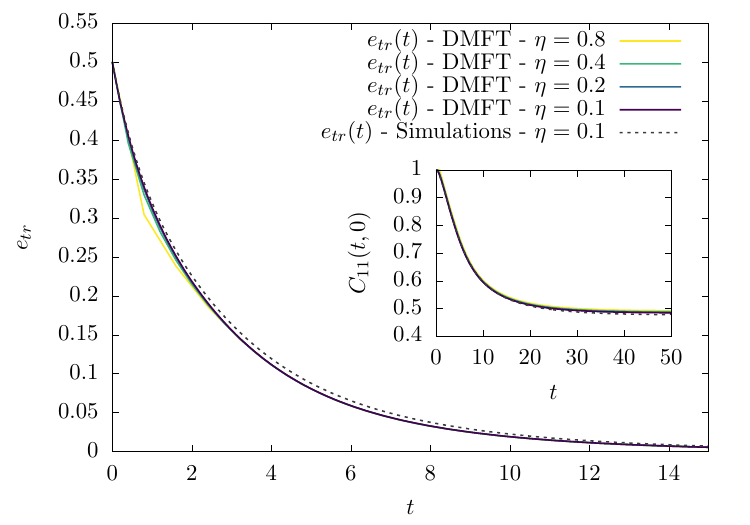}
\caption{\textbf{Comparison between discretized DMFT and GD dynamics  for the Gaussian model 
(labeled as `Simulations').} 
GD results are averaged over $N=10^4$ 
realizations of the Gaussian process, under Setting 1 described in the main text. 
Left frame: Second layer for DMFT and GD with $\eta_{\sGD}=\eta=0.1$.
Right frame: Train error and correlation function for DMFT with a few values of $\eta$, and for GD.}
\label{aa_opposite}
\end{figure}

Figures \ref{aa_opposite} and  \ref{aa_random}
we present examples of the numerical comparison described in the previous section, under two different settings, as described below.

\paragraph{Setting 1.}
We assume pure noise data with $\tau=1$ and train a network with $m=4$ neurons
and covariance structure given by $h(z)=z/10+z^2/2$. 
We simulate GD trajectories, according to 
Eq.~\eqref{dyn_simu} with $d=100$, $n=150$,
and correspondingly evaluate the Euler discretization of DMFT,
cf. Section \ref{sec:IntegrationTechnique}
for $\overline \alpha=n/d=1.5$.

We choose an initialization that is not symmetric 
and therefore we have to use the full DMFT equations of Section
\ref{sec:GeneralDMFTeqs}.
More precisely, we initialize second layer weights as follows:
\begin{equation}
    a_1(0)=a_2(0)=1\ \ \ \ \ a_3(0)=a_4(0)=-1
\end{equation}
The weights of the first layer are instead initialized by 
generating two random vectors $\by_1,\ \by_2\sim \Unif( \S^{d-1})$, and setting
\begin{equation}
    \bw_1(0)=\bw_3(0)=\by_1\ \ \ \ \ \bw_2(0)=\bw_4(0)=\by_2
\end{equation}
This initialization results in
initializing the DMFT equations with
\begin{equation}
    \begin{split}
        &C_{11}(0,0)=C_{22}(0,0)=C_{33}(0,0)=C_{44}(0,0)=1\, ,\\
        &C_{13}(0,0)=C_{24}(0,0)=1\, ,\\
        &C_{12}(0,0)=C_{14}(0,0)=C_{23}(0,0)=C_{34}(0,0)=0\, .
    \end{split}
\end{equation}

Both for the discretized DMFT  and for GD
for several values of the stepsize.  The results of this analysis are plotted in Fig.~\ref{aa_opposite}.

\paragraph{Setting 2.} 
We consider again pure noise with $\tau=1$,
a network with $m=4$, input dimension $d=100$
and sample size $n=150$.
We use hidden neurons with the same covariance structure as in the Setting 1.

\begin{figure}
\centering
\includegraphics[width=0.495\linewidth]{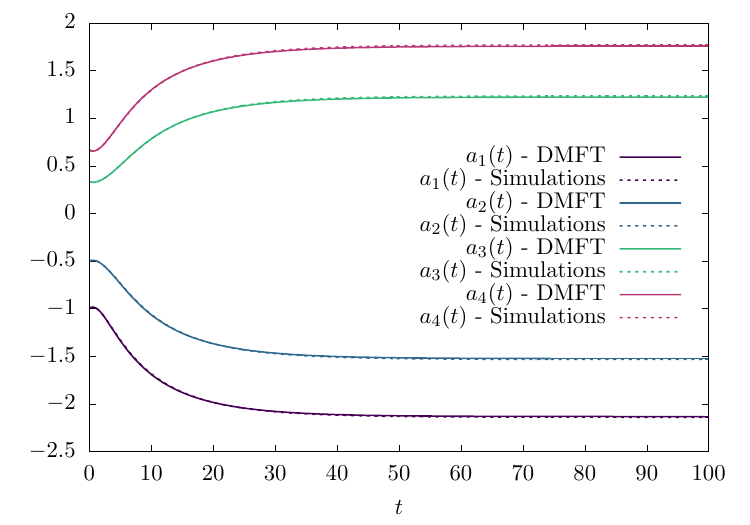}
\includegraphics[width=0.495\linewidth]{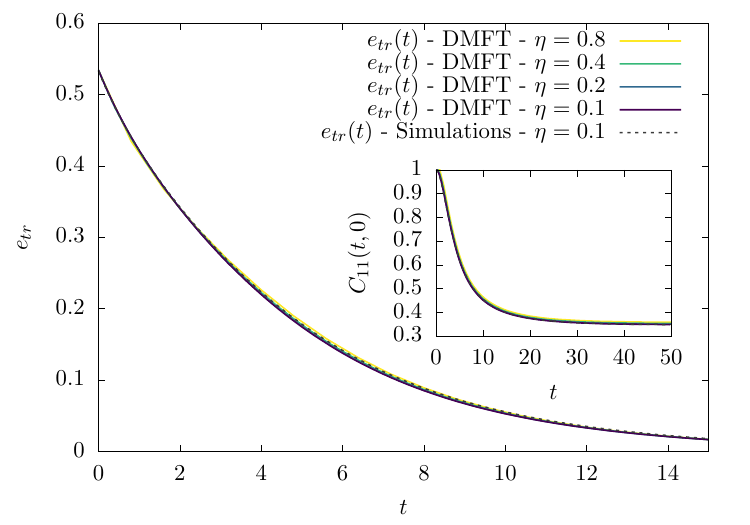}
\caption{\textbf{Comparison between discretized DMFT and GD dynamics (labeled as `Simulations').} GD results are averaged over $N=10^4$ 
realizations of the Gaussian process, under Setting 2 described in the main text.
Results for GD are averaged over $N=10^4$ samples.
}
\label{aa_random}
\end{figure}

However, we change the initialization with respect to
Setting 1. 
First layer are initialized independently and uniformly at random. It follows that
\begin{equation}
    C_{ij}(0,0)=\delta_{ij} \ \ \ \forall i,j=1,\ldots,4
\end{equation}
Second layer weights
are initialized according to 
\begin{equation}
    a_1(0)=-1\, , \ \ \ a_2(0)=-\frac{1}{2}\, , \ \ \ 
    a_3(0)=\frac{1}{3} \, ,\ \ \ a_4(0)=\frac{2}{3}\, .
\end{equation}
We use stepsize $\eul=0.1$.

\subsection{Construction of the Gaussian process $f^g(\,\cdot\, )$}
\label{sec:Construction}
The Gaussian process $f^g$ can be constructed as follows. 
Define a sequence of independent Gaussian tensors $\bJ^{(k)}\in (\reals^d)^{\otimes k}$,
$k\ge 1$, with entries $(J^{(k)}_{i_1,\ldots, i_k} : i_j\le d)\sim_{iid}\normal(0,1)$. 
We then let
\begin{equation}
    f^g(\ba, \bW)=\frac 1m\sum_{i=1}^m a_i \sum_{k= 0}^{\infty} c_k \sum_{i_1, \ldots , i_k=1}^d
    J^{(k)}_{i_1,\ldots, i_k}w_{i,i_1}\ldots w_{i,i_k}
\end{equation}
It is easy to check that this stochastic process has the prescribed covariance, with 
\begin{equation}
    h(z)=\sum_{k=0}^{\infty}c_k^2z^k\, ,
\end{equation}
has long as the series above has radius of convergence larger than $1$.
An analogous construction holds for $\varphi^g$.
%
%
\section{Dynamical regimes: General preliminaries}\label{Sec_Regime_Prelim}

In the next two sections, we will study the 
\SymmDMFT equations of Section \ref{sec:SymmetricDMFTeqs}
and characterize different dynamical regimes 
in the large network limit. From a technical viewpoint, we develop a \emph{singular perturbation theory} of 
the DMFT equations as $m\to\infty$ for fixed overparametrization ratio $\alpha = \oalpha/m$. 

While singular perturbation theory is a classical domain of mathematics \cite{berglund2001perturbation,holmes2019perturbation},
making this type of analysis rigorous is notoriously challenging. We will proceed heuristically as follows: 
$(i)$~Hypothesize a certain asymptotic behavior of the 
DMFT solution in a specific time-scale; $(ii)$~Check consistency with the DMFT equations; $(iii)$~Check that this behavior is observed in the numerical solution of the DMFT equations.

More precisely, a specific dynamical regime is identified by 
a scaling of the time variable, which in our case will take the form $t = t_{\#}(m)\cdot \st$ for a certain fixed function $t_{\#}(m)$ and $\st$ a scaled time of order one. The asymptotics of DMFT 
quantities in that regime takes the form (for instance)
\begin{align}
\lim_{m\to\infty}
\frac{1}{c_{\#}(m)}C_o\Big(t_{\#}(m)\cdot \st, t_{\#}(m)\cdot\ss;m,\oalpha=\frac{\alpha}{m}\Big) = c_o(\st,\ss;\alpha)\, ,
\end{align}
where $c_{\#}(m)$, $c_o(\st,\ss;\alpha)$ are two fixed functions, the limit is understood to hold 
at fixed $\ts,\ss,\alpha\in (0,\infty)$, and we made explicit
the dependence of $C_o$ on $m$, $\oalpha$.
More concisely, we will often write the above formula as
\begin{align}
C_o\Big(t_{\#}(m)\cdot \st, t_{\#}(m)\cdot\ss;m,\oalpha=\frac{\alpha}{m}\Big) = 
c_{\#}(m)
c_o(\st,\ss;\alpha)+o(c_{\#}(m))\,,
\end{align}
and we will typically use $t$, $s$ instead of $\ts$,
$\ss$ for the dummy variables.

The behavior of the DMFT equations depends in a crucial way in the initialization of the second layer weights:
\begin{itemize}
\item In Section \ref{Sec_lazy_full},
we will consider the case of 
a `lazy initialization,' i.e. we will assume 
$a(0)=\gamma_0\sqrt{m}$ for some constant 
$\gamma_0\in (0,\infty)$ independent of $m$.
\item In Section \ref{sec:Dynamical_MF}, we will consider the 
`mean field initialization' i.e.
assume $a(0)=a_0$ to be constant and independent of $m$.
\end{itemize}
%

%
%
\section{Dynamical regimes: Lazy initialization}\label{Sec_lazy_full}

As anticipated, in this section we study dynamical regimes under lazy initialization. 
In subsection \ref{Sec_NTK_pure_noise},
we will consider the case of pure noise data
and in subsection \ref{NTK_single_index} the $k$-index 
model.

Throughout this section, we let $\gamma(t) = a(t)/\sqrt{m}$
(in particular, $\gamma(0)=\gamma_0$).

\begin{figure}
    \centering
        \includegraphics[width=0.695\linewidth]{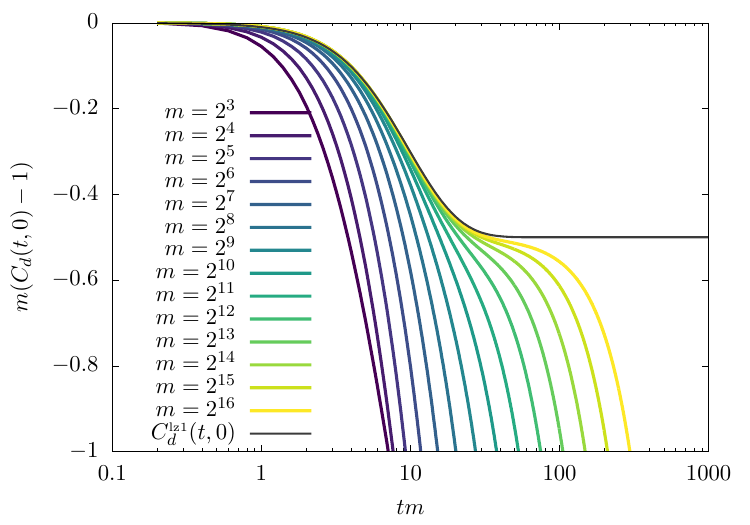}
    \caption{\textbf{Training of pure noise data: first dynamical regime.} Rescaled correlation function $m(C_d(t,s)-1)$ in the first dynamical regime as a function of the scaled time $tm$ for a model initialized with a lazy scaling and fixed second layer weights. Different curves correspond to the numerical integration of the \SymmDMFT equations at various
    values of $m$. They appear to converge to the scaling solution in the large $m$ limit described by Eqs.~\eqref{sol_dyn_ODE_lazy_pn}. Here 
    $\alpha=0.5$, $\tilde h(z)=(3/10)z+z^2/2$ and $\tau=1$.}
    \label{fig_NTK_pure_noise_Cd}
\end{figure}

\subsection{Pure noise model}\label{Sec_NTK_pure_noise}

Under the pure noise model, we have $\varphi=\hat\varphi=0$.
Further, the variable $\bv(t)$ is not defined and can be dropped (equivalently, we can set $\bv(t)=\bzero$).

We identify three dynamical regimes:
\begin{enumerate}
    \item $t=O(1/m)$: $\gamma(t) = \gamma_0+o_m(1)$, train error decreases, and the network approximates the null function (Section \ref{Sec_pn_1}).
    \item $t=\Theta(1)$: $\gamma(t) = \gamma_0+o_m(1)$,  first-layer weights move significantly and train error converges to a limit $e_*(\gamma_0)$
    (Section  \ref{Sec_pn_2}). If $\gamma_0$ is larger than the interpolation threshold, then train error vanishes in this regime.
    \item $t = \Theta(m)$: This
    regime emerges only if $\gamma_0$ is smaller than
    the interpolation threshold.  (We discuss the identification of the interpolation transition of gradient flow in Section \ref{Sec_pn_int_thresh}.)
    
    If this is the case, $\gamma(t)$ grows on the time scale      $t = \Theta(m)$ until it crosses the interpolation threshold. At that point the train error vanishes
     (Section  \ref{Sec_pn_3}).
\end{enumerate}

Since in the first two regimes $\gamma(t)$ does not change appreciably,
the dynamics in these time scales is essentially equivalent to the one of a network in which second-layer weights are fixed
and do not evolve by GF. In Section \ref{Sec_pn_1} and \ref{Sec_pn_2} we first consider this case.

We  note that the pure noise model is unchanged
if we rescale 
$\tau \to c\tau$, $\gamma_0\to c\gamma_0$. More precisely,
this results in a rescaling of the risk by $c^2$ and hence of time by the same factor. As a consequence quantities of interest often depend on $\gamma,\tau$ uniquely through their ratio $\gamma/\tau$.

\subsubsection{First dynamical regime: $t=O(1/m)$} \label{Sec_pn_1}
We first consider the case in which the (scaled)
second layer weights are not updated and fixed to their initialization, i.e. $\gamma(t) = \gamma_0$. 

It is possible to check that, up to higher-order terms, 
the \SymmDMFT equations are solved by functions of the form
(the first equation holds in weak sense, i.e. after integrating against a test function)
\begin{align}\label{scaling_lazy_pure_noise}
    R_A(t/m,s/m)&= m \, \delta(t-s)+o_m(m) & C_A(t/m,s/m)&= \lzf{C}_A(t,s)+o_m(1)\\
        R_o(t/m,s/m)&=\frac 1m \lzf{R}_o(t,s)+o_m(1/m)& C_o(t/m,s/m)&= \frac 1m \lzf{C}_o(t,s)+o_m(1/m)
        \label{scaling_lazy_pure_noise_Ro_Co}\\
        R_d(t/m,s/m)&= \vartheta(t - s)+o_m(1)& C_d(t/m,s/m)&= 1+\frac 1m \lzf{C}_d(t,s)+o_m(1/m)
        \label{scaling_lazy_pure_noise_Rd_Cd}\\
    \nu(t/m)&= \lzf{\nu}(t)+o_m(1)\:.
    \label{scaling_lazy_pure_noise_nu}
\end{align} 
where $\lzf{C}_A$, $\lzf{C}_d$, $\lzf{C}_o$, $\lzf{\nu}$ and $\lzf{R}_o$ are suitable functions independent of $m$. 
Here and below, we use the notation $\vartheta(t) = \bfone(t> 0)$.

Note that Eq.~\eqref{scaling_lazy_pure_noise_Rd_Cd} implies that on this dynamical regime the weights of the first layer change by order $1/m$.

Plugging the asymptotic form in Eqs.~\eqref{scaling_lazy_pure_noise} to
\eqref{scaling_lazy_pure_noise_nu}
into the \SymmDMFT equations 
and matching the leading orders for large $m$, 
we obtain that the functions 
 $\lzf{C}_A$, $\lzf{C}_d$, $\lzf{C}_o$, $\lzf{\nu}$ and $\lzf{R}_o$ 
 must satisfy
\begin{equation}\label{dyn_eq_regime1_lazy}
    \begin{split}
    \lzf{\nu}(t)&=- \alpha  \gamma_0^2h'(1)-\alpha  \gamma_0^2h'(0)\lzf{C}_o(t,t)\, ,\\
        \partial_t \lzf{R}_o(t,t')&=-\alpha \gamma_0^2h'(0)\left(1+\lzf{R}_o(t,t')\right)\, ,\\
        \partial_t  \lzf{C}_o(t,t')&=-\alpha \gamma_0^2h'(0)\left(1+\lzf{C}_o(t,t')\right)\, ,\\
        \partial_t  \lzf{C}_d(t,t')&=\alpha \gamma_0^2h'(0)\left(\lzf{C}_o(t,t)-\lzf{C}_o(t,t')\right)\, ,\\
    \lzf{C}_A(t,s)&=-\left[\tau^2 + \gamma_0^2h(1)- \gamma_0^2h(0)\lzf{C}_o(t,s)\right]\:.
    \end{split}
\end{equation}
These are a set of ordinary differential equations that can be solved explicitly. We get
\begin{equation}
    \begin{split}\label{sol_dyn_ODE_lazy_pn}
        \lzf{R}_o(t,t')&=\vartheta(t-t')\left[e^{-\alpha  \gamma_0^2h'(0)(t-t')}-1\right]\, ,\\
        \lzf{C}_o(t,t)&= e^{-2\alpha \gamma_0^2 h'(0)t}-1\, ,\\
        \lzf{C}_o(t,t')&=-1+e^{-\alpha \gamma_0^2h'(0)(t-t')}(\lzf C_o(t',t')+1)\,\;\;\;\;\;\; \mbox{ for }t>t'\, ,\\
        \lzf{C}_d(t,t')&=1+e^{-\alpha\gamma_0^2h'(0)(t+t')}-\frac 12 \left(e^{-2\alpha\gamma_0^2h'(0)t}+e^{-2\alpha\gamma_0^2h'(0)t'}\right)\, ,\;\; \;\;\;\; \mbox{ for }t>t'\:.
    \end{split}
\end{equation}

In particular, Eqs.~\eqref{sol_dyn_ODE_lazy_pn} imply
\begin{equation}
    \begin{split}
        \lim_{t\to \infty} \lzf{C}_o(t,t) &= -1\, ,\\
        \lim_{t,t'\to \infty,\, t-t'\to \infty} \lzf{R}_o(t,t') &= -1\:.
    \end{split}
    \label{long_t_reg1}
\end{equation}
Recalling Eq.~\eqref{scaling_lazy_pure_noise_Ro_Co}
we conclude that 
\begin{align}
\lim_{t\to\infty}\lim_{m\to\infty} m\,C_{o}(t/m,t/m) &= -1\, ,
\end{align}
or, using the interpretation of $C_o$, 
\begin{align}
\lim_{t\to\infty}\lim_{m\to\infty} \lim_{n\to\infty} m\cdot\<\bw_i(t/m),\bw_j(t/m)\> &= -1\;\;\;\; \forall i\neq j\:.
\end{align}
In other words, at the end of this dynamical regime, 
the first-layer weights form a regular simplex,
with center $\overline{\bw}(t/m):=m^{-1}\sum_{i=1}^m\bw_i(t/m)$
satisfying $\|\overline{\bw}(t/m)\|^2 = o_m(1)$.

Hence, at the end of the first dynamical regime, the
first-layer weights are such that the linear 
component of the activation function $\sigma$
is removed. In other words,  for $t$ a large constant,
we have
\begin{align}\label{eq:PredictionPureNois_Short}
f^g(\,\cdot\,;\ba(t/m),\bW(t/m)) =\frac{\gamma_0}{\sqrt{m}}
\sum_{i=1}^m \sigma^{\snl}_G(\bw_i(t/m)) +{\rm err}\, ,
\end{align}
where $\sigma^{\snl}_G(\bw)$ is a Gaussian process with covariance structure given by $h(z)-zh'(0)$,
and ${\rm err}$ is small in mean square.

Notice also that this is achieved by a $O(1/\sqrt{m})$ change in each of the first layer 
weights. Indeed, by Eq.~\eqref{scaling_lazy_pure_noise_Rd_Cd}, we have
\begin{align}
\lim_{n\to\infty}\|\bw_i(0)-\bw_i(t/m)\|^2= 2-2C_d(0,t/m) = -\frac{2}{m} \lzf{C}_d(0,t) +o_m(1/m)\, .
\end{align}

Equations~\eqref{scaling_lazy_pure_noise} to \eqref{scaling_lazy_pure_noise_nu}
 can be used to compute the behavior of the train error in this dynamical regime:
\begin{equation}
    \lim_{m\to \infty}e_{\str}(t/m) = \lzf{e}_{\str}(t)\, .
    \end{equation}
Using Eqs.~\eqref{sol_dyn_ODE_lazy_pn}, we get the expression:
\begin{equation}
 \lzf{e}_{\str}(t) = \frac 12 \left[\tau^2 + \gamma_0^2h(1)+ \gamma_0^2h'(0)\lzf{C}_o(t,t)\right]\:.
\end{equation}
In particular, the train error at the end of this dynamical regime is
\begin{equation}
 \lim_{t\to\infty}\lim_{m\to \infty}e_{\str}(t/m)  =
    \lim_{t\to \infty}\lzf{e}_{\str}(t)
    =\frac 12\left[\tau^2+ \gamma_0^2h(1)- \gamma_0^2h'(0)\right]\, .\label{eq:Plateau-First-Regime}
\end{equation}
This is in agreement with
\eqref{eq:PredictionPureNois_Short}. Indeed, note that
\begin{align}
\hcRisk_n(\ba,\bW) = 
\frac{1}{2n}\|\beps\|^2-\frac{1}{n}\<\beps,\bff^g(\ba,\bW)\>
+\frac{1}{2n}\|\bff^g(\ba,\bW)\|^2\, .
\end{align}
Training in this timescale attempts to minimize 
$\|\bff^g(\ba,\bW)\|^2$ without fitting the noise.

This picture is confirmed by the fact that Eqs.~\eqref{sol_dyn_ODE_lazy_pn} depend on $h$
only through  $h'(0)$. This means that the dynamics on  timescales of order $1/m$ is controlled by the linear part of the covariance structure of the hidden layer.

 In Fig.\ref{fig_NTK_pure_noise_Cd} 
 we test the  correctness of the asymtotic ansatz of
 Eqs.~\eqref{scaling_lazy_pure_noise} to \eqref{scaling_lazy_pure_noise_nu}.
 Namely, we compare the results of 
 numerical integration of the \SymmDMFT
 equations for various values of $m$, with the prediction of Eqs.~\eqref{sol_dyn_ODE_lazy_pn}. The match is excellent.

So far we assumed that second-layer weights are not optimized and $\gamma(t)=\gamma_0$.
What happens if drop this constraint? It can be checked that the 
form given in Eqs.~\eqref{scaling_lazy_pure_noise}-\eqref{scaling_lazy_pure_noise_nu} still solves the 
\SymmDMFT equations
when $a(t)$ is allowed to evolve, and $\gamma(t/m) = \gamma_0+ o_m(1)$ for all fixed $t\in (0,\infty)$.
In other words, second layer weights do not change significantly during this dynamical regime. 

\subsubsection{Second dynamical regime: $t=\Theta(1)$} \label{Sec_pn_2}

\begin{figure}
    \centering
    \includegraphics[width=0.495\linewidth]{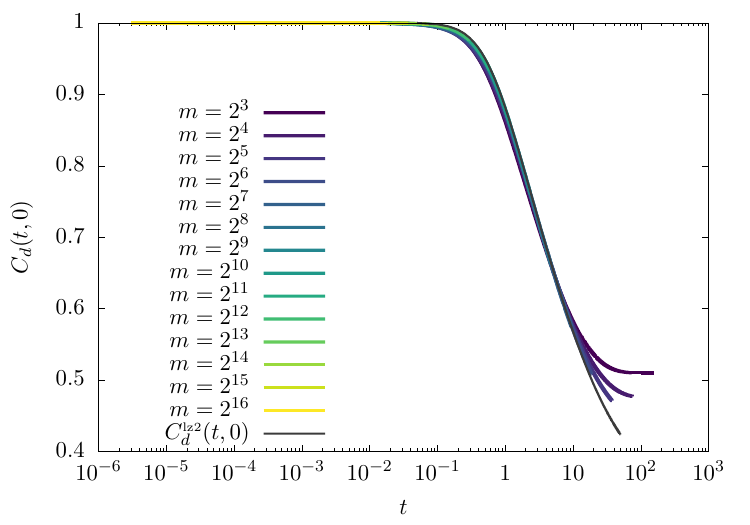}
       \includegraphics[width=0.495\linewidth]{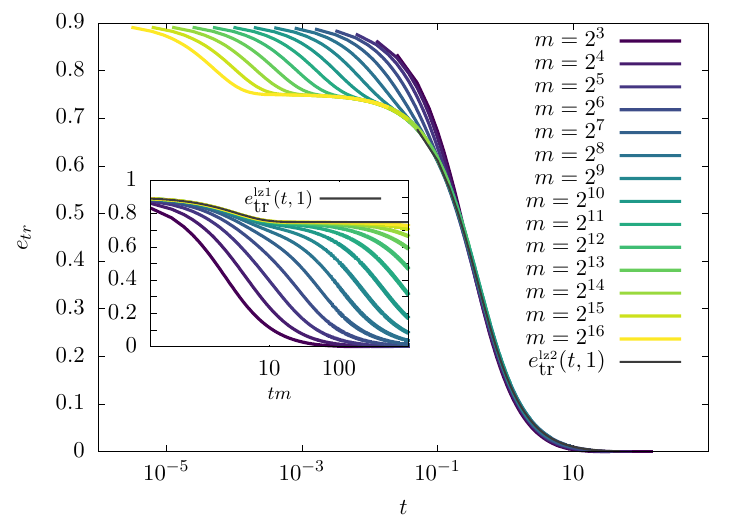}
    \caption{\textbf{Training with pure noise data under lazy initialization:
 second dynamical regime $t=\Theta(1)$.} Left panel: First-layer weights correlation function $C_d(t,0)$
 measuring the inner product between neurons at time $0$ and time $t$, plotted versus $t$ 
 for several values of $m$, and compared with the large $m$-asymptotics $\lzs{C}_d$. Right panel:
 training error a $e_{\str}(t,\gamma_0,m)$ plotted versus $t$ for several values of $m$,
 and compared with the large-$m$ asymptotics in this regime $\lzs{e}_{\str}(t,1)$.
 Notice the two-steps decrease of the training error, corresponding to the two regimes $t=O(1/m)$ and
 $t=\Theta(1)$. Inset: Same curves plotted versus $tm$, and compared with the asymptotic 
 prediction $\lzf{e}_{\str}(\,\cdot\,,1)$ in the first dynamical regime.
 For both panels we use $\alpha=0.5$, $\tilde h(z)=(3/10)z+z^2/2$, $\gamma_0=1$ and $\tau=1$.
 }
    \label{fig_NTK_pure_noise}
\end{figure}

The second dynamical regime arises when $t=\Theta(1)$. Recall from the previous
subsection that, for $t=o_m(1)$, the train error remains close (for large $m$)
to the plateau characterized at the end of the first dynamical regime, see Eq.~\eqref{eq:Plateau-First-Regime}.
When $t$ is of order one, the first layer weights start changing by an amount of order one as well, 
and the model starts to fit the noise.

As before, we begin by considering the simplified setting in which $\gamma(t) = \gamma_0$ is fixed and not optimized by GF. 

We claim that the \SymmDMFT equations are solved by the following ansatz, up to
lower order terms as $m\to\infty$:
\begin{align}
        \nu(t)&=\lzs{\nu}(t)+o_m(1)\, ,\label{eq:IntScalenu}\\
        C_d(t,t')&=\lzs{C}_d(t,t')+o_m(1)\, ,\\
        R_d(t,t')&=\lzs{R}_d(t,t')+o_m(1)\, ,\\
        C_o(t,t')&=\frac 1m \lzs{C}_o(t,t')+o_m(1/m)=-\frac 1 m\lzs{C}_d(t,t')+o_m(1/m)\, ,
        \label{eq:hatCo_hatCd}\\
        R_o(t,t')&=\frac 1m \lzs{R}_o(t,t')+o_m(1/m)=-\frac 1m\lzs{R}_d(t,t')+o_m(1/m)\, .
         \label{eq:hatRo_hatRd}
\end{align}
Here $\lzs{C}_d$, $\lzs{R}_d$, $\lzs{C}_o$, $\lzs{R}_o$ and $\lzs{\nu}$
are certain functions independent of $m$.
Equations~\eqref{eq:hatCo_hatCd}, ~\eqref{eq:hatRo_hatRd} state in particular that
$\lzs{C}_o(t,t') = -\lzs{C}_d(t,t')$ and $\lzs{R}_o(t,t') = -\lzs{R}_d(t,t')$,
and the therefore we are left with the task of determining 
$\lzs{C}_d(t,t')$, $\lzs{R}_d(t,t')$.
By substituting Eqs.~\eqref{eq:IntScalenu}  to \eqref{eq:hatRo_hatRd} into the \SymmDMFT equations 
and matching leading order terms, we get a set of two integral-differential equations 
for $\lzs{C}_d(t,t')$, $\lzs{R}_d(t,t')$, which we next state.

We first define
\begin{equation}\label{eq:hatSigma}
    \begin{split}
        \lzs{\Sigma}_R(t,s) &:= \gamma_0^2\left( h'(\lzs{C}_d(t,s))-h'(0)\right)\lzs{R}_d(t,s)\, ,\\   
        \lzs{\Sigma}_C(t,s) &:= \tau^2 + \gamma_0^2h(\lzs{C}_d(t,s))-\gamma_0^2h'(0)\lzs{C}_d(t,s)\, ,
    \end{split}
\end{equation}
then we define $\lzs{R}_A$ and $\lzs{C}_A$ as the solution of 
\begin{equation}\label{eq:hatCA}
    \begin{split}
        \delta(t-t')&=\int_{t'}^t \left[\delta(t-s) + \lzs{\Sigma}_R(t,s)\right]\lzs{R}_A(s,t')\, \de s\, ,\\
        0&=\int_{0}^t \left[\delta(t-s) + \lzs{\Sigma}_R(t,s)\right]\lzs{C}_A(s,t')\de s+\int_0^{t'}\lzs{\Sigma}_C(t,s)\lzs{R}_A(t',s)\, \de s\:.
    \end{split}
\end{equation}
We next define the asymptotic form for the memory kernels
\begin{equation}
    \begin{split}
        \lzs{M}_R(t,s)&:=\alpha\left[\lzs{R}_A(t,s)\tilde h'(\lzs{C}_d(t,s))+\lzs{C}_A(t,s)\tilde h''(\lzs{C}_d(t,s))\lzs{R}_d(t,s)\right]\, ,\\
        \lzs{M}_C(t,s)&:= \alpha \tilde h'(\lzs{C}_d(t,s))\lzs{C}_A(t,s)\, .
    \end{split}
\end{equation}
and we have defined
\begin{equation}
    \tilde h(z):=h(z)-h'(0)z\:.
\end{equation}
The equations for $\lzs{\nu}$, $\lzs{C}_d$ and $\lzs{R}_d$ are then given by
\begin{align}
        &\lzs{\nu}(t) = -\int_0^t \left[\lzs{M}_R(t,s)\lzs{C}_d(t,s)+\lzs{M}_C(t,s)\lzs{R}_d(t,s)\right] \,\de s\, ,\label{eq:Reduced_h1}\\
        &\partial_t\lzs{R}_d(t,t')= \delta(t-t')-\lzs{\nu}(t)\lzs{R}_d(t,t')-\int_{t'}^t \lzs{M}_R(t,s)\lzs{R}_d(s,t')\, \de s\, ,\label{eq:hRd}\\
        &\partial_t \lzs{C}_d(t,t') = -\lzs{\nu}(t)\lzs{C}_d(t,t') - \int_0^t
        \lzs{M}_R(t,s)\lzs{C}_d(t',s)\, \de s-\int_0^{t'} \lzs{M}_C(t,s)\lzs{R}_d(t',s) \, \de s\:.\label{eq:Reduced_h3}
\end{align}
(As before, in the second and last equation, it is understood that $t\ge t'$, and the last equation is understood to hold in weak sense.)

Given the constraints on $C_d$, $R_d$, we have the following constraints on $\lzs{C}_d$,
$\lzs{R}_d$,
\begin{align}
\lzs{C}_d(t,t)& = 1\, ,\\
    \lzs{C}_d(t,s)&=\lzs{C}_d(s,t)\, ,\\
     \lzs{R}_d(t,s)&= 0\;\;\;\;\; \forall t \leq s\, .
\end{align}
In particular, the last condition, together with Eq.~\eqref{eq:hRd} implies $ \lzs{R}_d(t+,t) =1$.

The evolution of the the train error in this second dynamical regime is given by
\begin{align}
    \lim_{m\to \infty}e_{\str}(t)&=\lzs{e}_{\str}(t,\gamma_0)\, ,\\
    \lzs{e}_{\str}(t,\gamma_0) &=-\frac 12 \lzs{C}_A(t,t)\:,
    \label{asym_etr_lazy_regime2}
\end{align}
where we have made explicit the dependence on  the initialization 
of second-layer weights $\gamma_0$.

Note that Eq.~\eqref{eq:hatCA} implies $\lzs{C}_A(0,0) = -\lzs{\Sigma}_C(0,0)$, and
Eq.~\eqref{eq:hatSigma} yields $\lzs{\Sigma}_C(0,0) = \tau^2 + \gamma_0^2h(1)-\gamma_0^2h'(0)$.
Therefore
\begin{equation}
    \lim_{t\to 0}\lzs{e}_{\str}(t,\gamma_0) = \frac{1}{2}\left[\tau^2+ \gamma_0^2\tilde h(1)\right]=\lim_{t\to\infty}\lzf{e}_{\str}(t,\gamma_0)\, .
\end{equation}
In other words, this second dynamical regime captures the decrease of the training error which starts at the plateau reached in the first regime, cf. Eq.~\eqref{eq:Plateau-First-Regime}. 
which coincides with the long time extrapolation of the first dynamical regime. 

This second dynamical regime is fully non-linear and depends on the entire covariance function $\tilde h$.
Further, the first order weights move by an amount $\|\bw_i(t)-\bw_i(0)\|=\Theta(1)$,
as follows from the fact that $\lzs{C}_d(t,0)<1$ strictly. 

In order to confirm the ansatz \eqref{eq:IntScalenu} to  \eqref{eq:hatRo_hatRd},
we compared the solution of the full \SymmDMFT equations, with the solution of 
the asymptotic equations \eqref{eq:Reduced_h1},  \eqref{eq:Reduced_h3}.
An example of such a comparison is presented in Fig.~\ref{fig_NTK_pure_noise}: the agreement is excellent.

The treatment above assumed the constraint $\gamma(t) = \gamma_0$. However, as in
the first dynamical regime, if we let second layer weights evolve, they do not change appreciably.
Namely, the asymptotic form given in Eqs.~\eqref{eq:IntScalenu} to \eqref{eq:hatRo_hatRd} still solves 
the \SymmDMFT equations when $a(t)$ is allowed to evolve.
We have $\gamma(t) = \gamma_0+ o_m(1)$ on this timescale.
%
%
\subsubsection{The algorithmic interpolation transition} \label{Sec_pn_int_thresh}

\begin{figure}[t]
    \centering 
        \includegraphics[width=0.495\linewidth]{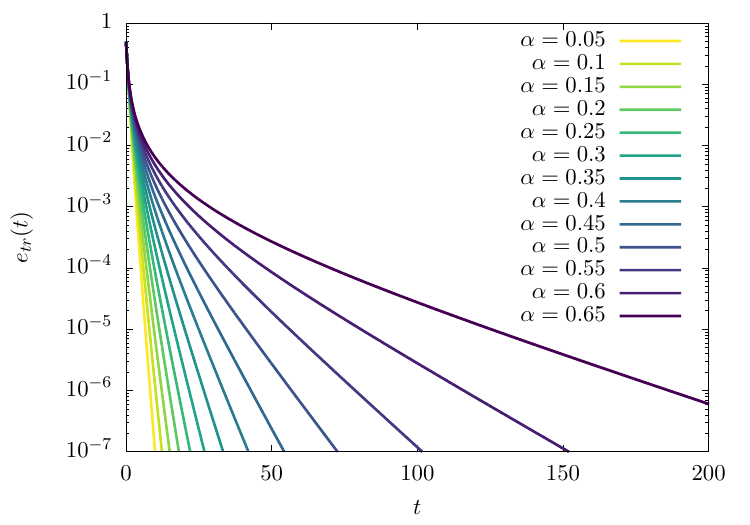}
        \includegraphics[width=0.495\linewidth]{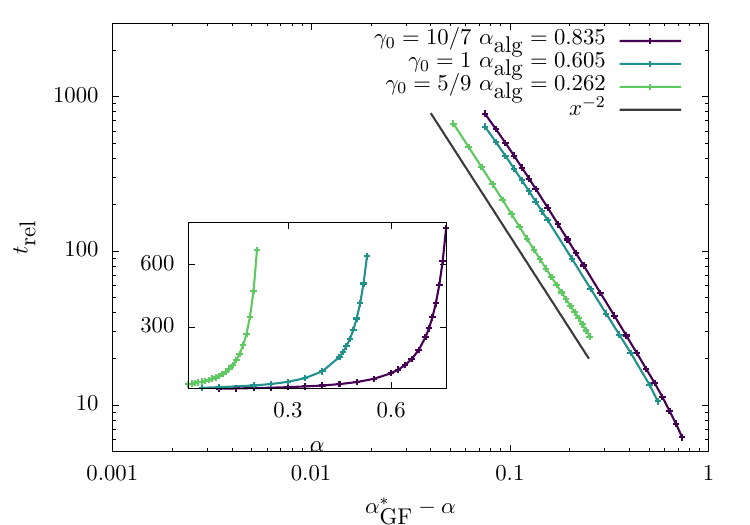}
        \caption{\textbf{Training with pure noise data under lazy initialization: algorithmic interpolation threshold.}
        Left Panel. We plot the train error as a function of time for different values of $\alpha$ at $\gamma_0=10/7$. The train error has an exponential decay to zero for $\alpha$ below the interpolation threshold.
        Right Panel. We plot the time for GF to converge to near-zero training error as a function of $\alpha$,
        for various values of $\gamma_0$, as computed using the theory of  Section \ref{Sec_pn_2}.
        The divergence of $t_{\text{rel}}$ signals the phase transition for GF interpolation
        $\alpha_{\sGF}^*$. 
        Inset: $\tau_{\text{rel}}$ versus $\alpha$ in linear scale.
        Main panel: $t_{\text{rel}}$ versus $\alpha_{\sGF}^*-\alpha$  (with the fitted value of $\alpha^*_{\sGF}$).
        Here we use $h(z)=(3/10)z+z^2/2$.}
        \label{fig:lazy_pure_noise_etr_int}
    \end{figure}

For the discussion in this section, we denote by $ e_{\str}(t,\gamma_0,m,\alpha)$ the train error
as a function of $t$, where we emphasized the dependence on the initial condition $\gamma_0$,
on the number of neurons $m$, and on the overparametrization ratio $\alpha$. We 
further assume that second layer weigths are not evolved and therefore $\gamma(t) = \gamma_0$ for all $t$.
We define the asymptotic train error achieved by GF as
\begin{align}
e_{\str,\infty}(\gamma_0,m,\alpha) &:= \lim_{t\to\infty} e_{\str}(t,\gamma_0,m,\alpha) \\
&=\lim_{t\to\infty} \lim_{n\to\infty} \hcRisk_n(\ba,\bW(t))\, .
\end{align}
Again, in this definition $a_i = \gamma_0\sqrt{m}$ is kept fixed and does not evolve with time. 

Notice that it is in principle possible that $\lim_{n\to\infty}  \hcRisk_n(\ba,\bW(t_n))$
is strictly smaller than $e_{\str,\infty}(\gamma_0,m,\alpha)$ if we let $t_n$ diverge with $n$ at sufficiently 
fast rate. However, based on results on related models in spin-glass theory we expect 
this not to be the case as long as $t_n$ is polynomial in $n$. Explicitly, we expect
that, for any sequence $t_n\to\infty$
\begin{align}
t_n\le n^C\;\; \Rightarrow\;\;\lim_{n\to\infty}  \hcRisk_n(\ba,\bW(t_n)) = e_{\str,\infty}(\gamma_0,m,\alpha) \, .
\end{align}

Using the reduced equations for $t=\Theta(1)$ timescale, i.e. 
Eqs.~\eqref{eq:Reduced_h1} to \eqref{eq:Reduced_h3}, we can also define
\begin{align}
\lzs{e}_{\str,\infty}(\gamma_0,\alpha) &:= \lim_{t\to\infty} \lzs{e}_{\str}(t,\gamma_0,\alpha) \\
&=\lim_{t\to\infty} \lim_{m\to\infty}  \lim_{n\to\infty} \hcRisk_n(\ba,\bW(t))\, .
\nonumber
\end{align}

A natural question is whether the large $m$ limit of $e_{\str,\infty}(\gamma_0,m,\alpha)$ 
coincides with $\lzs{e}_{\str,\infty}(\gamma_0,\alpha)$. This amounts to asking whether
there exists dynamical regime with timescale  $t(m)$ diverging with $m$ at which 
$e_{\str}(t(m),\gamma_0,m,\alpha)$ starts diverging significantly from
the value at the end of the second dynamical regime namely $\lzs{e}_{\str,\infty}(\gamma_0,\alpha)$.
If $\lzs{e}_{\str,\infty}(\gamma_0,\alpha) = 0$ then of course 
$\lim_{m\to \infty}e_{\str}(t(m),\gamma_0,m,\alpha)=0$ as well.

If however $\lzs{e}_{\str,\infty}(\gamma_0,\alpha) > 0$, then the answer 
depends upon whether the second layer weights are evolved with GF:
\begin{itemize}
\item In the constrained setting in which second-layer weights do not evolve,
we observe (from numerical solutions of $\SymmDMFT$) that
\begin{align}
\lim_{m\to\infty}e_{\str,\infty}(\gamma_0,m,\alpha) = \lzs{e}_{\str,\infty}(\gamma_0,\alpha)\, .
\label{eq:limE_M}
\end{align}
\item In the next section we will see that  if $\gamma(t)$ evolves with GF
then the train error achieved on a diverging timescale $t = \Theta(m)$ is strictly smaller than 
 $\lzs{e}_{\str,\infty}(\gamma_0,\alpha)$ and vanishes for large enough $t$.
 \end{itemize}

Note that  $e_{\str,\infty}(\gamma_0,m,\alpha)$ and $\lzs{e}_{\str,\infty}(\gamma_0,\alpha)$  
also depend on the noise variance $\tau^2$. However, because of the invariance under rescaling
discussed at the beginning of this section (adding $\tau$ as an argument):
\begin{align}
\lzs{e}_{\str,\infty}(\gamma_0,\alpha,\tau^2) = \tau^2\cdot \lzs{e}_{\str,\infty}(\gamma_0/\tau,\alpha,\tau^2=1) 
\, ,
\end{align}
and similarly for $e_{\str,\infty}(\gamma_0,m,\alpha)$. Because of this relation, we can think that 
$\tau^2$ is fixed throughout, e.g. $\tau^2=1$.

We expect $e_{\str,\infty}(\gamma_0,m,\alpha)$, $\lzs{e}_{\str,\infty}(\gamma_0,\alpha)$ to be non-increasing in $\gamma_0$, and define the thresholds 
\begin{align}
    \gamma_{\sGF}(\alpha,m) &:=\inf\big\{\gamma_0:\; e_{\str,\infty}(\gamma_0,m,\alpha) = 0\big\}\, ,\\
    \gamma^*_{\sGF} (\alpha) &:=\inf\big\{\gamma_0:\; \lzs{e}_{\str,\infty}(\gamma_0,\alpha) = 0\big\}\,.\label{def_int_thr_minf}
\end{align}
(These definitions need to be modified if $\gamma_0\mapsto e_{\str,\infty}(\gamma_0,m,\alpha)$ is non-monotone.)

Of course, Eq.~\eqref{eq:limE_M} implies 
\begin{equation}
    \lim_{m\to \infty} \gamma_{\rm GF}(\alpha,m)=\gamma^*_{\rm GF} (\alpha)\, .
\end{equation}

The numerical solution of the \SymmDMFT equations imply that the curve $\gamma^*_{\sGF}(\alpha)$ is monotone increasing with $\alpha$,
as also suggested by the Gaussian complexity bound (see Section 2.2 in the main text).
Hence we can invert it to get a threshold $\alpha_{\sGF}^*(\gamma_0)$: the two descriptions are equivalent. 

In order to determine $\alpha_{\sGF}^*(\gamma_0)$, we adopt 
a procedure already implemented in \cite{kamali2023dynamical} for a simpler model. The procedure is based on the observation (from numerical solutions) that when $\lzs{e}_{\str,\infty}(\gamma_0,\alpha)=0$,  
$\lzs{e}_{\str}(t,\gamma_0,\alpha)=\exp(-t/t_{\text{rel}}(\alpha;\eps)+o(t))$ for some $t_{\text{rel}}(\alpha)>0$
which diverges as $\alpha \uparrow \alpha_{\sGF}$.  
\begin{enumerate}
\item Define a grid of values of $\alpha$, $A_0=\{\alpha_1,\alpha_2,\dots,\alpha_K\}$, which are
expected to be smaller than $\alpha_{\sGF}^*(\gamma_0)$.
\item For each value $\alpha\in A_0$, integrate numerically the reduced equations \eqref{eq:Reduced_h1}
to \eqref{eq:Reduced_h3}. Verify that  $\lzs{e}_{\str}(t,\gamma_0,\alpha)$ appear to converge to $0$ 
with $t\to\infty$. Let $A\subseteq A_0$ be the subset of values for which this happens.
\item For each $\alpha\in A$, define  
$t_{\text{rel}}(\alpha_i;\eps):=\inf \{ t: \lzs{e}_{\str}(t,\gamma_0,\alpha_i)<\eps\cdot\tau^2\}$ 
where $\eps$ is a small threshold value (we use $\eps=10^{-7}$).
\item Estimate parameters $\alpha^*_{\sGF}(\gamma_0),c,\nu$ 
by fitting the relation $t_{\rm rel}(\alpha_i;\eps)\sim c(\alpha^*_{\sGF}-\alpha_i)^{-\nu}$.
\end{enumerate}

Figure~\ref{fig:lazy_pure_noise_etr_int} illustrates the calculation of $\alpha^*_{\sGF}(\gamma_0)$
for three values of $\gamma_0$. 
In the inset we plot $t_{\text{rel}}$ for three values of $\gamma_0$ as a function of $\alpha$. 
In the main panel, we demonstrate the divergence of $t_{\text{rel}}$ when $(\alpha^*_{\sGF}-\alpha)$ vanishes.
In practice, we observe $\nu=2$ fit well the data across a variety of settings, suggesting this is 
the universal exponent for the divergence of $t_{\text{rel}}$.

\subsubsection{Third dynamical regime: $t=\Theta(m)$}\label{Sec_pn_3}

\begin{figure}[t]
    \centering
    \includegraphics[width=0.645\linewidth]{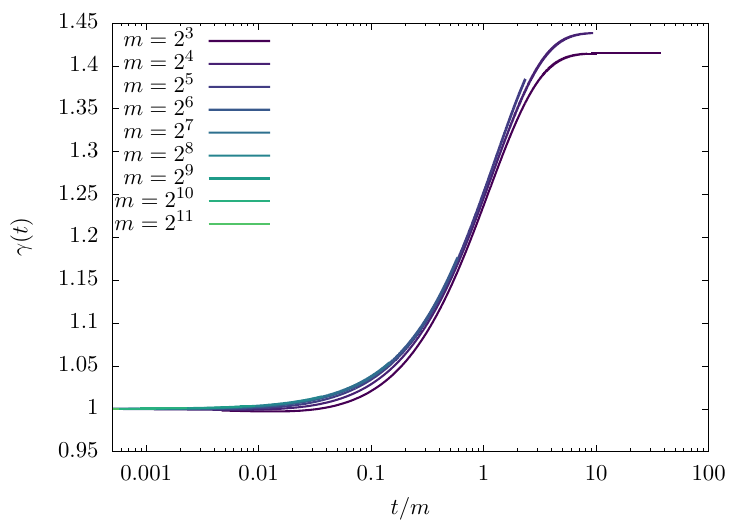}
    \caption{\textbf{Training with pure noise data under lazy initialization: second layer weights in the third dynamical regime.} 
     Evolution of the (rescaled) weights of the second layer as a function of $t/m$. 
    Here $\tau=2.5$ and $\gamma_0=1$, $\alpha=0.3$, and covariance structure for the neurons given by $h(z)=(9/10)z+z^2/2$.}
    \label{fig:regime_3_NTK_purenoise}
\end{figure}

In the first two dynamical regimes, the large-$m$ behavior did not depend on whether 
we would let second layer evolve with GF or we kept them fixed, i.e. $\gamma(t) = \gamma_0$.

In contrast, the behavior on timescales diverging with $m$ depends significantly 
on the dynamics of second-layer weights. 
\begin{itemize}
\item If second layer weights are fixed, no significant further evolution takes place.
In particular, the training error does not decrease significantly below
the value reached at the end of the second dynamical regime, i.e. $e^{\ell}_{\str,\infty}(\gamma_0,\alpha)$.
This is stated formally in Eq.~\eqref{eq:limE_M}.
\item If second layer weights evolve according to GF, then the dynamics on time-scales diverging with $m$
can be non-trivial and depends on the second-layer weights initialization $\gamma_0$.
If $\gamma_0>\gamma_{\sGF}^*(\alpha)$, then GR reaches vanishing training error during the second dynamical 
regime, and no further evolution takes place.

However, if  $\gamma_0<\gamma_{\sGF}^*(\alpha)$, second layer weights start evolving 
when $t=\Theta(m)$, thus giving rise to a third dynamical regime. This is the object of the present subsection.
\end{itemize}

In Fig.~\ref{fig:regime_3_NTK_purenoise}, left frame, we plot the rescaled second layer weights $\gamma(t)$ (as predicted by 
numerical integration of the \SymmDMFT equation) as a function of time for several values of $m$.
Here, obviously, we do not constrain $\gamma(t) = \gamma(0)$.

We observe that $\gamma(t)$ changes only when $t=\Theta(m)$. Indeed, when plotted against $t/m$, curves obtained for different values of $m$ collapse onto each other. 
This suggests that, for $t=o(m)$ $\gamma(t)=\gamma(0)+o_m(1)$ (recall that $\gamma(0)=\gamma_0$ by definition).
Further, the curve collapse suggests that, for any fixed $\ts\in (0,\infty)$:
\begin{equation}
    \lim_{m\to \infty} \gamma(\ts\, m,\gamma_0)=\lzt{\gamma}(\ts ,\gamma_0)\, ,
    \label{scalin_a_pure_lazy}
\end{equation}
where we have made explicit the dependence on $\gamma_0$. 
Of course, the case $\gamma_0>\gamma^*_{\sGF}(\alpha)$
fits in this framework with 
$\lzt{\gamma}(z,\gamma_0)=\gamma_0$ identically.

We next consider the evolution of the train error. 
In Fig.~\ref{fig:train_o2_param}, left frame, we plot the train error 
(again, as predicted by 
numerical integration of the \SymmDMFT equation) as a function of time for several values of $m$.

Again, when plotted as a function of $t/m$, curves for different values of $m$ reach a plateau, and collapse 
below the plateau. 
This suggests the following limit behavior, which is consistent with Eq.~\eqref{scalin_a_pure_lazy}
\begin{equation}
\lim_{m\to \infty}\tilde e_{\str}(\ts\, m,\gamma_0,m)=
\lzt{e}_{\str}(\ts,\gamma_0)\:.
\end{equation}
(Here we use $\tilde e_{\str}(\ts\, m,\gamma_0)$ to denote the train error when second-layer weights evolve, in contrast with $e_{\str}(\ts\, m,\gamma_0)$ which we used 
for the setting in which second-layer weights are constrained.)

Matching the present dynamical regime ($t=\Theta(m)$) with previous one ($t=\Theta(1)$, cf. Section
\ref{Sec_pn_2}), implies that
\begin{equation}
    \lim_{\ts\to 0+}\lzt{e}_{\str}(\ts,\gamma_0)=\lim_{t\to \infty}\lzs{e}_{\str}(t,\gamma_0)=
    \lzs{e}_{\str,\infty}(\gamma_0)
\end{equation}
In other words, the function $\lzt{e}_{\str}$ describes the decrease of the train error below 
the level $\lzs{e}_{\str,\infty}(\gamma_0)$ achieved during the second dynamical regime.

In order to characterize the scaling function $\lzt e_{\str}$, in Fig.~\ref{fig:train_o2_param}, right frame, we plot parametrically  the the train error for different values of $m$ as a function of  the second layer weights $\gamma(t)$.
We also plot the curve  $(\gamma,\lzs e_{\str,\infty}(\gamma))$.
This plot is consistent with the following behavior as $m\to\infty$. In a first regimes
(corresponding to $t=o(m)$) the train error has a drop that becomes vertical in the $m\to \infty$ limit, implying that $\gamma(t)$ does not evolve
while the train error decreases until it reaches $e_{\str,\infty}^\ell(\gamma_0)$.
In the last regime (corresponding to $t=\Theta(m)$), $\gamma(t)$ increases together with the decrease of the train error $e^{\ell}_{\str}(t,\gamma_0)$. Remarkably, they follow the
curve $(\gamma,\lzs e_{\str,\infty}(\gamma))$.

In order to describe the last regime, we point out that 
$t\mapsto \lzt{\gamma}(t)$ is monotone increasing. Therefore we can re-parametrize time by the value of the second layer weights. Namely, define $\tilde \gamma^{-1}$ the inverse function,
so that
\begin{equation}
\hat t=\tilde\gamma^{-1}(\lzt{\gamma}(\hat t,\gamma_0),\gamma_0)
    \:.
    \label{t_gamma}
\end{equation}
Using this reparametrization of time, the behavior in Fig.~\ref{fig:train_o2_param}
can be formalized as 
\begin{equation}
    \lim_{t,m\to\infty : \gamma(t,\gamma_0,m) = \tilde\gamma} \lzt{e}_{\str}(t,\gamma_0,m)=
    \lzt{e}_{\str}(\tilde \gamma^{-1}(\tilde \gamma,\gamma_0),\gamma_0) =:
    \varepsilon(\tilde \gamma,\gamma_0)\:.
\end{equation}
The collapse on finite $m$ curves in Fig.~\ref{fig:train_o2_param}, right frame,
onto the curve $(\gamma,\lzs{e}_{\str,\infty}(\gamma))$
suggests that
\begin{equation}
    \gamma>\gamma_0 \;\;\Rightarrow\;\; \varepsilon(\tilde \gamma,\gamma_0) = \lzs{e}_{\str,\infty}(\gamma) \:.
    \label{adiabatic_lazy_pn} 
\end{equation}
In other words, the dynamics on timescales of order $m$ is \emph{adiabatic}: at each increase of $\gamma(t)$ on timescales of order $m$, the train error relaxes to the the value it would have had if the second layer weights would have been fixed in time at the corresponding value of $\gamma$.

A remarkable consequence of Eq.~\eqref{adiabatic_lazy_pn} is that that
\begin{equation}
    \lim_{\hat t\to\infty}\lzt{\gamma}(\hat t) = \lim_{m\to \infty}\gamma_{\rm GF}(\alpha,  m)=\gamma^*_{\rm GF}(\alpha)\:.
\end{equation}
In words, in the large network limit, the norm of second-layer weights at the end of training is asymptotically the minimum norm that allows for interpolation.
\begin{figure}
    \centering
    \includegraphics[width=0.495\linewidth]{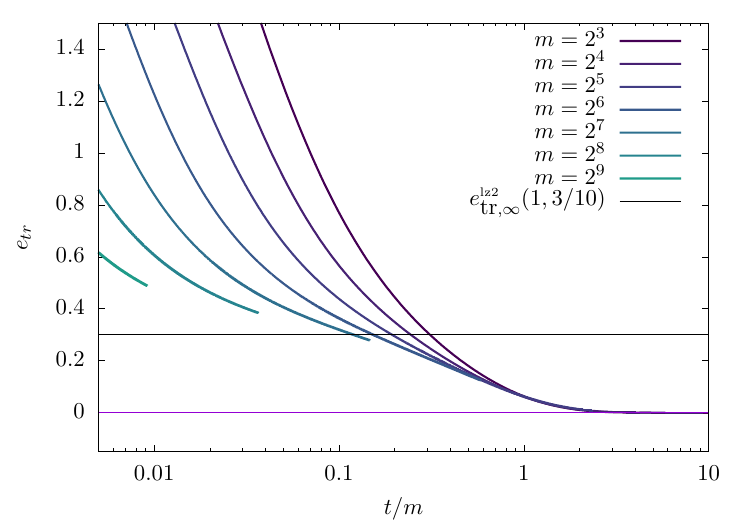}
        \includegraphics[width=0.495\linewidth]{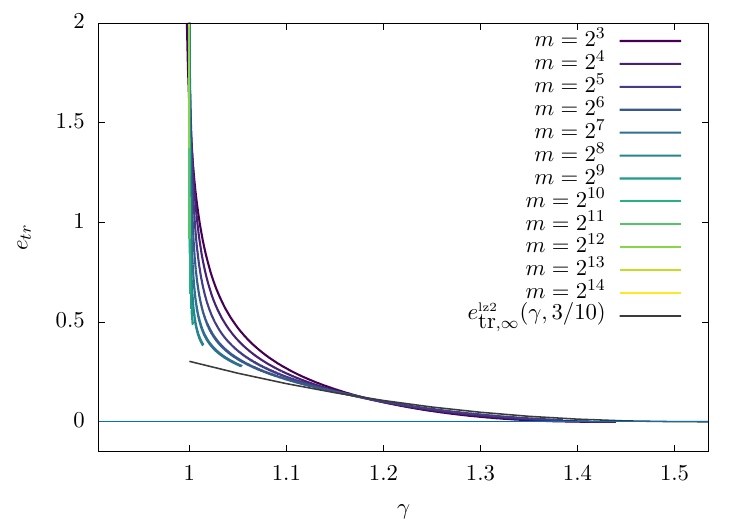}
    \caption{
  \textbf{Training with pure noise data under lazy initialization: third dynamical regime.   }
    Left frame: Train error on timescales of order $m$. 
        Right frame: GF trajectories  in the plane $\gamma$ (second layer weights) ---
    $e_{\str}$ (train error). Black dots represent pairs $(\gamma,\lzs{e}_{\str,\infty}(\gamma))$,
    where $\lzs{e}_{\str,\infty}(\gamma)$ is the train error achieved at the end of the first dynamical regime,
    cf. Section \ref{Sec_pn_int_thresh}.
    The data has been produced from the same model as in Fig.~\ref{fig:regime_3_NTK_purenoise}.}
    \label{fig:train_o2_param}
\end{figure}

\subsection{Multi-index model}\label{NTK_single_index}

In this section we generalize the computations of Section \ref{Sec_NTK_pure_noise}
to the case in which the dataset has a structure produced via a $k$-index model.
The weights of the second layer are set to $a(t)=\gamma(t)\sqrt m$ and 
evolve with GF. The initialization scale $\gamma(0)=\gamma_0$ is fixed and independent of $m$.

As in the pure noise case, we identify three dynamical regimes:
\begin{enumerate}
\item $t=O(1/m)$: $\gamma(t) = \gamma_0+o_m(1)$, $\|\bw_i(t)-\bw_i(0)\|=\Theta(1/\sqrt{m})$.
On this scale the network only learns a linear approximation of the target.
Test and train error remain close to each other (Section \ref{Lazy_SI_1}).
\item $t=\Theta(1)$:   $\gamma(t)=\gamma_0+o_m(1)$,
$\|\bw_i(t)-\bw_i(0)\|=\Theta(1)$. Test error does not change but train error 
decreases significantly
(Section \ref{Lazy_SI_2}).
\item $t=\Theta(m)$: This regime only emerges if $\gamma_0$ is
below a certain interpolation threshold, i.e. $\gamma_0<\gamma_{\sGF}^*(\alpha,\varphi,\tau)$. In this regime $\gamma(t)$ grows until the threshold, and train error decreases to $0$ while test error decreases to $0$
(Section \ref{Lazy_SI_3}).
\end{enumerate}
%

\subsubsection{First dynamical regime: $t=O(1/m)$}\label{Lazy_SI_1}

\begin{figure}
    \centering
    \includegraphics[width=0.495\linewidth]{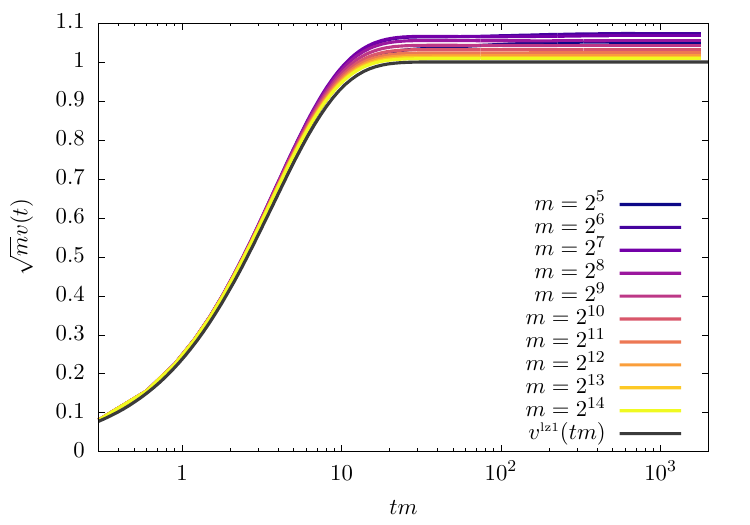}
    \includegraphics[width=0.495\linewidth]{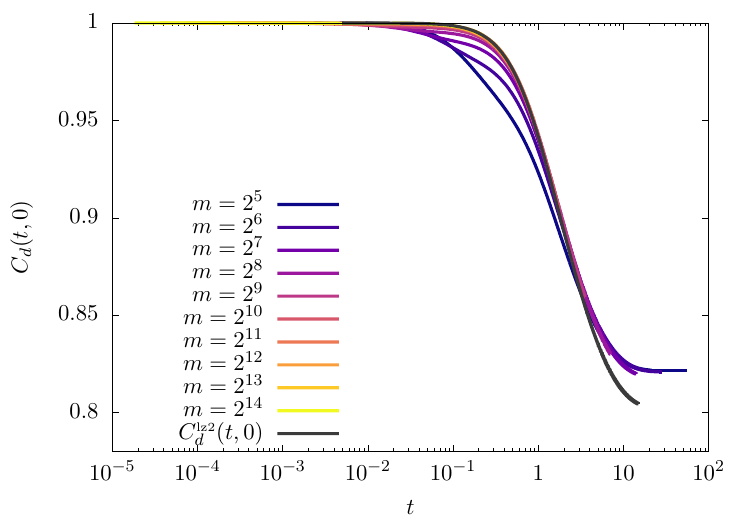}
    \caption{\textbf{\SymmDMFT predictions and large network scaling for
    lazy training in a single index model.}
    Left: Projection $v(t)$ of the first layer weights onto the latent direction
    on timescales of the order $1/m$. The result for $m\to \infty$, $\lzf{v}$, has been obtained by integrating analytically Eq.~\eqref{single_regime_1_rr}. Right: The behavior of $C_d(t,0)$  on timescales $t=\Theta(1)$,  compared with the scaling theory for $m\to \infty$, namely $\lzs{C}_d$.
    In both cases with $h(z)=\hat \varphi(z)=(9/10)z+z^2/2$, $\tau=0.3$ and $\alpha=0.3$, $\gamma_0=1$.
    }
    \label{fig_lazy_single_index}
\end{figure}

On this timescale, the \SymmDMFT  equations are solved, up to higher order terms, by the following  ansatz:
\begin{align}
    C_d(t/m,s/m)&=1+o_m(1)\, , & 
    R_d(t/m,s/m)&=\vartheta(t-s)+o_m(1)\, , \label{scaling_1_Cd_si_ntk}\\
        C_o(t/m,s/m)&=\frac{1}{m}\lzf{C}_o(t,s)+o_m(m^{-1})\, ,  & R_o(t/m,s/m)&=\frac{1}{m}\lzf{R}_o(t,s)+o_m(m^{-1})\, , \\
        \frac{1}mR_A(t/m,s/m)&=\delta(t-s)+o_m(1)\, ,  & C_A(t/m,s/m)&=-\lzf{\Sigma}_C(t,s)+o_m(1)\, , \\
        a(t/m)\sqrt{m}&=\gamma_0+o_m(1) \, , & v(t/m)&=\frac{1}{\sqrt{m}}\lzf{\bv}(t)+o_m(m^{-1/2})\, , \label{scaling_1_last_si_ntk}
\end{align}
with 
\begin{equation}
    \lzf{\Sigma}_C(t,s)=\tau^2+\|\varphi\|^2-\gamma_0\<\nabla\hat\varphi(\bzero),\lzf{\bv}(t)\>-
    \gamma_0\<\nabla\hat\varphi(\bzero),\lzf{\bv}(s)\>
    +\gamma_0^2 \left(h(1)+h' (0)\lzf{C}_o(t,s)\right)\:.
\end{equation}
In particular, Eq.~\eqref{scaling_1_Cd_si_ntk} implies $\|\bw_i(0)-\bw_i(t/m)\|=o_m(1)$:
 weights of the first layer change by small amount.

The scaling functions defined in Eqs. \eqref{scaling_1_Cd_si_ntk}-\eqref{scaling_1_last_si_ntk} satisfy a set of equations that can be derived directly from the \SymmDMFT equations:
\begin{equation}
    \begin{split}\label{single_regime_1_rr}
        \partial_t \lzf{\bv}(t) &= \alpha\gamma_0 \nabla\hat \varphi(\bzero)-\alpha \gamma_0^2h'(0)\lzf{\bv}(t)\, ,\\
        \partial_t \lzf{C}_o(t,t')&=\alpha\gamma_0\<\nabla \hat\varphi'(\bzero), \lzf{\bv}(t')\>
        -\alpha \gamma_0^2h'(0)\left(1+\lzf{C}_o(t,t')\right)\\
        \partial_t\lzf{R}_o(t,s)&=-\alpha \hat \gamma_0^2h'(0)\left(1+\lzf{R}_o(t,s)\right)\:.
    \end{split}
\end{equation}
Note that
\begin{equation}
    \frac{\de \lzf{C}_o(t,t)}{\de t}=2\lim_{t'\to t^-}\partial_t\lzf{C}_o(t,t')\:.
\end{equation}
The solution of Eqs.~\eqref{single_regime_1_rr} implies that
\begin{equation}\label{eq:Vinfty}
\begin{split}
    \lzf{\bv}_\infty:=\lim_{t\to \infty}\lzf{\bv}(t) &= \frac{\nabla\hat \varphi(\bzero)}{\gamma_0h'(0)}\, ,\\
    \lim_{t\to \infty}\lzf{C}_o(t,t)&=-\left(1- \|\lzf{\bv}_{\infty}\|^2\right)\:.
\end{split}
\end{equation}
Furthermore, on this timescale, the train and test error coincide and are given by
\begin{equation}
   \lim_{m\to \infty}e_{\str}(t/m)=\lim_{m\to \infty}e_{\sts}(t/m)=\frac 12 \lzf{\Sigma}_{C}(t,t)\:.
\end{equation}
The corresponding asymptotic value is given by
\begin{equation}
    \lim_{t\to \infty}\lim_{m\to \infty}e_{\sts}(t/m)=\lzf{e}_{\sts,\infty}= \frac 12 \left(\tau^2+ \|\varphi\|^2- \frac{1}{h'_s(0)}\|\nabla\hphi(\bzero)\|^2+\gamma_0^2\tilde h(1)\right)
    \label{NTK_en}
\end{equation}
where
\begin{equation}
    \begin{split}
        \tilde h(z)&=h(z)-h'(0)\:.
    \end{split}
\end{equation}
The interpretation of this dynamical regime is analogous to the one of the same regime in the pure-noise setting, as confirmed by Eq.~\eqref{NTK_en} :
the network learns the linear component of the data
distribution.

In the left panel of Fig.~\ref{fig_lazy_single_index} we test the scaling theory in this dynamical regime, as given by Eqs.~\eqref{scaling_1_Cd_si_ntk} to
\eqref{scaling_1_last_si_ntk}.
We plot the solution of the \SymmDMFT equations, versus $tm$,
for increasing values of $m$: the curve collapse well on their conjectured $m\to \infty$
limit.

\subsubsection{Second dynamical regime: $t=\Theta(1)$}\label{Lazy_SI_2}

We next consider $t=\Theta(1)$. One can show that the \SymmDMFT equations
are solved, up to higher order terms as $m\to\infty$, by the following ansatz
\begin{align}
    C_d(t,s)&=\lzs{C}_d(t,s)+o_m(1)\,,\;\;\;\;\;\;\;\;\;\;\;\;\;
    R_d(t,s)=\lzs{R}_d(t,s)+o_m(1)\, ,\nonumber\\
   C_o(t,s)&=\frac 1m\lzs{C}_o(t,s)+o_m(m^{-1})\,,\;\;\;\;\;\;
    R_o(t,s)=\frac 1m  \lzs{R}_o(t,s)+o_m(m^{-1})\, ,
    \label{scaling_sector_2_ntk_si}\\
    \bv(t)&= \frac{1}{\sqrt{m}}\lzf{\bv}_{\infty} +o_m(m^{-1/2})\, ,\;\;\;\;\;\;\; \nu(t)=\lzs{\nu}(t)+o_m(1)\, ,\nonumber
\end{align}
with $\gamma(t) = \gamma_0+o_m(1)$ and
\begin{align}
\label{eq:CoRo-Lazy}
\lzs{C}_o(t,s) = -\lzs{C}_d(t,s)+\|\lzs{\bv}_{\infty}\|^2\, ,\;\;\;\;
 \lzs{R}_o(t,s) = -\lzs{R}_d(t,s)\, .
\end{align}
In other words, on this time scale first layer weights move by order one
$\|\bw_i(t)-\bw_i(0)\|=\Theta(1)$, but in a linear subspace 
that is orthogonal to the latent space. 
Second layer weights do not move appreciably. As a consequence, no additional learning takes place in this regime, but the model begins to overfit
the data.

Note that the above scaling form is compatible with the long time limit of the previous dynamical
regime.

In order to define the equations for the  functions on the right-hand side of   Eq.~\eqref{scaling_sector_2_ntk_si} we define $\lzs{R}_A$ and $\lzs{C}_A$ to be the solution of 
\begin{equation}
    \begin{split}
        \delta(t-t')&=\int_{t'}^t\left[\delta(t-s) + \lzs{\Sigma}_R(t,s)\right]\lzs{R}_A(s,t')\, \de s\, ,\\
        0&=\int_{0}^t\left[\delta(t-s) + \lzs{\Sigma}_R(t,s)\right]\lzs{C}_A(s,t')\,\de s+\int_0^{t'} \lzs{\Sigma}_C(t,s)\lzs{R}_A(t',s) \,\de s\, ,
    \end{split}
\end{equation}
where 
\begin{equation}\label{eq:Sigma-Lazy-O1}
    \begin{split}
        \lzs{\Sigma}_R(t,s) &= \gamma_0^2\left( h'( \lzs{C}_d(t,s))-h'(0)\right)\lzs{R}_d(t,s)\, ,\\   
        \lzs{\Sigma}_C(t,s) &= \tau^2 +\|\varphi\|^2- 2\gamma_0\<\nabla\hat \varphi(\bfzero), \lzf{\bv}_{\infty}\>+ \gamma_0^2\left( h(\lzs{C}_d(t,s))+ h'(0)\lzs{C}_o(t,s)\right)\, .
    \end{split}
\end{equation}
Define the following memory kernels
\begin{equation}
    \begin{split}
        \lzs{M}_{R,d}(t,s)&=\alpha\gamma_0^2\left[\lzs{R}_A(t,s) h'(\lzs{C}_d(t,s))+\lzs{C}_A(t,s) h''(\lzs{C}_d(t,s))\lzs{R}_d(t,s)\right]\, ,\\
        \lzs{M}_{R,o}(t,s)&=\alpha\gamma_0^2 h'(0) \lzs{R}_A(t,s)\, ,\\
        \lzs{M}_{C,d}(t,s)&= \alpha\gamma_0^2  h'(\lzs{C}_d(t,s))\lzs{C}_A(t,s)\, ,\\
        \lzs{M}_{C,o}(t,s)&=\alpha\gamma_0^2  h'(0)\lzs{C}_A(t,s)\:.
    \end{split}
\end{equation}
Substituting   the ansatz \eqref{scaling_sector_2_ntk_si}
into the  \SymmDMFT equations, and using Eqs.~\eqref{eq:CoRo-Lazy},
we obtain the following equations for  $\lzs{C}_d(t,t')$, 
 $\lzs{R}_d(t,t')$, $\lzs{\nu}(t)$
\begin{align}
        \partial_t \lzs{C}_d(t,t')&=-\lzs{\nu}(t) \lzs{C}_d(t,t')+{\alpha}\gamma_0\<\nabla \hat\varphi'(\bfzero),\lzf{\bv}_\infty\> \int_0^t \lzs{R}_A(t,s) \de s\nonumber\\
        &\;\;\;-\int_0^t\de s \left[\lzs{M}_{R,d}(t,s)\lzs{C}_d(t',s)+\lzs{M}_{R,o}(t,s)\lzs{C}_o(t',s)\right]\, \de s\label{eq:SecondLazyIndex_1}\\
        &\;\;\;-\int_0^{t'} \left[\lzs{M}_{C,d}(t,s)\lzs{R}_d(t',s)+\lzs{M}_{C,o}(t,s)\lzs{R}_o(t',s)\right]\de s\, ,\nonumber\\
        \partial_t \lzs{R}_d(t,t')&=-\lzs{\nu}(t)\lzs{R}_d(t,t')+\delta(t-t')\\
        &\;\;\;-\int_{t'}^t \left[\lzs{M}_{R,d}(t,s)\lzs{R}_d(s,t')+\lzs{M}_{R,o}(t,s)\lzs{R}_o(s,t')\right]\de s\, ,\nonumber\\
       \lzs{\nu}(t)&=\alpha\gamma_0 \<\nabla \hat\varphi'(\bfzero),\lzf{\bv}_\infty\>\int_0^t \lzs{R}_A(t,s)\, \de s-\int_0^t \left[\lzs{M}_{R,d}(t,s)\lzs{C}_d(t,s)+\lzs{M}_{R,o}(t,s)\lzs{C}_o(t,s)\right]\,\de s\nonumber\\
        &\;\;\;-\int_0^t\left[\lzs{M}_{C,d}(t,s)\lzs{R}_d(t,s)+\lzs{M}_{C,o}(t,s)\lzs{R}_o(t,s)\right]\, \de s\, .\label{eq:SecondLazyIndex_3}
\end{align}
Finally, the train and test errors converge to well defined limits
for $t$ fixed and $m\to\infty$: 
\begin{align}
    e_{\str}(t,\gamma_0) = \lzs{e}_{\str}(t,\gamma_0) + o_m(1)\, , \;\;\;\;\;
    e_{\sts}(t,\gamma_0) = \lzs{e}_{\sts}(t,\gamma_0) + o_m(1)\, .,
\end{align}
where
\begin{align}
\lzs{e}_{\str}(t,\gamma_0) = -\frac 12 \lzs{C}_A(t,t)\, ,\;\;\;
\lzs{e}_{\sts}(t,\gamma_0)=\frac 12 \lzs{\Sigma}_{C}(t,t)\, .\label{eq:TrainTestInt}
\end{align}
Note that, using Eqs.~\eqref{eq:CoRo-Lazy},~\eqref{eq:Sigma-Lazy-O1}, 
and the fact that $\lzs{C}_d(t,t) =1$ (because of the unit norm constraint on the first layer weights), we get
\begin{align}
\lzs{e}_{\sts}(t,\gamma_0)=\frac 12 
\Big\{
\tau^2 +\|\varphi\|^2- 2\gamma_0\<\nabla\hat \varphi(\bfzero), \lzf{\bv}_{\infty}\>+ \gamma_0^2\big( h(1)- h'(0)
+h'(0)\|\lzf{\bv}_{\infty}\|^2\big)
\Big\}\, .
\end{align}
Using Eq.~\eqref{eq:Vinfty}, we obtain that the asymptotic test 
error in this dynamical regime is constant and equal to the test error achieved at the end of the previous regime, namely
$\lzs{e}_{\sts}(t,\gamma_0)=\lzf{e}_{\sts,\infty}$, cf. Eq.~\eqref{NTK_en}.
As anticipated, no learning takes place on this timescale.

The predictions of Eqs.~\eqref{scaling_sector_2_ntk_si} 
are tested in the right panel of Fig.~\ref{fig_lazy_single_index}.
We plot the correlation function $C_d(t,0)$ for several values
of $m$, as obtained by solving the \SymmDMFT equations.
We compare these results with the $m\to\infty$
prediction $\lzs{C}_d(t,0)$ obtained by solving 
Eqs.~\eqref{eq:SecondLazyIndex_1} to \eqref{eq:SecondLazyIndex_3}. 
We observe collapse of finite $m$ curves on the large $m$ asymptotics supporting our conclusions.

In Fig.~\ref{fig:e_tr_ts_single_index_lazy} we plot the behavior of the train and test error both on 
timescales $t=\Theta(1/m)$ (left frame, plotting $e_{\str}(t,\gamma_0)$, $e_{\sts}(t,\gamma_0)$ versus $tm$) and 
$t=\Theta(1)$ (right frame, plotting $e_{\str}(t,\gamma_0)$, $e_{\sts}(t,\gamma_0)$ versus $tm$).
We the solutions of \SymmDMFT equations at increasing values of $m$ with the 
theory scaling theory presented in the previous section (for $t=\Theta(1/m)$, left frame)
and in this section (for $t=\Theta(1)$, right  frame). 
As anticipated, we observe the following:
\begin{itemize}
\item On the time scale $t=\Theta(1/m)$ (left panel), test and train error collapse (as $m\to\infty$)
on a common limiting  curve $\lzf{e}_{\str}(\ts,\gamma_0)=\lzf{e}_{\sts}(\ts,\gamma_0)$ which converges, for large $\ts$, to the positive limiting value $\lzf{e}_{\sts,\infty}$ characterized in the previous section.
\item  On the time scale $t=\Theta(1)$ (right panel), test and train error collapse (as $m\to\infty$)
on two distinct limiting curves. The first one is constant and equal to  $\lzf{e}_{\sts, \infty}$.
The second one decreases from  $\lzf{e}_{\sts,\infty}$ to $0$ and is predicted by the
asymptotic theory in this section, cf. Eq.~\eqref{eq:TrainTestInt}.
\end{itemize}
Note that, in the example of Fig.~\ref{fig:e_tr_ts_single_index_lazy},
the initialization  $\gamma_0$ is sufficiently large that the train error decreases to zero on the time scale $\Theta(1)$,
namely $\gamma_0>\gamma^*_{\sGF}(\alpha,\varphi,\tau)$, for a suitable threshold
$\gamma^*_{\sGF}(\alpha,\varphi,\tau)$.
As we will see in the next section, a third dynamical regime emerges when $\gamma_0<\gamma^*_{\sGF}(\alpha,\varphi,\tau)$.
 \begin{figure}[t]
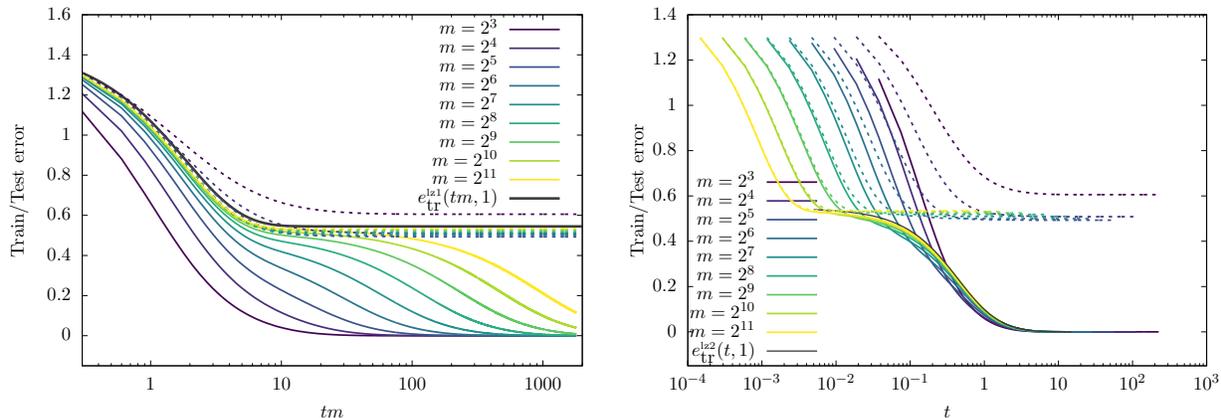

     \centering
     \includegraphics[width=0.495\linewidth]{learning_NTK.pdf}
     \includegraphics[width=0.495\linewidth]{e_tr_e_ts_linear_lazy.pdf}
     \caption{\textbf{\SymmDMFT predictions and large network scaling for
    lazy training in a single index model: train and test error. }
     Left frame:  train and test error on the time scale $t=\Theta(1/m)$ for several values of $m$, together with the 
     asymptotic prediction as $m\to\infty$ on this time scale $\lzf{e}_{\str}(\ts,\gamma_0)=
     \lzf{e}_{\sts}(\ts,\gamma_0)$.
     Right: train and test error on the time scale $t=\Theta(1)$ for several values of $m$, together with the 
     asymptotic prediction as $m\to\infty$ on this time scale $\lzs{e}_{\str}(t,\gamma_0)$.
     Here  $\gamma_0=1$, $h(z)=\hat \varphi(z)=(9/10)z+z^2/2$, $\tau=0.3$, $\alpha=0.3$.}
     \label{fig:e_tr_ts_single_index_lazy}
 \end{figure}

\subsubsection{The algorithmic interpolation threshold}
\label{sec:LazyInterpolationThreshold}

The asymptotic theory within the second dynamical regime, described in Section \ref{Lazy_SI_2},
turns out to be equivalent to the one in the pure-noise model, Section \ref{Sec_pn_2}, up to a change of variables.
Namely, defining
\begin{equation}
    \tilde C_o(t,s)=\lzs{C}_o(t,s)+\|\lzf{\bv}_\infty\|^2\, ,
\end{equation}
with initial condition $\tilde C_o(0,0)=-1$ , reduce the equations of Section \ref{Lazy_SI_2} to the ones
of Section \ref{Sec_pn_2} with noise level $\tau$ replaced by
\begin{equation}
    \tau'^2=\tau^2+\|\varphi\|^2-\frac{\|\nabla \hat \varphi(\bfzero)\|^2}{h'(0)} \, .
    \label{equiv_noise}
\end{equation}
The interpretation of this reduction is simple. On the time scale $t=\Theta(1)$, 
the first layer weights move orthogonally to the latent subspace spanned by $\bU$. Hence, the dynamics on this
timescale is not affected by the signal and only attempts to fit the labels noise. 
The noise is inflated as per Eq.~\eqref{equiv_noise}, because the  network is not able to fit beyond the linear part of 
the target distribution.

As a corollary of the above equivalence, the interpolation threshold
of the $k$-index model coincides with with the interpolation threshold on pure noise data with noise level given by Eq.~\eqref{equiv_noise}. Using the extended notation $\gamma^*_{\sGF}(\alpha,\varphi,\tau)$ to indicate the dependence on 
the underlying data distribution (which is parametrized by $\varphi,\tau$), 
we can write the stated relation as
\begin{align}
    \gamma^*_{\sGF}(\alpha,\varphi,\tau) =\Big(\tau^2+\|\varphi\|^2-\frac{\|\nabla \hat \varphi(\bfzero)\|^2}{h'(0)} \Big)^{1/2}
    \gamma^*_{\sGF}(\alpha,0,1) \, .\label{eq:InterpolationK-index}
\end{align}
(Here we used the invariance under rescaling in the pure noise model,   
which implies $\gamma^*_{\sGF}(\alpha,0,\tau^2) = \tau \gamma^*_{\sGF}(\alpha,0 ,1)$.)

\subsubsection{Dependence on $m$}

\begin{figure}
    \centering
    \includegraphics[width=0.495\linewidth]{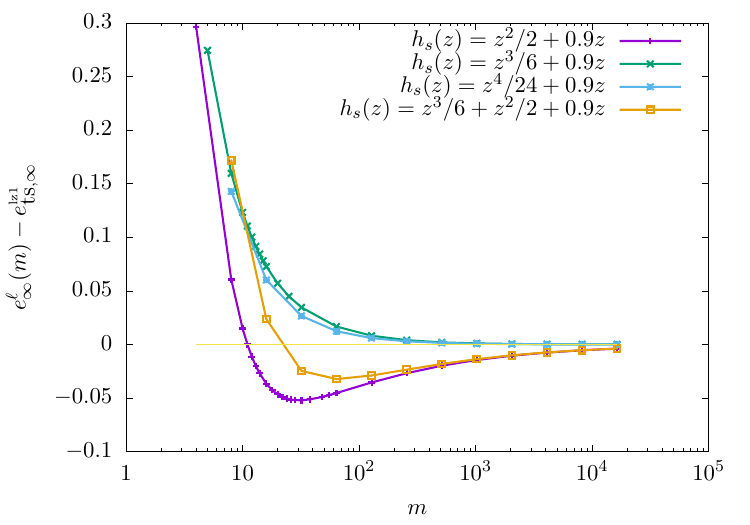}
    \includegraphics[width=0.495\linewidth]{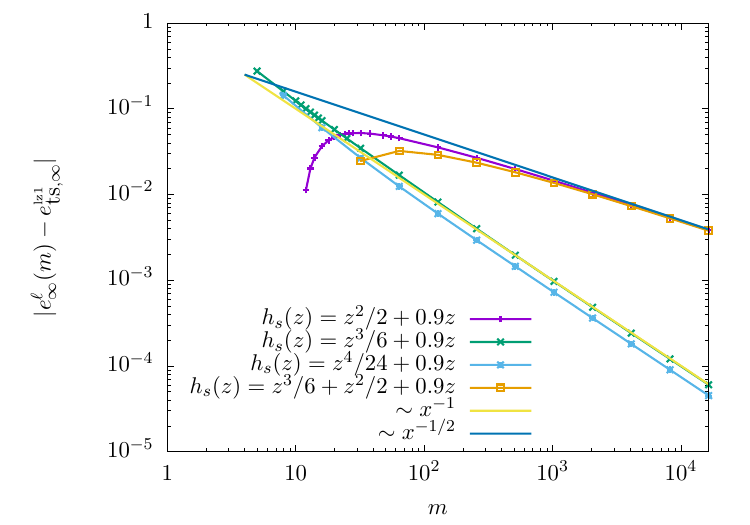}
    \caption{\textbf{The asymptotic behavior of the test error as a function of $m$} for different $h(z)=\hat \varphi(z)$. We observe that soon as $h(z)$ contains a $z^2$ term, the NTK limit for $m\to \infty$ is approached from below (left panel). Furthermore the speed of the convergence to the limiting value depends crucially on whether a $z^2$ monomial is present in the Taylor expansion of $h(z)$ (right panel). The data has been produced with $\alpha=0.3$ and $\tau=0.6$. }
    \label{fig:test_asym}
\end{figure}

Within NTK theory, it is normally assumed that optimal models are achieved at very large 
network sizes $m\to\infty$.
Empirical results contradicting this expectation have been put forward in  \cite{vyas2022limitations}, but no theoretical analysis was provided either in \cite{vyas2022limitations} or in subsequent work.
We can use the \SymmDMFT theory to fill this gap and study the dependence of test error on the number of neurons $m$ under lazy initialization.
We choose $\gamma_0>\gamma^*_{\rm GF}(\alpha,\varphi,\tau)$, and therefore vanishing training error 
is reached during the second dynamical regime, i.e. for $t=\Theta(1)$: this is therefore the 
last dynamical regime. 
Throughout this regime, we have $\gamma(t) = \gamma_0+o_m(1)$.  

Recalling that $e_{\sts}(t,\gamma_0,m,\alpha)$ is the test error at time $t$ in this setting,
as predicted by \SymmDMFT 
we consider the limit
\begin{align}
e^{\ell}_{\infty}(\gamma_0,m,\alpha) = \lim_{t\to\infty}e_{\sts}(t,\gamma_0,m,\alpha)\, .
\end{align}
We note that, for $\gamma_0>\gamma^*_{\rm GF}(\alpha,\varphi,\tau)$, we expect 
\begin{align}
\lim_{m\to\infty}e^{\ell}_{\infty}(\gamma_0,m,\alpha) =\lzf{e}_{\sts,\infty},
\end{align}
to be given by Eq.~\eqref{NTK_en}.

In Fig.~\ref{fig:test_asym} we plot the \SymmDMFT prediction for $e^{\ell}_{\infty}(\gamma_0,m,\alpha)$ 
as a function of $m$ for several choices of $h$ (we use $h=\hat\varphi$ here).
The limit $m\to\infty$ of these curves matches $\lzf{e}_{\sts,\infty}$
as expected.
However we empirically observe that $e^{\ell}_{\infty}(\gamma_0,m,\alpha)$ approaches $\lzf{e}_{\sts,\infty}$ in two qualitatively different ways:
\begin{itemize}
    \item  In the cases we  consider that have $h''(0)\neq 0$, 
    $\lzf{e}_{\sts,\infty}$ is approached from below as $m\to\infty$,
    and $e^{\ell}_{\infty}(\gamma_0,m,\alpha)$  is non-monotone.
    We also observe that, for the values of $m$ we consider, the approach to the asymptotic value is compatible with a rate $m^{-1/2}$:
    $e^{\ell}_{\infty}(\gamma_0,m,\alpha)= \lzf{e}_{\sts,\infty}-\Theta(m^{-1/2})$.
    \item  In the cases we  consider that have $h''(0)\neq 0$,  then  $\lzf{e}_{\sts,\infty}$ is approached from above as $m\to\infty$,
    and $e^{\ell}_{\infty}(\gamma_0,m,\alpha)$  is typically monotone.
    In this case the approach to the limiting behavior is compatible with a rate $m^{-1}$:
    $e^{\ell}_{\infty}(\gamma_0,m,\alpha)= \lzf{e}_{\sts,\infty}+\Theta(m^{-1})$.
\end{itemize}

The first scenario is the generic one, and similar  to what is observed in \cite{vyas2022limitations}
for actual neural networks. 
An intuitive explanation is that --at finite $m$-- the projection of neurons
onto the latent space $\|\lzf{\bv}_{\infty}\| = \Theta(1/\sqrt{m})$ is sufficient for the network to partially learn the quadratic component of the target function. In order to establish on more solid grounds these empirical observations one should study the $1/m$ corrections to the scaling theory developed here. This is left for future work.

\subsubsection{Third dynamical regime: $t=\Theta(m)$} \label{Lazy_SI_3}

\begin{figure}
    \centering
    \includegraphics[width=0.495\linewidth]{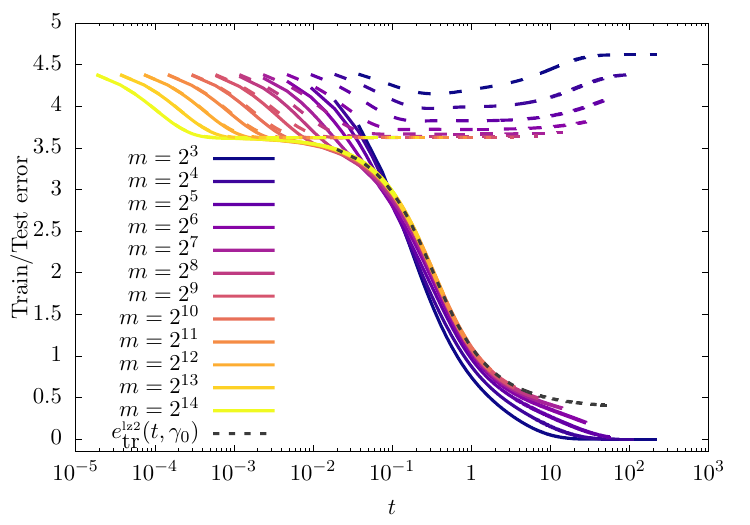}
    \includegraphics[width=0.495\linewidth]{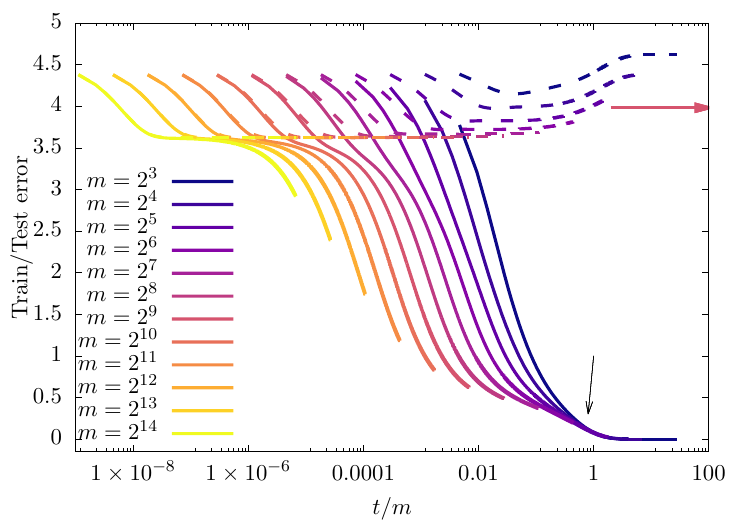}
    \caption{\textbf{Train and test error on different timescales when training on single index data and lazy initialization.} Train error (solid curves) and test error (dashed curves) for a model trained on a single index data with $h(z)=(9/10)z+z^2/2=\hat\varphi(z)$. The noise level is $\tau=2.5$ and initialization $a(0)=\gamma_0\sqrt m$, $\gamma_0<\gamma^*_{\sGF}(\alpha,\varphi,\tau)$.  Left panel: timescales of order one. The grey dashed line corresponds to the scaling solution for $m\to \infty$ when the second layer does not evolve with GF.
    Right panel: same data plotted versus $t/m$, to explore timescales of order $m$. The arrows show scaling appearing and curves collapsing on a master curve. 
    }
    \label{fig:overfitting_NTK}
\end{figure}

As for the pure noise case, beyond the time scale $t=\Theta(1)$, we distinguish two situations.
If $\gamma_0>\gamma^*_{\sGF}(\alpha,\varphi,\tau)$, then vanishing training error is reached within the second dynamical regime $t=\Theta(1)$. If $\gamma_0<\gamma_{\rm GF}^*(\alpha,\varphi,\tau)$, GF dynamics develops an additional 
regime for $t=\Theta(m)$. In this section, we study this third regime.
 
In Figure~\ref{fig:overfitting_NTK}, we plot the \SymmDMFT predictions for 
train and test errors as a function of time for several values of $m$,  for a setting with 
$\gamma_0<\gamma_{\rm GF}^*(\alpha,\varphi,\tau)$. In particular,  in Fig.\ref{fig:overfitting_NTK}-left we
plot  train and test error as a function of $t$. The curves for the train error for increasing value of $m$ collapse on 
limit curve given by $\lzs e_{\str}(t,\gamma_0)$ characterized in Section \ref{Lazy_SI_2}. In other words, the dynamics on this
timescales follows the scaling theory of Section \ref{Lazy_SI_2}.
However in this case  $\gamma_0<\gamma_{\rm GF}^*(\alpha,\varphi,\tau)$, whence by definition 
$\lzs{e}_{\str,\infty}>0$. This correspond to the limit curve in  Fig.~\ref{fig:overfitting_NTK}-left having a strictly positive asymptote.

Figure \ref{fig:overfitting_NTK}-right shows train and test error plotted against $t/m$. We observe that curves
training error curves collapse on a common limit, that decreases from $\lzs{e}_{\str,\infty}$ to $0$, while 
test error curves increase above the plateau $\lzf{e}_{\sts,\infty}$.
This suggests the following limit behavior
\begin{equation}
    \begin{split}
        \lim_{m\to \infty}e_{\str}(m\ts,\gamma_0,m)&=\lzt{e}_{\str}(\ts,\gamma_0)\\
        \lim_{m\to \infty}e_{\sts}(m\ts,\gamma_0,m)&=\lzt{e}_{\sts}(\ts,\gamma_0)\:.
    \end{split}
\end{equation}

In order to further explore the GF dynamics in this regime,
in Fig.~\ref{fig:overfitting_NTK_quasistationary}-left we plot the evolution of the second layer rescaled 
weights against  $t/m$. The curves for increasing values of $m$ collapse on a master curve, suggesting the existence of a limit
\begin{equation}
    \lim_{m\to \infty}\gamma(m\ts,\gamma_0)=\lzt{\gamma}(\ts,\gamma_0)\:.
\end{equation}
The limit curve $\lzt \gamma(\ts,\gamma_0)$ increases from $\gamma_0$ to a limit value:
\begin{equation}\label{eq:LimOfGamma-Kindex}
\lim_{\ts\to\infty}\lzt{\gamma}(\ts,\gamma_0) =
\lzt{\gamma}_{\infty}(\gamma_0) .
\end{equation}

\begin{figure}
    \centering
    \includegraphics[width=0.495\linewidth]{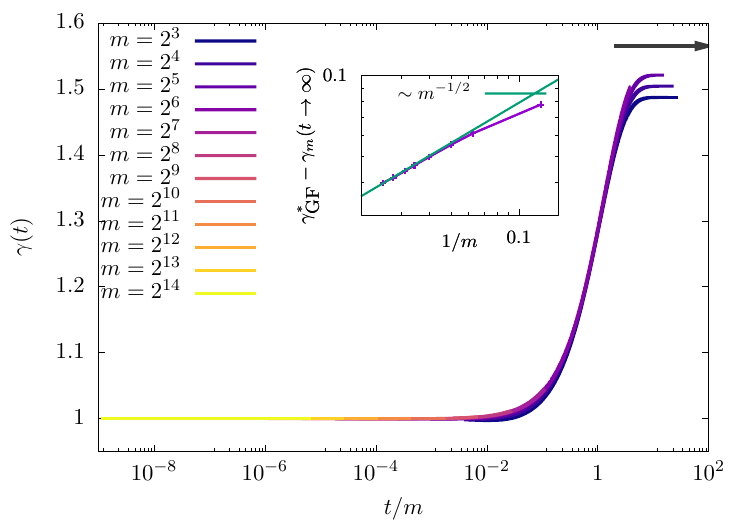}
    \includegraphics[width=0.495\linewidth]{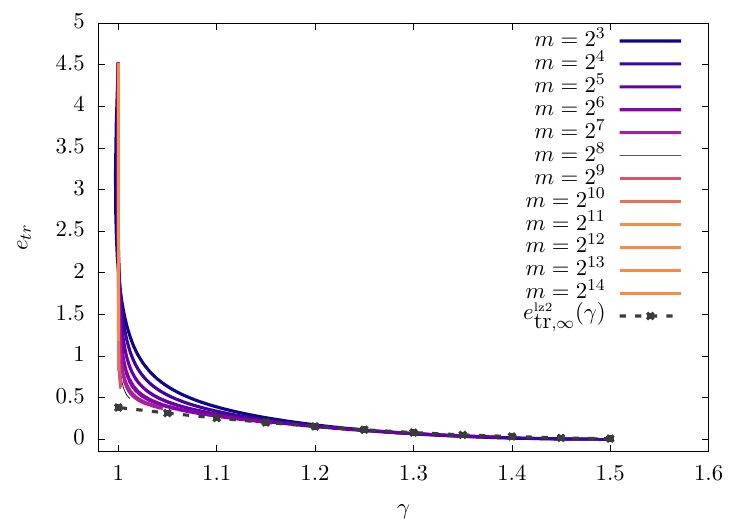}
    \caption{\textbf{Training a two layer network in the same setting of Figure \ref{fig:overfitting_NTK}.}
    Left panel:  second layer weights on the timescale of order $m$. The black arrow corresponds to the interpolation threshold for a model, $\gamma_{GF}^*(\alpha,\tau)$ obtained by fitting the relaxation time as a function of the weights of an lazy initialized model for $\gamma_0>\gamma_{\rm GF}^*(\alpha,\tau)$. The second layer weights, at finite $m$ develop a plateau at long time. In the inset we show the approach of this plateaus to the limiting value given by $\gamma_{\sGF}^*(\alpha,\varphi,\tau)$.  
    Right panel: parametric plot of the train error as a function of the scaled weights of the second layer. The dashed gray dashed line corresponds to the extrapolated train error for an network with second layer weights fixed to the corresponding value in the $m\to \infty$ (as extracted from the numerical integration of the scaling theory).}
    \label{fig:overfitting_NTK_quasistationary}
\end{figure}

As in Section \ref{Lazy_SI_3} we consider the inverse function of $t\mapsto \lzt{\gamma}(t,\gamma_0)$, 
denoted by $\gamma\mapsto\tilde\gamma^{-1}(\gamma,\gamma_0)$.
In Fig.~\ref{fig:overfitting_NTK_quasistationary}-right we plot the train error as a function of the second layer weights $\gamma(t)$. Again, for increasing values of $m$ the curves collapse on a master curve which is given by
\begin{equation}
    \varepsilon(\gamma,\gamma_0)=\lzt{e}_{\str}(\tilde\gamma^{-1}(\gamma,\gamma_0),\gamma_0)
\end{equation}
We then also plot in Fig.\ref{fig:overfitting_NTK_quasistationary}-right the asymptotic value of the train error for a network initialized with second layer weights blocked at an initialization scale $\gamma$, call it 
$\lzs{e}_{\str,\infty}(\gamma)$. 

The curves  $\varepsilon(\gamma,\gamma_0)$ appear to have a vertical segment (corresponding to $t=o(m)$)
in which the training error decreases, while $\gamma(t) =\gamma_0+o_m(1)$ is nearly unchanged, and 
a continuously decreasing segment in which $\gamma(t)$ increases while $e_{\str}(t,\gamma_0)$ decreases to $0$
(corresponding to $t=\Theta(m)$). 
In the second phase, the curves appear to converge to $\lzs e_{\str,\infty}(\gamma)$ as $m\to\infty$.
This suggests 
\begin{equation}
    \varepsilon(\gamma,\gamma_0)=\lzs{e}_{\str,\infty}(\gamma) \ \ \ \ \forall \gamma\geq \gamma_0\:.
\end{equation}
In other words the dynamics on timescales of order $m$ is adiabatic also in the  multi index case. For a small change of the second layer weights on a scale of order $\sqrt m$, the train error relaxes to its asymptotic value on timescales of order one.
This graph suggests that the limit value of $\gamma(t)$ coincides with the critical value for interpolation.
Namely recalling the definition \eqref{eq:LimOfGamma-Kindex} for the asymptotic value of $\gamma(t)$,
we have
\begin{equation}
  \lzt{\gamma}_{\infty}(\gamma_0)=\gamma_{\rm GF}^*(\alpha,\varphi,\tau)
\end{equation}
where the interpolation threshold in the multi-index model $\gamma_{\rm GF}^*(\alpha,\varphi,\tau)$ is related to the interpolation threshold in the pure noise model via Eq.~\eqref{eq:InterpolationK-index}.

%
%
\section{Dynamical regimes: Mean field initialization}
\label{sec:Dynamical_MF}
In this section we assume the initialization of the weights of the second layer is kept of order one. To be definite, we set $a(0)=a_0$, independent of $m$.
This corresponds to the  mean field initialization studied in \cite{mei2018mean,chizat2018global,rotskoff2022trainability}.

Specializing to the data distribution considered here,
earlier work characterized the dynamics up to time $T$, under 
a few settings (which prove equivalent in this regime):
\begin{itemize}
    \item One-pass SGD, with stepsize $\eps\ll 1/d$ and therefore time horizons such that $T\ll d/n$ 
    (the latter inequality follows from $T\le n\eps$ for one-pass SGD). In this case, the dynamics is characterized by a set of ODEs for for the projections of the weights on the latent space and inner products between weights. 
    \item  Gradient flow in the population risk, which admits the same characterization and corresponds to the limit $n\to\infty$ of the above.
    \item The limit of the above regimes for large width $m\to\infty$. This is characterized by a partial differential equation for the distribution of projections of first layer weights onto the latent space, provided $T\le c_0 \log m$, for $c_0$ a sufficiently small constant.
\end{itemize}
We refer to \cite{ba2022high,damian2022neural,abbe2022merged,barak2022hidden,arnaboldi2023high,berthier2024learning} for a few pointers to this literature.
In all of these settings, the train error remains close to the test error. In contrast, the analysis presented here allow us to explore the overfitting regime.

Section \ref{NMF_purenoise}, we will focus on a pure noise data  distribution,
while Section \ref{NMF_si}, considers a multi-index model.
As in the case of lazy initializations, 
we consider first the limit $n,d\to\infty$ at $n/md=\alpha$ and $m$ fixed (hence characterized
by \SymmDMFT) and subsequently study dynamical regimes emerging as $m\to\infty$
at $n/md=\alpha$ fixed.

\subsection{Pure noise model}\label{NMF_purenoise}

Under the pure noise model, we have $\varphi=\hat\varphi=0$. We identify three distinct dynamical regimes:
\begin{itemize}
\item $t = O(1)$: $a(t)=a_0+o_m(1)$, 
 $e_{\str}(t) = \tau^2/2+o_m(1)$, and 
 $\|\bw_i(t)-\bw_i(0)\|=o_m(1)$.
In words, the weights change minimally and the train error remains close to the one of the null network $f(\bx;\btheta)\approx 0$
 (Section \ref{sec:FistRegimeNMF_Noise}).
\item  $t=\Theta(\sqrt m)$: $a(t)=\Theta(1)$,
$e_{\str}(t) = \tau^2/2+o_m(1)$, and 
 $\|\bw_i(t)-\bw_i(0)\|=\Theta(1)$.
Namely, weights change but the train error does not change significantly. 
(Section
\ref{sec:SecondRegimeNMF_Noise}). 
\item $t=\Theta(m)$. In this regime $a(t)=\sqrt m \gamma(t/m)+o_m(1)$, 
and therefore the network complexity becomes large enough for it to fit the noise. 
The dynamics on this timescale is closely related to the one under lazy initialization,
studied in Section \ref{Sec_pn_3}. In particular, $\gamma(\ts)$ converge to the 
interpolation threshold
$\gamma^*_{\sGF}(\alpha,\tau)$ if $\ts\to\infty$ (after $m\to\infty$).
(Section \ref{sec:ThirdRegimeNMF_Noise}). 
\end{itemize}

\subsubsection{First dynamical regime: $t=O(1)$} 
\label{sec:FistRegimeNMF_Noise}

\begin{figure}
    \centering
    \includegraphics[width=0.495\linewidth]{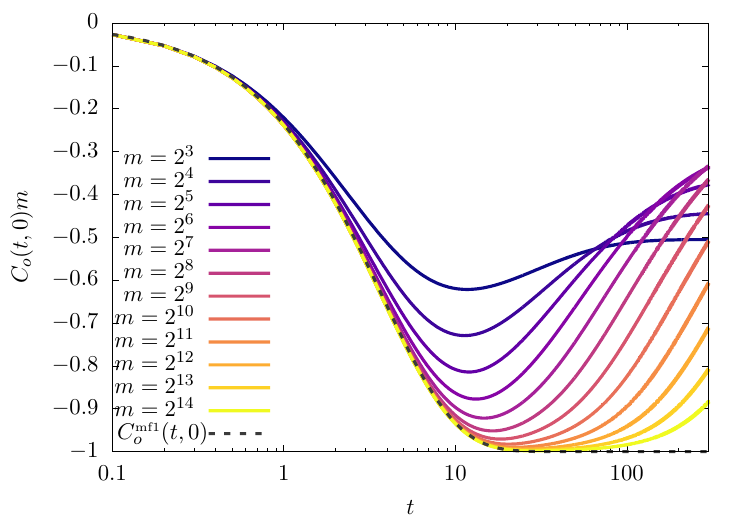}
    \includegraphics[width=0.495\linewidth]{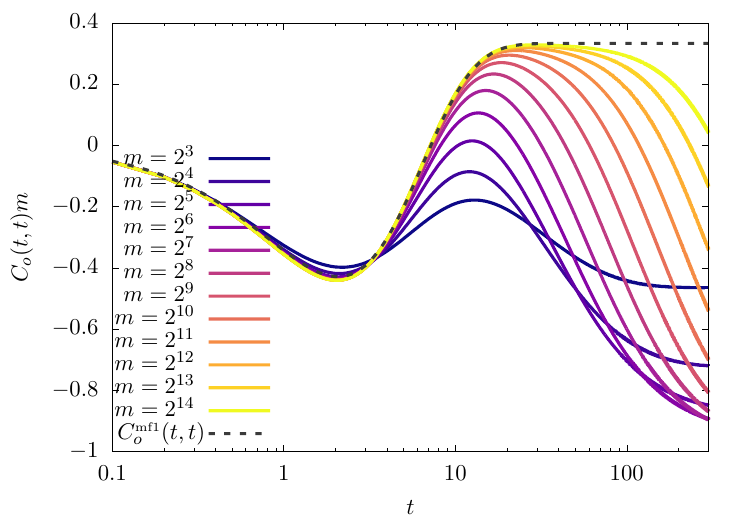}
    \caption{\textbf{Training on pure noise data under mean-field initialization: $t=\Theta(1)$ regime.}
     We plot  $C_o(t,0)$ and $C_o(t,t)$
    as given by solving the \SymmDMFT equations for different values of $m$ and compare them with the asymptotic solution
    of Section \ref{sec:FistRegimeNMF_Noise}. Here we use $\tau=0.6$, $\alpha=0.3$ and $h(z)=(9/10)z+z^3/6$. Note that the vertical axis is multiplied by a factor $m$,
    in agreement with the prediction of Eq.~\eqref{eq:FistRegimeNMF_Noise}.}
    \label{C_o_first_regime_NMF}
\end{figure}

In this dynamical regime, the \SymmDMFT equations are solved by the following scaling ansatz
\begin{align}
    C_d(t,s)& =1+o_m(1) & R_d(t,s)&=\vartheta(t-s)+o_m(1)\, ,\\
    mC_o(t,s)&= \mff{C}_o(t,s)+o_m(1) & mR_o(t,s)&=\mff{R}_o(t,s)+o_m(1)\, ,\label{eq:FistRegimeNMF_Noise}\\
    a(t)&=a_0+o_m(1) & \nu(t)&=o_m(1)\, .
\end{align}
Furthermore we have
\begin{equation}
    \mff{R}_A(t,s)=\delta(t-s)+o_m(1)\ \ \ \ \mff{C}_A(t,s)=-\tau^2+o_m(1)\, ,\label{eq:CARA_Firsr_NMF}
\end{equation}

Plugging the scaling ansatz in the \SymmDMFT, we
obtain equations determining the scaling functions  $\mff{C}_o$,  $\mff{R}_o$.
Defining 
\begin{equation}
    \rho_0 := \alpha a^2_0h'(0)
\end{equation}
we have
\begin{equation}
    \begin{split}\label{mf_regime1}
    \mff{R}_o(t,s)&=\left[e^{-\rho_0(t-s)}-1\right]\vartheta(t-s)\, ,\\
        \mff{C}_o(t,t') &=\left[\left[\frac{2\tau^2}{\rho_0}-\frac{1}{\rho_0}\left(\tau^2-\rho_0\right)\right]e^{-2\rho_0t'}-\frac{\tau^2}{\rho_0}e^{-\rho_0t'}\right]e^{-\rho_0(t-t')}\\
        &+\frac{\tau^2-\rho_0}{\rho_0}-\frac{\tau^2}{\rho_0}e^{-\rho_0t'}\, .
    \end{split}
\end{equation}
In particular 
\begin{equation}
    \begin{split}
        \lim_{t\to \infty}\mff{C}_o(t,t)&=\frac{\tau^2-\rho_0}{\rho_0}\, ,\\
        \lim_{t\to \infty}\mff{C}_o(t,t')&=\frac{\tau^2-\rho_0}{\rho_0}-\frac{\tau^2}{\rho_0}e^{-\rho_0t'}\, ,\\
        \lim_{t\to \infty, t'\to \infty, t-t'\geq 0}\mff{C}_o(t,t')&=\frac{\tau^2-\rho_0}{\rho_0}\, .
    \end{split}
\end{equation}

The equations \eqref{eq:CARA_Firsr_NMF} imply that the train error is constant in this regime and equal to 
\begin{equation}
    e_{\str}(t)=\frac{\tau^2}{2}+o_m(1)\, .
\end{equation}
In other words, in this regime both first and second layer weights change 
minimally and the resulting error remains close to the one to the null 
function $f(\bx;\btheta)\approx 0$. We will see that this regime is significantly more interesting for the case of data with a signal, see Section \ref{NMF_si}.
We note in passing that the limit value $\<\bw_j,\bw_j\>\approx \frac{\tau^2-\rho_0}{m\rho_0}$ 
for $i\neq j$ corresponds to minimizing the empirical risk under the linear approximation in which $\sigma(z)$
is replaced by $\sqrt{h'(0)} z$.

The above predictions are tested in  Fig.~\ref{C_o_first_regime_NMF} where we plot $ C_o(t,t)$ and $ C_o(t,0)$ for different values of $m$ and check their approach to the scaling functions $\mff{C}_o(t,0)$ and $\mff{C}_o(t,t)$.

\subsubsection{Second dynamical regime: $t=\Theta(\sqrt m)$} 
\label{sec:SecondRegimeNMF_Noise}

\begin{figure}[t]
    \centering
    \includegraphics[width=0.495\linewidth]{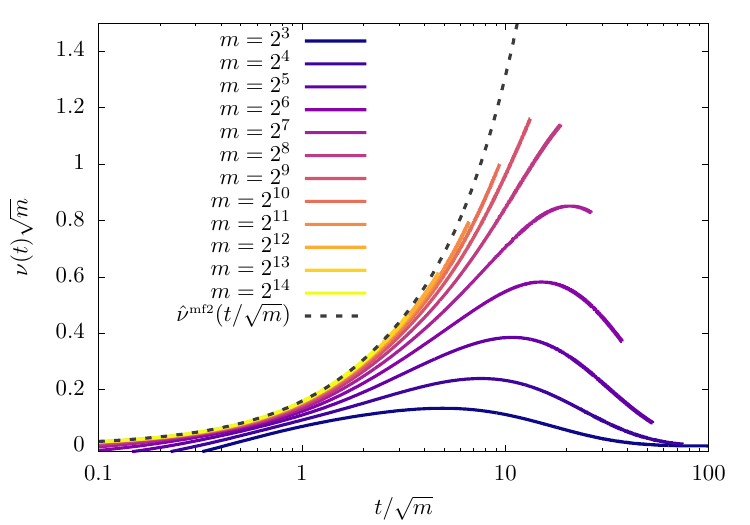}
    \includegraphics[width=0.495\linewidth]{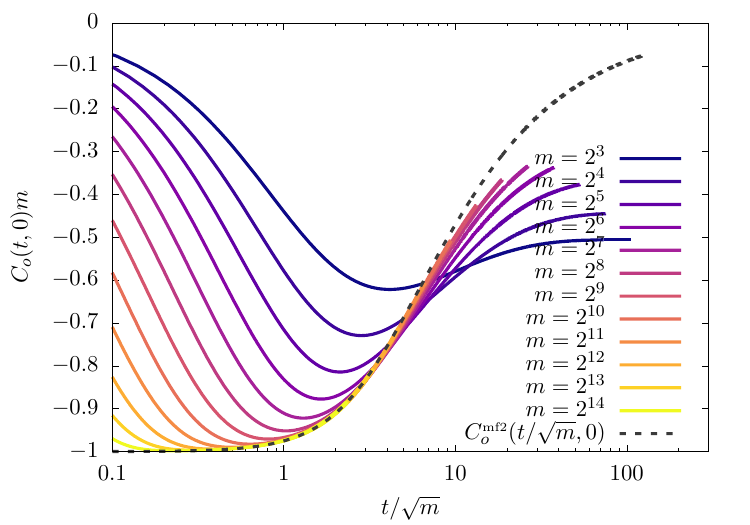}
    \includegraphics[width=0.495\linewidth]{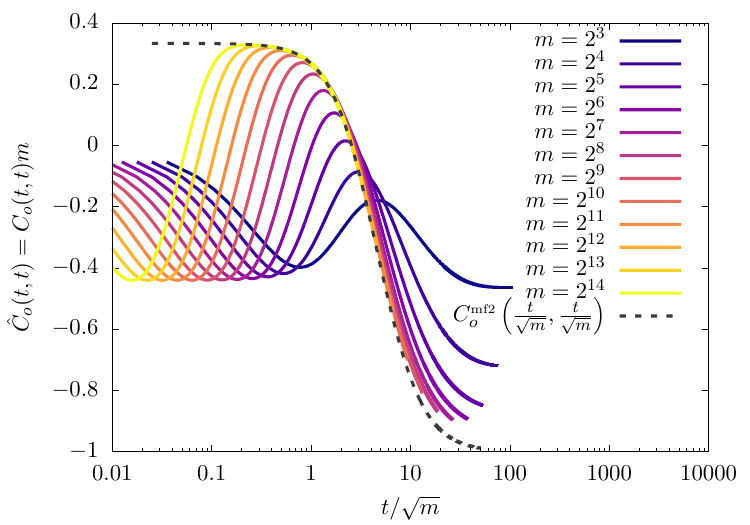}
    \includegraphics[width=0.495\linewidth]{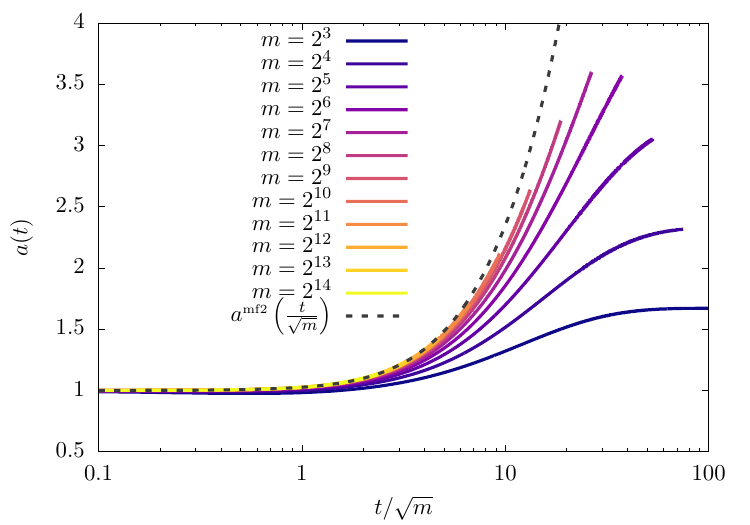}
    \caption{\textbf{Training on pure noise data under mean-field initialization: $t=\Theta(\sqrt m)$ regime},
   under the same setting as in Fig.~\ref{C_o_first_regime_NMF}.
     We plot  the solutions of the \SymmDMFT equations for several values of $m$ 
     as a function of $t/\sqrt{m}$.
    We compare these to the  $m\to \infty$ scaling theory of Section \ref{sec:SecondRegimeNMF_Noise},
    i.e. to numerical solutions of Eqs.~\eqref{pspin_C} to \eqref{eq_a_22}.}
    \label{DMFT_sqrt_m_NMF}
\end{figure}

We now consider the case in which time scales as $\sqrt m$. 
The following asymptotic forms can be checked to solve the \SymmDMFT equations,
up to higher order terms,
for suitable choices of the scaling functions on the right-hand side:
\begin{align}
    C_d(t\sqrt m,s\sqrt m)&=\mfs{C}_d(t,s)+o_m(1) & R_d(t\sqrt m,s\sqrt m)&=\mfs{R}_d(t,s)+o_m(1)\, ,\\
    C_o(t\sqrt m,s\sqrt m)&=\frac{1}{m}\mfs{C}_o(t,s)+o_m(m^{-1}) & R_o(t\sqrt m,s\sqrt m)&=\frac{1}{m}\mfs{R}_o(t,s)+o_m(m^{-1})\, ,\\
    \sqrt m R_A(t\sqrt m,s\sqrt m)&=\delta(t-s)+o_m(1)  & C_A(t\sqrt m,s\sqrt m)&=-\tau^2+o_m(1)\, ,\\
    \sqrt m\nu(t\sqrt m)&=\mfs{\nu}(t)+o_m(1) & a(t\sqrt m)&=\mfs{a}(t)+o_m(1)\, .\label{eq:amf_noise_2nd}
\end{align}
Plugging this scaling ansatz into the \SymmDMFT equations we get
the constraints
\begin{equation}
    \begin{split}
        \mfs{R}_o(t,s)&=-\mfs{R}_d(t,s)\, ,\\
        \mfs{C}_o(t,s)&=-\mfs{C}_d(t,s)+
        \frac{\tau^2}{\alpha h'(0)(\mfs{a}(t))^2}\, .
    \end{split}
\end{equation}
We also obtain that the following equations must be satisfied by
$\mfs{C}_d(t,t')$, $\mfs{R}_d(t,t')$,  $\mfs{a}(t)$, $\mfs{\nu}(t)$,
\begin{align}
        \partial_t\mfs{C}_d(t,t')&= -\mfs{\nu}(t) \mfs{C}_d(t,t') +\alpha \tau^2 \mfs{a}(t) \int_0^{t} \mfs a(s) h''(\mfs{C}_d(t,s))\mfs{R}_d(t,s)\mfs{C}_d(t',s)\, \de s \label{pspin_C}\\
        &+\alpha \tau^2 \mfs{a}(t)\int_0^{t'} \mfs{a}(s)\left[h'(\mfs{C}_d(t,s))-h'(0)\right]\mfs{R}_d(t',s)\, \de s\, ,\nonumber \\
        \partial_t \mfs{R}_d(t,t') &= \delta (t-t') -\mfs{\nu} (t)\mfs{R}_d(t,t')\label{pspin_R} \\
        &+\alpha \tau^2 \mfs{a}(t)\int_{t'}^t \mfs{a}(s)h''(\mfs{C}_d(t,s))\mfs{R}_d(t,s)\mfs{ R}_d(s,t')\, \de s\, , \nonumber \\
        \mfs{\nu}(t) &= \alpha\tau^2 \mfs{a}(t)\int_0^t  \left[\mfs{a}(s)h''(\mfs{C}_d(t,s))\mfs{R}_d(t,s)\mfs{C}_d(t,s)\right]\, \de s\\
        &+\alpha\tau^2 \mfs{a}(t)\int_0^t \mfs{a}(s)\left[h'(\mfs{C}_d(t,s))-h'(0)\right]\mfs{R}_d(t,s) \, \de s\, ,\nonumber\\
        \frac{\de \mfs{a}(t)}{\de t} &= \alpha \tau^2\int_0^t \mfs{a}(s)\left[h'(\mfs{ C}_d(t,s))-h'(0)\right]\mfs{R}_d(t,s)\, \de s\, ,
    \label{eq_a_22}
\end{align}
with initial conditions given by
\begin{align}
    \mfs{C}_d(0,0)&=1  & \mfs{R}_d(0+,0)&=1 & \mfs{a}(0)=a_0\:.
\end{align}

We test these predictions in Fig.~\ref{DMFT_sqrt_m_NMF}.
We plot several quantities in the solution of the \SymmDMFT equations 
for increasing values of  $m$ and compare them with the solution of the 
asymptotic equations \eqref{pspin_C} to \eqref{eq_a_22}.
We observe convergence to the predicted asymptotic behavior.

Equations \eqref{pspin_C} to \eqref{eq_a_22} can be further simplified.
The right-hand side of Eq.~\eqref{eq_a_22} is a positive. Therefore $\mfs{a}(t)$ is a monotone increasing function.
Define the time change 
\begin{align}
    \tilde t (t) &= \tau \sqrt \alpha  \int_0^t \mfs a(s)\, \de s  \, ,
\end{align}
and the corresponding time-changed scaling functions
\begin{equation}
    \begin{split}
        \tilde \nu (\tilde t(t))&=\frac{\mfs {\nu}(t)}{\mfs a(t)\tau \sqrt \alpha}\, ,\\
        \mf{\tilde C}_d(\tilde t(t),\tilde t(t'))&=\mfs{C}_d(t,t')\, ,\\
        \mf{\tilde R}_d(\tilde t(t),\tilde t(t'))&=\mfs{ R}_d(t,t')\:.
    \end{split}
\end{equation}
Equations \eqref{pspin_C} to \eqref{eq_a_22}  imply that 
these time-changed function functions satisfy
\begin{align}
    \partial_t\mf{\tilde C}_d(t,t')&= -\mf{\tilde \nu}(t) \mf{\tilde C}_d(t,t') + \int_0^{t}\ \tilde h''(\mf{\tilde C}_d(t,s))\mf{\tilde R}_d(t,s)\mf{\tilde C}_d(t',s) \, \de s\label{pspin_C_redux}\\
        &+\int_0^{t'} \tilde h'(\mf{\tilde C}_d(t,s))\mf{\tilde R}_d(t',s) \, \de s\,,\nonumber \\
        \partial_t \mf{\tilde R}_d(t,t') &= \delta (t-t') -\mf{\tilde \nu} (t)\mf{\tilde R}_d(t,t')+\int_{t'}^t\tilde h''(\mf{\tilde C}_d(t,s))\mf{\tilde R}_d(t,s)\mf{\tilde R}_d(s,t') \,\de s\label{pspin_R_redux} \\
        \mf{\tilde \nu}(t) &= \int_0^t  \tilde h''(\mf{\tilde C}_d(t,s))\mf{\tilde R}_d(t,s)\mf{\tilde C}_d(t,s)\, \de s+\int_0^t\tilde h'(\mf{\tilde C}_d(t,s))\mf{\tilde R}_d(t,s)\, \de s \, ,
        \label{pspin_nu_redux}
\end{align}
where again $\tilde h(z)=h(z)-h'(0)z$.

Equations \eqref{pspin_C_redux}, \eqref{pspin_nu_redux} are independent of the dynamics of the second layer weights. These equations are nothing but the DMFT equations describing gradient descent dynamics of the celebrated spherical mixed $p$-spin glass model \cite{crisanti1993spherical, cugliandolo1993analytical, ben2006cugliandolo, folena2020rethinking},
whose definition we recall next.
Consider  a random cost function $H(\bx)$ indexed
$\bx\in \S^{d-1}$, which is a centered Gaussian process with covariance structure given by
\begin{equation}
\begin{split}
    \E\left(H(\bx)H(\by)\right)&=d \, \tilde h(\<\bx, \by\>)\, .
\end{split}
\end{equation}
Define the gradient flow dynamics
\begin{equation}
    \dot \bx(t)=-\proj^{\perp}_{\bx(t)}\nabla H(\bx(t))\, ,
\end{equation}
where $\proj^{\perp}_{\bx(t)}$ is the projector orthogonal to $\bx(t)$.
Then the high-dimensional asymptotics of this dynamics is characterized by 
Eqs.~\eqref{pspin_C_redux}, \eqref{pspin_nu_redux}.   In particular
$\lim_{d\to\infty}\<\bx(t),\bx(s)\> = \mf{\tilde C}_d(t,t')$ almost surely.

\begin{figure}
    \centering
    \includegraphics[width=0.5\linewidth]{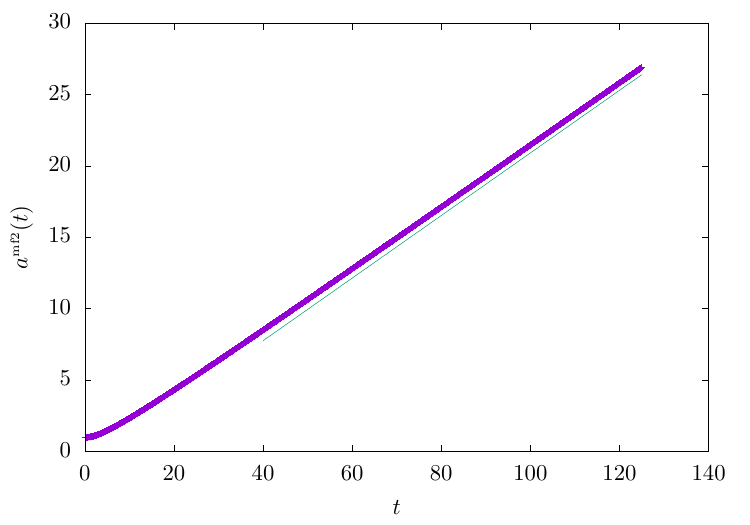}
    \caption{\textbf{Evolution of second layer weights,
    as predicted by the numerical solution of Eq.~\eqref{eq_a_22}.} Here we use $h(z)=(9/10)z+z^3/6$, $\alpha=0.3$ and $\tau=0.6$. The straight line is just a guide to the eyes to test the prediction  of Eq.~\eqref{prediction_a_eth}.}
    \label{fig_a_Eth}  
\end{figure}

A particularly interesting quantity is the asymptotic energy value
in the mixed $p$-spin model:
\begin{equation}
    \Eth=\lim_{t\to \infty}\lim_{d\to \infty}\frac 1d H(\bx(t))\, .
\end{equation}
The DMFT analysis for this problem implies that 
\begin{equation}
    \Eth=-\lim_{t\to \infty}\int_0^t\tilde h'(\mf{\tilde C}_d(t,s))\mf{\hat R}_d(t,s)
    \, \de s\:.
\end{equation}
For $\tilde h(z) = c_k^2z^k$, $k\ge 2$, we have the explicit expression \cite{cugliandolo1993analytical, cugliandolo1995full, sellke2024threshold}
\begin{equation}
    \Eth= -2 c_k\sqrt{\frac{k-1}{k}}\, \:.\label{threshold_e}
\end{equation}
An explicit expression for $\Eth$ for general covariance structure is an unknown \cite{folena2020rethinking}.

The asymptotic energy $\Eth$ has an interesting interpretation  for
the dynamics of two-layer networks --within the \SymmDMFT theory.
Eq.~\eqref{threshold_e} implies that
\begin{equation}
    \lim_{t\to \infty}\frac{\mfs{ a}(t)}{ t} = -\tau \sqrt \alpha \Eth =: A_\infty\, . \label{prediction_a_eth}
\end{equation}

In Fig.~\ref{fig_a_Eth} we test the prediction of Eq.~\eqref{prediction_a_eth} by integrating numerically Eqs.~\eqref{pspin_C} to \eqref{eq_a_22} and plotting the prediction for the second-layer weigths
$\mf{a}(t)$. We observe that at large $t$, $\mfs{a}(t)\approx A_{\infty} t$, with 
$A_{\infty}$ given by Eq.~\eqref{prediction_a_eth}as predicted.

 We also  note that  $C_A(t,t)=-\tau^2$ also in this  timescale, and hence the train error does not change significantly. Namely , for any constant $t$, we have
 \begin{align}
 e_{\str}(t\sqrt{m}) = \frac{1}{2}\tau^2 + o_m(1)\, .
 \end{align}
 
If we use heuristically Eq.~\eqref{threshold_e} and Eq.~\eqref{eq:amf_noise_2nd} beyond the $\sqrt{t}$
time scale, we obtain
\begin{align}
a(t) \approx \mfs{a}\big(t/\sqrt{m}\big) \approx A_{\infty}\frac{t}{\sqrt{m}}\, .
\end{align}
This suggests  that $a(t)$ becomes of order $\sqrt m$ on timescale of order $m$.
When this happens, the network complexity is large enough to allow for interpolation,
and hence  we expect the dynamics to change. Indeed a new dynamical regime emerges for $t=\Theta(m)$,
as we will study next.

\subsubsection{Third dynamical regime: $t=\Theta(m)$}
\label{sec:ThirdRegimeNMF_Noise}

\begin{figure}[t]
    \centering
    \includegraphics[width=0.495\linewidth]{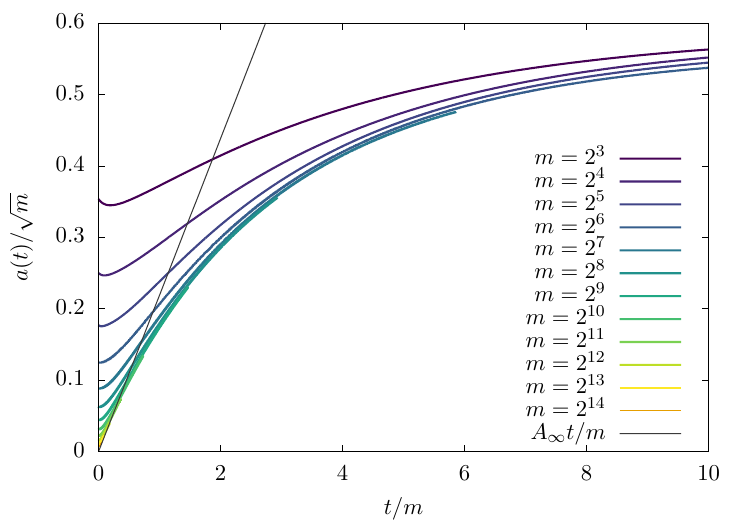}
    \includegraphics[width=0.495\linewidth]{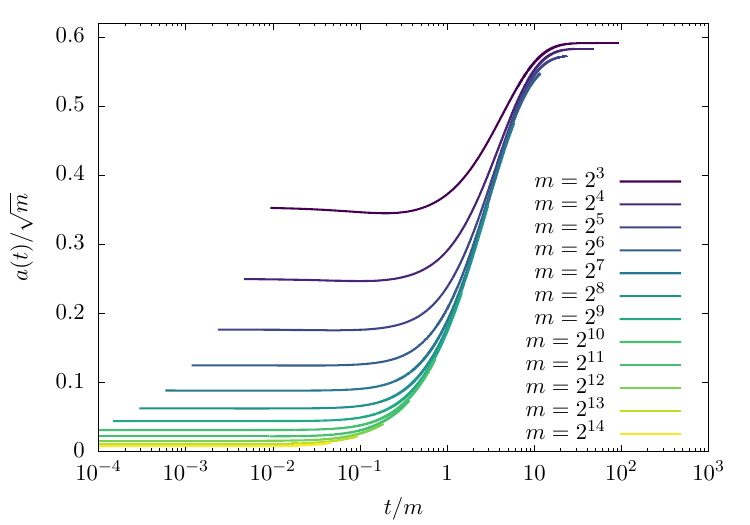}    
    \caption{\textbf{Evolution of the second layer weigths when training on pure noise data under mean field initialization for $t=\Theta(m)$}.
    Rescaled second layer weights $a(t)/\sqrt{m}$ as a function of $t/m$. We plot solutions of the \SymmDMFT equations for the setting
    of Fig.~\ref{C_o_first_regime_NMF}.}
    \label{pesi_regime_3}
\end{figure}

\begin{figure}[t]
    \centering
    \includegraphics[width=0.495\linewidth]{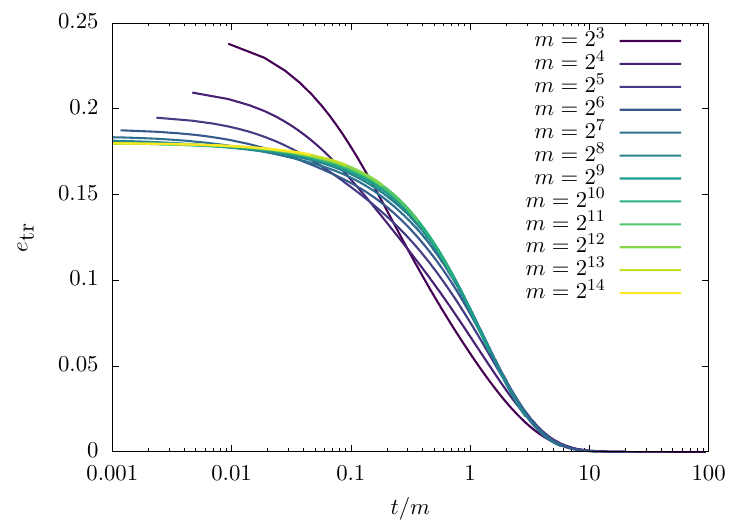}
    \includegraphics[width=0.495\linewidth]{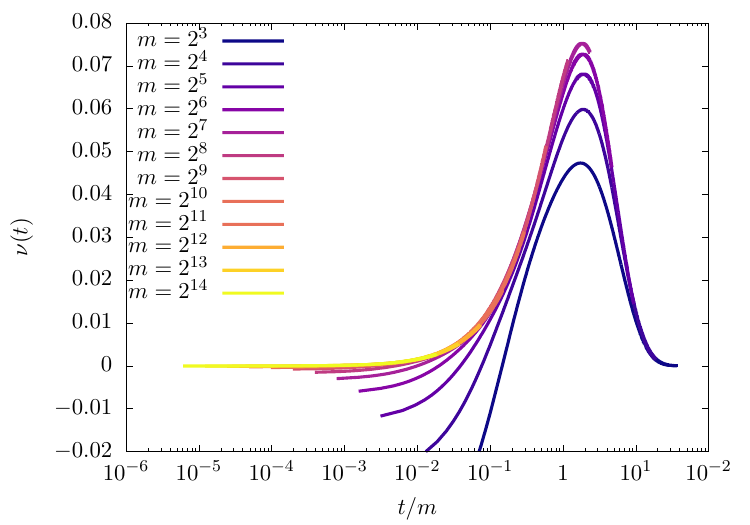}
    \caption{\textbf{Train error and Lagrange multiplier $\nu(t)$ on timescales of order $m$
    under mean field initialization for pure noise data.} Solutions of the \SymmDMFT equations for the setting
    of Fig.~\ref{C_o_first_regime_NMF}.
    Finite $m$ curves accumulate on master curves suggesting the existence of scaling functions.}
    \label{fig:etrain_nu_MF_pn_3}
\end{figure}

\begin{figure}
    \centering
    \includegraphics[width=0.495\linewidth]{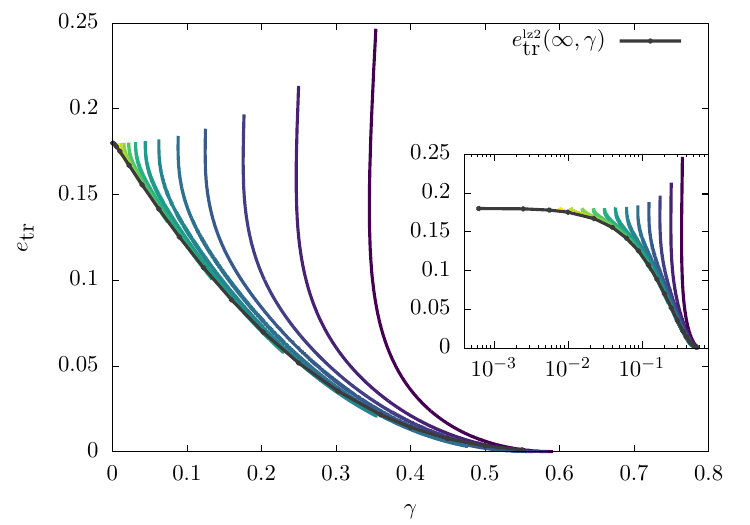}
    \includegraphics[width=0.495\linewidth]{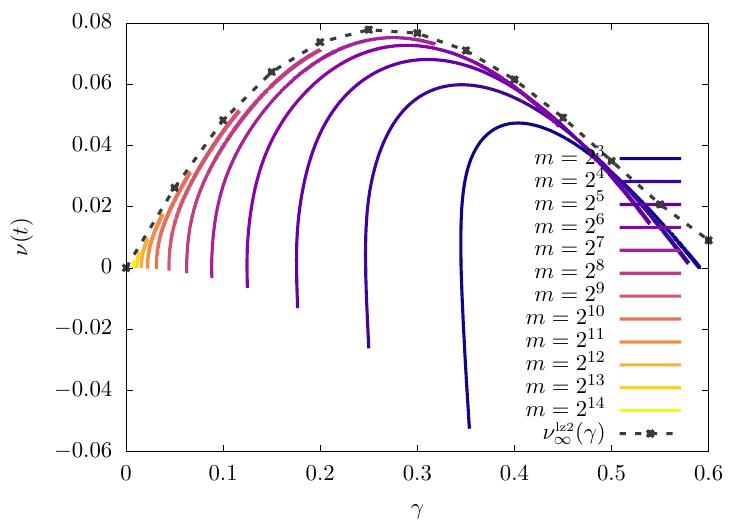}
    \caption{\textbf{Train error, rescaled second-layer weights and the Lagrange multiplier $\nu(t)$ on timescales of order $m$
    under mean field initialization. }
    Left Panel: parametric plot of the train error as a function of the weights of the second layer on the scale $\sqrt m$, namely $\gamma=a(t)/\sqrt m$. The inset shows the same data on a logarithmic scale. Right Panel: same plot for the Lagrange multiplier $\nu$. Data is the same as in Fig. \ref{C_o_first_regime_NMF}.}
    \label{fig:etr_nu_parametric}
\end{figure}

As anticipated, an additional regime arises on timescales of order $m$.
In Figure \ref{pesi_regime_3} we plot the evolution of the weights of the second layer 
as a function of $t/m$ for increasing values of the width $m$.
The different curves collapse suggesting the following limit to exist
\begin{equation}
    \lim_{m\to \infty} \frac{a(tm)}{\sqrt m} = \mft{\gamma}(t)\:.
\end{equation}
The limit curve appears to grows linearly at small $t$, $\mft{\gamma}(t)= A_{\infty}t+o(t)$,
where $A_{\infty}$ is the coefficient computed in the previous section,
cf. Eq.~\eqref{prediction_a_eth}. Hence, this third dynamical regime matches directly with the previous one. As can be seen from the right plot, there appear to be a finite limit 
$\lim_{t\to\infty}\mft{\gamma}(t)<\infty$.

We now turn to the analysis of the train error.
Recall that on the previous timescales, the train error stays approximately constant,
and equal to the train error of the null network, namely $e_{\str}(t\sqrt{m}) = \tau^2/2+o_m(1)$
for any fixed $t$. 
In Fig.~\ref{fig:etrain_nu_MF_pn_3} we plot both the train error and the Lagrange multiplier $\nu$ as a function of  $t/m$. 
Again, as  $m$ grows, these curve converge to limit curves. This suggests the existence of the following limits
\begin{align}
    \lim_{m\to \infty}e_{\str}(tm) &=\mft e_{\str}(t)\, , \label{MF_3_etr}\\
    \lim_{m\to \infty}\nu(t m) & = \mft \nu(t)\, .\label{MF_3_nu}
\end{align}
Note that in this case, differently from the lazy initialization setting, the corresponding scaling function do not depend on the initialization parameter $a_0$.

In order to characterize the limits in Eqs.~\eqref{MF_3_etr}-\eqref{MF_3_nu}, we proceed as in Sec.~\ref{Sec_pn_3}.
Namely, in Fig.~\ref{fig:etr_nu_parametric} we plot the train error and the Lagrange multiplier $\nu$ as a function of the rescaled second layer weights $\gamma=a(t)/\sqrt{m}$. 
 We also plot the asymptotic value of train error and Lagrange multiplier 
 under the constrained GF dynamics in which second layer weigths are fixed to $a(t) = \gamma\sqrt{m}$ and do not evolve with time:
$\lzs e_{\str,\infty}(\gamma):=  \lim_{t\to\infty}\lzs e_{\str}(t,\gamma)$ and 
$\lzs \nu_{\infty}(\gamma):=  \lim_{t\to\infty}\lzs\nu(t,\gamma)$.
These are computed by integration of the scaling theory in Section~\ref{Sec_pn_2}. 

The good collapse on these curves suggests to consider the
the following construction, analogous to Sec.~\ref{Sec_pn_3}.
Define the inverse function of $t\mapsto \mft{\gamma}(t)$, denoted by $(\mft{\gamma})^{-1}$. Then, define
\begin{equation}
    \begin{split}
        \mft{\varepsilon}(\gamma)&=\mft e_{\str}((\mft{\gamma})^{-1}(\gamma))\, ,\\ 
        \mft{\nu}_\star(\gamma)&=\mft{ \nu}((\mft{\gamma})^{-1}(\gamma))\, .    
    \end{split}
\end{equation}
Figure~\ref{fig:etr_nu_parametric} suggests that
\begin{align}
    \mft{\varepsilon}(\gamma)& \approx \lzs e_{\str,\infty}(\gamma)\, ,\label{MF_pn_etr_3}\\
    \mft{\nu}_\star(\gamma)&\approx \lzs \nu_{\str,\infty}(\gamma)\, .\label{MF_pn_nu_3}
\end{align}

Equations~\eqref{MF_pn_etr_3}, \eqref{MF_pn_etr_3} imply that on timescales of order $m$ the dynamics is adiabatic. 
For each incremental change of $a$ on a the scale $\sqrt{m}$, all one-time quantities relax to the asymptotic value which turns out to be the same as a constrained model with 
$a(t)/\sqrt{m}= \gamma$ fixed.

The consequence of Eqs.~\eqref{MF_pn_etr_3}-\eqref{MF_pn_etr_3} is that
\begin{equation}
    \lim_{t\to \infty}\mft{\gamma}(t) \approx \gamma_{\rm GF}^*(\alpha,\tau)\, .
\end{equation}
where $\gamma_{\rm GF}^*(\alpha,\tau)$ corresponds to the interpolation value of the initialization scale of a lazy model.

\subsection{Multi-index model}\label{NMF_si}
In this section we consider the case in which the dataset is distributed according
to a multi-index model.
For time scales beyond $t=O(1)$, we
will assume that $h(z)=\hat \varphi(z)$. This simplifies the asymptotics for $t$ large but of order one.

We identify two dynamical regimes emerging as $m\to\infty$:
\begin{itemize}
    \item $t=O(1)$: $a(t)=O(1)$ but is not constant. Also, the projection $\bv(t)$ of first layer weights onto the latent space evolve as well as do train and test error. We further have $e_{\str}(t)=e_{\sts}(t)+o_m(1)$: there is no overfitting.
    This evolution is captured the mean field theory of \cite{mei2018mean,chizat2018global} 
    which we recover as $m\to\infty$ limit of \SymmDMFT. 
    \item $t=\Theta(m)$: $a(t) = \Theta(\sqrt{m})$, $\bv(t)$ decreases towards $0$ 
    and train and test error diverge.
    In this dynamical regime the network unlearns to a large extent the latent structure of the data and overfit it. 
\end{itemize}

%
%
\subsubsection{First dynamical regime: $t=\Theta(1)$}
\label{Sec:NMF_SI_first_regime}
\begin{figure}
    \centering
    \includegraphics[width=0.495\linewidth]{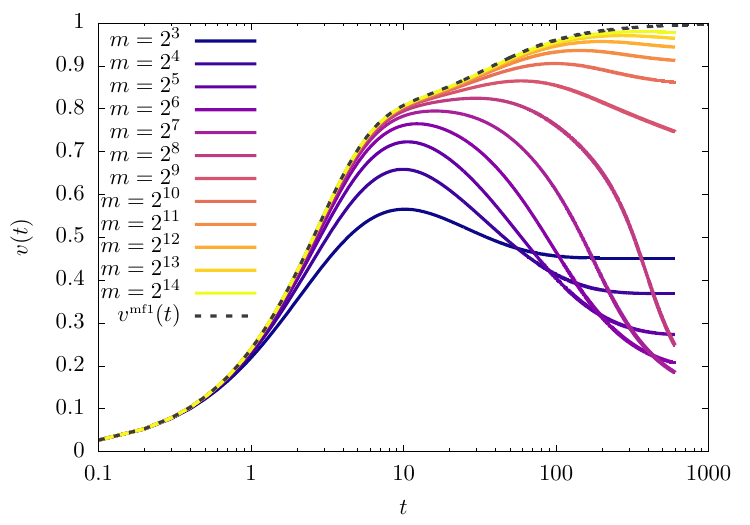}
    \includegraphics[width=0.495\linewidth]{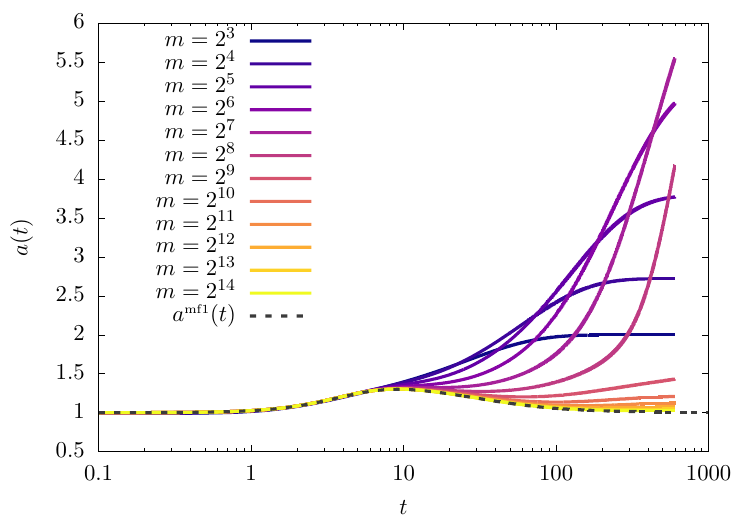}
    \caption{\textbf{Gradient flow dynamics under mean field initialization in the first dynamical regime $t=O(1)$.}
    for data distributed according to a single index model.
    Curves are numerical solutions of the \SymmDMFT equations: we plot  $v(t)$ and $a(t)$ for different values of $m$ and compare them to the mean field predictions. Data is distributed according to a single index model with $h_t(z)=\hat\varphi(z)=h(z)=(9/10)z+z^3/6$ with $\tau=0.6$ and $\alpha=0.3$.}
    \label{fig:a_v_single_index_MF}
\end{figure}

\begin{figure}[t]
    \centering
    \includegraphics[width=0.495\linewidth]{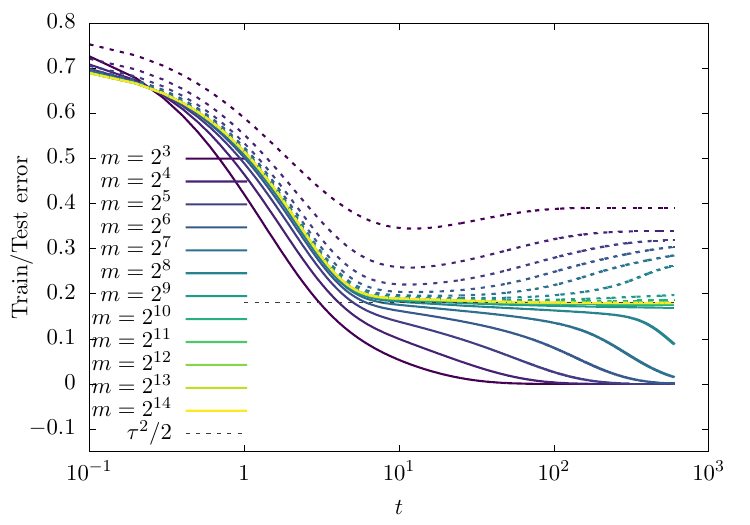}
    \includegraphics[width=0.495\linewidth]{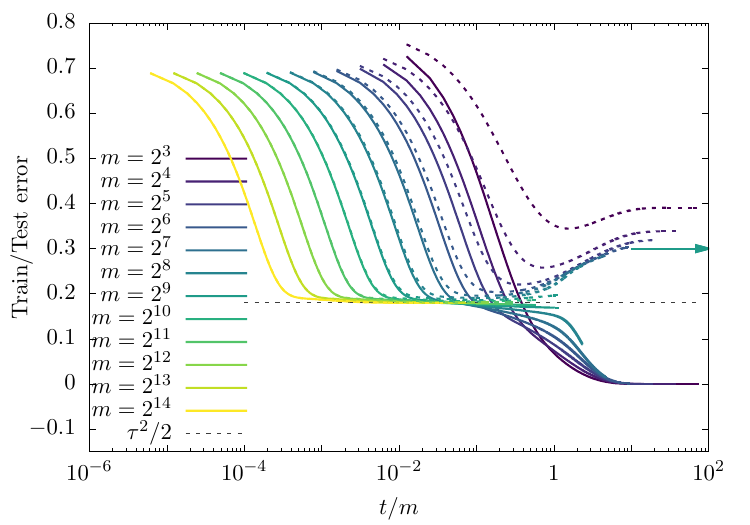}
    \caption{\textbf{Evolution of the train and test error on different timescales under mean field initialization $a(0)=1$. }
    The train (solid curves) and test (dashed curves) errors as a function of time $t$ (left panel) and scaled time  $t/m$ (right panel). 
     Curves are numerical solutions of the \SymmDMFT equations 
     for $h(z)=(9/10)z+z^3/6$, $\hat\varphi(z) = h(z)$, $\tau=0.6$ and $\alpha=0.3$. The arrow on the right panel corresponds to the asymptotic test error for a model with second layer weights fixed to the corresponding interpolation threshold.}
    \label{fig:train_test_SI_MF}
\end{figure}
For timescales of order one, the \SymmDMFT equations are solved, up
to subleading terms as $m\to\infty$, by the following ansatz
\begin{align}
    C_d(t,s)&=\mff{C}_d(t,s)+o_m(1)\, , & C_o(t,s)&=\mff{C}_o(t,s)+o_m(1)\, ,\label{eq:MF_Ansatz_1}\\
    R_d(t,s)&=\mff{R}_d(t,s)+o_m(1)\, , & mR_o(t,s)&=\mff{R}_o(t,s)+o_m(1)\\
    v(t)&=\mff{v}(t)+o_m(1)\, , & a(t)&=\mff{a}(t)\, . \label{eq:MF_Ansatz_3}
\end{align}
The corresponding scaling equations are then given by
\begin{equation}
\begin{split}
    \partial_t \mff{R}_o(t,t')&=-\mff{\nu}(t) \mff{R}_o(t,t')-\alpha \mff{a}(t)^2 h'(\mff{C}_o(t,t))\left(\mff{R}_d(t,t')+\mff{R}_o(t,t')\right)\, ,\\
    \partial_t \mff{C}_o(t,t')&=-\mff{\nu}(t)\mff{C}_o(t,t')+\alpha
    \<\nabla\hat\varphi(\mff{\bv}(t)),\mff{\bv}(t')\>\mff{a}(t)-\alpha \mff{a}(t)^2 h'(\mff{C}_o(t,t))\mff{C}_o(t,t')\, ,\\
    \mff{\nu}(t)&=\alpha\<\nabla \hat\varphi(\mff{\bv}(t)),\mff{\bv}(t)\>\mff{a}(t)-\alpha \mff{a}(t)^2 h'(\mff{C}_o(t,t))\mff{C}_o(t,t)\\
    \partial_t \mff{C}_d(t,t')&=-\mff{\nu}(t)\mff{C}_d(t,t')+\alpha\<\nabla\hat\varphi(\mff{\bv}(t)),\mff{\bv}(t')\>\mff{a}(t)-\alpha \mff{a}(t)^2 h'(\mff{C}_o(t,t))\mff{C}_o(t,t')\, ,\\
    \partial_t \mff{R}_d(t,t')&=-\mff{\nu}(t)\mff{R}_d(t,t')+\delta(t-t')\, ,\\
    \partial_t\mff{\bv}(t)&=-\mff{\nu}(t) \mff{\bv}(t) +\alpha\nabla\hat\varphi(\mff{\bv}(t))\mff{a}(t) -\alpha \mff{a}(t)^2 h'(\mff{C}_o(t,t))\mff{\bv}(t)\, ,\\
    \partial_t \mff{a}(t) &= \alpha\left(\hat \varphi(\mff{\bv}(t)) - \mff{a}(t) h(\mff{C}_o(t,t))\right)
    \, .
\end{split}
\end{equation}
These equations are solved by setting:
\begin{align}
    \mff{C}_o(t,t')=\<\mff{\bv}(t),\mff{\bv}(t')\>
\end{align}
with $\mff{\bv}(t)$, $\mff{a}(t)$ the solution of 
\begin{equation}\label{NMF_r}
\begin{split}
    \partial_t\mff{\bv}(t) &= \alpha \mff{a}(t) \big(\id_k-\mff{\bv}(t)\mff{\bv}(t)^{\sT}\big)\left(
    \nabla \hat \varphi(\mff{\bv}(t))-\mff{a}(t)h'(\|\mff{\bv}(t)\|^2)\mff{\bv}(t)\right)\, ,\\
    \partial_t\mff{a}(t) &= \alpha\left(\hat \varphi(\mff{\bv}(t)) - \mff{a}(t) h(\|\mff{\bv}(t)\|^2)\right)\, ,
\end{split}
\end{equation}
with initial conditions given by $\mff{\bv}(0)=\bzero$ and $\mff{a}(0)=a_0$.

Equations~\eqref{NMF_r} coincide with the mean field theory of
\cite{mei2018mean,chizat2018global,rotskoff2022trainability}, when the latter are specialized to the multi-index
model studied here, under symmetric initializations \cite{berthier2024learning}. (See also \cite{arnaboldi2023high}.)
Using the ansatz of Eqs.~\eqref{eq:MF_Ansatz_1} to \eqref{eq:MF_Ansatz_3} in the formulas
for training and test error \eqref{eq:TrainGeneral},  \eqref{eq:TestGeneral}, we get
\begin{equation}
    \lim_{m\to \infty}e_{\str}(t)=\lim_{m\to \infty}e_{\sts}(t)=\mff{e}(t)\, ,
\end{equation}
with
\begin{equation}
    \mff{e}(t)=\frac 12 \left[\tau^2+\|\varphi\|^2-2\mff{a}(t)\hat\varphi(\mff{\bv}(t))+\mff{a}(t)^2h(\|\mff{\bv}(t)\|^2)\right]\, .\label{eq:MF_Energy}
\end{equation}
A particularly simple case is the one in which $k=1$ (single index model) and $\varphi=\sigma$
(whence $\hat\varphi=h$). For a class of such activations with $h'(0)>0$, we have $\mff{a}(t), \mff{v}(t)\to 1$
as $t\to\infty$ and therefore
\begin{equation}
    \lim_{t\to \infty}\mff{e}(t)=\frac {\tau^2}2 \, .
\end{equation}
In other words, neurons align perfectly with latent direction, the generalization error vanishes,
and and train and test error converge for large constant $t$ to the Bayes error $\tau^2/2$.

In Fig.~\ref{fig:a_v_single_index_MF} we compare the solution of Eqs.~\eqref{NMF_r} with the numerical integrations of the \SymmDMFT equations for a range of values of $m$. As $m$ increases, the \SymmDMFT solutions converge to the asymptotic predictions $\mff{v}(t)$, $\mff{a}(t)$, confirming the above ansatz.

Similarly, in Fig.~\ref{fig:train_test_SI_MF}-left panel we compute the train and test error by solving the \SymmDMFT equations and compare the results to the asymptotic prediction provided by 
Eq.~\eqref{eq:MF_Energy}. We observe that --as predicted-- train and test error match on an increasingly long
time interval. At a certain point, they diverge: we will next characterize the timescale on which this happens.

\subsubsection{Escape from the mean field dynamical regime}
\label{Sec:breakdownNMF}

\begin{figure}
    \centering
    \includegraphics[width=0.495\linewidth]{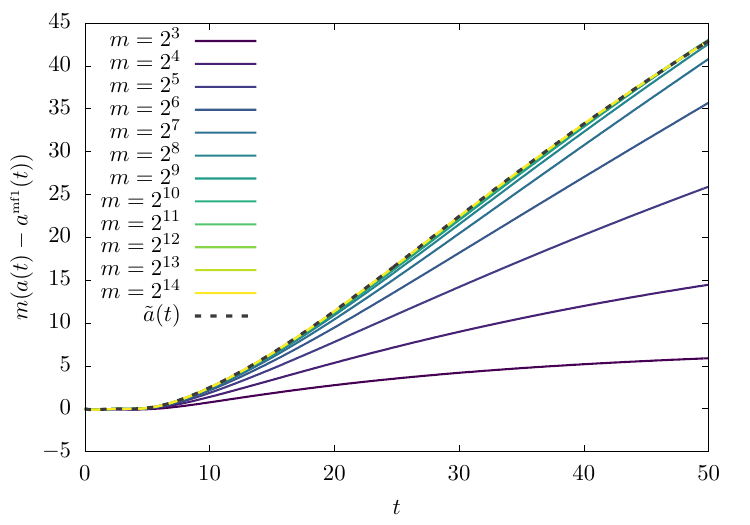}
    \includegraphics[width=0.495\linewidth]{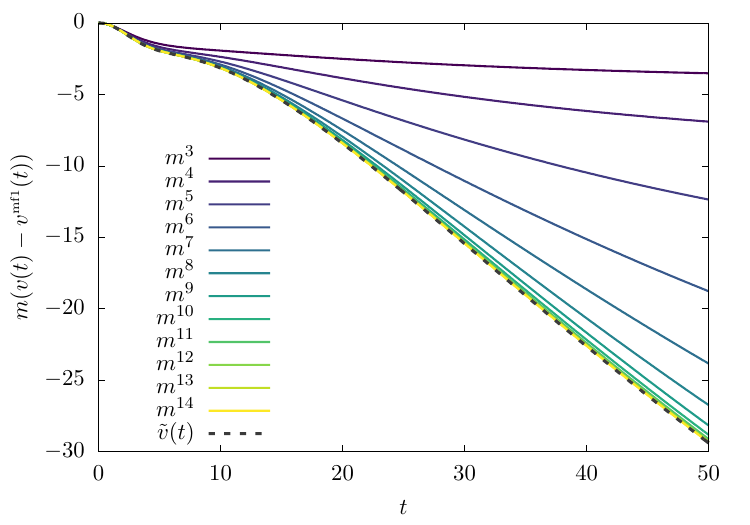}
    \caption{\textbf{Finite width corrections.} The $1/m$ corrections to the second layer weights and the projection on the latent space of the single index model on timescales of order $1$. Dashed lines are obtained by integrating numerically  Eqs.~\eqref{eq:AMF} to \eqref{eq:CoMF}
    determining the limits $m\to\infty$. Here, $\hat\varphi(z) = h(z) =(9/10)z+z^3/6$ with $\tau=0.6$ and $\alpha=0.3$.}
    \label{1m_corrections}
\end{figure}
In order to understand on which time scale the dynamics diverges from mean field theory described above,
we will study small deviations from this theory. 
We expect that these deviations will diverge with time.
Characterizing this divergence will allow to determine
time scale on which we exit the mean field regime.

We focus on the case of a single index model $k=1$, with $\hat\varphi=h$,
and set $a(0)=1$.
We believe that the qualitative conclusions obtained in this case apply more generally.
We also assume $h$ to be such that the long time asymptotics of mean field dynamical solutions is 
\begin{align}
    \lim_{t\to \infty}\mff{a}(t)&=1\, ,\;\;\;\;\;\;\;\;\;
    \lim_{t\to \infty}\mff{v}(t)=1\:.
\end{align}
As mentioned in the previous section, this holds for a broad class of activations.
In other words, for time $t$ large and yet of order one, the neurons are very well aligned. 

We next study the corrections to the mean field solution.
We claim that such corrections are of order $1/m$ and define the functions $\tilde a(t)$, $\tilde v(t)$, dots ,$\tilde R_o(t,t')$,
via
\begin{align}
    m(a(t)-\mff{a}(t))=\tilde a(t)+o_m(1)\, , \\ 
    m(v(t)-\mff{v}(t))=\tilde v(t)+o_m(1)\, ,\label{Eq_1m_initial}\\
    m(C_d(t,t')-\mff{C}_d(t,t'))=\tilde C_d(t,t')+o_m(1)\, ,\\
    m(C_o(t,t')-\mff{C}_o(t,t'))=\tilde C_o(t,t')+o_m(1)\, ,\\
    m(R_d(t,t')-\mff{R}_d(t,t'))=\tilde R_d(t,t')+o_m(1)\, ,\\ m(R_o(t,t')-\mff{R}_o(t,t'))=\tilde R_o(t,t')+o_m(1)\, ,\\
    m(\nu(t)-\mff{\nu}(t))=\tilde \nu(t)+o_m(1)\, .\label{Eq_1m_final}
\end{align}
Substituting the above form into the \SymmDMFT equations and matching the next-to-leading order in $m$ 
we can obtain the equations for the $1/m$ corrections. It turns out that  
equations for $\tilde a,\ \tilde v,\ \tilde C_o$ and $\tilde \nu$ decouple from the equations for $\tilde C_d$, $\tilde R_d$ and $\tilde R_o$. Given that we are interested in the former quantities we only report the corresponding equations:
\begin{align}
    \frac{\de \tilde a(t)}{\de t} =& \alpha \hat \varphi'(\mff{v}(t))\tilde v(t) -\alpha \hat \varphi(\mff{v}(t))\int_0^t  \Sigma_R^{(1)}(t,s)\, \de s -\alpha \mff{a}(t)\left[h(1)-h(\mff{C}_o(t,t))\right]\label{eq:AMF}\\
    &+\alpha\int_0^t\Sigma_R^{(1)}(t,s)\mff{a}(s)h(\mff{C}_o(t,s))\de s -\alpha \tilde a(t)h(\mff{C}_o(t,t))-\alpha  \mff{a}(t)h'(\mff{C}_o(t,t))\tilde{C}_o(t,t)\nonumber\\
    &-\alpha\int_0^t\mff{C}_A(t,s)\mff{a}(s)\left[h'(\mff{C}_d(t,s))\mff{R}_d(t,s)+h'(\mff{C}_o(t,s))\mff{R}_o(t,s)\right]\, \de s \,,\nonumber\\
    \frac{\de \tilde v(t)}{\de t}=&-\mff{\nu}(t)\tilde v(t)-\tilde \nu(t)\mff{v}(t)+\alpha\hat\varphi(\mff{v}(t))\tilde a(t)+\alpha\hat\varphi''(\mff{v}(t))\tilde v(t)\mff{a}(t)\\
    &-\alpha\hat\varphi'(\mff{v}(t))\mff{a}(t)\int_0^t \Sigma^{(1)}_R(t,s) \,\de s-\int_0^t \left[\tilde M_R^{(d)}(t,s) -M_{R,o}^{(0)}(t,s) \right]\mff{v}(s) \, \de s\nonumber\\
    &-\int_0^t \left[M_{R,o}^{(1)}(t,s)\mff{v}(s)+M_{R,o}^{(0)}(t,s)\tilde v(s)\right]\, \de s\, ,\nonumber\\
    \tilde \nu(t)=&\alpha\hat\varphi'(\mff{v}(t))\tilde v(t)\mff{a}(t)+\alpha \hat\varphi''(\mff{v}(t))\mff{v}(t)\tilde v(t)\mff{a}(t)\\
    &+\alpha\hat\varphi'(\mff{v}(t))\mff{v}(t)\tilde a(t)-\alpha\hat\varphi'(\mff{v}(t))\mff{v}(t)\int_0^t \Sigma_R^{(1)}(t,s)\, \de s\nonumber\\
    &-\int_0^t \left[\tilde M_R^{(d)}(t,s)\mff{C}_d(t,s)-M_{R,o}^{(0)}(t,s)\mff{C}_o(t,s)\right] \, \de s\nonumber\\
    &-\int_0^t\left[ M_{R,o}^{(1)}(t,s)\mff{C}_o(t,s)+M_{R,o}^{(0)}(t,s)\tilde{C}_o(t,s)\right]\, \de s\nonumber\\
    &-\int_0^t \left[\tilde M_C^{(d)}(t,s)\mff{R}_d(t,s)+M_{C,o}^{(0)}(t,s)\mff{R}_o(t,s)\right]\, \de s\, ,\nonumber\\
    \frac{\partial \tilde C_o(t,t')}{\partial t} = & -\mff{\nu}(t)\tilde C_o(t,t')-\tilde \nu(t)\mff{C}_o(t,t')+\alpha\hat\varphi''(\mff{v}(t))\tilde v(t)\mff{v}(t')\mff{a}(t)\label{eq:CoMF}\\
    &+\alpha\hat\varphi'(\mff{v}(t))\tilde v(t')\mff{a}(t)+\alpha\hat\varphi'(\mff{v}(t))\mff{v}(t')\tilde a(t)\nonumber\\
    &-\alpha\hat\varphi'(\mff{v}(t))\mff{v}(t')\mff{a}(t)\int_0^t \Sigma_R^{(1)}(t,s)\, \de s\nonumber\\
    &-\int_0^t\left[\tilde M_R^{(d)}(t,s)\mff{C}_o(t',s)+M_{R,o}^{(0)}(t,s)\mff{C}_d(t',s)-2M_{R,o}^{(0)}(t,s)\mff{C}_o(t's)\right]\, \de s\nonumber\\
    &-\int_0^t \left[M_{R,o}^{(0)}(t,s)\tilde C_o(t',s)+M_{R,o}^{(1)}(t,s)\mff{C}_o(t',s) \right]\, \de s\nonumber\\
    &-\int_0^{t'} \left[\tilde M_{C}^{(d)}(t,s)\mff{R}_d(t',s)+M_{C,o}^{(0)}(t,s)\mff{R}_o(t',s) \right]\, \de s\:.\nonumber
\end{align}
Here, we used the following auxiliary functions
\begin{align}
    \Sigma_R^{(1)}(t,s)&=\mff{a}(t)\mff{a}(s) \left[h'(\mff{C}_d(t,s))\mff{R}_d(t,s)+h'(\mff{C}_o(t,s))\mff{R}_o(t,s)\right]\, ,\\
    \mff{C}_A(t,s)&=-\left[\tau^2+h_t(1)-\mff{a}(t)\varphi(\mff{v}(t))-\mff{a}(s)\varphi(\mff{v}(s))+\mff{a}(t)\mff{a}(s)h(\mff{C}_o(t,s))\right]\, ,\\
    \tilde M_R^{(d)}(t,s)&=\alpha\mff{a}(t)\mff{a}(s)\left[h'(1)\delta(t-s)+\mff{C}_A(t,s)h''(\mff{C}_d(t,s))\mff{R}_d(t,s)\right]\, ,\\
    M_{R,o}^{(0)}(t,s)&=\alpha (\mff{a}(t))^2h'(\mff{C}_o(t,s))\delta(t-s)\, ,\\
    M_{R,o}^{(1)}(t,s)&=\alpha\left[2\mff{a}(t)\tilde a(t)h'(\mff{C}_o(t,t))+(\mff{a}(t))^2h''(\mff{C}_o(t,t))\tilde C(t,t) \right]\delta(t-s)\\
    &-\alpha\mff{a}(t)\mff{a}(s)\Sigma_R^{(1)}(t,s)h'(\mff{C}_o(t,s))\\
    &+\alpha\mff{a}(t)\mff{a}(s)\mff{C}_A(t,s)h''(\mff{C}_o(t,s))\mff{R}_o(t,s)\, ,\\
    \tilde M_C^{(d)}(t,s)&=\alpha\mff{a}(t)\mff{a}(s)\mff{C}_A(t,s)h'(\mff{C}_d(t,s))\, ,\\
    M_{C,o}^{(0)}(t,s)&=\alpha\mff{a}(t)\mff{a}(s)\mff{C}_A(t,s)h'(\mff{C}_o(t,s))\, .
\end{align}
Note that Eqs.~\eqref{eq:AMF} to \eqref{eq:CoMF} are a set of four integral-differential
equations for the four functions $\tilde a(t)$, $\tilde v(t)$, $\tilde\nu(t)$, $\tilde C_o(t,t')$.
The original \SymmDMFT equations involve three other functions: $ \tilde C_d(t,t')$, $\tilde R_d(t,t')$,
$\tilde R_o(t,t')$?
We also remark that: $(i)$ These equations are linear in the unknowns
 $\tilde a(t)$, $\tilde v(t)$, $\tilde\nu(t)$, $\tilde C_o(t,t')$; $(ii)$~They can be integrated numerically with the same strategy used to integrate the \SymmDMFT equations.
 
In Fig.~\ref{1m_corrections} we plot the deviations from the mean field limit $m(a(t)-\mff{a}(t))$
and $m(v(t)-\mff{v}(t))$ as a function of time $t$, as obtained by solving the \SymmDMFT equations\footnote{We note that solving the \SymmDMFT equations accurately enough to capture these corrections requires either to use very fine discretization, or a higher-order integration method.},
for several values of $m$. We also plot the predicted limits  $\tilde a(t)$, $\tilde v(t)$, 
which are obtained by integrating Eqs.~\eqref{eq:AMF} to \eqref{eq:CoMF}
As $m$ gets large, the finite-$m$ curves appear to converge to the predictions $\tilde a(t)$, $\tilde v(t)$.

In Figure \ref{1m_longer} we plot the result of integrating Eqs.~\eqref{eq:AMF} to \eqref{eq:CoMF}
over a wider time window. We observe that
$\tilde v$, $\tilde a$, $\tilde \nu$ and $\tilde C_o(t,t)$ diverge linearly with $t$. 

\begin{figure}
    \centering
    \includegraphics[width=0.695\linewidth]{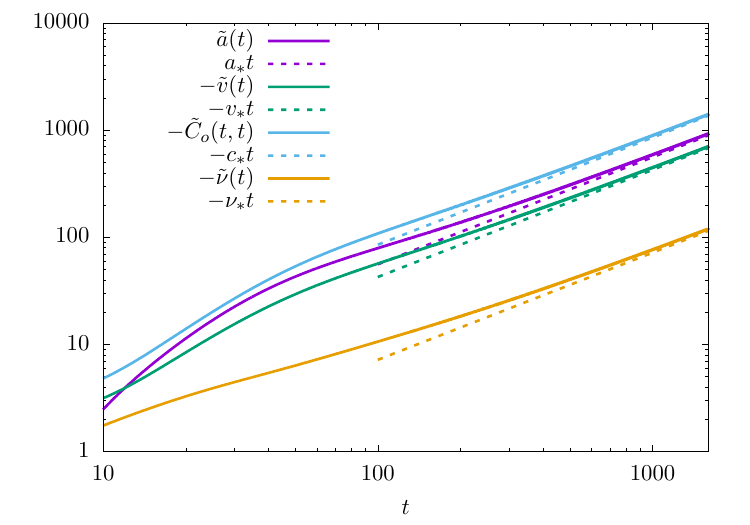}
    \caption{\textbf{Finite width corrections to dynamical observables under mean field initialization.} The $1/m$ corrections to $v(t)$, $a(t)$, $C_o(t,t)$ and $\tilde \nu(t)$ as a function of time as extracted from the numerical integration of the corresponding equations. The dashed lines are the asymptotic predictions for $t\to \infty$ which show that the divergence of all quantities is linear with time. Here, $\hat\varphi(z) = h(z) =(9/10)z+z^3/6$ with $\tau=0.6$ and $\alpha=0.3$.}
    \label{1m_longer}
\end{figure}
This suggests the following asymptotics for these corrections
\begin{align}
    \lim_{t\to \infty}\frac{\tilde a(t)}{t}&=a_*\, , 
    &\lim_{t\to \infty}\frac{\tilde v(t)}{t}&=v_*\, ,\label{eq:largeTcorr1}\\
    \lim_{t\to \infty}\frac{\tilde \nu(t)}{t}&=\nu_*\, ,
    &\lim_{t\to \infty}\frac{\tilde C_o(t,t)}{t}&=c_*\, .\label{eq:largeTcorr2}
\end{align}
The values of the constant $a_*$, $v_*$, $\nu_*$ and $c_*$ can be obtained analytically
by using the above ansatz in  Eqs.~\eqref{eq:AMF} to \eqref{eq:CoMF}.
We obtain that they solve the following linear equations
\begin{align}
    0 &= \hat\varphi'(1)v_*+{\hat\varphi(1)}a_*   \, , \label{eq:Coeff-asym1}\\
    0 &= \hat\varphi'(1)c_*+2{\hat\varphi(1)}a_*\, ,\\
    0 &= -\hat\varphi'(1) \nu_* -\hat\varphi'(1)\left(\alpha\hat\varphi''(1)-\alpha\hat\varphi'(1)-\alpha(\hat\varphi'(1))^2\right)a_* +2\alpha \hat\varphi(1) \hat\varphi''(1)\, ,\\
    0 &= -\frac{1}{2}c_*-\nu_1c_*-2\nu_* v_1 + 4v_1\alpha\tau^2\, ,\label{eq:Coeff-asym4}
\end{align}
where 
\begin{align}
    v_1&:=\lim_{t\to \infty}(v(t)-1)t\, ,\\
    \nu_1&:= \lim_{t\to \infty}\tilde \nu(t)t\, .
\end{align}
The asymptotic linear behavior predicted by Eqs.~\eqref{eq:largeTcorr1}, \eqref{eq:largeTcorr2},
with the coefficients determined by Eqs.~\eqref{eq:Coeff-asym1}-\eqref{eq:Coeff-asym4}
is plotted in Fig.~\ref{1m_longer}.
We observe good agreement with the numerical integration of 
Eqs.~\eqref{eq:AMF} to \eqref{eq:CoMF}.

\begin{figure}
    \centering
    \includegraphics[width=0.495\linewidth]{ww_scaled.pdf}
    \includegraphics[width=0.495\linewidth]{M_scaled_MF.pdf}
    \caption{\textbf{ econd layer weights and projection on of the first layer weigths onto the latent structure of the data for gradient flow under mean field initialization  on timescales of order $m$.}
    Left: rescaled second layer weights $a(t)/m$ as a function of the rescaled time $t/m$.
    The arrow on the right points 
    at the threshold $\gamma_{\sGF}^*(\alpha,\varphi,\tau)$ for interpolation under gradient flow, see Section
    \ref{sec:LazyInterpolationThreshold}.  Right: projection of the first layer weights on the latent space in the single index  model as a function of rescaled time $t/m$.
     Here, $\hat\varphi(z) = h(z) =(9/10)z+z^3/6$ with $\tau=0.6$ and $\alpha=0.3$. $v=1/\gamma$ 
     in \eqref{eq:Vinfty}.}
    \label{fig:ww_NMF_tm}
\end{figure}

\begin{figure}
    \centering
    \includegraphics[width=0.65\linewidth]{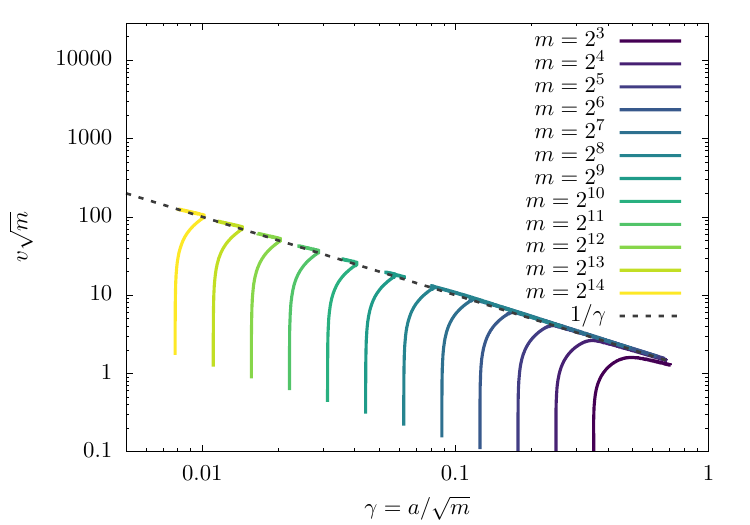}
    \caption{\textbf{Parametric plot of the rescaled projection onto the latent direction $v\sqrt{m}$ against rescaled second layer weights $\gamma = a/\sqrt{m}$.} Same data as in Fig.~\ref{fig:ww_NMF_tm}. 
    Dashed line is $v\sqrt{m} = 1/\gamma$.}
    \label{fig:M_param_gamma}
\end{figure}

\begin{figure}
    \centering
    \includegraphics[width=0.495\linewidth]{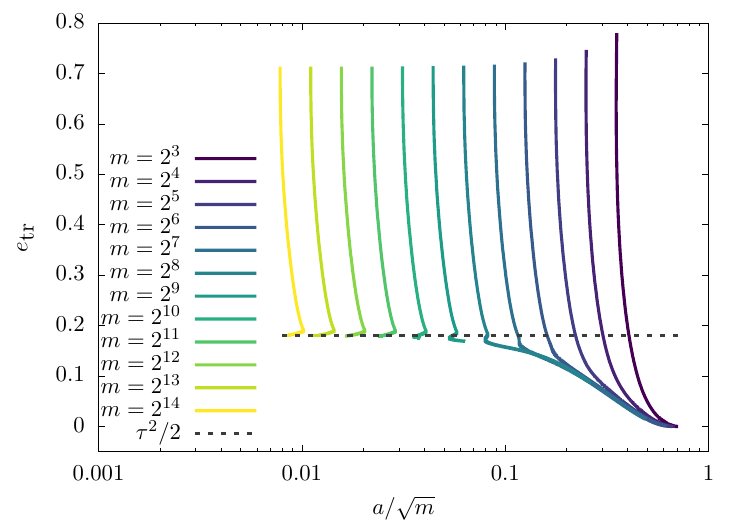}
    \includegraphics[width=0.495\linewidth]{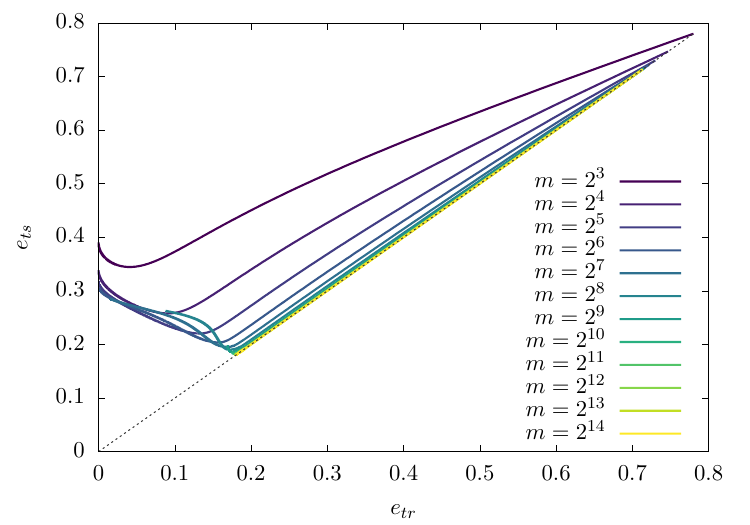}
    \caption{\textbf{Train and test error of gradient flow under mean field initialization, for increasing values of $m$. }
    Left: train error as a function of rescaled weights $a(t)/\sqrt{m}$. Dashed line is the 
    Bayes error $\tau^2/2$. Curves are traversed in time from top to bottom. Right: test error 
    versus train error.  Curves are traversed in time from right to left.   Here $\hat\varphi(z) = h(z) =(9/10)z+z^3/6$ with $\tau=0.6$ and $\alpha=0.3$.}
    \label{fig:TraiTestMF_Overfit}
\end{figure}

\begin{figure}
    \centering
    \includegraphics[width=0.495\linewidth]{e_ts_minus_e_tr_gamma.pdf}
    \includegraphics[width=0.495\linewidth]{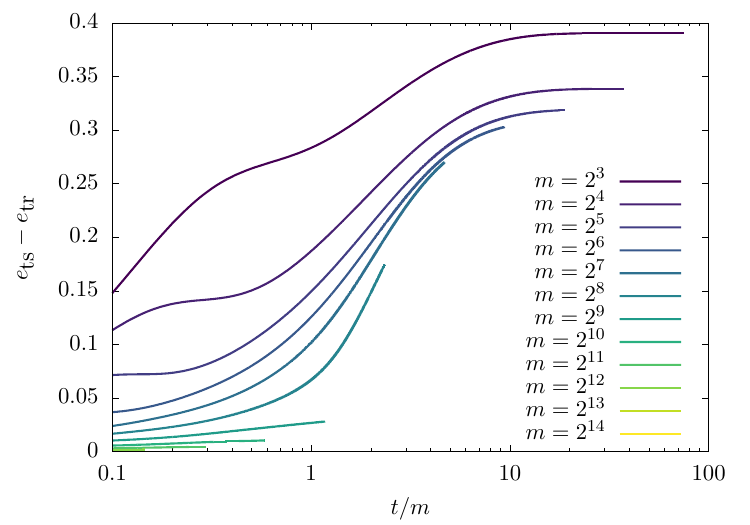}
    \caption{\textbf{The difference between test and train error for the single index data.} Left panel: the difference between test and train error plotted as a function of $a/\sqrt m$ and compared to what is obtained from a model with fixed second layer weights initialized with Lazy scaling. Right panel: the difference between test and train error on timescales of order $m$.  Here, $\hat\varphi(z) = h(z) =(9/10)z+z^3/6$ with $\tau=0.6$ and $\alpha=0.3$.}
    \label{fig:enter-label}
\end{figure}

The above analysis implies that (considering to be definite second layer weights, and projection of first layer weigths onto the latent direction),
for $m\gg 1$, $t\gg 1$,
\begin{align}
    a(t) &= \mff{a}(t) +\frac{1}{m} \big(a_*t+o(t)\big)+ \frac{1}{m}\Delta_a(t,m)\, ,\\
    v(t) &= \mff{v}(t) + \frac{1}{m} \big(v_*t+o(t)\big)+ \frac{1}{m}\Delta_v(t,m)\, ,
\end{align}
where $\lim_{m\to\infty}\Delta_{a/v}(t,m)=0$.
If we neglect the error terms, and assume that this expression holds for $t$
larger than $O(1)$ in $m$, then it indicates that $a(t)$, $v(t)$ differ significantly from the mean field prediction when $t/m$ becomes of order one. 
We expect therefore a third dynamical regime for $t=\Theta(m)$, which will be the object of the next section.


%

%
\subsubsection{Second dynamical regime: $t=\Theta(m)$ and beyond}

As pointed out at the end of the previous section, 
we expect a third dynamical regime when $t=\Theta(m)$.  By this time, the 
stability calculation in the previous section indicates that  second layer 
weights become of order $\sqrt{m}$.
Figure~\ref{fig:ww_NMF_tm} confirms this, and shows that, in the same regime $v(t)$
becomes small. In fact, numerical solution of the \SymmDMFT equations are consistent with $a(t)=\Theta(\sqrt{m})$,
$v(t)=\Theta(1/\sqrt{m})$, and $a(t)v(t) \approx 1$ for $t=\Theta(m)$.

For a  small constant $c$ denote by $t_0(m;c)$ the time at which
$a(t_0(m;c)) = c\sqrt{m}$.
We then expect that the following exists
\begin{align}
\lim_{m\to\infty} \frac{a(t_0(m;c)  + \theta \, w(m))}{\sqrt{m}} &= \mft{\gamma}(\theta)\, ,\\
\lim_{m\to\infty} \bv(t_0(m;c)  + \theta \, w(m)) \sqrt{m} & = \fourth{\bv}(\theta) \, , 
\end{align}
provided $w(m)$ is a suitable function (with $w(m)=O(t_0(m;c))$). The stability analysis in the previous section suggests that 
$t_0(m;c)\le t_*(c)m+o(m)$. Our numerical
solutions do not cover a large enough range of values of $m$ to verify this ansatz, and determine the 
scaling of $w(m)$ with $m$. 
 On the other hand, they indicate that indeed $t_0(m;c) = \Theta(m)$.

Since the second layer weights become of order $\sqrt{m}$ in this dynamical regime,
train and test error start to differ significantly. We expect
\begin{align}
\lim_{m\to\infty}e_{\str}(t_0(m;c)  + \theta \, w(m)) & = \mft{e}_{\str}(\theta)\, ,\\
\lim_{m\to\infty}e_{\str}(t_0(m;c)  + \theta \, w(m)) & = \mft{e}_{\sts}(\theta)\, \, . 
\end{align}
This picture is confirmed by Fig.~\ref{fig:TraiTestMF_Overfit}, which reports train and test error 
as predicted by numerical solutions of the \SymmDMFT equations for increasing values of $m$. 
On the left, we plot the train error as a function of the rescaled second layer weights $\gamma=a/\sqrt{m}$.
We observe that curves for different values of $m$ decrease until they reach the Bayes error $\tau^2$.
On this phase however different curves do not collapse corresponding to the fact that $\gamma$ vanishes.
In the second phase, $\gamma$ grows to be of order one and correspondingly
the train error decreases below the Bayes error: this is the third dynamical regime.
Overfitting takes place at this point. 

In the right frame of 
Fig.~\ref{fig:TraiTestMF_Overfit}, we plot test error versus train error. We observe, again, the two phases emerging for large $m$.
In the first phase train error and test error are closely matched. In the second phase, train error
decreases and test error correspondingly increases. Again, this takes place when $t=\Theta(m)$.

Finally, in Fig.~\ref{fig:enter-label}, we repeat similar plots for the generalization error 
(difference between test and train error). 

When $t/m$ is large, the train error vanishes. We observe from Figure~\ref{fig:ww_NMF_tm}, left frame that,
as $t\to\infty$, rescaled second layer weights reach a finite limit that is close to the interpolation
threshold characterized in Section \ref{sec:LazyInterpolationThreshold}.
Namely
\begin{align}
\lim_{\tau\to\infty}\fourth{\gamma}(\theta) \approx \gamma_{\sGF}^*(\alpha,\varphi,\tau)\, .
\end{align}
%

%
%

\begin{figure}
    \centering
    \includegraphics[width=0.47\linewidth]{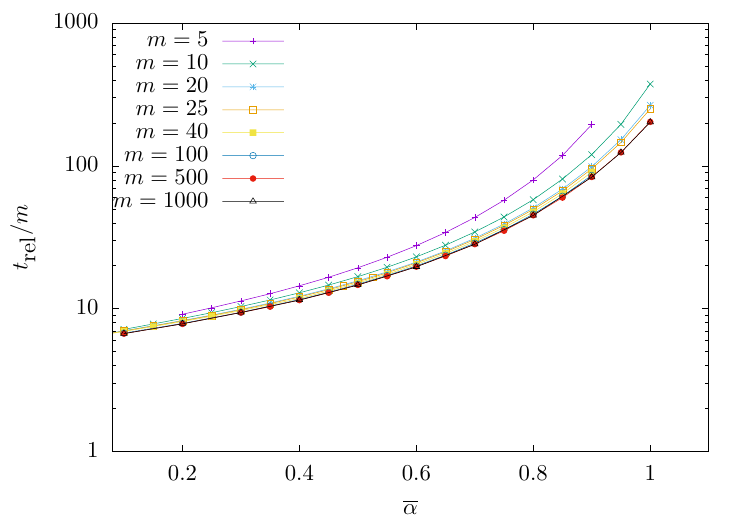}
    \includegraphics[width=0.47\linewidth]{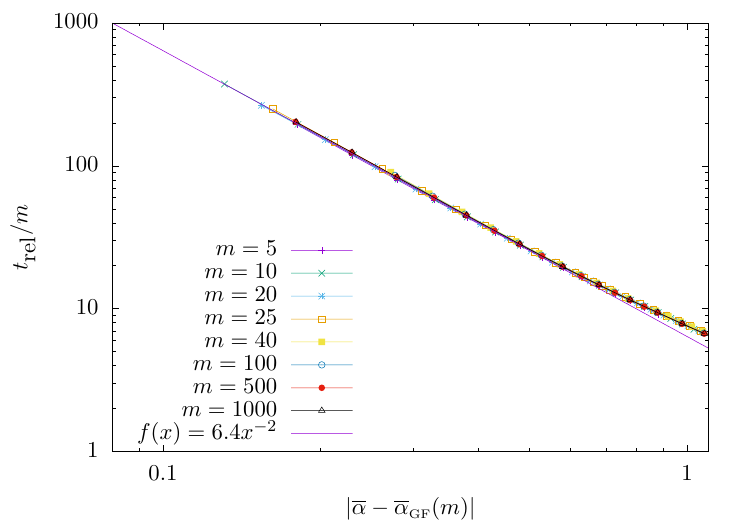}
    \includegraphics[width=0.47\linewidth]{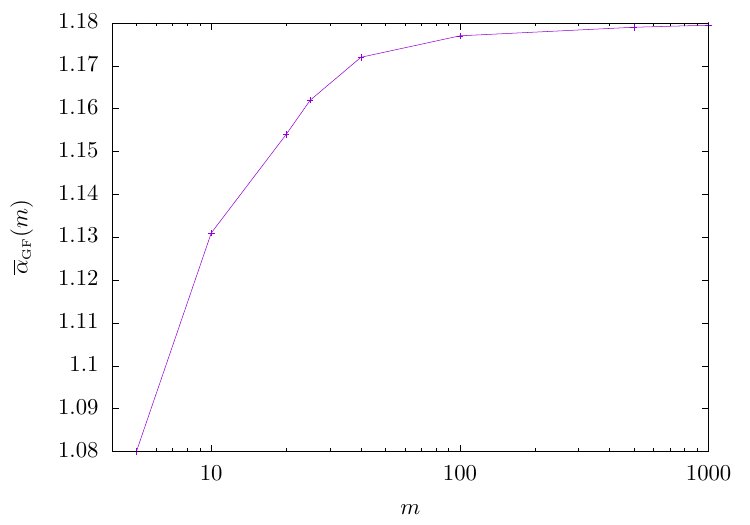}
    \includegraphics[width=0.47\linewidth]{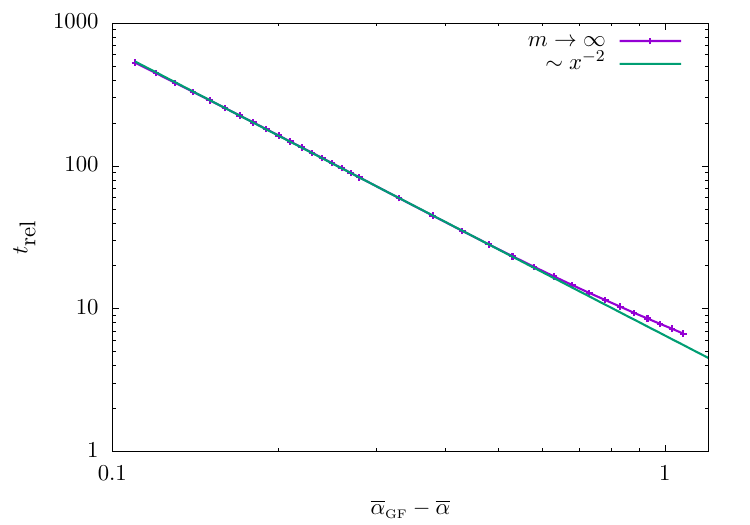}
    \caption{\textbf{The interpolation transition for pure noise data} and a network with second layer weights that do not evolve with time, fixed at $a=1$, see Section~\ref{Sec_NMF_abar}. The noise level is fixed to $\tau=1$ and we considered $h(z)=(3/10)z+z^2/2$. Top left panel: relaxation time ((rate for convergence to vanishing error) 
    for different values of $m$. Top right panel: logarithmic plot of the relaxation time. 
    The value of the algorithmic threshold for different values of $m$ is a fitting parameter. Bottom left panel: 
    values of the algorithmic thresholds as a function of $m$. Bottom right panel: the relaxation time as extracted from the scaling limit of the \SymmDMFT equations in the $m\to \infty$ limit. The algorithmic threshold is in this case $\oalpha_{\sGF}(\infty)\approx 1.18$ which fits well the behavior plotted in the left bottom plot. }
    \label{fig:sat_unsat}
\end{figure}

\section{Dynamics under mean field initialization for  $n/d=\alphabar$ fixed}
\label{Sec_NMF_abar}

\subsection{Interpolation threshold at fixed $a(t)=a_0$}

In this section, we consider an alternative scaling in the large width limit.
As before, we use the \SymmDMFT equations, and therefore study the limit $n,d\to\infty$ 
with $n/d\to \oalpha$.
In the previous sections we studied the large width limit $m\to\infty$ with $\alpha=\oalpha/m$
fixed. In that setting interpolation is only possible when the network complexity
scales, i.e. second-layer weights are $a=\Theta(\sqrt{m})$

Here instead we keep $a(t)=1$ and do not let evolve second-layer weights with GF.
We consider pure noise data, and show that interpolation takes place if $\oalpha<\oalpha_{\sGF}(m)$,
while the train error remains bounded away from zero for $\oalpha>\oalpha_{\sGF}(m)$.
As expected from Gaussian complexity considerations, the threshold $\oalpha_{\sGF}(m)$
has a finite limit as $m\to\infty$. 
In particular, for any  $\alpha>0$, a network with $a$ bounded cannot interpolate pure noise data.

As thorough in Sec.\ref{Sec_lazy_full} we fix $\overline \alpha$ and integrate numerically the \SymmDMFT equations for finite but increasing values of $m$.
We fix the initialization scale $a_0$ and the noise level $\tau$ and change only $\overline \alpha$.

We observe that for $\overline \alpha$ small enough the train error decreases exponentially fast to zero. 
Namely, recalling that $e_{\str}(t;\oalpha) :=\lim_{n,d\to\infty}\hcRisk_n(\ba(t),\bW(t))$, we have that
\begin{align}
\oalpha<\oalpha_{\sGF}(m) \;\;\Rightarrow\;\;    e_{\str}(t;\oalpha) = \exp\{-t/t^*_{\srel}(\oalpha,m) +o(t)\}\, .
\end{align}
However, the relaxation time  time $t^*_{\srel}(\oalpha,m)$ increases as $\oalpha\uparrow \oalpha_{\sGF}(m)$.
Concretely, we define $t_{\srel}(\oalpha,m,c)$ as the infimum time such that $ e_{\str}(t;\oalpha)  \le c$,
where $c$ is some small constant. In practice, we set $c=10^{-7}$.    The results are plotted as a function of $\oalpha$ for several values of $m$ in Fig.~\ref{fig:sat_unsat}, top left plot. 

For each value of $m$ the relaxation time appears to diverge at the  critical point $\oalpha_{\sGF}(m)$ 
as an inverse power of $\oalpha_{\sGF}(m)-\oalpha$, namely:
\begin{equation}
\oalpha\uparrow \oalpha_{\sGF}(m) \;\;\;\Rightarrow\;\;\;
t_{\srel}(\oalpha, m,c)=\frac{L(m,c)}{(\oalpha_{\sGF}(m)-\oalpha)^{\nu}}
\big(1+o(1)\big)\, .\label{eq:DivergenceTimeM}
\end{equation}
The exponent $\nu$ appears to be independent of $m$. We fit this form to our data and extract 
the interpolation thresholds $\oalpha_{\srel}(m)$.  In Fig.~\ref{fig:sat_unsat}, top right,
we plot $t_{\srel}(\oalpha,m,c)/m$ as a function of the gap to this threshold. This plot confirms the form
\eqref{eq:DivergenceTimeM}, with exponent $\nu\approx 2$. Also, the fact that different curves superimpose indicate that
$L(m,c)\approx L_*(c) m$.

The estimated interpolation thresholds $\oalpha_{\sGF}(m)$ are plotted as a function of $m$ 
in the bottom left of Fig.~\ref{fig:sat_unsat}.
These data are consistent with the existence of a finite limit
\begin{align}
    \oalpha_{\sGF}(\infty) = \lim_{m\to\infty} \oalpha_{\sGF}(m)\, , \label{eq:oalphaLIM}
\end{align}
and numerically $\oalpha_{\sGF}(\infty)\approx 1.18$.

In the next subsection, we derive equations describing the $m\to\infty$ limit for $\oalpha=O(1)$,
$a=O(1)$ fixed. Studying these equations yields further support to Eq.~\eqref{eq:oalphaLIM}.

\subsection{Infinite width limit at fixed $\overline \alpha$}\label{Sec_minfty_abar_fixed}

\begin{figure}
    \centering
    \includegraphics[width=0.495\linewidth]{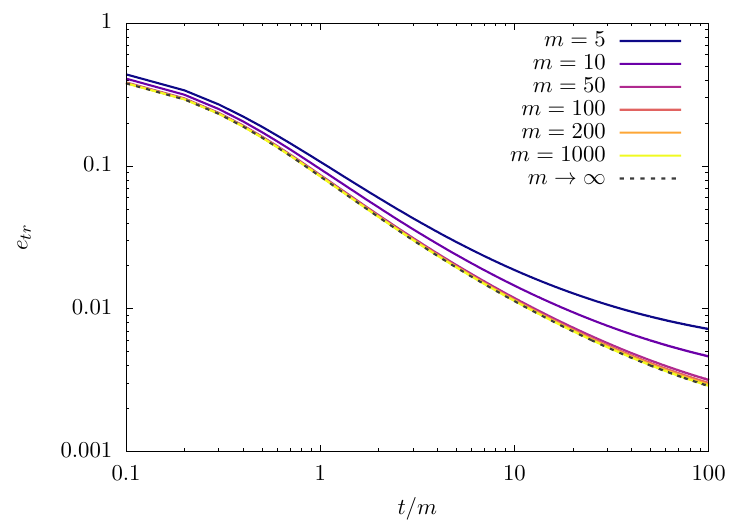}
    \includegraphics[width=0.495\linewidth]{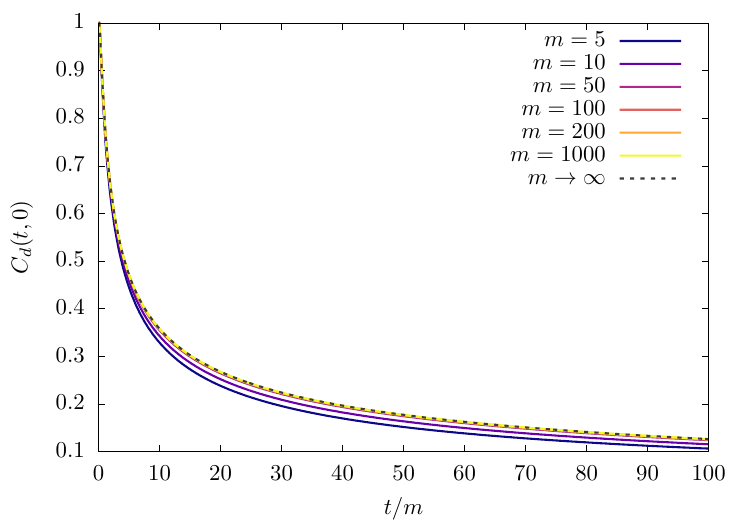}
    \caption{\textbf{heck of the convergence of the numerical solution of the \SymmDMFT for $\overline \alpha$ fixed to the scaling solution for $m\to \infty$.} The left panel shows the behavior of the train error while the right panel shows the behavior of the correlation $C_d(t,0)$. Both panels refer to a model where the teacher is pure noise with $\tau=1$ and the student is made of of neurons whose covariance structure is given by $h(z)=(3/10)z+z^2/2$.}
    \label{abar_largem}
\end{figure}

In order to study the limit $m\to\infty$  at fixed $\overline \alpha$, we discuss the
limit of the \SymmDMFT equations when $m\to \infty$. 
As we have seen previously, the relaxation time of the train error is proportional to $m$. This is clearly visible in Fig.~\ref{fig:sat_unsat}-top/left. This suggests that for $m\to \infty$, dynamics takes place on timescales of order $m$.
Therefore we propose the following asymptotic ansatz
\begin{align}
        mR_o(tm,sm)&= \tilde R^{\overline \alpha}_o(t,s)+o_m(1)\, , & C_o(tm,sm)&= \tilde C^{\overline \alpha}_o(t,s)+o_m(1)\, ,\\
        R_d(tm,sm)&= \tilde R^{\overline \alpha}_d(t,s) +o_m(1)\, ,& C_d(tm,sm)&= \tilde C^{\overline \alpha}_d(t,s)+o_m(1)\, ,\\
        m\nu(tm)&=\tilde \nu^{\overline \alpha}(t)+o_m(1)\, ,
\end{align}        
which defines a set of functions, $\tilde R^{\overline \alpha}_d$, $\tilde C^{\overline \alpha}_d$, $\tilde R^{\overline \alpha}_o$, $\tilde C^{\overline \alpha}_o$ and $\tilde \nu^{\overline \alpha}$.
We now describe the equations that these scaling functions satisfy satisfy. 
First we define $\tilde C_A^{\overline \alpha}$ and $\tilde R_A^{\overline \alpha}$ as the solution of
\begin{equation}
    \begin{split}
        \delta(t-t')&=\int_{t'}^t  \left[\delta(t-s)+\tilde \Sigma_R(t,s)\right]\tilde R^{\overline \alpha}_A(s,t' \de s\, ,)\\
        0&=\int_0^{t}  \left[\delta(t-s)+\tilde \Sigma_R(t,s)\right]\tilde C^{\overline \alpha}_A(t',s)\, \de s+
        \int_0^{t'} \de s \tilde \Sigma_C(t,s)\tilde R^{\overline \alpha}_A(t',s)\, \de s\, ,
    \end{split}
\end{equation}
where
\begin{equation}
    \begin{split}
        \tilde \Sigma_R(t,s)&=h'(\tilde C^{\overline \alpha}_{d}(t,s))\tilde R^{\overline \alpha}_{d}(t,s)+h'(\tilde C^{\overline \alpha}_{o}(t,s))\tilde R^{\overline \alpha}_{o}(t,s)\\
        \tilde \Sigma_C(t,s)&=\tau^2 +h(\tilde C^{\overline \alpha}_o(t,s))\, .
    \end{split}
\end{equation}
Then we define the limit memory kernels:
\begin{equation}
    \begin{split}
        \tilde M_R^{(d)}(t,s)&=\overline \alpha \tilde C_A(t,s)h''(\tilde C^{\overline \alpha}_d(t,s))\tilde R^{\overline \alpha}_d(t,s)\, ,\\
        \tilde M_C^{(d)}(t,s)&=\overline \alpha \tilde C^{\overline \alpha}_A(t,s)h'(\tilde C^{\overline \alpha}_d(t,s))\, ,\\
        \tilde M_R^{(o)}(t,s)&=\overline \alpha\left[ \tilde C_A(t,s)h''(\tilde C^{\overline \alpha}_o(t,s))\tilde R^{\overline \alpha}_o(t,s)+ \tilde R^{\overline \alpha}_A(t,s)h'(\tilde C^{\overline \alpha}_o(t,s))\right]\, ,\\
        \tilde M_C^{(o)}(t,s)&=\overline \alpha \tilde C^{\overline \alpha}_A(t,s)h'(\tilde C^{\overline \alpha}_o(t,s))\, .
    \end{split}
\end{equation}
Substituting the above ansatz in the \SymmDMFT equations and matching the leading order 
terms, we get the following equations that determine $\tilde R^{\overline \alpha}_d$, $\tilde C^{\overline \alpha}_d$, $\tilde R^{\overline \alpha}_o$, $\tilde C^{\overline \alpha}_o$ and $\tilde \nu^{\overline \alpha}$:
\begin{align}
        \partial_t \tilde C^{\overline \alpha}_d(t,t')=&-\tilde \nu^{\overline \alpha}(t)\tilde C^{\overline \alpha}_d(t,t')-\int_0^t\left[\tilde M_R^{(d)}(t,s) \tilde C^{\overline \alpha}_d(t',s)+\tilde M_R^{(o)}(t,s)\tilde C^{\overline \alpha}_o(t',s)\right]\,  \de s \nonumber\\
        &-\int_0^{t'} \left[\tilde M_C^{(d)}(t,s)\tilde R^{\overline \alpha}_d(t',s)+\tilde M_C^{(o)}(t,s)\tilde R^{\overline \alpha}_o(t',s)\right]\, \de s \, ,\label{RS_point}\\
        \partial_t \tilde C^{\overline \alpha}_o(t,t')=&-\tilde \nu^{\overline \alpha}(t)\tilde C^{\overline \alpha}_o(t,t')-\int_0^t \left[\tilde M_R^{(d)}(t,s)+\tilde M_R^{(o)}(t,s)\right]\tilde C^{\overline \alpha}_o(t',s)\, \de s\nonumber\\
        &-\int_0^{t'} \left[\tilde R^{\overline \alpha}_o(t',s)+\tilde R^{\overline \alpha}_d(t',s)\right]\tilde M_C^{(o)}(t,s)\, \de s\, ,\\
        \partial_t \tilde R^{\overline \alpha}_d(t,t')&=-\tilde \nu^{\overline \alpha}(t)\tilde R^{\overline \alpha}_d(t,t')+\delta(t-t')-\int_{t'}^t\de s \tilde M_R^{(d)}(t,s)\tilde R^{\overline \alpha}_d(s,t')\, ,
\end{align}
\begin{align}
        \partial_t \tilde R^{\overline \alpha}_o(t,t')=&-\tilde \nu^{\overline \alpha}(t)\tilde R^{\overline \alpha}_o(t,t')-\int_{t'}^t\left[\tilde M_R^{(d)}(t,s)\tilde R^{\overline \alpha}_o(s,t')+\tilde M_R^{(o)}(t,s)\tilde R^{\overline \alpha}_d(s,t')\right.\nonumber\\
        &\left.+\tilde M_R^{(o)}(t,s)\tilde R^{\overline \alpha}_o(s,t')\right] \,\de s\, ,\\
        \tilde \nu^{\overline \alpha}(t)=&-\int_0^t\de s \left[\tilde M_R^{(d)}(t,s)\tilde C_d(t,s)+\tilde M_R^{(o)}(t,s)\tilde C^{\overline \alpha}_o(t,s)\right]\, \de s\nonumber\\
        &-\int_0^t \left[\tilde M_C^{(d)}(t,s)\tilde R^{\overline \alpha}_d(t,s)+\tilde M_C^{(o)}(t,s)\tilde R^{\overline \alpha}_o(t,s)\right]\,\de s\, .\label{oalpha_nu}
\end{align}
These are to be solved with boundary condition
\begin{align}
    \tilde C^{\overline \alpha}_o(0,0)&=0  \,, &  \tilde R^{\overline \alpha}_o(0,0)&=0\, ,\\
    \tilde C^{\overline \alpha}_d(0,0)&=1 \, ,  &  \tilde R^{\overline \alpha}_d(0^+,0)&=1\:.
\end{align}
The scaling behavior of the train error is then given by
\begin{equation}
    \lim_{m\to \infty}e_{\str}(t)=-\frac 12 \tilde C_A^{\overline \alpha}(t,t)=: e_{\str}^{\overline \alpha}(t)\,.
\end{equation}

In order to test the accuracy of the asymptotic analysis developed in this sections, 
we solved numerically the \SymmDMFT equations for increasing values of $m$ and compare the
results to the numerical integration of Eqs.~\eqref{RS_point}, \eqref{oalpha_nu}  presented in this section.
Some results of this comparison are presented in Fig.~\ref{abar_largem}, which shows good agreement between finite-$m$
curves and $m\to\infty$ limit.

The solution of   Eqs.~\eqref{RS_point}, \eqref{oalpha_nu}  provides another route to estimate 
the large-$m$ interpolation threshold $\oalpha_{\sGF}(\infty)$ at fixed $a(t) = 1$.
Namely, we solve the equations numerically and extract the 
$t_{\srel}(\oalpha,\infty,c)$, which is defined analogously to above. 
 We then fit the divergence of $t_{\srel}(\oalpha,\infty,c)$ at $\oalpha_{\sGF}(\infty)$ 
 according to Eq.~\eqref{eq:DivergenceTimeM}. We obtain $\oalpha_{\sGF}(\infty)\approx 1.18$, in agreement with 
the threshold obtained by extrapolating the finite-$m$ thresholds $\oalpha_{\sGF}(m)$.  In the bottom right plot
 of Fig.~\ref{fig:sat_unsat} we plot  $t_{\srel}(\oalpha,\infty,c)$  as function of 
  $\oalpha_{\sGF}(\infty) -\oalpha$. This confirms  the behavior of Eq.~\eqref{eq:DivergenceTimeM} with $\nu\approx 2$.

We conclude by emphasizing that, throughout this section $\alpha(t) = 1$ and $\tau=1$ were fixed.
If we generalize to arbitrary  $\alpha(t) = a_0$ and arbitrary $\tau>0$,  the threshold $\oalpha_{\sGF}(m)$ will
of course on these quantities through the ratio $a_0/\tau$.

%
%
\section{Details about SGD simulations}
\label{sec:SGD}

In this appendix we provide some details about the numerical simulations 
with stochastic gradient descent (SGD) presented in Figures
\ref{fig:NoiseTraining}, \ref{fig:SingleIndexMF}.

We generate data according to the pure noise model $y_i = \eps_i$
(Fig.~\ref{fig:NoiseTraining}), $y_i = \varphi(\bw_*^{\sT}\bx_i)+\eps_i$
(Fig.~\ref{fig:SingleIndexMF}), $i\le n$. 
We learn the two-layer network of Eq.~\eqref{eq:TwoLayerFirst},
see below for the class definition.
\begin{lstlisting}[language=Python, label={lst:opt}]
class Net(nn.Module):
    def __init__(self, a, m, d):
        super().__init__()
        self.m = m
        self.lin1 = nn.Linear(d,m,bias=False)
        self.lin1.weight.data = (1/np.sqrt(d))*torch.randn((m,d))
        self.lin2 = nn.Linear(m,1,bias=False)
        self.lin2.weight.data[0,:] = a
        self.act = Myact()
        self.project()
    def forward(self, x ):
        x1 = self.act(self.lin1(x))
        return self.lin2(x1)/self.m
    def project(self, epsilon):
        row_norms = torch.norm(self.lin1.weight.data, dim=1, keepdim=True)
        row_norms = torch.clamp(row_norms, min=epsilon)
        self.lin1.weight.data = self.lin1.weight.data/row_norms
\end{lstlisting}

As shown in this code, we use the initialization  
\begin{align}
\big(\ba_0,\bW_0\big)& = \bP_{\Ball}\big(\overline{\ba}_0,\overline{\bW}_0\big)\, ,\\
\overline{\ba}_0 & = (a_0,\dots, a_0)\, ,\;\;\; (W_{0,ij})_{i\le m, j\le d}
\sim\normal(0,1/d)\, .
\end{align}
where $\bP_{\Ball}$ projects first layer weights to the unit ball:
\begin{align}
\bP_{\Ball}\Big(\ba,(\bw_1,\dots,\bw_m)\Big)  = 
\left(\ba,\Big(\frac{\bw_1}{\|\bw_1\|\wedge 1},\dots,
\frac{\bw_m}{\|\bw_m\|\wedge 1}\Big) \right)\, .
\end{align}

We use the standard SGD iteration without weight decay and constant stepsize
$\eta$, and batch size $b$:
\begin{align}
    \overline{\btheta}_{k+1} &= \btheta_k 
    -\eta \nabla \hcRisk_{S(k)}(\btheta_k)\, ,\;\;\;\;
    \hcRisk_{S}(\btheta) = \frac{1}{2|S|}\sum_{i\in S} \big(y_i-f(\bx_i;\btheta)\big)^2\, , \\
    \btheta_{k+1} & = \bP_{\Ball} (\obtheta_{k+1})\, .
\end{align}
The optimizer is defined in the code below
\begin{lstlisting}[language=Python, label={lst:opt}]
 optimizer = optim.SGD(net.parameters(), lr=lr, momentum=0., weight_decay=0.)
 lambda_step = lambda epoch: 1
 scheduler = torch.optim.lr_scheduler.LambdaLR(optimizer, lr_lambda=lambda_step)
\end{lstlisting}
In the simulations of Figures \ref{fig:NoiseTraining},
and \ref{fig:SingleIndexMF} we use batch size $b=100$ and step size
$\eta=0.1$. Each symbol reports the average of $N_{\text{sim}}= 10$
simulations.

%
\section{Lower bounding the overfitting timescale}
\label{sec:LowerBound_proof}

Throughout this appendix we use $t$ to denote the rescaled
time $\ret$ introduced in Section \ref{sec:LowerBound}.
\subsection{Proof of Theorem 3.1}

By computing the derivative $\partial_{a_i}\hcRisk_n(\ba(t),\bW(t))$, we get 
\begin{align*}
\frac{\de\phantom{t}}{\de t} \big|a_{\ell}(t)\big| & \le 
\left|\frac{1}{n}\sum_{i=1}^n\big(y_i-f(\bx_i;\ba(t),\bW(t)\big)\sigma(\bw_{\ell}(t)^{\sT}\bx_i)\right|\\
&\le  \sqrt{2\hcRisk_n(\ba(t),\bW(t))}\cdot\sqrt{\frac{1}{n}\sum_{i=1}^{n}\sigma(\bw_{\ell}^{\sT}\bx_i)^2}\\
&\le 4L\sqrt{2\hcRisk_n(\ba(0),\bW(0))}\cdot\sqrt{\frac{1}{n}\sum_{i=1}^{n}(1+(\bw_{\ell}(t)^{\sT}\bx_i)^2)}\\
&\le 10L\sqrt{2\hcRisk_n(\ba(0),\bW(0))}\, ,
\end{align*}
where, for $n\ge d$, 
the last inequality holds with probability at least $1-2\exp(-cn)$
(for some universal $c>0$)
by standard upper bounds on the norm of random matrices 
 \cite{vershynin2018high}.
Further
\begin{align*}
\sqrt{2n\hcRisk_n(\ba(0),\bW(0))}  &= \Big\|\by - \frac{1}{m}\sum_{i=1}^ma_i
\sigma(\bX\bw_{i})\Big\|\\
&\stackrel{(a)}{\le} \|\by\| + a_0 \max_{\ell\le m}\big\|\sigma(\bX\bw_{\ell}(0))\big\|\\
&\stackrel{(b)}{\le} \tau\|\bg\| + \|\varphi(\bX\bU)\|+a_0 \max_{\ell\le m}\big\|\sigma(\bX\bw_{\ell}(0))\big\|\\
&\le \tau \|\bg\| + L\|\bX\bU\|+ \sqrt{n}|\varphi(0)|+ a_0 L\big\|\bX\big\|_{\op}
+a_0\sqrt{n}|\sigma(0)|\, ,
\end{align*}
where in $(a)$ it is understood that $\sigma$ is applied entrywise to 
$\bX\bw_i\in\reals^n$ and in $(b)$ we have $\bg\sim\normal(\bzero,\id_n)$,
and $\varphi$ is applied row-wise to $\bX\bU\in\reals^{n\times k}$.
By
using standard concentration on the norm of random matrices, 
also with probability $1-\exp(-cn)$, we have (for $m\le n$)
\begin{align*}
\sqrt{2\hcRisk_n(\ba(0),\bW(0))}  &= C(\tau +\sqrt{k}+a_0L)\, .
\end{align*}

Summarizing the above bounds, we have
\begin{align*}
\frac{\de\phantom{t}}{\de t} \big|a_{\ell}(t)\big| & \le a_1\, ,
\end{align*}
which implies the first claim by integration.

To prove the second claim, we consider the following sets of parameters 
$\cW_{m,d}^{\infty}(\oa)\subseteq \cW_{m,d}(\oa)$
(which will also prove useful in the next section)
\begin{align}
\cW_{m,d}(\oa)&:= \Big\{(\ba,\bW)\in\reals^m\times\reals^{m\times d}: \; \frac{\|\ba\|_1}{m}\le\oa,\; \|\bw_i\|_2=1\;\;\forall i\le m  \Big\}\, ,\\
\cW_{m,d}^{\infty}(\oa)&:= \Big\{(\ba,\bW)\in\reals^m\times\reals^{m\times d}: \; \|\ba\|_{\infty}\le\oa,\; \|\bw_i\|_2=1\;\;\forall i\le m  \Big\}\, .
\end{align}

The second claim follows in turn if we prove that
there exists a universal constant $C$ such that
\begin{align}
\sup_{(\ba,\bW)\in \cW_{m,d}(\oa)}  
\big|\hcRisk_n(\ba,\bW)-\cRisk(\ba,\bW)\big|\le  
 C(L^2\oa^2+\tau^2)\sqrt{\frac{d}{n}}\label{eq:RiskUnifDev}
\, .
\end{align}
This is a standard estimate, that we reproduce for the readers' convenience.

We begin by bounding the expectation of the supremum 
by symmetrization and contraction inequalities.
Letting $(\xi_i)_{i\le n}\sim_{iid}\Unif(\{+1,-1\})$, we have
\begin{align*}
\E&\sup_{(\ba,\bW)\in \cW_{m,d}(\oa)}  
\big|\hcRisk_n(\ba,\bW)-\cRisk(\ba,\bW)\big|\le
\E\sup_{(\ba,\bW)\in \cW_{m,d}(\oa)} \frac{1}{n} \sum_{i=1}^{n}\xi_i
\big(y_{i}-f(\bx_i;\ba,\bW)\big)^2\\
&\le 2\E\sup_{(\ba,\bW)\in \cW_{m,d}(\oa)} \frac{1}{n} \sum_{i=1}^{n}\xi_i
y_{i} f(\bx_i;\ba,\bW) +\E\sup_{(\ba,\bW)\in \cW_{m,d}(\oa)} \frac{1}{n} \sum_{i=1}^{n}\xi_i f(\bx_i;\ba,\bW)^2\\
&=: 2E_1+E_2\, .
\end{align*}
We begin by bounding $E_1$:
\begin{align*}
E_1=&\E\sup_{(\ba,\bW)\in \cW_{m,d}(\oa)} \sum_{j=1}^m
\frac{a_j}{m}\frac{1}{n} \sum_{i=1}^{n}\xi_i y_i\sigma(\bw_j^{\sT}\bx_i)\\
\le &\oa\,  \E\sup_{\|\bw\|=1} \frac{1}{n} \sum_{i=1}^{n}\xi_i y_i\sigma(\bw^{\sT}\bx_i)\\
\stackrel{(a)}{\le} &\oa\, L \E\sup_{\|\bw\|=1} \frac{1}{n} \sum_{i=1}^{n}\xi_i (1+|y_i|)\bw^{\sT}\bx_i\\
\le &\oa\, L \E\Big\{\Big\|\frac{1}{n} \sum_{i=1}^{n}\xi_i (1+|y_i|)\bx_i\Big\|\Big\}\\
\le & C \oa\, L(L+\tau)\sqrt{\frac{d}{n}}\, ,
\end{align*}
where in $(a)$ we applied the contraction inequality of \cite{maurer2016vector}
to the function $\psi_i(t) = y_i \sigma(t/(|y_i|+1))$.

We next bound term $E_2$:
\begin{align*}
E_2=&\E\sup_{(\ba,\bW)\in \cW_{m,d}(\oa)} \sum_{j,l=1}^m
\frac{a_ja_l}{m^2}\frac{1}{n} \sum_{i=1}^{n}\xi_i \sigma(\bw_j^{\sT}\bx_i)\sigma(\bw_l^{\sT}\bx_i)\\
&\le \oa^2\E\sup_{\bw,\tbw\in\S^{d-1}}\frac{1}{n} \sum_{i=1}^{n}\xi_i \sigma(\bw^{\sT}\bx_i)\sigma(\tbw^{\sT}\bx_i)\\
&\stackrel{(b)}{\le} CL^2\oa^2\E\sup_{\bw\in\S^{d-1}}\frac{1}{n} \sum_{i=1}^{n}\xi_i \bw^{\sT}\bx_i\\
&\le CL^2\oa^2\sqrt{\frac{d}{n}}\, ,
\end{align*}
where inequality $(b)$ follows by applying the contraction inequality of 
\cite{maurer2016vector} to $\psi(t_1,t_2) = \sigma(t_1)\sigma(t_2)$
which is $CL^2$-Lipschitz because
 $\|\sigma\|_{\sLip}, \|\sigma\|_{\infty}\le L$.

Summarizing, we proved that
\begin{align}
\E&\sup_{(\ba,\bW)\in \cW_{m,d}(\oa)}  
\big|\hcRisk_n(\ba,\bW)-\cRisk(\ba,\bW)\big|\le  C(L^2\oa^2+\tau^2)\sqrt{\frac{d}{n}}\, .\label{eq:LastExpectation}
\end{align}

 In order to complete the proof of Eq.~\eqref{eq:RiskUnifDev},
 we will show that the supremum concentrates around its expectation.
 For fixed $(\ba,\bW)\in  \cW_{m,d}(\oa)$, we have
 \begin{align*}
 \big| f(\bx;\ba,\bW)-f(\bx';\ba,\bW)\big| & \le L\oa\|\bx-\bx'\|_2\, ,\\
 \big| \varphi(\bU^{\sT}\bx)-\varphi(\bU^{\sT}\bx')\big\| & \le L\|\bx-\bx'\|_2\, .
 \end{align*}
 We write  $\hcRisk_n(\bX;\ba,\bW)$ to emphasize the dependence of the risk on $\bX$ 
 Letting $r(\bx_i;\ba,\bW) =  \varphi(\bU^{\sT}\bx)-f(\bx;\ba,\bW)$, 
  we have
 \begin{align*}
\nabla_{\bx_i} \hcRisk_n(\bX;\ba,\bW) &= \frac{1}{n}\big(\eps_i+r(\bx_i;\ba,\bW)\big)
\nabla_{\bx_i}r(\bx_i;\ba,\bW) \, ,\\
\Rightarrow \big\|\nabla_{\bx_i} \hcRisk_n(\bX;\ba,\bW)\big\|&\le \frac{C}{n}
(|\eps_i|+L\oa) L\oa\, .
 \end{align*}
 Hence
 \begin{align*}
\big\|\nabla_{\bX} \hcRisk_n(\bX;\ba,\bW)\big\|&\le \frac{C}{\sqrt{n}} 
L\oa \Big(L\oa+\frac{\|\beps\|}{\sqrt{n}}\Big)\\
&\le \frac{C'}{\sqrt{n}} L\oa (L\oa+\tau)\, ,
 \end{align*}
 where the last inequality holds on an event that has probability at least $1-e^{-n}$.
 Defining $Z_{n,d,m}(\oa):=\sup_{(\ba,\bW)\in \cW_{m,d}(\oa)}  
\big|\hcRisk_n(\ba,\bW)-\cRisk(\ba,\bW)\big|$, Borell inequality
yields
 \begin{align*}
     &\prob\Big\{\big|Z_{n,d,m}(\oa)-\E Z_{n,d,m}(\oa)\big|\ge B\, t \Big\}\le 2\,
     e^{-nt^2}+e^{-n}\, ,\\
     &B: = C'' L\oa (L\oa+\tau)\, .
 \end{align*}
 Together with Eq.~\eqref{eq:LastExpectation}, we thus obtain
 that the following holds with probability $1-2e^{-t}-e^{-n}$
\begin{align*}
\E&\sup_{(\ba,\bW)\in \cW_{m,d}(\oa)}  
\big|\hcRisk_n(\ba,\bW)-\cRisk(\ba,\bW)\big|\le  C(L^2\oa^2+\tau^2)\sqrt{\frac{d}{n}}
+ C(L^2\oa^2+\tau^2)\sqrt{\frac{t}{n}}\, .
\end{align*}
This yields the desired claim.
 
\subsection{Proof of Theorem 3.2}

We introduce the notations:
\begin{align}
\bg^{\bw}_{n,\ell}(\ba,\bW) &:=  \frac{m}{|a_{\ell}|}
\big[\nabla_{\bw_{\ell}} \hcRisk_n(\ba,\bW)-\nabla_{\bw_{\ell}} \cRisk(\ba,\bW)
\big]\, ,\\
g^a_{n,\ell}(\ba,\bW) &:=  m
\big[\nabla_{a_{\ell}} \hcRisk_n(\ba,\bW)-\nabla_{a_{\ell}} \cRisk(\ba,\bW)
\big]\, .
\end{align}

We begin by establishing a uniform convergence lemma.
\begin{lemma}\label{lemma:Uniform}
Under the data distribution of Section \ref{sec.setting}, assume $\|\varphi\|_{\infty}\le L$
and the activation function to be bounded differentiable with  
Lipschitz continuous first derivative $\|\sigma\|_{\infty},\|\sigma'\|_{\infty},\|\sigma'\|_{\sLip}\le L$.
Then there exists a universal constant $C_1$, and a constant $c_0>0$ 
dependent on $L, \tau,\alpha$ 
such that,
with probability at least $1-2\exp(-nc_0)$,
\begin{align}
&\sup_{(\ba,\bW)\in\cW_{m,d}(\oa)} \max_{\ell\le m}
\big\|\bg^{\bw}_{n,\ell}(\ba,\bW)\big\|\le C
(L^2\oa+\tau^2)\sqrt{\frac{d}{n}\log(ne/d)}\, ,\label{eq:UniformGradW}\\
&\sup_{(\ba,\bW)\in\cW_{m,d}(\oa)} \max_{\ell\le m}
\big|g^{a}_{n,\ell}(\ba,\bW)\big|\le C(L^2\oa+\tau^2)\sqrt{\frac{d}{n}}\, .
\label{eq:UniformGradA}
\end{align}
\end{lemma}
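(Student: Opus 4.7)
My plan is to prove Lemma \ref{lemma:Uniform} by combining Rademacher symmetrization with a concentration-of-measure bound. The first step is to rewrite the gradients in a convenient form. Direct differentiation gives
\begin{align*}
g^a_{n,\ell}(\ba,\bW) = -\alpha\Big\{\tfrac{1}{n}\sum_{i=1}^n\big(y_i-f(\bx_i;\ba,\bW)\big)\sigma(\bw_\ell^\sT\bx_i) - \E\big[(y-f(\bx;\ba,\bW))\sigma(\bw_\ell^\sT\bx)\big]\Big\},
\end{align*}
and, after testing along $\bu\in\S^{d-1}$,
\begin{align*}
\bu^\sT\bg^{\bw}_{n,\ell}(\ba,\bW) = -\sign(a_\ell)\,\alpha\Big\{\tfrac{1}{n}\sum_{i=1}^n\big(y_i-f(\bx_i;\ba,\bW)\big)\sigma'(\bw_\ell^\sT\bx_i)(\bu^\sT\bx_i)-\E[\cdots]\Big\}.
\end{align*}
It then suffices to uniformly control each of these empirical processes over $(\ba,\bW)\in \cW_{m,d}(\oa)$, $\ell\in[m]$, and (for the $\bw$-gradient) $\bu\in\S^{d-1}$.

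The second step is to bound the expected suprema via Rademacher symmetrization and the vector contraction inequality of \cite{maurer2016vector}. I would split each summand into a signal piece involving $y_i$ and a network piece involving $f(\bx_i;\ba,\bW)=m^{-1}\sum_j a_j\sigma(\bw_j^\sT\bx_i)$, using the constraint $\|\ba\|_1/m\le\oa$ to pull the weights out in $\ell_\infty$. This reduces the problem to controlling the pairwise processes
\begin{align*}
V(\bw,\bw'):=\tfrac{1}{n}\sum_i\sigma(\bw^\sT\bx_i)\sigma(\bw'^\sT\bx_i)-\E[\cdots],\quad W(\bw):=\tfrac{1}{n}\sum_i y_i\sigma(\bw^\sT\bx_i)-\E[\cdots],
\end{align*}
together with their analogues carrying the extra factor $\sigma'(\bw^\sT\bx_i)(\bu^\sT\bx_i)$ in the vector-valued case. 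Boundedness of $\sigma$ and $\sigma'$ reduces these to linear Rademacher complexities on $\S^{d-1}$ by contraction, which are of order $\sqrt{d/n}=1/\sqrt{\alpha m}$. The additional supremum over $\bu\in\S^{d-1}$ together with the maximum over $\ell\in[m]$ contributes the $\sqrt{\log(2m)}$ factor via a standard $\eps$-net argument on the sphere combined with the Gaussian maximal inequality.

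The third step is to promote the expectation bounds to high-probability bounds. For $g^a_{n,\ell}$ the centered summands are uniformly bounded by $C(L^2\oa+L\tau|\eps_i|)$, hence sub-Gaussian, so the bounded differences / Talagrand inequality yields deviations on the scale of the Rademacher bound with probability $1-2e^{-cn}$. For $\bg^{\bw}_{n,\ell}$ the summands are products of bounded factors with the $\normal(0,1)$-valued $\bu^\sT\bx_i$, hence sub-exponential; here I would apply a Bernstein-type concentration (two-regime), again with exponentially small failure probability at the scale $n\propto m d$. Assembling everything and absorbing the prefactor $\alpha$ into $C_\alpha=C_1(\alpha\vee 1)$, the noise part produces the $\tau^2$ term and the network self-interaction produces the $L^2\oa$ term in the claimed bounds \eqref{eq:UniformGradA}--\eqref{eq:UniformGradW}.

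The main obstacle I expect is the $\bw$-gradient estimate \eqref{eq:UniformGradW}: the unbounded factor $\bu^\sT\bx_i$ makes the underlying empirical process sub-exponential rather than sub-Gaussian, so the Talagrand-type tail estimate must be combined carefully with the $\eps$-net on $\S^{d-1}$ and the union bound over $\ell$ in order to produce the tight $\sqrt{\log(2m)/m}$ scaling while preserving failure probability $2e^{-c_0 n}$ (rather than merely $2e^{-c_0 d}$), with the constant $c_0$ allowed to depend on $L,\tau,\alpha$ but not on $m$ or $\oa$.
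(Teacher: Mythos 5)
Your overall architecture (symmetrization, vector-contraction via Maurer's inequality, then concentration around the mean) matches the paper's, but you have a genuine gap in the control of the expected supremum for the $\bw$-gradient, and you misattribute where the $\sqrt{\log(2m)}$ factor comes from.

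The obstacle you flag is real — the factor $\bu^{\sT}\bx_i$ makes the relevant composite $\phi(t_1,t_2,t_3)=\sigma(t_1)\sigma'(t_2)\,t_3$ unbounded and hence \emph{not} globally Lipschitz, so the vector contraction inequality cannot be applied directly. Your step "boundedness of $\sigma$ and $\sigma'$ reduces these to linear Rademacher complexities by contraction" silently assumes that it can. The paper resolves this with a truncation argument: it replaces $t_3$ by the clipped $\eta_M(t_3)=t_3\bfone_{|t_3|\le M}\pm M\bfone_{|t_3|>M}$, observes that $\phi_M(t_1,t_2,t_3)=\sigma(t_1)\sigma'(t_2)\eta_M(t_3)$ is $CL^2M$-Lipschitz, applies contraction to get a bound of order $L^2\oa\alpha M\sqrt{d/n}=C_\alpha L^2\oa\,M/\sqrt{m}$, bounds the tail piece with $\sup_{\bu}\E|\overline\eta_M(\bu^{\sT}\bx)|\lesssim e^{-M^2/4}$ using Gaussianity of $\bu^{\sT}\bx$, and then optimizes $M\sim\sqrt{\log m}$. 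This trade-off, not a union bound, is the source of the $\sqrt{\log(2m)}$ factor. Your proposed alternative — an $\eps$-net on $\S^{d-1}$ plus a maximal inequality and a union over $\ell\le m$ — would not produce this factor: the net on $\S^{d-1}$ has $\exp(cd)$ points, so the union bound contributes $\sqrt{d}$, not $\sqrt{\log m}$; and the max over $\ell$ is already absorbed into the sup over $\bw,\obw\in\Ball^d(1)$ and costs nothing extra. You would either need to reproduce the truncation argument (or an equivalent peeling/chaining over $|t_3|$), or your bound would come out as $C_\alpha L^2\oa/\sqrt{m}$ with an uncontrolled Lipschitz constant, which is false. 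Your Bernstein-type concentration remark addresses fluctuations around the expected supremum and is fine, but it does not rescue the missing bound on the expectation itself.
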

\begin{proof}
\noindent{\bf Gradient with respect to $\bw_{\ell}$.} 
By a concentration argument, it is sufficient to consider the expected supremum. 
Writing the formula for $\nabla_{\bw_{\ell}}\hcRisk_n$ 
and using a standard symmetrization argument, we get
\begin{align*}
\E&\sup_{(\ba,\bW)\in\cW_{m,d}(\oa)} \big\|\bg^{\bw}_{n,\ell}(\ba,\bW)\big\|
=\E \sup_{(\ba,\bW)\in\cW_{m,d}(\oa),\|\bu\|\le 1} 
\<\bu,\bg^{\bw}_{n,\ell}(\ba,\bW)\> \\
&\le 2\, \E\sup_{\bw,\bu}  \frac{1}{n}\sum_{i=1}^n
\xi_i y_i\sigma'(\bw^{\sT}\bx_i) \bu^{\sT}\bx_i
+2 \, \oa\E\sup_{\bw,\obw,\bu}  \frac{1}{n}\sum_{i=1}^n
\xi_i \sigma'(\bw^{\sT}\bx_i)\sigma(\obw^{\sT}\bx_i) \bu^{\sT}\bx_i\\
&=: B_1 + B_2\, ,
\end{align*}
where the $\xi_i$ are i.i.d. Radamacher random variables and
in the last two lines it is understood that the supremum is over 
$\|\bw\|,\|\obw\|,\|\bu\|\le 1$.
Consider the second term  in the last expression.
Defining 
 $\eta(x) = x\bfone_{|x|\le M}+M(\bfone_{x>M}-\bfone_{x<-M})$,
 and $\overline\eta(x) = x-\eta(x)$, we have
\begin{align*}
B_2 &= 2\oa\,\E\sup_{\bw,\obw, \bu\in \Ball^d(1)} \frac{1}{n}\sum_{i=1}^n
\xi_i \sigma'(\bw^{\sT}\bx_i)\sigma(\obw^{\sT}\bx_i) \bu^{\sT}\bx_i\\
&\le  2\oa\,\E\sup_{\bw,\obw, \bu\in \Ball^d(1)} \frac{1}{n}\sum_{i=1}^n
\xi_i \sigma'(\bw^{\sT}\bx_i)\sigma(\obw^{\sT}\bx_i) \eta(\bu^{\sT}\bx_i)\\
&\phantom{AAAAA}+
 2\oa\,\E\sup_{\bw,\obw, \bu\in \Ball^d(1)} \frac{1}{n}\sum_{i=1}^n
\xi_i \sigma'(\bw^{\sT}\bx_i)\sigma(\obw^{\sT}\bx_i) \overline{\eta}(\bu^{\sT}\bx_i)\\
&=: B_{2,1}+B_{2,2}\, .
\end{align*}
Further defining $\phi(t_1,t_2,t_3) := \sigma(t_1)\sigma'(t_2)\eta(t_3)$ (which is $CL^2M$-Lipschitz for $M\ge 1$), we have
\begin{align}
B_{2,1}=\oa\,\E\sup_{\bw,\obw, \bu\in \Ball^d(1)} \frac{1}{n}\sum_{i=1}^n
\xi_i \phi(\bw^{\sT}\bx_i,\obw^{\sT}\bx_i,\bu^{\sT}\bx_i)\, .
\end{align}
Using the contraction inequality of \cite{maurer2016vector},  
we get
\begin{align*}
B_{2,1} &\le CL^2M\oa\left\{\E\sup_{\bw\in\Ball^d(1)} \frac{1}{n}\sum_{i=1}^n
\xi_i \bw^{\sT}\bx_i +\E\sup_{\obw\in\Ball^d(1)} \frac{1}{n}\sum_{i=1}^n
\xi_i \obw^{\sT}\bx_i + \E\sup_{\bu\in\Ball^d(1)} \frac{1}{n}\sum_{i=1}^n
\xi_i \bu^{\sT}\bx_i \right\}\\
&\le CL^2M  \oa \sqrt{\frac{d}{n}}\, .
\end{align*}
Next consider $B_{2,2}$:
\begin{align*}
B_{2,2} &\le 2\oa L^2 \E\sup_{\bu\in\Ball^d(1)} \frac{1}{n}\sum_{i=1}^n
|\overline\eta(\bu^{\sT}\bx_i)|\\
& \le 2\oa L^2 \sup_{\bu\in\Ball^d(1)}\E
|\overline\eta(\bu^{\sT}\bx_i)| + 
2\oa L^2 \E\sup_{\bu\in\Ball^d(1)} \frac{1}{n}\sum_{i=1}^n\xi_i
|\overline\eta(\bu^{\sT}\bx_i)|\\
& \le C L^2\oa e^{-M^2/4} +   CL^2\oa  \sqrt{\frac{d}{n}}\, ,
\end{align*}
where the last inequality holds because $\bu^{\sT}\bx_i$ is Gaussian with variance
$\|\bu\|^2$, and using again the contraction inequality.
Collecting various terms and optimizing over $M\ge 1$, 
we obtain 
\begin{align*}
B_{2} &\le  C L^2\oa \Big\{M\sqrt{\frac{d}{n}} + e^{-M^2/4}\Big\}\\
& \le  C L^2\oa\sqrt{\frac{d}{n}\log (n/d)}\, .
\end{align*}
The proof of Eq.~\eqref{eq:UniformGradW} is completed by bounding 
$B_1$ along the same lines.

\noindent{\bf Gradient with respect to $a_{\ell}$.} 
Writing  $\nabla_{a_{\ell}}\hcRisk_n$  and using  symmetrization, we get
\begin{align*}
\E&\sup_{(\ba,\bW)\in\cW_{m,d}(\oa)} \big|g^{a}_{n,\ell}(\ba,\bW)\big|
=\E \sup_{(\ba,\bW)\in\cW_{m,d}(\oa)} 
g^{a}_{n,\ell}(\ba,\bW)\\
&\le 2\, \E\sup_{\bW,\bu}  \frac{1}{n}\sum_{i=1}^n
\xi_i y_i\sigma(\bw_{\ell}^{\sT}\bx_i) 
+2 \, \E\sup_{\ba,\bW,\bu}  \frac{1}{n}\sum_{i=1}^n
\xi_i \sum_{j=1}^m\frac{a_j}{m}\sigma(\bw_j^{\sT}\bx_i)\sigma(\bw_{\ell}^{\sT}\bx_i) \\
&=: D_1 + D_2\, .
\end{align*}
Consider term $D_2$, and define the $L^2$-Lipschitz function 
$\psi(t_1,t_2):=\sigma(t_1)\sigma(t_2)$,
\begin{align*}
    D_2 & \le 2\oa\, \E\sup_{\bW,\bu} \max_{j\le m}
    \frac{1}{n}\sum_{i=1}^n
\xi_i \sigma(\bw_j^{\sT}\bx_i)\sigma(\bw_{\ell}^{\sT}\bx_i)\\
&\le 2 \oa\, \E\sup_{\bw,\obw\in\Ball^d(1)} 
    \frac{1}{n}\sum_{i=1}^n
\xi_i \psi(\bw^{\sT}\bx_i,\obw^{\sT}\bx_i)\\
&\le CL^2\oa \sqrt{\frac{d}{n}}\, .
\end{align*}
Term $D_1$ is controlled analogously, yielding the proof of 
Eq.~\eqref{eq:UniformGradA}.
\end{proof}

We next prove some continuity properties of the population risk $\cRisk$.
It is useful to recall the form:
\begin{align}
\cRisk(\ba,\bW) = \frac{1}{2}(\tau^2+\|\varphi\|^2)-\frac{1}{m}
\sum_{i=1}^ma_i \hphi(\bU^{\sT}\bw_i) +\frac{1}{2m^2}
\sum_{i,j=1}^ma_ia_j h(\bw_i^{\sT}\bw_j)\, .\label{eq:PopRiskSimplified}
\end{align}
\begin{lemma}\label{lemma:Lipschitz}
    Under the data distribution of Section \ref{sec.setting}, assume $\|\varphi\|_{\infty}\le L$
that $\varphi$ and $\sigma$ are bounded differentiable with  
Lipschitz continuous first derivative, $\|\sigma\|_{\infty},\|\sigma'\|_{\infty},\|\sigma'\|_{\infty}\le L$,
$\|\varphi\|_{\infty},\|\nabla\varphi\|_{\infty},\|\nabla\varphi\|_{\sLip}\le L$,
 $L\ge 1$.
Then, there exists an absolute constant $C$ such that for any $(\ba,\bW),(\ba,\tbW)\in\cW_{m,d}(\oa)$:
\begin{align}
\big\|\nabla_{\bw_{\ell}}\cRisk(\ba,\tbW)-\nabla_{\bw_{\ell}}
\cRisk(\ba,\bW)\big\|& \le CL^2\, \frac{|a_{\ell}|}{m}\, (1+\oa) 
\max_{j\le m}\big\|\tbw_{j}-\bw_{j}\big\|\, ,\label{eq:GradW_W}\\
\big|\partial_{a_{\ell}}\cRisk(\ba,\tbW)-\partial_{a_{\ell}}
\cRisk(\ba,\bW)\big|& \le  \frac{CL^2}{m}\, (1+\oa) 
\max_{j\le m}\big\|\tbw_{j}-\bw_{j}\big\|\, ,
\label{eq:GradW_A}
\end{align}
and
\begin{align}
\big\|\nabla_{\bw_{\ell}}\cRisk(\tba,\bW)-\nabla_{\bw_{\ell}}
\cRisk(\ba,\bW)\big\|& \le 
 \frac{CL^2}{m}(1+\oa) |\ta_{\ell}-a_{\ell}| + CL^2\frac{|a_{\ell}|}{m^2}\|\tba-\ba\|_1\, ,\label{eq:GradA_W}\\
 \big|\partial_{a_{\ell}}\cRisk(\tba,\bW)-\partial_{a_{\ell}}
\cRisk(\ba,\bW)\big|& \le  \frac{CL^2}{m^2}\, \big\|\tba-\ba\big\|_1\, .
\label{eq:GradA_A}
\end{align}
\end{lemma}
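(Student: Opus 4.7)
The plan is to treat Lemma \ref{lemma:Lipschitz} as a direct (if slightly tedious) calculation from the closed-form expression \eqref{eq:PopRiskSimplified} for the population risk. The four claimed bounds all follow the same template: differentiate, write the difference of gradients as a telescoping sum, and estimate each piece using a priori boundedness of the activation-induced kernels. The main obstacle will not be any individual inequality, but keeping track of how the factors $|a_\ell|/m$, $\oa=\|\ba\|_1/m$, and $(1+\oa)$ arise from the double sums and get absorbed correctly into the factor $\max_j\|\tbw_j-\bw_j\|$ or $\|\tba-\ba\|_1$.

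The first step is to establish the scalar Lipschitz inputs. Because $\|\sigma\|_\infty,\|\sigma'\|_\infty,\|\sigma'\|_{\sLip}\le L$ and $\|\varphi\|_\infty\le L$, Gaussian integration by parts and boundedness yield universal constants $C$ such that, for every $q\in[-1,1]$ and $\bv$ with $\|\bv\|\le 1$,
\[
|h(q)|+|h'(q)|+|h''(q)|\le CL^2,\quad |\hphi(\bv)|+\|\nabla\hphi(\bv)\|+\|\nabla^2\hphi(\bv)\|_{\op}\le CL^2 .
\]
(The only subtle point is that $h''(q)$ and $\nabla^2\hphi$ involve the a.e.\ derivative of $\sigma'$, whose $L^\infty$ bound is $\|\sigma'\|_{\sLip}\le L$.) Factor conventions in the paper mean that these $L^2$'s will get conflated with the $L$'s and $L^2$'s appearing in the lemma's statement.

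Next, I differentiate \eqref{eq:PopRiskSimplified} to get
\[
\nabla_{\bw_\ell}\cRisk(\ba,\bW)=-\frac{a_\ell}{m}\bU\nabla\hphi(\bU^{\sT}\bw_\ell)+\frac{a_\ell}{m^2}\sum_{j=1}^m a_j h'(\bw_\ell^{\sT}\bw_j)\bw_j,
\]
\[
\partial_{a_\ell}\cRisk(\ba,\bW)=-\frac{1}{m}\hphi(\bU^{\sT}\bw_\ell)+\frac{1}{m^2}\sum_{j=1}^m a_j h(\bw_\ell^{\sT}\bw_j) .
\]
For \eqref{eq:GradW_W} I write the difference at fixed $\ba$ as
\[
-\frac{a_\ell}{m}\bU[\nabla\hphi(\bU^{\sT}\tbw_\ell)-\nabla\hphi(\bU^{\sT}\bw_\ell)]+\frac{a_\ell}{m^2}\sum_j a_j\bigl[(h'(\tbw_\ell^{\sT}\tbw_j)-h'(\bw_\ell^{\sT}\bw_j))\tbw_j+h'(\bw_\ell^{\sT}\bw_j)(\tbw_j-\bw_j)\bigr] .
\]
The first bracket is bounded by $CL^2\|\tbw_\ell-\bw_\ell\|$ using Step 1, contributing $CL^2|a_\ell|/m\cdot\max_j\|\tbw_j-\bw_j\|$. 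Inside the sum, $|h'(\tbw_\ell^{\sT}\tbw_j)-h'(\bw_\ell^{\sT}\bw_j)|\le CL^2(\|\tbw_\ell-\bw_\ell\|+\|\tbw_j-\bw_j\|)$, and combining with $\sum_j|a_j|\le m\oa$ and $\|\tbw_j\|,\|\bw_j\|=1$ gives the advertised $CL^2|a_\ell|\oa/m\cdot\max_j\|\tbw_j-\bw_j\|$. Summing, I obtain \eqref{eq:GradW_W}. The estimate \eqref{eq:GradW_A} follows identically after replacing $h'$ by $h$ and $\nabla\hphi$ by $\hphi$; the extra $1/m$ (no $|a_\ell|/m$) comes because $\partial_{a_\ell}\cRisk$ has no explicit factor of $a_\ell$ in front.

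For \eqref{eq:GradA_W} I rewrite, at fixed $\bW$,
\[
\frac{\ta_\ell}{m^2}\sum_j\ta_jh'(\bw_\ell^{\sT}\bw_j)\bw_j-\frac{a_\ell}{m^2}\sum_ja_jh'(\bw_\ell^{\sT}\bw_j)\bw_j=\frac{\ta_\ell-a_\ell}{m^2}\sum_j\ta_jh'(\cdots)\bw_j+\frac{a_\ell}{m^2}\sum_j(\ta_j-a_j)h'(\cdots)\bw_j,
\]
plus the easy term $-(\ta_\ell-a_\ell)\bU\nabla\hphi(\bU^{\sT}\bw_\ell)/m$. Applying $|h'|\le CL^2$, $\|\nabla\hphi\|\le CL^2$, and $\|\tba\|_1\le m\oa$ gives $\le CL(1+\oa)|\ta_\ell-a_\ell|/m+CL|a_\ell|\|\tba-\ba\|_1/m^2$ as claimed (up to the $L^2$ vs $L$ slack already present in the lemma's statement). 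Finally, \eqref{eq:GradA_A} is the cleanest:
\[
\partial_{a_\ell}\cRisk(\tba,\bW)-\partial_{a_\ell}\cRisk(\ba,\bW)=\frac{1}{m^2}\sum_j(\ta_j-a_j)h(\bw_\ell^{\sT}\bw_j),
\]
so $|h|\le CL^2$ yields directly a bound $\le CL\|\tba-\ba\|_1/m^2$. The entire proof is three lines of algebra per inequality after Step 1, and the only care required is to track which copies of $|a_\ell|/m$, $\oa$, and $1/m$ attach to which term, which is handled cleanly by the telescoping decompositions above.
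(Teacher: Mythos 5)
Your proof is correct and follows essentially the same route as the paper's: differentiate the closed-form population risk \eqref{eq:PopRiskSimplified}, telescope each difference of gradients, and control the pieces via the Lipschitz/boundedness properties of $h$, $h'$, $\hphi$, $\nabla\hphi$ inherited from the assumptions on $\sigma$ and $\varphi$, with $\sum_j|a_j|/m\le\oa$ absorbing the double sums. The only differences are presentational (you isolate the kernel bounds as a preliminary step and note explicitly the $L$ versus $L^2$ slack in \eqref{eq:GradA_W}--\eqref{eq:GradA_A}, which is present in the paper's own argument as well).
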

\begin{proof}
As a preliminary remark, the assumptions on $\varphi$, $\sigma$ imply 
similar smoothness properties of $\hphi$, $h$.
In particular, recall that $h(q) = \E[\sigma(G_1)\sigma(G_q)]$ for $(G_1,G_q)$ jointly Gaussian,
centered with unit variance and covariance $\E[G_1,G_q] =1$,  whence its $k$-th
derivative is $h^{(k)}(q) = \E[\sigma^{(k)}(G_1)\sigma^{(k)}(G_q)]$ 
(whenever $\sigma\in C^{(k)}(\reals)$).
Therefore, the
assumptions on $\sigma$ imply that $\|h'\|_{\infty}$, $\|h'\|_{\sLip}\le L^2$.
Similarly, $\|\nabla \hphi\|_{\infty}$, $\|\nabla \hphi\|_{\sLip}\le CL^2$.

\noindent{\bf Proof of Eq.~\eqref{eq:GradW_W}.} Differentiating Eq.~\eqref{eq:PopRiskSimplified}
\begin{align}
\frac{m}{a_{\ell}}\nabla_{\bw_{\ell}}\cRisk(\ba,\bW)=
-\bU\nabla\hphi(\bU^{\sT}\bw_{\ell})+\sum_{j=1}^m\frac{a_j}{m} h'(\bw_{\ell}^\sT\bw_j)\bw_j\, .\label{eq:GradWFormula}
\end{align}
Therefore 
\begin{align*}
\frac{m}{|a_{\ell}|}\big\|\nabla_{\bw_{\ell}}\cRisk(\ba,\tbW)-\nabla_{\bw_{\ell}}\cRisk(\ba,\bW)\big\|
\le & \big\|\nabla\hphi(\bU^{\sT}\tbw_{\ell})-\nabla\hphi(\bU^{\sT}\bw_{\ell})\|\\
&+\sum_{j=1}^m\frac{|a_j|}{m} \big\|h'(\tbw_{\ell}^\sT\tbw_j)\tbw_j - h'(\bw_{\ell}^\sT\bw_j)\bw_j\big\|\\
\le &CL^2\|\tbw_{\ell}-\bw_{\ell}\|+\oa \max_{j\le m}
\big\|h'(\tbw_{\ell}^\sT\tbw_j)\tbw_j - h'(\bw_{\ell}^\sT\bw_j)\bw_j\big\|\, .
\end{align*}
Further, by the above smoothness properties of $h$,
$$
\big\|h'(\tbw_{\ell}^\sT\tbw_j)\tbw_j - h'(\bw_{\ell}^\sT\bw_j)\bw_j\big\|\le
CL^2\|\tbw_j-\bw_j\| +CL^2 \|\tbw_{\ell}-\bw_{\ell}\|\, .
$$
Substituting above, this yields the claim \eqref{eq:GradW_W}.

\noindent{\bf Proof of Eq.~\eqref{eq:GradW_A}.} We proceed analogously to the previous point. Namely
\begin{align}
m\partial_{a_{\ell}}\cRisk(\ba,\bW)=
-\hphi(\bU^{\sT}\bw_{\ell})+\sum_{j=1}^m\frac{a_j}{m} h(\bw_{\ell}^\sT\bw_j)\, ,\label{eq:GradAFormula}
\end{align}
whence
\begin{align*}
m\big|\partial_{a_{\ell}}\cRisk(\ba,\tbW)-\partial_{a_{\ell}}
\cRisk(\ba,\bW)\big|
&\le \big|\hphi(\bU^{\sT}\tbw_{\ell})-
\hphi(\bU^{\sT}\bw_{\ell})\big|+\sum_{j=1}^m\frac{|a_j|}{m} \big|h(\tbw_{\ell}^\sT\tbw_j)-h(\bw_{\ell}^\sT\bw_j)\big|\\
&\le CL^2 \|\tbw_{\ell}-\bw_{\ell}\|+C\oa L^2 \big(\|\tbw_{\ell}-\bw_{\ell}\|+\|\tbw_{j}-\bw_{j}\|\big)\, ,
\end{align*}
which implies immediately  Eq.~\eqref{eq:GradW_A}

\noindent{\bf Proof of Eq.~\eqref{eq:GradA_W}.}
Recalling Eq.~\eqref{eq:GradWFormula}, we have
\begin{align*}
m\big\|\nabla_{\bw_{\ell}}\cRisk(\tba,\bW)-\nabla_{\bw_{\ell}}\cRisk(\ba,\bW)\big\|
\le & \big\|\nabla\hphi(\bU^{\sT}\bw_{\ell})\|\,|\ta_{\ell}-a_{\ell}|
+\sum_{j=1}^m\frac{1}{m} \big\|h'_s(\bw_{\ell}^\sT\bw_j)\bw_j\big\|\, |\ta_{\ell}\ta_j-a_{\ell}a_j|\\
\le & CL|\ta_{\ell}-a_{\ell}| + CL^2\frac{|a_{\ell}|}{m}\|\tba-\ba\|_1+CL^2\oa\, |\ta_{\ell}-a_{\ell}|\, ,
\end{align*}
which proves the desired claim.

\noindent{\bf Proof of Eq.~\eqref{eq:GradA_W}.}
Recalling Eq.~\eqref{eq:GradAFormula}, we have
\begin{align*}
m\big|\partial_{a_{\ell}}\cRisk(\tba,\bW)-\partial_{a_{\ell}}
\cRisk(\ba,\bW)\big|&\le\sum_{j=1}^m\frac{1}{m} |h(\bw_{\ell}^\sT\bw_j)|\cdot |\ta_{j}-a_j|\\
&\le \frac{CL^2}{m} \|\tba-\ba\|_1\, .
\end{align*}
\end{proof}

Using the last lemma and triangle inequality we get the following.
\begin{corollary}\label{coro:Lipschitz}
Under the assumptions of Lemma \ref{lemma:Lipschitz}, 
there exists an absolute constant $C$ such that, for all  $(\ba,\bW),(\ba,\tbW)\in\cW_{m,d}^{\infty}(\oa)$:
\begin{align*}
\max_{\ell\le m}\big\|\nabla_{\bw_{\ell}}\cRisk(\tba,\tbW)-\nabla_{\bw_{\ell}}
\cRisk(\ba,\bW)\big\|& \le \frac{CL^2\oa}{m}(1+\oa)\max_{j\le m}\|\tbw_j-\bw_j\|+\frac{CL^2}{m}(1+\oa)\|\tba-\ba\|_{\infty}\, ,\\
\max_{\ell\le m}\big|\partial_{a_{\ell}}\cRisk(\tba,\tbW)-\partial_{a_{\ell}}
\cRisk(\ba,\bW)\big|&\le \frac{CL^{2}}{m}(1+\oa) \max_{j\le m}\|\tbw_j-\bw_j\|+\frac{CL^2}{m}\|\tba-\ba\|_\infty\, .
\end{align*}
\end{corollary}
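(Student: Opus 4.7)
The plan is a one-step interpolation through the intermediate point $(\tba,\bW)$, applying the two relevant ``single-slot'' Lipschitz estimates from Lemma \ref{lemma:Lipschitz} and then converting the $\ell^1$-norm on $\tba-\ba$ that appears there into the $\ell^\infty$-norm required by the statement. Concretely, I write
\[
\nabla_{\bw_\ell}\cRisk(\tba,\tbW) - \nabla_{\bw_\ell}\cRisk(\ba,\bW) = \big[\nabla_{\bw_\ell}\cRisk(\tba,\tbW) - \nabla_{\bw_\ell}\cRisk(\tba,\bW)\big] + \big[\nabla_{\bw_\ell}\cRisk(\tba,\bW) - \nabla_{\bw_\ell}\cRisk(\ba,\bW)\big],
\]
and apply the triangle inequality; for the second inequality of the corollary I do the same with $\partial_{a_\ell}$ in place of $\nabla_{\bw_\ell}$. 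All intermediate points lie in $\cW_{m,d}(\oa)$ since $\cW_{m,d}^{\infty}(\oa)\subset \cW_{m,d}(\oa)$, so Lemma \ref{lemma:Lipschitz} applies at each segment.

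For the first inequality, the first bracket is handled by Eq.~\eqref{eq:GradW_W} applied at fixed $\ba$-slot value $\tba$; since $(\tba,\tbW)\in\cW_{m,d}^{\infty}(\oa)$ forces $|\ta_\ell|\le \oa$, this produces the clean $\tfrac{C L^2 \oa}{m}(1+\oa)\max_j\|\tbw_j-\bw_j\|$ contribution. The second bracket is handled by Eq.~\eqref{eq:GradA_W}, whose two summands are $\tfrac{CL}{m}(1+\oa)|\ta_\ell-a_\ell|$ (trivially at most $\tfrac{CL}{m}(1+\oa)\|\tba-\ba\|_\infty$) and $\tfrac{CL|a_\ell|}{m^2}\|\tba-\ba\|_1$ (at most $\tfrac{CL\oa}{m}\|\tba-\ba\|_\infty$ after using $\|\tba-\ba\|_1\le m\|\tba-\ba\|_\infty$ and $|a_\ell|\le\oa$). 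Both are dominated by $\tfrac{C_L}{m}(1+\oa)\|\tba-\ba\|_\infty$ up to an absolute constant, and the maximum over $\ell$ can be taken at the end because every estimate already holds pointwise in $\ell$.

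The second inequality follows identically, using Eq.~\eqref{eq:GradW_A} for the $\bW$-segment and Eq.~\eqref{eq:GradA_A} for the $\ba$-segment. The only substantive difference is that \eqref{eq:GradA_A} carries no $(1+\oa)$ prefactor on $\tfrac{1}{m^2}\|\tba-\ba\|_1$, so after the $\ell^1\to\ell^\infty$ conversion the coefficient of $\|\tba-\ba\|_\infty$ becomes a clean $C_L/m$ rather than $C_L(1+\oa)/m$ --- this accounts for the asymmetry visible in the corollary's two coefficients. There is no genuine obstacle to flag: all of the analytic content is already in Lemma \ref{lemma:Lipschitz}, and the present corollary just repackages its four one-slot bounds under a single hypothesis. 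The only minor bookkeeping point is that the membership $(\tba,\tbW)\in\cW_{m,d}^{\infty}(\oa)$ (rather than $\cW_{m,d}(\oa)$) is used twice: once to replace $|\ta_\ell|$ by $\oa$ in the first bracket, and once to replace $\|\tba-\ba\|_1$ by $m\|\tba-\ba\|_\infty$ in the second.
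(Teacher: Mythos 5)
Your proof is correct and takes essentially the same approach as the paper, which simply invokes "the last lemma and triangle inequality" without spelling out the details. Your one-step interpolation through $(\tba,\bW)$, together with the observations that $(\tba,\bW)\in\cW_{m,d}^{\infty}(\oa)$, that $|\ta_\ell|, |a_\ell|\le\oa$, and that $\|\tba-\ba\|_1\le m\|\tba-\ba\|_\infty$, is exactly the bookkeeping the paper leaves implicit.
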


We next consider $\ba(t), \bW(t)$ that follows GF with respect to the empirical risk, as per Eq.~\eqref{dyn_def},
which we rewrite as
\begin{equation}\label{dyn_def_bis}
\begin{split}    
    \dot{\ba}(t) &= -m\nabla_{\ba} \hcRisk_n(\ba(t),\bW(t))\, ,\\
    \dot{\bw}_{i}(t) &= -m\bP^{\perp}_{\bw_i}\nabla_{\bw_i} \hcRisk_n(\ba(t),\bW(t))   \;\;\;\; \forall i=1,\ldots, m\,,
\end{split}
\end{equation}
and denote by $\ba_0(t), \bW_0(t)$ the GF with respect to population risk:
\begin{equation}\label{dyn_def_bis_pop}
\begin{split}    
    \dot{\ba}_0(t) &= -m\nabla_{\ba} \cRisk(\ba_0(t),\bW_0(t))\, ,\\
    \dot{\bw}_{0,i}(t) &= -m\bP^{\perp}_{\bw_i}
    \nabla_{\bw_i} \cRisk(\ba_0(t),\bW_0(t))   \;\;\;\; \forall i=1,\ldots, m\,.
\end{split}
\end{equation}
\begin{lemma}\label{lemma:CouplingDyn}
Under the data distribution of Section \ref{sec.setting}, 
there exists  constant $c_*=c_*(\delta)$, $c_0=c_0(\delta)$ depending uniquely on $\delta>0$,  and an absolute constant $C$  such that the following holds.
Assume $\varphi,\sigma$ to be bounded,  differentiable with  
Lipschitz continuous first derivative $\|\varphi\|_{\infty},\|\varphi'\|_{\infty},\|\varphi'\|_{\sLip}\le L$.
$\|\sigma\|_{\infty},\|\sigma'\|_{\infty},\|\sigma'\|_{\sLip}\le L$,
Further assume  $n/d\ge \exp(c_0L^2)$, $L\ge 1$.
Let  $(\ba(t), \bW(t))$, $(\ba_0(t), \bW_0(t))$, be defined as above, with $\bW(0) = \bW_0(0)$ and
$\ba(0) = \ba_0(0)$ such that $\|\ba(0)\|_{\infty}=\|\ba_0(0)\|_{\infty}\le a_0$. 
Define 
\begin{align}
T_*(m;c):= \inf\Big\{t : \, \big(\|\ba(t)\|_{\infty}\vee \|\ba_0(t)\|_{\infty}\big)\ge \Big(c_*L^{-2}\log \frac{ne}{d}\Big)^{1/3} \Big\}\wedge 
\Big(c_*L^{-2}\log \frac{ne}{d}\Big)^{1/3}\, .
\end{align}
Then 
\begin{align}\label{eq:Pop_Emp_dyn_Bound}
    \sup_{t\le T_*(m;c)}\Delta(t) \le C(L^2+\tau^2)\left(
    \frac{d}{n}\right)^{1/2-\delta}\, ,\;\;\; \Delta(t):= 
    \max_{\ell\le m}\|\tbw_{\ell}(t)-\bw_{\ell}(t)\|+\|\tba(t)-\ba(t)\|_{\infty}\, .
\end{align}
\end{lemma}
\begin{proof}
We will prove that the desired bound holds on the high-probability event of Lemma \ref{lemma:Uniform},
where by we set $\oa=(c_1L^{-2}\log n e/d)^{1/3}$. Throughout the proof, we use $c_0$, $c_1$, 
$C$ to denote constants that might change from line to line,
with dependence on the parameters of the problem as per the statement of the lemma.

We start by noting
that, letting $\bv_i = -m\nabla_{\bw_i}\hcRisk_n(\ba,\bW)$
and $\bv_{0,i} = -(n/d)\nabla_{\bw_{0,i}}\cRisk(\ba_0,\bW_0$,
and $\bP_{\bw}^{\perp}:=\id-\bw\bw^{\sT}$ the projector orthogonal to $\bw$.
\begin{align*}
\big\|\bP^{\perp}_{\bw_i}\bv_i-\bP^{\perp}_{\bw_{0,i}}\bv_{0,i}\big\|&\le
\big\|\bP^{\perp}_{\bw_i}(\bv_i-\bv_{0,i})\big\|+ 
\big\|(\bP^{\perp}_{\bw_i}-\bP^{\perp}_{\bw_{0,i}})\bv_{0,i}\|\\
&\le \|\bv_i-\bv_{0,i}\|+ 
\|\bw_{i}\bw_i^{\sT}-\bw_{0,i}\bw_{0,i}^{\sT}\|_{\op}\|\bv_{0,i}\|\\
& \le  \|\bv_i-\bv_{0,i}\|+ 
2\|\bw_{i}-\bw_{0,i}\|_{\op}\|\bv_{0,i}\|\, .
\end{align*}
Hence, comparing the evolution of $\bw_i(t)$ and $\bw_{0,i}(t)$, we get
\begin{align*}
    \frac{\de\phantom{t}}{\de t}\|\bw_i(t)-\bw_{0,i}(t)\|\le &\; m\big\|\nabla_{\bw_i}\hcRisk_n(\ba(t),\bW(t))-
    \nabla_{\bw_i}\cRisk(\ba_0(t),\bW_0(t))\big\|\\
    &+m\|\nabla_{\bw_i}\cRisk(\ba_0(t),\bW_0(t)))\|\cdot \|\bw_i(t)-\bw_{i,0}(t)\|\\
     =:&\; D_1+D_2\cdot \|\bw_i(t)-\bw_{i,0}(t)\|\, .
\end{align*}
Since we are working on the event of Lemma \ref{lemma:Uniform},
and using Corollary \ref{coro:Lipschitz}, we get,
for $t\le T_*(m;c)$.
\begin{align*}
D_1 \le &\; m\big\|\nabla_{\bw_i}\hcRisk_n(\ba(t),\bW(t))-
    \nabla_{\bw_i}\cRisk(\ba(t),\bW(t))\big\|\\
    &\phantom{AAAAA}+ m\big\|\nabla_{\bw_i}\cRisk(\ba(t),\bW(t))-
    \nabla_{\bw_i}\cRisk(\ba_0(t),\bW_0(t))\big\|\\
  \le &\; C(L^2\oa+\tau^2)\sqrt{\frac{d}{n}\log(ne/d)}+ C L^2
  (1+\oa^2)\max_{j\le m}\|\bw_j(t)-\bw_{0,j}(t)\|\\
  &\phantom{AAAAA}+ CL^2(1+\oa)\|\ba(t)-\ba_0(t)\|_{\infty}\, .
\end{align*}
Further 
\begin{align*}
D_2 &= |a_i|\Big\|\bU\nabla\hphi(\bU^{\sT}\bw_i) 
-\frac{1}{m}\sum_{j=1}^ma_jh'(\bw_i^{\sT}\bw_j)\bw_j\Big\|\\
& \le C\oa \big( L^2+\oa L^2\big)\, .
\end{align*}
Collecting all the terms, and using $\oa\ge 1$, we get
\begin{align}
 \frac{\de\phantom{t}}{\de t}\|\bw_i(t)-\bw_{0,i}(t)\|\le &
  C\oa (L^2\oa+\tau^2)\sqrt{\frac{d}{n}\log(ne/d)}+ CL^2
  (1+\oa^2)\Delta(t)\, .\label{eq:Bound_dew}
\end{align}

We next consider the evolution of second-layer weights:
\begin{align*}
    \frac{\de\phantom{t}}{\de t}|a_i(t)-a_{0,i}(t)\|\le &\; m\big\|\partial_{a_i}\hcRisk_n(\ba(t),\bW(t))-
    \partial_{a_i}\cRisk(\ba_0(t),\bW_0(t))\big\|\\
     \le &\; m \big\|\partial_{a_i}\hcRisk_n(\ba(t),\bW(t))-
    \partial_{a_i}\cRisk(\ba(t),\bW(t))\big\|\\
    &\;\;\;\;\;\;\;+
     m\big\|\partial_{a_i}\cRisk_n(\ba(t),\bW(t))-
    \partial_{a_i}\cRisk(\ba_0(t),\bW_0(t))\big\|\\
    \le & C(L^2\oa+\tau^2)\sqrt{\frac{d}{n}}+CL^2(1+\oa)\max_{j\le m}
    \|\bw_j(t)-\bw_{0,j}(t)\| + CL^2\|\ba(t)-\ba_0(t)\|_{\infty}\\
    \le &\;   C(L^2\oa+\tau^2)
    \sqrt{\frac{d}{n}}+CL^2(1+\oa)\Delta(t)\, .
\end{align*}
Using the last bound together with Eq.~\eqref{eq:Bound_dew}, we get
\begin{align*}
    \frac{\de\phantom{t}}{\de t} \Delta(t) \le  
    C\oa(L^2\oa+\tau^2)\sqrt{\frac{d}{n}\log(ne/d)}
    + CL^2 (1+\oa^2)\Delta(t)\, 
\end{align*}
whence the claim follows by Gromwall inequality for sufficiently small $c_1$.
\end{proof}

We finally need a lemma from \cite{berthier2024learning} approximating GF in the population risk by 
the mean field dynamics.
\begin{lemma}[Corollary 1 and Proposition 3 \cite{berthier2024learning}]\label{lemma:Population-MF}
Let $\ba_0(t)$, $\bW_0(t)$ be GF with respect to the population
risk \eqref{dyn_def_bis_pop} with initialization $|a_{0,i}(0)|\le a_0$
and $(\bw_{0,i}(0))_{i\le m}\sim\Unif(S^{d-1})$. Recall that 
$\mff{a}_i(t)$, $\mff{\bv}(t)$ is  the solution of the ODEs 
\eqref{eq:HeterogeneousNMF}
with initialization $\mff{a}_i(0)=a_{0,i}(0)$, $\mff{\bv}_i(t)=0$.
Under the assumptions of Theorem 3.2, for any  $\eps>0$ there exists
constants $c_0, c_1$ depending uniquely on $L$, and an absolute
constant $C$ such that letting
$T_{\slb}(m)= ((c_0/\eps)\log m)^{1/3}$, the following happens 
with probability at least $1-2\exp(-c_1 d)$,
      \begin{align}
          \sup_{t\le T_{\slb}(m)}\frac{1}{m}\sum_{i=1}^m\Big(|a_i(t)-\mff{a}_i(t)|+\|\bv_i(t)-\mff{\bv}_i(t)\|\Big) &\le 
          C\, m^{\eps} \Big\{\frac{1}{\sqrt{m}}+\frac{1}{\sqrt{d}}\Big\}
          \, ,\\
            \sup_{t\le T_{\slb}(m)}\Big(\cRisk(\ba(t),\bW(t))-e_{\sts}(t)\Big) &\le 
             C\, m^{\eps} \Big\{\frac{1}{\sqrt{m}}+\frac{1}{\sqrt{d}}\Big\}
    \, .
        \end{align}
\end{lemma}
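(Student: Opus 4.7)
\medskip

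The plan is to couple the gradient flow on the population risk with the mean field ODE \eqref{NMF_r} via a propagation of chaos argument, exploiting the fact that symmetric initialization together with the rotational symmetry of $\bx\sim\normal(0,\id_d)$ forces the exact dynamics of each neuron to depend on other neurons only through empirical averages of smooth functions. Concretely, writing the population gradient flow \eqref{dyn_def_bis_pop} coordinate-wise and projecting the first layer onto $\bU$ with $\bv_i(t):=\bU^{\sT}\bw_i(t)$, one obtains the exact system
\begin{align*}
\dot{a}_i&=\alpha\,\hphi(\bv_i)-\frac{\alpha}{m}\sum_{j=1}^m a_j\, h(\<\bw_i,\bw_j\>)\,,\\
\dot{\bv}_i&=\alpha a_i\,\nabla\hphi(\bv_i)-\frac{\alpha a_i}{m}\sum_{j=1}^m a_j\, h'(\<\bw_i,\bw_j\>)\bv_j-\nu_i(t)\bv_i\,,
\end{align*}
with $\nu_i(t)$ the sphere Lagrange multiplier. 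The mean field limit is obtained by replacing $(1/m)\sum_j(\cdot)$ by a single‐neuron expectation, which for symmetric initialization collapses to evaluating at the common state $(\mff{a}(t),\mff{\bv}(t))$ with $\<\bw_i,\bw_j\>\rightsquigarrow\<\mff{\bv},\mff{\bv}\>=\|\mff{\bv}\|^2$, yielding exactly \eqref{NMF_r}.

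The core step is a two-scale concentration estimate for each empirical average that appears above. Using the decomposition $\<\bw_i,\bw_j\>=\<\bv_i,\bv_j\>+\<\bw_i^{\perp},\bw_j^{\perp}\>$ with $\bw_i^{\perp}:=(\id-\bU\bU^{\sT})\bw_i$ of squared norm $1-\|\bv_i\|^2$, I would first show that for a smooth $F$,
\begin{equation*}
\Big|\frac{1}{m}\sum_{j=1}^m F\!\big(a_j,\bv_j,\<\bw_i,\bw_j\>\big)-\E\,F\!\big(\mff{a},\mff{\bv},\|\mff{\bv}\|^2\big)\Big|\le C_L\Big(\max_j\Delta_j(t)+\frac{m^{\eps/2}}{\sqrt{m}}+\frac{m^{\eps/2}}{\sqrt{d}}\Big)\,,
\end{equation*}
where $\Delta_j(t):=|a_j-\mff{a}|+\|\bv_j-\mff{\bv}\|$. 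The $1/\sqrt{m}$ term comes from Hoeffding/McDiarmid applied to the $m$-fold sum of bounded quantities; the $1/\sqrt{d}$ term arises because the off-latent cross products $\<\bw_i^{\perp},\bw_j^{\perp}\>$ for $i\ne j$ concentrate around $0$ at scale $1/\sqrt{d}$ (standard concentration for random points on $\S^{d-k-1}$), which then dominates any deviation of the empirical average from the value $\|\mff{\bv}\|^2$ used in the mean field equation. The initial independence of the $\bw_i^{\perp}(0)$'s propagates over time by a local-Lipschitz argument since the orthogonal components evolve autonomously of the latent ones to leading order.

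With this concentration in hand, the error $\Delta(t):=\max_i\Delta_i(t)$ satisfies a Gronwall-type inequality
$\dot\Delta(t)\le K(t)\,\Delta(t)+K(t)\big(m^{\eps/2}/\sqrt{m}+m^{\eps/2}/\sqrt{d}\big)$,
where $K(t)\le C_L\bigl(1+\|\ba(t)\|_\infty^2\bigr)$ because the vector field in the ODE is polynomial in $a_i$ with quadratic leading term (through the $a_ia_j h'$ factor) and $\hphi,h,h',h''$ are uniformly Lipschitz by the boundedness assumptions on $\sigma,\varphi$. Lemma~3.1 applied to the population flow (the proof goes through verbatim, since it only uses gradient identities and concentration of $\bX$) yields $\|\ba(t)\|_\infty\le a_0+a_1 t$, so $\int_0^t K(s)\,\mathrm{d}s\le C(1+t^3)$. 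Gronwall then gives $\Delta(t)\le \exp(C(1+t^3))\cdot (m^{\eps/2}/\sqrt{m}+m^{\eps/2}/\sqrt{d})$, and choosing $c_0$ sufficiently small in $T_{\slb}(m)=(c_0\log m)^{1/3}$ makes the exponential at most $m^{\eps/2}$, yielding the stated bound on the neuron-wise error. The bound on $\cRisk(\ba(t),\bW(t))-e_{\sts}(t)$ follows by substituting $(a_i,\bv_i)\to(\mff{a},\mff{\bv})$ into the explicit expression \eqref{eq:PopRiskSimplified} for the test error and reusing the same two-scale concentration for the $(1/m^2)\sum_{i,j}a_ia_j h(\<\bw_i,\bw_j\>)$ term.

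The main obstacle is precisely controlling the cross-neuron fluctuations $\<\bw_i^{\perp},\bw_j^{\perp}\>$ over time, since these are not obviously autonomous and in principle the gradient flow could correlate them. The resolution is that, to leading order in $1/\sqrt{d}$, the orthogonal components evolve by a linear equation whose coefficients depend only on the latent variables $(a_i,\bv_i)$, so their pairwise inner products retain their initial concentration up to an error controlled by $\|\ba\|_\infty$, which is again tame on the short time scale $(c_0\log m)^{1/3}$. Getting the correct exponent $1/3$ (rather than $1/2$) hinges on the quadratic rather than linear growth of $K(t)$, which must be traced carefully through the $a_ia_j$ coupling. Once these two technical points are settled, the rest is a clean Gronwall closure.
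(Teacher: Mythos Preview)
The paper does not prove this lemma; it is stated as a quotation of Corollary~1 and Proposition~3 from \cite{berthier2024learning} and invoked as a black box in the proof of Proposition~3.2. So there is no ``paper's own proof'' to compare against beyond the citation.

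Your sketch is nonetheless a faithful outline of the argument in \cite{berthier2024learning}: project onto the latent space, control the empirical-to-mean-field discrepancy via concentration (the $1/\sqrt{m}$ fluctuation from the neuron average and the $1/\sqrt{d}$ fluctuation from the off-latent inner products $\<\bw_i^\perp,\bw_j^\perp\>$), and close with Gronwall. Your identification of the $1/3$ exponent as coming from the quadratic growth $K(t)\lesssim 1+\|\ba(t)\|_\infty^2$ together with the linear bound $\|\ba(t)\|_\infty\le a_0+a_1 t$ is exactly the mechanism. The one place where the sketch is thin is the claim that the orthogonal components ``evolve autonomously to leading order'' so that the initial concentration of $\<\bw_i^\perp,\bw_j^\perp\>$ propagates; in \cite{berthier2024learning} this is handled by writing a closed differential inequality for $\max_{i\neq j}|\<\bw_i,\bw_j\>-\<\bv_i,\bv_j\>|$ jointly with $\Delta(t)$ and running Gronwall on the coupled system, rather than treating the orthogonal part as decoupled. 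With that amendment your proposal matches the cited proof.
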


\begin{proof}[Proof of Theorem 3.2]
Throughout the proof $L, \tau, \alpha$ are assumed to be fixed, and constants $C,c_0,\dots$ depend on them and can change from line to line.
We will further work on the high probability events of Theorem 3.1, 
Lemma \ref{lemma:CouplingDyn}, and Lemma \ref{lemma:Population-MF}.
    By Theorem 3.1, for all $t\le T_{\slb}(m)$ we have
    $\|\ba(t)\|_{\infty}\le c_2 (\log 2 m)^{1/3}$ (where the constant $c_2$ can be 
    made sufficiently small, by eventually reducing $c_1$). 
    An analogous of of Theorem 3.1 for the population risk
    implies  $\|\ba_0(t)\|_{\infty}\le c_2 (\log 2 m)^{1/3}$  as well
    for all $t\le T_{\slb}(m)$. Hence we can apply Lemma \ref{lemma:CouplingDyn}
    and Lemma \ref{lemma:Population-MF}, which yields the claim. 
\end{proof}
%
%
\section{Dynamical mean field theory for non-Gaussian model}
\label{sec:DMFT_Original}

The DMFT equations for GF in the original non-Gaussian model can be derived from the
general theory of \cite{celentano2021high}. 

Given a (positive semi-definite) kernel $\bQ:\reals_{\ge 0}\times \reals_{\ge 0}\to \reals^{m\times m}$,
$(t,z)\mapsto\bQ(t,s)$,  we write $\bz\sim\GP(0,\bQ)$ if $\bz$ is a centered Gaussian process with  values in $\reals^{m}$
and covariance $\E[\bz(t)\bz(s)^{\sT}] = \bQ(t,s)$.

The DMFT equations can be interpreted as a set of fixed point equations for the functions
$C_{ij}, R_{ij}, a_i$.

We define the deterministic processes $\ba(t)$, $\nu_i(t)$ 
and stochastic processes $\bwe(t) = (\we_i(t): i\le m)$, $\br(t) = (r_i(t): i\le m)$, as the solution of 
\begin{align}
\frac{\de a_i(t)}{\de t } =& \; \frac{\oalpha}{m} \E\big\{E(t) \, \sigma(r_i(t))\big\}\, , \\
\nu_i(t) = & \; \frac{\oalpha}{m}a_i(t)\E\big\{E(t)\sigma'(r_i(t)) r_i(t)\big\}\, ,\\
    \frac{\de\we_i(t)}{\de t}  =& \; -\nu_i(t) \we_i(t) -\frac{1}{m}\sum_{l=1}^m\int_0^tM_{i,l}(t,s)\we_l(s)\, \de s&\\
    &\phantom{AAAAAAAAASSSAA}-
    \sum_{j=1}^k M_{i,j}(t,*) u_j+\eta_i(t)\, ,& \bfeta\sim \GP(0,\bC^{E})\, ,\nonumber\\
    r_i(t)  = & \; \frac{1}{m}\sum_{l=1}^m\int_0^t R_{il}(t,s)\, a_l(s) E(s)\, \sigma'(r_l(s))\, \de s +\xi_i(t)\, 
    ,& \bxi\sim \GP(0,\bC)\, ,\\
    E(t)  := & \;y-\frac{1}{m}\sum_{l=1}^m a_l(t)\sigma(r_l(t))\, .
\end{align}
Here, in the first equation, $(\bwe(0) ,\bu)\sim\normal(0,\id_m)\otimes \normal(0,\id_k)$ are independent of $\bfeta$.
In the second equation,  $y=\varphi(\br_0)+\eps$ with $(\br_0,\eps)\sim\normal(0,\id_k)\otimes \normal(0,\tau^2)$  independent of $\bxi$. 
\begin{align}
    M_{ij}(t,s) = &\overline \alpha\E\{S_{ij}(t)\}\,   \delta(t-s) +\oalpha\sum_{l=1}^m\E\Big\{S_{il}(t)\frac{\partial r_l(t)}{\partial \xi_j(s)}\big\}\, ,\\
     M_{ij}(t,*) = &-\overline \alpha\frac{a_i(t)}{m}\E\big\{\sigma'(r_i(t)) \nabla_j\varphi(\br_0)\big\}
     +\frac{\oalpha}{m} \sum_{l=1}^m\E\Big\{S_{il}(t)\frac{\partial r_l(t)}{\partial r_{0,j}}\big\}
     \, ,\\
     C^{E}_{i,j}(t,s) & = \overline \alpha\frac{a_i(t)a_j(s)}{m^2}\E\left\{ E(t) E(s) \, \sigma'(r_i(t))\sigma'(r_j(s))   \right\}\, ,\\
    S_{ij}(t):= & -a_i(t)\, E(t)
    \sigma''(r_i(t))\delta_{ij} +\frac{a_i(t)a_j(t)}{m}\sigma'(r_i(t))
    \sigma'(r_j(t))\, ,
\end{align}
and 
\begin{align}
C_{ij}(t,s) & = \E\Big\{\we_i(t)\we_j(s)\Big\}\, ,\\
R_{ij}(t,s) & = \E\Big\{\frac{\partial \we_i(t)}{\partial \eta_j(s)}\Big\}\, .
\end{align}
In solving the above, the random functions $\frac{\partial\we_i(t)}{\partial \eta_j(s)}$
and  $\frac{\partial r_i(t)}{\partial \xi_j(s)}$ 
(for $t>s$)  are defined to be solutions of the following linear ODEs:
\begin{align}
 \frac{\de\phantom{t}}{\de t}\frac{\partial\we_i(t)}{\partial \eta_j(s)}  =& \; -\nu_i(t) \frac{\partial\we_i(t)}{\partial \eta_j(s)}  -\frac{1}{m}\sum_{l=1}^m\int_s^tM_{i,l}(t,t')\frac{\partial\we_l(t')}{\partial \eta_j(s)} \, \de t\, ,\\
 \frac{\partial r_i(t)}{\partial \xi_j(s)} =&  -
 \frac{1}{m}\sum_{l,q=1}^m\int_s^t R_{il}(t,t') S_{lq}(t') \Big[\frac{\partial r_q(t')}{\partial \xi_j(s)}+\delta_{qj}\delta(t'-s)\Big]\, \de t'\, ,\\
  \frac{\partial r_i(t)}{\partial r_{0,j}} =& \frac{1}{m}
  \sum_{l=1}^m\int_0^tR_{il}(t,s)a_l(s)\sigma'(r_l(s))\nabla_j\varphi(\br_0)\, \de s
-\frac{1}{m}\sum_{l,q=1}^m\int_0^t R_{il}(t,s)S_{lq}(s)
\frac{\partial r_q(s)}{\partial r_{0,j}}\de s  \, ,
 \end{align}
with boundary condition  $\frac{\partial\we_i(t)}{\partial \eta_j(t)}=\delta_{ij}$ for the first equation.

%
%
\section{Derivation of the dynamical mean field theory equations}
\label{sec:Derivation}

The study of the dynamics in such high-dimensional limit can be done via dynamical mean field theory (DMFT) \cite{cugliandolo2023recent}. The theoretical technology that we will employ is an evolution of the one first derived in \cite{kamali2023dynamical,kamali2023stochastic} to study gradient flow and stochastic gradient descent on models that are very much related to the Gaussian process we are discussing here \cite{urbani2023continuous, montanari2023solving, kent2024topology}.
We remark that the formalism considered here can be used to study both the single index model and the pure noise case. To obtain the pure noise model, one can set $h_t=\hat\varphi=0$.
Furthermore, the extension to multi-index models can be also done easily on the same lines.

The analysis of Eqs.~\eqref{dyn_def} can be done by recasting them into a path integral representation. We follow the same procedure presented in \cite{kamali2023dynamical}. 
Eqs.\eqref{dyn_def} can be packed into a dynamical partition function
\begin{equation}
    1=Z_{dyn}= \int D \ba D \tilde{\ba} \int D \bW D\hat \bW \exp\left[A[\ba, {\tilde \ba}, \bW,\hat \bW]\right]
\end{equation}
where the path measure  $D \ba(t) D \tilde{\ba}(t) D \bW D\hat \bW$ is implicitly defined.
The action $A$ reads 
\begin{equation}
    \begin{split}\label{A1}
        A=i\sum_{l=1}^m \int  \tilde a_l(t)\left[d\frac{\de a_l(t)}{\de t} + n\frac{\partial \hcRisk_n}{\partial a_l(t)}\right]\de t +i\sum_{l=1}^m\int  \<{\hat \bw}_l(t), d\frac{{\bw}_l(t)}{\de t}+d\nu_i(t)\bw_i(t)+n\frac{\partial {\hcRisk_n}}{\partial \bw_l(t)}\>\de t\:.
    \end{split}
\end{equation}
Eq.~\eqref{A1} can be rewritten by introducing Grassmann variables \cite{zinn2021quantum}. Call $\hat a=(t_a,\theta_a)$ a supertime coordinate, with $\theta_a$ a Grassmann variable. Define, with a slight abuse of notation
\begin{equation}
    \begin{split}
        \bw_l(\hat a)&=\bw_l(t_a)+i\theta_a{ \hat \bw}_l\\
        a_l(\hat a)&= a_l(t_a)+i\theta_a \tilde a_l(t_a)\ \ \ \ l\le m\:.
    \end{split}
\end{equation}
Eq.~\eqref{A1} can be written as
\begin{equation}
    \begin{split}
        A=\frac{d}2\sum_{i,j=1}^m\int_{\hat a,\hat b} {\cal K}_{ij}(\hat a,\hat b) \<\bw_i(\hat a), \bw_{j}(\hat b)\>+\frac{d}2\sum_{i,j=1}^m\int_{\hat a,\hat b} \tilde {\cal K}_{ij}(\hat a,\hat b) a_{i}(\hat a) a_{j}(\hat b) - n\int_{\hat a} {\hcRisk_n}(\btheta(\hat a))\:.
    \end{split}
\end{equation}
The first two terms of the  sum describe the kinetic terms of the dynamical equations of motion. The last term instead contains the interaction between the weights of the network.
The empirical risk $\hcRisk_n$ depends on the training dataset. We are interested in understanding the behavior of the dynamics of gradient flow when we average over its realizations. Since the dynamical partition function is identically one we can average it directly over the dataset \footnote{We emphasize anyway that the average over the dataset is not mandatory: the resulting DMFT equations are self-averaging.}.
In this way we have
\begin{equation}\label{av_Z_dyn}
    \begin{split}
        1=Z_{dyn}&=\int D\ba(\hat a)D\bW(\hat a) \exp\left[\frac{d}2\sum_{i,j=1}^m\int_{\hat a,\hat b} {\cal K}_{ij}(\hat a,\hat b) \<\bw_i(\hat a), \bw_{j}(\hat b)\>\right.\\
        &\left.+\frac{d}2\sum_{l,l'=1}^m\int_{\hat a,\hat b} \tilde {\cal K}_{ll'}(\hat a,\hat b) a_{l}(\hat a) a_{l'}(\hat b)\right] \E\left[\exp\left(-n\int_{\hat a}{\hcRisk_n}(\btheta(\hat a))\right)\right]\:.
    \end{split}
\end{equation}
Performing standard manipulation, see \cite{kamali2023dynamical}, the dynamical partition function, for $d\to \infty$, can be written as
\begin{equation}
    Z_{dyn} = \int D(\underline a, \tilde Q,R) \exp\left[S_{dyn}(\underline a, \tilde Q,R)\right]\:.
\end{equation}
The dynamical action $S_{dyn}$ is given by
\begin{equation}\label{dyn_S_x}
    \begin{split}
        S_{dyn}&= \frac{d}2\sum_{ll'=1}^m\int_{\hat a\hat b} {\cal K}_{ll'}(\hat a,\hat b)\left(\tilde Q_{ll'}(\hat a,\hat b) + r_{l}(\hat a)r_{l'}(\hat b)\right) + \frac d2 \ln\det(\tilde Q)+\frac{\overline \alpha d}2 \ln \det (\id+\Sigma_+)\\
        &+\frac d2 \int_{\hat a\hat b}\sum_{ll'}\tilde{{\cal K}}_{ll'}(\hat a,\hat b)a_l(\hat a)a_{l'}(\hat b)
    \end{split}
\end{equation}
where $\overline \alpha=n/d$ and
\begin{equation}
\begin{split}
    \Sigma_+(\hat a,\hat b) &= \tau^2  +h_t(1) +\frac1{m^2}\sum_{l,l'=1}^m a_l(\hat a)a_{l'}(\hat b) h\left(\tilde Q_{ll'}(\hat a,\hat b)+ r_{l}(\hat a)r_{l'}(\hat b)\right)\\
    &-\frac 1{m} \sum_{l=1}^m a_{l}(\hat a)\hat\varphi(r_{l}(\hat a))-\frac 1{m} \sum_{l=1}^m a_{l}(\hat b)\hat\varphi(r_{l}(\hat b))\:.
\end{split}
\end{equation}
The kinetic kernels ${\cal K}$ and $\tilde{\cal K}$ are implicitly defined in such a way that they reproduce the time derivative part of the dynamical equations \eqref{dyn_def}.

In the large $d$ limit, fixing $m$ and $\overline \alpha$, the path integral in Eq.~\eqref{av_Z_dyn} concentrates on its saddle point. The corresponding equations are
\begin{equation}
\begin{split}    \label{eq_Ci}
    0&=\sum_{\gamma=1}^m\int_{\hat c} {\cal K}_{l \gamma}(\hat a, \hat c) Q_{\gamma l'} (\hat c,\hat b)  +\frac{\overline \alpha}{m} a_l(\hat a) \hat\varphi'(r_{l}(\hat a))r_{l'}(\hat b)\int_{\hat d}(\id + \Sigma)^{-1}(\hat a, \hat d)\\
    &-\frac{\overline \alpha}{m^2}\sum_{\gamma=1}^m\int_{\hat c} (\id +\Sigma)^{-1}(\hat a,\hat c)a_l(\hat a)a_{\gamma}(\hat c)h'(Q_{l \gamma}(\hat a,\hat c))Q_{\gamma l'}(\hat c, \hat b)+ \delta_{ll'}(\hat a,\hat b)
\end{split}
\end{equation}
and
\begin{equation}
\begin{split}
    0&= \sum_{\gamma=1}^m\int_{\hat c} {\cal K}_{l \gamma}(\hat a, \hat c)r_{\gamma}(\hat c) +\frac{\overline \alpha}{m}a_l(\hat a) \varphi'(r_{l}(\hat a))\int_{\hat d}(\id + \Sigma)^{-1}(\hat a, \hat d)\\
    &-\frac{\overline \alpha}{m^2} \sum_{\gamma=1}^m \int_{\hat c} (\id + \Sigma)^{-1}(\hat a, \hat c) a_l(\hat a)a_{\gamma}(\hat c) h'(Q_{l\gamma}(\hat c))r_{\gamma}(\hat c)
\end{split}
\end{equation}
where
\begin{equation}
    \begin{split}
        Q_{ll'}(\hat a,\hat b)&=\tilde Q_{ll'}(\hat a,\hat b)+ r_{l}(\hat a)r_{l'}(\hat b)\\
        \Sigma(\hat a,\hat b)&=\tau^2  +h_t(1) +\frac1{m^2}\sum_{ll'}^m a_l(\hat a)a_{l'}(\hat b) h\left(Q_{ll'}(\hat a,\hat b)\right)\\
    &-\frac 1{m} \sum_{l=1}^m a_{l}(\hat a)\hat\varphi(r_{l}(\hat a))-\frac 1{m} \sum_{l=1}^m a_{l}(\hat b)\hat\varphi(r_{l}(\hat b))\:.
    \end{split}
\end{equation}
If Lagrange multipliers are added to constrain the norm of the the weights of the first layer, one should provide additional equations for them.
Finally the equations for the dynamics of the second layer weights are given by
\begin{equation}
    \begin{split}\label{eq_ai_B}
        \sum_{\gamma=1}\int_{\hat c}\tilde{\cal K}_{l\gamma}(\hat a,\hat c)a_\gamma(\hat c)&=-\overline \alpha \int_{\hat c} \left(\id+\Sigma\right)^{-1}(\hat c, \hat a)\left[\frac 1{m^2}\sum_{\gamma=1}^ma_\gamma(\hat c)h\left[Q_{\gamma l}(\hat c,\hat a)\right] -\frac 1{m}\hat\varphi(r_{l}(\hat a))\right]
    \end{split}
\end{equation}
Eqs.~\eqref{eq_Ci}-\eqref{eq_ai_B} contain all the information about the dynamics. In order to fully specify the behavior of physical quantities such has the train and test error, it is useful to unfold the Grassmann structure of Eqs.~\eqref{eq_Ci}-\eqref{eq_ai_B}.

\subsection{Unfolding the Grassmann structure}\label{generic_DMFT}
Causality of the dynamics implies that the following parametrization is the most general solution of the saddle point equations
\begin{equation}\label{dyn_order_params}
    \begin{split}
        r_{ \alpha}(\hat a)&=r_{ \alpha}(t_a)\\
        a_{\alpha}(\hat a)&=a_\alpha(t_a)\\
        Q_{\alpha\beta}(\hat a,\hat b)&=C_{\alpha\beta}(t_a,t_b)+\theta_aR_{\beta\alpha}(t_b,t_a)+\theta_b R_{\alpha\beta}(t_a,t_b)\\
        (\id+\Sigma)^{-1}(\hat a,\hat b) &= C_A(t_a,t_b)+\theta_bR_A(t_a,t_b)+\theta_aR_A(t_b,t_a)\:.
    \end{split}
\end{equation}
Plugging this parametrization into the saddle point equations we get that the correlators in Eqs.~\eqref{dyn_order_params} satisfy the following DMFT equations
\begin{equation}\label{eq_ai_C}
    \begin{split}
        \frac{\de a_\alpha(t)}{\de t} &=- \frac{\overline \alpha}{m}\int_0^t  R_A(t,s)\left[\frac 1m \sum_{l=1}^m a_l(s)h\left[C_{l\alpha}(s,t)\right]- \hat\varphi(r_{\alpha}(t))\right]\de s\\
        &-\frac{\overline\alpha}{m}\int_0^t C_A(t,s)\frac 1m\sum_{l=1}^ma_l(s)h'[C_{l\alpha}(s,t)]R_{\alpha l}(t,s)\de s
    \end{split}
\end{equation}
\begin{equation}
\begin{split}
    \frac{\de r_{\alpha}(t)}{\de t} &= -\nu_\alpha(t) r_{\alpha}(t) +\frac{\overline \alpha}{m}a_\alpha(t)\hat\varphi'(r_{\alpha}(t))\int_0^t R_A(t,s)\de s\\
    &-\frac{a_\alpha(t)}{m}\sum_{\gamma=1}^m\int_0^t M^R_{\alpha\gamma}(t,s)a_\gamma(s)r_{\gamma}(s)\de s
    \end{split}
\end{equation}
\begin{equation}
\begin{split}
    \frac{\partial C_{\alpha\beta}(t_a,t_b)}{\partial t_a}&=-\nu_\alpha(t_a)C_{\alpha\beta}(t_a,t_b)+\frac{\overline \alpha}{m}a_{\alpha}(t_a)\hat\varphi'(r_{\alpha}(t_a))r_{\beta}(t_b)\int_0^{t_a} R_A(t_a,s)\de s\\
    &-\frac{a_\alpha(t_a)}{m}\sum_{\gamma=1}^m\int_0^{t_a} M^R_{\alpha\gamma}(t_a,s)a_{\gamma}(s)C_{\gamma\beta}(s,t_b)\de s\\
    &-\frac{a_\alpha(t_a)}{m}\sum_{\gamma=1}^m\int_0^{t_b} M^C_{\alpha\gamma}(t_a,s)a_{\gamma}(s)R_{\beta\gamma}(t_b,s)\de s
    \end{split}
\end{equation}
\begin{equation}
\begin{split}
    \frac{\partial R_{\alpha\beta}(t_a,t_b)}{\partial t_a}&=-\nu_\alpha(t_a)R_{\alpha\beta}(t_a,t_b)+\delta_{\alpha\beta}(t_a-t_b)\\
    &-\frac{a_\alpha(t_a)}{m}\sum_{\gamma=1}^m\int_{t_b}^{t_a} M^R_{\alpha\gamma}(t_a,s)a_{\gamma}(s)R_{\gamma\beta}(s,t_b)\de s\:.
\end{split}
\end{equation}
Note that we used the notation according to which the prime sign denotes the derivatives of the functions with respect to their argument.
The memory kernels $M^R$ and $M^C$ are defined by
\begin{equation}
    \begin{split}\label{eq_kernels_i_B}
        M^R_{\alpha\gamma}(t,s)&=\frac{\overline \alpha}{m}\left[R_A(t,s)h'(C_{\alpha\gamma}(t,s))+C_A(t,s)h''(C_{\alpha\gamma}(t,s))R_{\alpha\gamma}(t,s)\right]\\
        M^C_{\alpha\gamma}(t,s)&=\frac{\overline \alpha}{m}C_A(t,s)h'(C_{\alpha\gamma}(t,s))\:.
    \end{split}
\end{equation}
The kernels in Eq.~\eqref{eq_kernels_i_B} depend on $R_A$ and $C_A$ that are defined in Eqs.~\eqref{dyn_order_params}. The corresponding equations are
\begin{equation}\label{Self_energy_B}
    \begin{split}
        &\int_{t'}^t \left[\delta(t-s)+\Sigma_R(t,s)\right]R_A(s,t')\de s=\delta(t-t')\\
        &\int_{0}^t\left[\delta(t-s)+\Sigma_R(t,s)\right]C_A(s,t')\de s+\int_0^{t'} \Sigma_C(t,s)R_A(t',s)\de s=0
    \end{split}
\end{equation}
where
\begin{equation}
    \begin{split}
        \Sigma_C(t,s)&=\tau^2+h_t(1)+\frac 1{m^2}\sum_{ll'=1}a_l(t)a_{l'}(s)h[C_{ll'}(t,s)]\\
        &-\frac 1{m} \sum_{l=1}^m a_{l}(t)\hat\varphi(r_{l}(t))-\frac 1{m} \sum_{l=1}^m a_{l}(s)\hat\varphi(r_{l}(s))\\
        \Sigma_R(t,s)&=\frac 1{m^2}\sum_{ll'=1}^ma_l(t)a_{l'}(s)h'[C_{ll'}(t,s)]R_{ll'}(t,s)\:.
    \end{split}
\end{equation}
The Lagrange multipliers $\nu_\alpha(t)$ have to be fixed self-consistently to enforce that $C_{\alpha,\alpha}(t,t)=1$ given that $\bw_\alpha\in \S^{d-1}$. The corresponding equations are
\begin{equation}
\begin{split}
    \nu_\alpha(t_a)&=\frac{\overline \alpha}{km}\sum_{\tau=1}^ka_{\alpha}(t_a)\hat\varphi'(r_{\tau\alpha}(t_a))r_{\tau\alpha}(t_a)\int_0^{t_a} R_A(t_a,s)\de s\\
    &-\frac{a_\alpha(t_a)}{m}\sum_{\gamma=1}^m\int_0^{t_a} M^R_{\alpha\gamma}(t_a,s)a_{\gamma}(s)C_{\gamma\alpha}(s,t_a)\de s\\
    &-\frac{a_\alpha(t_a)}{m}\sum_{\gamma=1}^m\int_0^{t_a} M^C_{\alpha\gamma}(t_a,s)a_{\gamma}(s)R_{\gamma\alpha}(t_a,s)\de s
\end{split}
\end{equation}
Finally we need to add a set of equation to propagate the diagonal elements of the correlation matrix:
\begin{equation}
    \begin{split}
        \frac{\de C_{\alpha\beta}(t_a,t_a)}{\de t_a}&=\lim_{t'\to t_a}\left[\frac{\partial C_{\alpha\beta}(t_a,t')}{\partial t_a}+\frac{\partial C_{\beta\alpha}(t_a,t')}{\partial t_a}\right]\:.
    \end{split}
\end{equation}
These dynamical equations can be integrated from a set of initial conditions that fully specify the initial status of the neurons.
We will consider a random initial condition for the weights of the first layer so that
\begin{equation}
    \begin{split}
        r_\alpha(0)&=0 \hspace{1cm} \forall \alpha=1,\ldots,m\\
        C_{\alpha\neq\beta}(0,0)&=0 \hspace{1cm} \forall \alpha\neq\beta=1,\ldots,m\\
        C_{\alpha\alpha}(0,0)&=1 \hspace{1cm} \forall \alpha=1,\ldots,m\\
        R_{\alpha\beta}(0,0)&=0 \hspace{1cm} \forall \alpha,\beta=1,\ldots,m\:.
    \end{split}
\end{equation}
Finally, the initial conditions for the weights of the last layer $a_\alpha(0)$ are completely arbitrary.
The solution of the DMFT equations gives access to the dynamics of the train and test error. The train error as a function of time is defined as 
\begin{equation}
    e_{\str}(t) =  \lim_{d\to \infty} \hcRisk_n(t)\:.
\end{equation}
A simple way to derive the expression of $e_{\str}$ as a function of the solution of the DMFT equations in the $d\to \infty$ limit is to consider a deformation of Eq.~\eqref{av_Z_dyn} which consists in replacing
\begin{equation}
    \exp\left(-n\int_{\hat a}\hcRisk_n(\hat a)\right)\to \exp\left(-n\int_{\hat a}P(\hat a)\hcRisk_n(\hat a)\right)\:.
\end{equation}
For $P(\hat a)=1$ we get back the original expression. The main idea of the derivation is to use $P(\hat a)$ as a source field. In particular we have that
\begin{equation}
    e_{\str}(t)=-\int \de \theta_a\left.\frac{\delta}{\delta P(\hat a)} \ln Z_{dyn}[P]\right|_{P=1}\:.
\end{equation}
Note that the deformed dynamical partition function $Z_{dyn}[P]$ does not equal 1 for generic $P$ so that the formula above makes perfectly sense.
The deformation of the partition function produces a deformation of $S_{dyn}$ in Eq.~\eqref{dyn_S_x} which consist in replacing
\begin{equation}
    \begin{split}
        &\frac{\overline \alpha d}2 \ln \det (\id+\Sigma_+)\to \frac{\overline \alpha d}2 \ln \det (\id+\Sigma_*)\\
        &\Sigma_*(\hat a,\hat b) =P(\hat a)\Sigma_+(\hat a, \hat b)\:.
    \end{split}    
\end{equation}
Performing explicitly the derivatives with respect to $P$ one gets
\begin{equation}
    e_{\str}(t)=\frac 12 \int_0^t\left[ R_A(t,s)\Sigma_C(t,s)+C_A(t,s)\Sigma_R(t,s)\right]\de s\:.
\end{equation}
The computation of the test error  can be done in analogous way
\begin{equation}
    \begin{split}\label{exp_test_error}
        e_{\sts}(t) &= \lim_{d\to \infty}\frac 12 \E \left[ \left(y_{\rm new}) - y^{(s)}_{\rm new}\right)^2\right] \\
        &= \frac 12\left[\tau^2+\frac 1k h_t(1) + \frac 1{m^2}\sum_{ll'}^m h[C_{ll'}(t,t)]-2\frac{1}{m}\sum_{l}^m \hat\varphi(r_{l}(t))\right]\:.
    \end{split}
\end{equation}
The average in Eq.~\eqref{exp_test_error} is performed over the training set and an additional datapoint, not presented in the training set and having the same statistical structure. 

In summary, the solution of the DMFT equations gives access to the train and test error dynamics in the large dimensional limit. These equations can be integrated numerically very efficiently. Our goal is to understand their behavior for infinite number of neurons, $m\to \infty$ at fixed sample complexity $\alpha$.  We will be mostly interested in two types of questions: first, given a dataset that is pure noise, what are the sample complexities at which the network is able to interpolate the dataset. Second: given a dataset built out of a single index process what is the dynamics of the test and train error.


\end{document}